\tikzset{ 
node distance=1.5cm, 
every state/.style={line width=1.6pt,inner sep=0pt,minimum size=10pt},
}
\tikzset{edge/.style = {->,> = latex'}}
\newtheoremstyle{definition}
  {}
  {}
  {\itshape}
  {}
  {\bfseries}
  {.}
  { }
  {\thmname{#1}\thmnumber{ #2}\thmnote{ (#3)}}
  \newtheoremstyle{theorem}
  {}
  {}
  {\itshape}
  {}
  {\bfseries}
  {.}
  { }
  {\thmname{#1}\thmnumber{ #2}\thmnote{ (#3)}}
\theoremstyle{theorem}
\newtheorem{theorem}{Theorem}
\newtheorem{proposition}{Proposition}[section]
\newtheorem{corollary}[proposition]{Corollary}
\newtheorem{lemma}[proposition]{Lemma}
\theoremstyle{definition}
\newtheorem{definition}[proposition]{Definition}
\newtheorem{remark}[proposition]{Remark}
\newcommand{\Vcal}{\mathcal{V}}
\newcommand{\EE}{\mathbb{E}} 
\newcommand{\R}{\mathbb{R}} 
\newcommand{\ZZ}{\mathbb{Z}} 
\newcommand*{\argmin}{\mathop{\mathrm{argmin}}}
\newcommand*{\one}{{\bf 1}}
\newcommand{\diag}{{\rm diag}}
\newcommand{\norm}[1]{||#1||}
\newcommand{\rbr}[1]{\left(#1\right)}
\newcommand{\sbr}[1]{\left[#1\right]}
\newcommand{\cbr}[1]{\left\{#1\right\}}
\newcommand{\nbr}[1]{\left\|#1\right\|}
\newcommand{\abr}[1]{\left|#1\right|}
\newcommand{\BlackBox}{\rule{1.5ex}{1.5ex}}  
\def\QED{~\rule[-1pt]{5pt}{5pt}\par\medskip}
\newenvironment{proof}{\par\noindent{\bf Proof\ }}{\hfill\BlackBox\\[2mm]}
\newcommand{\ones}{\one^\top}
\DeclareMathOperator{\spn}{span}
\DeclareMathOperator{\rank}{rank}
\DeclareMathOperator{\poly}{poly}
\DeclareMathOperator{\domop}{dom}
\newcommand{\obs}{\mathcal{O}}
\newcommand{\dom}{\domop_+}
\newcommand*{\rom}[1]{\expandafter\@slowromancap\romannumeral #1@}
\title{Learning Hidden Markov Models Using Conditional Samples }
\author{%
  Sham M. Kakade$^1$
   \qquad
    Akshay Krishnamurthy$^2$
    \qquad
    Gaurav Mahajan$^3$
    \qquad
    Cyril Zhang$^2$\\
    \vspace{-2mm} \\
  \normalsize{$^1$Harvard University\quad $^2$Microsoft Research NYC\quad $^3$University of California, San Diego}
}
\date{\today}
\begin{document}
\maketitle

\begin{abstract}
  This paper is concerned with the computational complexity of learning the Hidden Markov Model (HMM). Although HMMs are some of the most widely used tools in sequential and time series modeling, they are cryptographically hard to learn in the standard setting where one has access to i.i.d. samples of observation sequences. In this paper, we depart from this setup and consider an \emph{interactive access model}, in which the algorithm can query for samples from the \emph{conditional distributions} of the HMMs. We show that interactive access to the HMM enables computationally efficient learning algorithms, thereby bypassing cryptographic hardness.

  Specifically, we obtain efficient algorithms for learning HMMs in two settings: 
  \begin{enumerate}[leftmargin=0.45in,rightmargin=0.3in] 
    \item An easier setting where we have query access to the exact conditional probabilities. Here our algorithm runs in polynomial time and makes polynomially many queries to approximate any HMM in total variation distance. 
    \item A harder setting where we can only obtain samples from the conditional distributions. Here the performance of the algorithm depends on a new parameter, called the fidelity of the HMM. We show that this captures cryptographically hard instances and previously known positive results. 
  \end{enumerate} 
  We also show that these results extend to a broader class of distributions with latent low rank structure. Our algorithms can be viewed as generalizations and robustifications of Angluin's $L^*$ algorithm for learning deterministic finite automata from membership queries.
\end{abstract}

\newpage
\tableofcontents
\newpage

\section{Introduction}
\label{sec:intro}
Hidden Markov Models (HMMs) are among the most fundamental tools for modeling temporal and sequential phenomena. These probabilistic models specify a joint distribution over a sequence of observations generated via a Markov chain of latent states. This structure enjoys the simultaneous benefits of low description complexity, sufficient expressivity to capture long-range dependencies, and efficient inference algorithms. For these reasons, HMMs have become ubiquitous building blocks for sequence modeling in varied fields, ranging from bioinformatics to natural language processing to finance. A long-standing challenge, in both theory and practice, is the computational difficulty of learning an unknown HMM from samples. 
In this paper, we are interested in the computational complexity of this estimation/learning task.

Although one can consider several notions of learnability, we focus on distribution learning, in total variation (TV) distance. 
In the standard realizable formulation, we are given
observation sequences generated by an underlying HMM and are asked
to efficiently compute a distribution that is close to the HMM
in TV distance. Maximum likelihood estimation is known to be statistically
efficient, but no computationally efficient implementations of this approach are known.
Indeed, 
HMMs can encode the parity with noise
problem \citep{mossel2005learning}, which is widely believed to be
computationally hard~\citep{blum1994cryptographic,kearns1994learnability,alekhnovich2003more}, and so we do not expect to find efficient algorithms for general HMMs. Recent works have therefore focused on obtaining computationally efficient algorithms under structural
assumptions which evade these hard
instances~\citep{cryan2001evolutionary,hsu2012spectral,kontorovich2013learning,weiss2015learning,huang2015minimal,sharan2017learning}.

This work takes a different perspective. We ask: can we evade computational hardness
by allowing the learner to access the HMM \emph{interactively}?
Specifically we consider a \emph{conditional sampling oracle}: we 
allow the learner to sample a ``future sequence'' from the HMM
conditioned on a ``past sequence'' or history.
This approach is closely related to recent work in distribution
testing~\citep[e.g.,][]{chakraborty2013power,canonne2014testing,canonne2015testing, bhattacharyya2018property,chen2021learning},
which demonstrates improvements in various property testing tasks via
conditional sampling. One conceptual difference is that we use
conditional sampling to evade computational hardness, rather than
obtaining statistical improvements.

From a practical perspective, we are motivated by potential
applications of interactive learning to training language models or
world models more generally. Indeed, it is quite natural to fine-tune
a language model by asking annotators to complete prompts
generated by the model; this precisely corresponds to
conditional sampling if we view the annotators as
representative of the population~\citep{zhang2022survey}. When training world models for
decision making, it may be possible to request expert demonstrations
starting from a particular state, which again is effectively
conditional sampling. This latter approach is closely related to
interactive imitation learning~\citep{ross2011reduction}.

We are further motivated by two theoretical considerations. 
First, it is not hard to show that parity with noise can be
efficiently learned in this model, as we can sample the label
conditioned on each history with a single observation set to $1$ and naively denoise these samples (we
describe this in detail in~\Cref{sec:examples}).  
However, this approach is quite tailored to noisy parity, and so it is
natural to ask if it can be generalized to arbitrary HMMs.  Second,
learning HMMs with conditional samples can be seen as a statistical
generalization of learning deterministic finite automata (DFAs) with
membership queries, for which Angluin's seminal $L^*$ algorithm
provides a strong computational separation between interactive and
non-interactive PAC learning~\citep{angluin1987learning}. We believe
it is natural to ask if $L^*$ can be extended to HMMs and be made
robust to sampling, thereby providing further evidence for the
computational benefits of interactive learning. 
\paragraph{Contributions.} 
In this paper, we develop new algorithms and techniques for learning
Hidden Markov models when provided with interactive access. As our
first result, we show how a generalization of Angluin's $L^*$
algorithm can efficiently learn any HMM in the stronger access model
where the learner can query for exact conditional probabilities. As
our main result, we consider the more natural conditional sampling
access model and obtain an algorithm that is efficient for all HMMs
with ``high fidelity,'' a new property we introduce. We show that this
property captures the cryptographically hard instances and prior
positive results, but we leave open the question of efficiently
learning all HMMs via conditional sampling. Our results require a
number of new algorithmic ideas and analysis techniques, most notably: an efficient representation for distributions over exponentially
large domains and a new perturbation argument for mitigating error
amplification over long sequences.  We hope these techniques find
application in other settings.

\subsection{Preliminaries}
\paragraph{Notation.}
Let $\obs := \{1,\ldots,O\}$ denote a finite observation space and let $\obs^t, \obs^{\leq t}$ and $\obs^*$ denote observation sequences of length $t$, observation sequences of length $\leq t$ and observation sequences of arbitrary length respectively. We
consider a distribution $\Pr[\cdot]$ over $T$ random variables
$\textbf{x}_1,\ldots,\textbf{x}_T$ with a sequential ordering, and we use $x_t \in \obs$
to denote the value taken by the $t^{\mathrm{th}}$ random
variable. For convenience, we often simply write
$\Pr[x_1,x_2,\ldots,x_T]$ in lieu of $\Pr[\textbf{x}_1{=}x_1,\ldots,\textbf{x}_T{=}x_T]$,
omitting explicit reference to the random variables themselves. 

When considering conditionals of this distribution, we \emph{always}
condition on assignment to a prefix of the random variables and
marginalize out a suffix. For example, we consider conditionals of the
form $\Pr[\textbf{x}_{t+1}{=}x_{t+1},\ldots,\textbf{x}_{t+k}{=}
x_{t+k} | \textbf{x}_1{=}x_1,\ldots,\textbf{x}_t{=}x_t]$, and we will
write this as $\Pr[x_{t+1},\ldots,x_{t+k} |
x_1,\ldots,x_t]$. Similarly, when considering tuples $f :=
(x'_1,\ldots,x'_k) \in \obs^k$ and $h :=
(x_1,\ldots,x_t) \in \obs^{t}$, we write $\Pr[\textbf{x}_{t+1} {=}
x'_1,\ldots,\textbf{x}_{t+k} {=} x'_k| \textbf{x}_1 {=}
x_1,\ldots,\textbf{x}_t {=} x_t]$ as $\Pr[f | h]$, noting that the
random variables assigned to $f$ are determined by the length of $h$.

We lift this conditioning notation to sets of observation sequences in
the following manner. If $F := \{f_1, f_2, \ldots\}$ and $H :=
\{h_1,h_2,\ldots\}$ where each $f_i,h_j \in \obs^*$, 
we write $\Pr[F|
H]$ to denote the $|F|\times|H|$ matrix whose $(i,j)^{\mathrm{th}}$
entry is $\Pr[f_i | h_j]$. We allow the sequences in $F$ and $H$ to
have different lengths, but always ensure that $\mathrm{len}(f_i)
+ \mathrm{len}(h_j) \leq T$ so that this matrix is well-defined. We
refer to rows and columns of this matrix as $\Pr[f | H]$ and
$\Pr[F | h]$ respectively.\footnote{We always refer to rows,
columns, and entries of these matrices in this manner, so no confusion
arises when constructing these matrices from (unordered) sets of
sequences.}

Lastly, for $h = (x_1,\ldots,x_t)$ we use $ho =
(x_1,\ldots,x_t,o)$ to denote concatenation, and we lift this notation
to sequences and sets. For instance, if $H = \{h_1,h_2,\ldots\}$ then
$Ho = \{h_1o,h_2o,\ldots\}$.

\subsubsection{Hidden Markov Models and low rank distributions}
Hidden Markov Models provide a low-complexity parametrization for
distributions over observation sequences. These models are defined
formally as follows.

\begin{definition}[Hidden Markov Models]\label{def:hmm}
    Let $\mathcal{S} := \{1,\ldots,S\}$. An HMM with
    $S \in \mathbb{N}$ hidden states is specified by (1) an initial
    distribution $\mu \in \Delta(\mathcal{S})$, (2) an emission
    matrix $\mathbb{O} \in \mathbb{R}^{O \times S}$, and (3) a state
    transition matrix $\mathbb{T} \in \mathbb{R}^{S \times S}$, and
    defines a distribution over sequences of length $T$ via:
    \begin{align}
        \Pr[x_1,\ldots,x_T] := \sum_{s_1,\ldots,s_{T+1} \in \mathcal{S}^{T+1}}\mu(s_1) \prod_{t=1}^T \mathbb{O}[x_t,s_t] \mathbb{T}[s_{t+1},s_{t}].\label{eq:hmm}
    \end{align} 
    Here $M[i,j]$ represents the $(i,j)^{\mathrm{th}}$ entry of a matrix $M$.
\end{definition}

As the name suggests, HMMs parameterize the distribution with a Markov
chain over a hidden state sequence along with an emission function
that generates observations. While this specific model is particularly
natural, our analysis only leverages a certain low rank structure
present in HMMs. To highlight the importance of this structure, we
define the \emph{rank} of a distribution.

\begin{definition}[Rank of a distribution]\label{def:lowrank}
    We say distribution $\Pr[\cdot]$ over observation sequences of
    length $T$ has rank $r$ if, for each $t \in [T]$, the conditional
    probability matrix $\Pr[\obs^{\leq T-t}|\obs^{t}]$ has rank at
    most $r$.\footnote{When some histories occur with zero probability, there might be multiple consistent conditional probability functions associated to a distribution, in which case the rank is not uniquely defined. We address this by \emph{defining} the distribution via its conditionals (which determine the rank); see \Cref{rem:rank}.}
\end{definition}

An HMM with $S$ hidden states has rank
at most $S$, which can be verified using the fact that the hidden states form a Markov
chain (we give a proof in \Cref{sec:hmmlowr}).\footnote{In fact the rank of the HMM can be much smaller, since
the decomposition alluded to above realizes the non-negative rank of
the matrix, which can be exponentially larger than the rank.} More
generally, the rank identifies a low dimensional structure in the
distribution: we have exponentially many vectors $\Pr[\obs^{\leq
T-t} | h]$, one for each history $h$, in an $r$-dimensional subspace of
an exponentially larger ambient space.  Thus, we are interested in
algorithms that exploit the low dimensional structure and admit
statistical and computational guarantees scaling polynomially with the rank.

\subsubsection{Learning models}
To circumvent computational hardness, we allow the learner to access
conditional distributions of the underlying distribution
$\Pr[\cdot]$. We specifically consider two access models formalized
with the following oracles: \footnote{Both oracles require committing to a consistent choice of conditional probability distribution when conditioning on zero probability events; see~\Cref{rem:rank}.}

\begin{restatable}[Exact conditional probability oracle]{definition}{defone}\label{def:exact_oracle}
    The exact conditional probability oracle is given as input: observation sequences $h$ and $f$ of length $t \leq T$ and $T-t$ respectively, chosen by the algorithm, and returns the scalar $\Pr[f|h]$.
\end{restatable}

\begin{restatable}[Conditional sampling oracle]{definition}{deftwo}\label{def:conditional_oracle}
    The conditional sampling oracle is given as input: an observation
    sequence $h$ of length $t \leq T$, chosen by the algorithm, and
    returns an observation sequence $f$ of length $T-t$ such that the
    probability that $f$ is returned 
    is $\Pr[f | h]$, independently of
    all other randomness.
\end{restatable}

When considering the exact probability oracle, we also allow the
learner to obtain independent samples from the joint distribution
$\Pr[\cdot]$. Note that this oracle equivalently provides access to
exact (unconditional) probabilities of length $T$ sequences. We view
this as a noiseless analog of the conditional sampling oracle, which
is the main model of interest.
It is analogous to noiseless oracles in the distribution testing literature \citep[e.g.,][]{canonne2014testing}.

As a learning goal, we consider distribution learning in total
variation distance as studied in prior
works~\citep{kearns1994learnability,mossel2005learning,hsu2012spectral,anandkumar2014tensor}. Given
access to a target distribution $\Pr[\cdot]$ we want to efficiently
compute an estimate $\widehat{\Pr}[\cdot]$ that is close in total
variation distance to $\Pr[\cdot]$. Formally, we want an algorithm that, when given parameters $\varepsilon,\delta>0$, computes an estimate $\widehat{\Pr}[\cdot]$ such that with probability at least $1-\delta$ we have
\begin{align*}
    \mathrm{TV}(\Pr,\widehat{\Pr}) := \frac{1}{2} \sum_{x_1,\ldots,x_T \in \obs^T} \abr{\Pr[x_1,\ldots,x_T] - \widehat{\Pr}[x_1,\ldots,x_T]} \leq \varepsilon.
\end{align*}
The algorithm is efficient if its computational complexity (and hence
number of oracle calls) scales polynomially in $r,T,O,1/\varepsilon$
and $\log(1/\delta)$. 

\begin{remark}
Note that, as the support
of $\Pr[\cdot]$ is exponentially large in $T$, it is not possible to
write down all $\obs^T$ values of $\widehat{\Pr}$ efficiently. Instead, the goal is to return 
an efficient representation from which we can evaluate
$\widehat{\Pr}[x_1,\ldots,x_T]$ for any sequence $x_1,\ldots,x_T$ efficiently.
It will become clear what constitutes an efficient representation for low rank distributions in the sequel. However for HMMs for example, 
the tuple of initial distribution $\mu$,
observation operator $\mathbb{O}$, and transition operator
$\mathbb{T}$ form an efficient representation.
\end{remark}

\subsection{Our results}
Our first result studies the computational power provided by the exact probability oracle (\Cref{def:exact_oracle}). 
We show how a generalization of Angluin's $L^*$ algorithm can
efficiently learn any HMM given access to this oracle. The result is
summarized in the following theorem:\footnote{As this result is a warmup for our main result, we focus on the setting where $\obs = \{0, 1\}$ for simplicity.}

\begin{restatable}[Learning with exact conditional probabilities]{theorem}{thmone}
    \label{thm:exact} Assume $\obs = \{0,1\}$. Let $\Pr[\cdot]$ be any
    rank $r$ distribution over observation sequences of length
    $T$. Pick any $0 < \varepsilon, \delta < 1$. Then \Cref{alg:exact} with access to an
    exact probability oracle and samples from $\Pr[\cdot]$, runs in
    $\poly(r,T,1/\varepsilon,\log(1/\delta))$ time and returns an efficiently represented approximation 
    $\widehat{\Pr}[\cdot]$ satisfying
    $\mathrm{TV}(\Pr,\widehat{\Pr}) \leq \varepsilon$ with probability at least
    $1-\delta$.
\end{restatable}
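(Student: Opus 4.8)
The plan is to recast Angluin's $L^*$ as a spectral / observable-operator procedure. The output $\widehat{\Pr}$ will be a triple $(\beta_0\in\R^d,\ \{B_o\}_{o\in\obs}\subseteq\R^{d\times d},\ \beta_\infty\in\R^d)$ with $d\le r$, interpreted via $\widehat{\Pr}[x_1,\ldots,x_T]=\beta_\infty^\top B_{x_T}\cdots B_{x_1}\beta_0$; this is the efficient representation. The algorithm maintains a finite set $H$ of \emph{core histories} and a finite set $F$ of \emph{core tests}, both initialized to $\{\epsilon\}$, and, via the exact oracle, an \emph{observation table} holding $\Pr[f\mid h]$, $\Pr[o\mid h]$ and $\Pr[of\mid h]$ for $h\in H$, $f\in F$, $o\in\obs$ (note $\Pr[x_1,\ldots,x_T]=\Pr[x_1,\ldots,x_T\mid\epsilon]$ is available too). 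Two ingredients drive everything: the one-step recursion $\Pr[of\mid h]=\Pr[o\mid h]\,\Pr[f\mid ho]$, and a rank bound — the distributional analogue of ``an $S$-state HMM has rank $\le S$'' — guaranteeing that every conditional matrix $\Pr[F\mid H]$ arising in the run has rank at most $r$, despite its histories and futures having varying lengths (deriving this from \Cref{def:lowrank} is a small lemma in its own right).

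Step one is to \emph{close} the table. While some right-extension $ho$ ($h\in H$, $o\in\obs$) has its row $\Pr[F\mid ho]$ outside the row span of $\Pr[F\mid H]$, add $ho$ to $H$; symmetrically, while some left-extension $of$ has its column $(\Pr[of\mid h])_{h\in H}$ outside the column span, add $of$ to $F$. Every addition strictly increases $\rank\Pr[F\mid H]$, which never exceeds $r$, so there are at most $r$ additions; $|H|,|F|\le r$ and all sequences have length at most $r$ (capped at $T$). From a maximal invertible submatrix $P_0:=\Pr[F_0\mid H_0]$ ($|F_0|=|H_0|=d=\rank\Pr[F\mid H]$) I read off, in closed form, $\beta_0=\Pr[F_0\mid\epsilon]$, $B_o=\Pr[oF_0\mid H_0]\,P_0^{-1}$, and $\beta_\infty^\top=\mathbf 1^\top P_0^{-1}$. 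The central lemma: if $d$ equals the \emph{true} rank of $\Pr[\cdot]$, then $F_0$ and $H_0$ already span all relevant conditional-probability vectors, and the telescoping $\Pr[x_1,\ldots,x_T]=\prod_t\Pr[x_t\mid x_1,\ldots,x_{t-1}]$ together with the recursion forces $\widehat{\Pr}\equiv\Pr[\cdot]$.

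A closed table need not realize the true rank — just as a closed, consistent $L^*$ table can still induce a too-small automaton — so I supply the missing ``equivalence query'' with the joint samples. Draw $N=\poly(r,T,1/\varepsilon,\log(1/\delta))$ sequences $x^{(i)}\sim\Pr$; using the exact oracle for the true $\Pr[x^{(i)}]$ (and the computable total mass of $\widehat{\Pr}$), either certify $\mathrm{TV}(\Pr,\widehat{\Pr})\le\varepsilon$ — via a Chernoff bound, since the absence of a sampled $x^{(i)}$ with $\widehat{\Pr}[x^{(i)}]<(1-\varepsilon/2)\Pr[x^{(i)}]$ means the underweighted mass $\Pr[A]-\widehat{\Pr}[A]$, $A=\{\Pr>\widehat{\Pr}\}$, is at most $\varepsilon/2$ with probability $\ge 1-\delta$ — or else take such an $x^{(i)}$ as a counterexample. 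In the latter case, binary-searching over the prefixes of $x^{(i)}$ while comparing, via the oracle, the model's predicted state after $x^{(i)}_1,\ldots,x^{(i)}_t$ with the true conditionals $\Pr[\cdot\mid x^{(i)}_1,\ldots,x^{(i)}_t]$ localizes an index whose history, paired with a distinguishing future, lies outside the current spans; inserting them into $H$ and $F$ strictly increases $\rank\Pr[F\mid H]$. Re-close and repeat: the rank grows by at least one each round, so after at most $r$ rounds $d$ reaches the true rank, $\widehat{\Pr}\equiv\Pr$, and no counterexample survives; every cost is $\poly(r,T,1/\varepsilon,\log(1/\delta))$.

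I expect the main obstacle to be the exactness lemma of step two, which needs (i) an induction on length showing that, once $\rank\Pr[F\mid H]$ attains the true rank, linear relations among the $F$-rows (resp.\ among the $H$-columns) automatically extend to \emph{all} futures (resp.\ histories), so that the coefficients defining $B_o$ and $\beta_\infty$ are globally consistent; and (ii) a careful treatment of conditioning on zero-probability histories, resolved by a single fixed rule throughout (cf.\ \Cref{rem:rank}) — the recursion $\Pr[of\mid h]=\Pr[o\mid h]\Pr[f\mid ho]$ is exactly where this ambiguity bites. Secondary difficulties are the error analysis of the sampled ``equivalence query'' — turning ``no sampled counterexample'' into a genuine TV bound when $\widehat{\Pr}$ is merely a signed measure of possibly-wrong total mass — and verifying that counterexample processing always produces a genuinely new independent row and column rather than re-confirming the table, which is where the $\varepsilon/2$-scale discrepancy at the end of $x^{(i)}$, propagated back through the $T$ search steps, gets used.
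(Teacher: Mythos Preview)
There is a genuine gap: the ``small lemma'' you plan to derive from \Cref{def:lowrank} --- that $\rank\Pr[F\mid H]\le r$ even when $H$ and $F$ mix lengths --- is false. Take $T=2$ with $\textbf{x}_1,\textbf{x}_2$ independent, $\Pr[\textbf{x}_1{=}0]=q$, $\Pr[\textbf{x}_2{=}0]=p$, $q\ne p$. The rank from \Cref{def:lowrank} is $r=1$ (at $t=1$ the columns $\Pr[\cdot\mid 0]$ and $\Pr[\cdot\mid 1]$ coincide), yet with $H=F=\{\epsilon,0\}$ your table is $\bigl(\begin{smallmatrix}1&1\\ q&p\end{smallmatrix}\bigr)$, rank $2$. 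The reason is structural: for a history of length $t$ the future symbol ``$0$'' means $\textbf{x}_{t+1}{=}0$, so mixing history lengths makes the same row label refer to different random variables across columns, and \Cref{def:lowrank} only bounds the rank one $t$ at a time. In general the combined Hankel rank is bounded by $\sum_t r_t\le r(T{+}1)$, not by $r$.

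This is not just a weaker round count --- it breaks the construction. Your time-homogeneous operators $B_o=\Pr[oF_0\mid H_0]P_0^{-1}$ and the telescoping $\widehat{\Pr}[x_{1:T}]=\beta_\infty^\top B_{x_T}\cdots B_{x_1}\beta_0$ presuppose that a single $(F_0,H_0)$ serves at every step; for a non-stationary low-rank distribution the exactness lemma fails, and even forming $\Pr[oF_0\mid H_0]$ runs out of room once $|of|+|h|>T$ (in the example above, $B_0$ requires $\Pr[00\mid 0]$, which would involve a nonexistent $\textbf{x}_3$). The paper sidesteps all of this by keeping per-step bases $B_t\subset\obs^t$ and tests $\Lambda_t$ with time-indexed operators $\widehat A_{o,t}$ solving $\Pr[\Lambda_{t+1}\mid B_{t+1}]\widehat A_{o,t}=\Pr[o\Lambda_{t+1}\mid B_t]$; a counterexample is localized to a single $\tau$ where $\rank\Pr[\Lambda_\tau\mid B_\tau]$ strictly grows (\Cref{app:prop:1}), giving at most $rT$ rounds. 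The TV certificate is also different: rather than your one-sided underweight test against a signed measure of unknown total mass, the paper checks exact equality $\overline{\Pr}[x_{1:t},\Lambda_t]=\Pr[x_{1:t},\Lambda_t]$ on sampled prefixes at every length $t$, then builds an honest distribution $\widehat{\Pr}$ from the one-step conditionals and bounds TV via \Cref{lemma:condcheck}.
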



The main technical challenge is finding a succinct and observable
parametrization/representation of the distribution, so that we can infer all
conditional distributions using polynomially many queries.
This observable parameterization plays a central role in our main
result, and in this sense \Cref{thm:exact} can be seen as an
insightful warmup.


Our main contribution is in extending this result to the more
natural interactive setting where the learner only accesses
conditional samples via the oracle in~\Cref{def:conditional_oracle}. 
Our
algorithm here can be viewed as a robust version of
$L^*$, and we obtain the following guarantee:

\begin{restatable}[Learning with conditional samples]{theorem}{thmtwo}
  \label{thm:main} Let $\Pr[\cdot]$ be any rank $r$ distribution over
  observation sequences of length $T$. Assume distribution $\Pr[\cdot]$ has fidelity $\Delta^*$. Pick any $0 < \varepsilon, \delta < 1$. Then \Cref{alg:cond} with access to a conditional sampling oracle runs in
  $\poly(r,T,O,1/\Delta^*,1/\varepsilon,\log(1/\delta))$ time and returns an efficiently represented approximation
  $\widehat{\Pr}[\cdot]$ satisfying
  $\mathrm{TV}(\Pr,\widehat{\Pr}) \leq \varepsilon$ with
  probability at least $1-\delta$.
\end{restatable}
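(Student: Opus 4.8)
The plan is to follow the $L^*$ template of \Cref{thm:exact}, but replace every exact-probability query by a Monte-Carlo estimate built from the conditional sampling oracle and then control the errors this introduces. Recall that rank $r$ means that at every length $t$ the predictive states $\{\Pr[\obs^{\le T-t}\mid h] : h\in\obs^t\}$ lie in an $r$-dimensional subspace. The target object is therefore an observable parametrization of $\Pr[\cdot]$: for each level $t$ a set $H_t$ of at most $r$ ``core'' histories whose predictive states span this subspace, together with (i) for each core history $h\in H_t$ and observation $o$, the coefficients expressing $\Pr[\cdot\mid ho]$ as a linear combination of $\{\Pr[\cdot\mid h']\}_{h'\in H_{t+1}}$, and (ii) the predictive weights of the empty history (obtained by calling the oracle with empty $h$, i.e.\ sampling from $\Pr[\cdot]$). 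From this data one evaluates $\widehat{\Pr}[x_1,\dots,x_T]$ by the usual chain of state updates, giving an efficient representation in the sense of the remark following \Cref{thm:main}.

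The first obstacle is that the oracle returns a sample $f\sim\Pr[\cdot\mid h]$, and individual Hankel entries $\Pr[f\mid h]$ are exponentially small, so the matrix $\Pr[F\mid H]$ cannot be read off entrywise. Instead I would represent each predictive state by the empirical distribution $\widehat{\Pr}[\cdot\mid h]$ of $N$ conditional samples --- a measure supported on $\le N$ points, stored explicitly, which is the promised data structure for distributions over exponential domains. The algorithm then runs a robust, linear-algebraic version of $L^*$: maintain a set $H$ of core histories and a set $F$ of test futures; to decide whether a candidate one-step extension $ho$ is genuinely ``new'' we estimate the residual of $\Pr[F\mid ho]$ after projecting onto $\spn\{\Pr[F\mid h'] : h'\in H\}$ using the empirical predictive states, and add $ho$ to $H$ only when this residual exceeds a threshold calibrated to $\Delta^*$; symmetrically, grow $F$ whenever the restricted Hankel matrix $\Pr[F\mid H]$ fails to certify the rank it should. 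The fidelity $\Delta^*$ is precisely the quantity that (a) guarantees that a truly new predictive state is witnessed with $\poly(1/\Delta^*)$ samples and with short test futures, so that $F$ and each $H_t$ stay of size $\poly(r,T,O)$ --- a volume/potential argument, in which each accepted addition increases a $\Delta^*$-quantified quantity, caps the number of additions --- and (b) keeps the reconstruction coefficients in (i) bounded by $\poly(1/\Delta^*)$.

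The delicate part is the error propagation, corresponding to the ``new perturbation argument'' advertised in the introduction: one must bound $\mathrm{TV}(\Pr,\widehat{\Pr})$ even though each of the $T$ state updates is only approximately correct and the reconstruction coefficients can be as large as $1/\Delta^*$. A naive operator-norm estimate multiplies the per-step errors and blows up exponentially in $T$. Instead I would exploit that the reconstructed objects are (sub-)probability vectors of bounded $\ell_1$ mass and that $\mathrm{TV}$ is an $\ell_1$ quantity: telescoping over $t=1,\dots,T$, the total deviation is at most the sum of the errors injected at the individual steps, each measured in the right norm against an object of bounded mass, and each such term is forced below $\varepsilon/T$ by taking $N=\poly(r,T,O,1/\Delta^*,1/\varepsilon,\log(1/\delta))$ samples per history. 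Putting the pieces together --- at most $rTO$ closure/consistency checks, $\le rT$ core histories and $\poly$ test futures, a union bound over all Monte-Carlo estimates, and the telescoping $\mathrm{TV}$ bound --- yields the stated guarantee and runtime. I expect the main obstacle to be exactly the interaction of the last two paragraphs: the reconstruction coefficients scale with $1/\Delta^*$ and naively compound over the $T$ steps, so the heart of the proof is showing, via the probabilistic structure of the updates and the telescoping argument, that fidelity $\Delta^*$ is enough to prevent exponential-in-$T$ amplification.
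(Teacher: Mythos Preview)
Your proposal has two genuine gaps, both at exactly the places the paper flags as requiring new ideas.

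\textbf{Estimation.} Representing each predictive state $\Pr[\cdot\mid h]$ by an empirical measure on $N$ sampled futures does not give you usable linear-algebraic access: for two different histories the sampled supports will almost surely be disjoint (futures are length $T-t$ strings), so you cannot compute residuals, regressions, or the coefficients in your item (i) from these objects. The paper avoids this entirely with a preconditioner: it never learns $\Pr[F_t\mid B_t]$ but instead the $|B_t|\times|B_t|$ matrix $\Sigma_{B_t}=\Pr[F_t\mid B_t]^\top D_t^{-1}\Pr[F_t\mid B_t]$, where $d_t(f)=\frac{1}{|B_t|}\sum_{b\in B_t}\Pr[f\mid b]$. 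Each entry is an expectation $\mathbb{E}_{f\sim d_t}[\Pr[f\mid b_i]\Pr[f\mid b_j]/d_t(f)^2]$, estimable by sampling $f$ from the mixture and estimating the ratio to relative accuracy via one-step conditionals; fidelity is precisely the assertion that this small matrix has non-zero eigenvalues bounded below by $\Delta^*$, so ridge regression against it recovers the coefficients $\beta(bo)$. Your reading of fidelity as guaranteeing ``short test futures'' is not what the definition says, and no set $F$ of test futures is maintained in the paper's algorithm---all futures are used implicitly through the preconditioned sums. Relatedly, under fidelity the bases $B_t$ are found by plain random sampling from $\Pr[\cdot]$ (Lemma~\ref{prop:1}), not by an $L^*$-style growth procedure.

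\textbf{Error propagation.} Your telescoping/$\ell_1$-mass argument is exactly the HKZ strategy, and the paper explains in Section~\ref{sec:tech} why it fails here: it relies on the identity $\mathbb{A}_o=\mathbb{T}\,\mathrm{diag}(\mathbb{O}[o,\cdot])$ to interpret intermediate products as joint probabilities over observations and a hidden state; with general bases $B_t$ the operators $A_{o,t}$ have no such interpretation, the products $A_{x_t}\cdots A_{x_1}$ can have $\ell_2$ norm exponential in $t$, and the sum over $x_{1:T}$ has exponentially many terms. The paper's substitute is to show that the estimation error has a structured form $(\widehat A_{o,t}-A_{o,t})v=\beta(B_{t+1})\alpha+V_{t+1}^\perp\alpha^\perp$ with $\|\alpha\|_1,\|\alpha^\perp\|_1\le\varepsilon\|v\|_2$ (Lemma~\ref{lemma:cond:1}), and then to propagate this decomposition inductively in coefficient space (Proposition~\ref{prop:perturb}): the error after $t$ steps stays a bounded-$\ell_1$ combination of $\beta(h)$'s plus a kernel piece, and the crucial step is that $A_{x_t}\beta(h)=\Pr[x_t\mid h]\beta(hx_t)$, so summing over $x_t$ preserves $\ell_1$ mass of the $\beta$-part. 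Neither of these two ingredients---the preconditioned estimation and the coefficient-space recursion---appears in your outline, and both are where the real work lies.
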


The theorem provides a robust analog to~\Cref{thm:exact} in the much
weaker conditional sampling access model. The caveat is that the
guarantee depends on a spectral property of a distribution, which we
call the fidelity. The definition of fidelity (\Cref{def:fidelity}) requires further
development of the algebraic structure in $\Pr[\cdot]$ and is deferred
to \Cref{sec:tech}. Nevertheless, we can show that the
cryptographically hard examples of HMMs and positive results from
prior work on learning HMMs have fidelity that is lower bounded by a (small) polynomial of the other parameters and
thus are efficiently learnable by our algorithm
(see \Cref{sec:examples}). On the other hand, there are HMMs with
exponentially small fidelity, and we have no evidence that
these instances are computationally intractable when provided with
conditional samples.  This leads to the main open question stemming
from our work.

\begin{restatable}{openproblem}{openone}
  \label{open}
  Is there a computationally efficient algorithm for learning \emph{any}
low rank distribution given access to a conditional sampling oracle?
\end{restatable}

\paragraph{Paper organization.} 
In \Cref{sec:tech}, we present an overview of our techniques,
explaining the challenges and how we address them. Then we turn to the
more formal presentation of the proofs, with \Cref{sec:toy} devoted
to \Cref{thm:exact} and \Cref{sec:main} devoted
to \Cref{thm:main}. These sections present our algorithms and the main
ingredients for their analysis, with some details deferred to the
appendices. We close the main body of the paper in \Cref{sec:open},
with some further discussion regarding \Cref{open}.

\section{Technical overview}
\label{sec:tech}
To explain the central challenges with learning low rank distributions
and how we overcome them,
let us introduce the following notation: let $H_t:= \obs^t$ and
$F_t := \obs^{T-t}$ denote the observation sequences of length $t$ and
$T-t$ respectively. Then the matrix $\Pr[F_t | H_t]$ is a submatrix
of $\Pr[\obs^{\leq T-t} | \obs^t]$ and hence is rank at most $r$ by
assumption. If we define these matrices for each length $t \in [T]$, then clearly we
have encoded the entire distribution. Hence, estimating these matrices
in an appropriate sense would suffice for distribution learning.
Although the matrices all have rank at most $r$, they are
exponentially large, so the low rank property does not immediately
yield an efficient representation of the distribution.  Indeed, we
must leverage further structure to obtain efficient algorithms.


\subsection{Background: Observable operators and hard instances} \label{sec:hard-back}
For HMMs, we can hope to leverage the explicit formula for the
probability of a sequence (\Cref{eq:hmm}) to obtain an efficient
algorithm. Indeed, this is the approach adopted by Hsu, Kakade, and
Zhang~\citep{hsu2012spectral}. Specifically, they use the
\emph{observable operator}
representation~\citep{jaeger2000observable}: if we define $S \times S$
matrices $\{\mathbb{A}_o\}_{o \in \obs}$ as $\mathbb{A}_o :=
  \mathbb{T}\mathrm{diag}(\mathbb{O}[o,\cdot])$ then we can write the
probability of any observation sequence as
\begin{align*}
  \Pr[x_1,\ldots,x_T] = \one^\top \mathbb{A}_{x_T}\ldots \mathbb{A}_{x_1}\mu,
\end{align*}
where $\one$ is the all-ones vector and recall that $\mu$ is the
initial state distribution.
Hsu, Kakade and Zhang show that these operators can be estimated, up
to a linear transformation, whenever $\mathbb{T}$ and $\mathbb{O}$
have full column rank. In fact, under their assumptions, these
operators can be recovered from
$\Pr[\textbf{x}_1{=}\cdot,\textbf{x}_2{=}\cdot,\textbf{x}_3{=}\cdot]$
alone; no higher order moments of the distribution are required.

Unfortunately, this approach fails if either $\mathbb{T}$ or
$\mathbb{O}$ are (column) rank deficient, and it is conjectured that
the rank deficient HMMs are precisely the hard
instances~\citep{mossel2005learning}. On the other hand, many
interesting HMMs \emph{are} rank deficient. For example, any
\emph{overcomplete} HMM---one with fewer observations than
states---cannot have a full column rank $\mathbb{O}$ matrix. This
captures all deterministic finite automata where the alphabet size is
smaller than the number of states as well as the parity with noise
problem.

Learning parity with noise is a particularly interesting case. The
standard formulation is that we obtain samples of the form
$(\textbf{z},\textbf{y}) \in \{0,1\}^{T-1}\times\{0,1\}$ where
$\textbf{z}$ is uniformly distributed on the hypercube and $\textbf{y}
  = \bigoplus_{i \in I}\textbf{z}_i$ with probability $1-\alpha$ and
$\textbf{y} = 1 - \bigoplus_{i \in I}\textbf{z}_i$ with the remaining
probability. Here $\bigoplus$ denotes the parity operation, $I$ is a
secret subset of indices $I\subseteq[T-1]$, and $\alpha \in (0,1/2)$ is a
noise parameter. We want to learn the subset $I$, given samples from
this process. This problem is widely believed to be computationally
hard and can be encoded as an HMM with $\obs = \{0,1\}$ and $4T$
states (see~\Cref{sec:examples}).  
This HMM exhibits two particularly challenging features. First, many states have identical observation distributions, or are \emph{aliased}; 
characterizing the learnability (as well as basic structural properties)
of aliased HMMs remain long-standing open problems~\citep{weiss2015learning}. Second,
it is
quite apparent that low degree moments, like those used by Hsu,
Kakade, and Zhang, reveal no information about the subset $I$. In
particular, the observable operators $\mathbb{A}_o$ are not identifiable from
low degree moments. One must use higher order information, i.e.,
statistics about long sequences, to solve this problem.

\begin{figure}[t!]
  \begin{minipage}{0.55\textwidth}
    \centering
    \includegraphics[width=\textwidth]{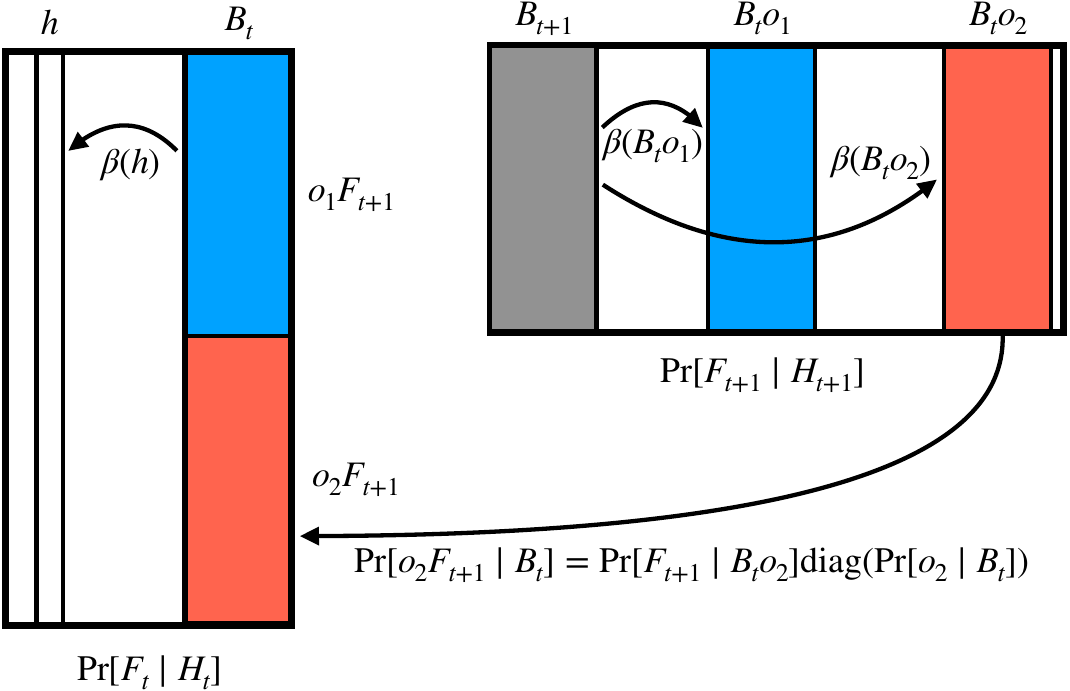}
  \end{minipage}
  \hfill
  \begin{minipage}{0.4\textwidth}
    \caption{
      Schematic of the circulant structure relating the $\Pr[F_t | H_t]$ and $\Pr[F_{t+1} | H_{t+1}]$ matrices. Columns of $\Pr[F_t \mid H_t]$ can be represented linearly in basis $B_t$ using coefficients $\beta(\cdot)$. The blocks $\Pr[oF_{t+1} \mid B_t]$ appear in the next matrix $\Pr[F_{t+1} \mid H_{t+1}]$ (up to scaling), so they can be represented in basis $B_{t+1}$, yielding operators $A_{o,t}$.}
    \label{fig:circulant}
  \end{minipage}
\end{figure}

\subsection{Efficient representation}\label{sec:eff-rep}
For rank deficient HMMs, it is not clear how to identify the operators
$\mathbb{A}_o$ and it is not even clear that such operators exist for
the more general case of low rank distributions. So, we must return to
the question of how to efficiently represent the distribution. Here,
we leverage the more general formulation of observable operators and
the equivalence between observable operator models and low dimensional
stochastic processes developed by~\citet{jaeger2000observable}.

The first observation is that any submatrix of $\Pr[F_t | H_t]$ that
has the same rank as the entire matrix can be used to build an
efficient representation. To see why, suppose we have such a
submatrix, and let us index the columns/histories of the submatrix by
$B_t$, which we refer to as the \emph{basis}. It follows that $\Pr[F_t
| B_t]$ spans the column space of $\Pr[F_t | H_t]$, which implies that
for any history $h \in H_t$ there exists coefficients
$\beta(h) \in \mathbb{R}^{|B_t|}$ such that
\begin{align*}
  \Pr[F_t | h] = \Pr[F_t | B_t]\beta(h).
\end{align*}
The main observation toward obtaining an efficient representation is
to exploit a certain circulant structure in the matrices $\{ \Pr[F_t
    | H_t]\}_{t\leq T}$ to model the evolution of the coefficients
(visualized in~\Cref{fig:circulant}). The
circulant structure is simply that for basis $B_t$, observation $o$, and
future $f \in F_{t+1}$ (i.e., of length $T-t-1$) the vector $\Pr[B_t o
    f]$ appears in two of the matrices (albeit with different
scaling). It appears in the matrix $\Pr[F_t | H_t]$ in row $of$ and
columns $B_t$, and it appears in the matrix $\Pr[F_{t+1} |
    H_{t+1}]$ in row $f$ and columns $B_to$. Thus, if we learn how to
represent the columns $\Pr[F_{t+1} | B_to]$ in terms of the
columns $\Pr[F_{t+1} | B_{t+1}]$---which we can do via the
coefficients---the circulant property provides a connection between
the matrices $\Pr[F_{t+1} | H_{t+1}]$ and $\Pr[F_t |
    H_t]$.

Formally, we can define operators
$\{A_{o,t}\}$ for each observation $o \in \obs$ and sequence length $t \in [T]$ satisfying
\begin{align}
  \Pr[F_{t+1} | B_{t+1} ]A_{o,t} = \Pr[oF_{t+1} | B_t], \label{eq:operator_full}
\end{align}
which can then be used to express sequence probabilities by iterated
application. Indeed, we have
\begin{align}
  \Pr[x_1,\ldots,x_T] & = \Pr[x_1,\ldots,x_T|B_0] = \Pr[x_2,\ldots,x_T| B_{1}]A_{x_1,0} = \ldots \notag                                 \\
  \ldots              & = \Pr[x_{T}| B_{T-1}] A_{x_{T-1},T-2}\ldots A_{x_1,0} = A_{x_{T},T-1}\ldots A_{x_1,0}\, ,\label{eq:telescoping}
\end{align}
where by an explicit choice of $B_0$, $B_T$ and $F_T$,  the matrices $A_{x_1, 0}$ and $A_{x_T, T-1}$ are column and row vectors respectively, and so the right-hand side is a scalar (see \Cref{prop:toy:1} for details)\footnote{We define $B_0$, $B_T$ and $F_T$ to be singleton sets. $B_0$ and $F_T$ contain the empty string $\varphi$ and $B_T$ contains any length $T$ observation sequence. These new definitions, in conjunction with \Cref{prop:toy:1} imply: $A_{x_T, T-1} = \Pr[x_T | B_{T-1}]$ and therefore will be a row vector. Similarly, $A_{x_1, 0}$ is a solution of $
    \Pr[F_1 | B_1] A_{x_1, 0} = \Pr[x_1 F_1 | \varphi]
  $ and is therefore a column vector.}.
More importantly, these operators can also be viewed as evolving the coefficients via the identity:
\begin{align}\label{eq:intro-coeff-2}
  \forall h \in H_t, o \in \obs: \beta(ho) = \frac{A_{o,t} \beta(h)}{\Pr[o | h]}.
\end{align}
This identity is proved in~\Cref{prop:toy:1} and~\Cref{prop:main1}. We highlight the scaling, which results in a nonlinear update equation
and appears because the coefficients express conditional rather than
joint probabilities. This viewpoint of operators evolving coefficients
will play a central role in our error analysis.

Thus, it remains to find the bases $\{B_t\}_{t\leq T}$, estimate the
operators $\{A_{o,t}\}_{o \in \obs, t \leq T}$, and control the error
amplification from iteratively multiplying these estimates. We turn to
these issues next.

\begin{remark}
  The approach of Hsu, Kakade, and Zhang can also be viewed as
  estimating operators via~\Cref{eq:operator_full} with the particular
  choice of basis. They show that conditional distribution of futures
  given any history can be written in the span of the conditional
  distributions of the single observation histories, so that $\obs$
  itself forms a basis. This is implied by their assumptions and it
  permits using only second and third degree moments to estimate the
  operators. However, in general we will need to use long sequences in
  our bases and interactive access will be crucial for
  estimation.
  Additionally, under their choice of bases and their assumptions
  they show that the solution of~\Cref{eq:operator_full} is related
  to the $\mathbb{A}_o$ operators,
  explicitly given by $\mathbb{T}$ and $\mathbb{O}$, by an
  invertible and bounded transformation, which is instrumental in their error
  analysis.
  When considering general bases $B$, we do not have such a connection and
  will require a novel error propagation argument.
\end{remark}

\begin{remark}
The above structural result identifying an efficient representation
for general low rank distribution is due
to~\citet{jaeger2000observable}. The central object of his study is
the \emph{observable operator model} (OOM) which parametrizes the
sequence probabilities of a distribution via products of
finite-dimensional matrices, analogous to~\Cref{eq:telescoping}. Note
that the $\mathbb{A}_o$ matrices are one such choice of observable
operator matrices for HMMs, but the definition is more general. He
uses the terminology ``dimension'' of a stochastic process instead of
``rank'' of a distribution. The result we have summarized above
establishes an equivalence between low-dimensional stochastic
processes and OOMs.  As this result is central to our algorithm and
analysis, we provide detailed proofs below for completeness.
\end{remark}

\subsection{Error propagation}
Although finding the bases $B_t$ and estimating corresponding operators $A_{o,t}$ is nontrivial, even if we have estimated these operators accurately,
we must address the error amplification that can arise from repeated application of the learned
operators. This challenge makes up the majority of our technical analysis. We discuss estimating operators $A_{o,t}$ in \Cref{sec:intro:estimate} and how to find the basis
in~\Cref{sec:intro:base}.


To explain the error amplification challenge, suppose for now that we are given
bases $\{B_t\}_{t \leq T}$ and subsequently estimate the operators $A_{o,t}$ in $\ell_2$ norm, i.e., we have estimate $\widehat{A}_{o,t}$ satisfying $\|\widehat{A}_{o,t} - A_{o,t}\|_2 \leq \varepsilon$. We first define our estimated model $\widehat{\Pr}$ in terms of the
estimated operators
$\widehat{A}_{o,t}$. Considering~\Cref{eq:telescoping}, the natural estimator is
\begin{align}
  \widehat{\Pr}[x_1,\ldots,x_T] = \widehat{A}_{x_{T},T-1}  \ldots \widehat{A}_{x_2,1} \widehat{A}_{x_1,0}, \label{eq:estimator}
\end{align}
where, as before, the matrices $\widehat A_{x_1, 0}$ and $\widehat A_{x_T, T-1}$ are column and row vectors respectively, so the right hand side is a scalar. To simplify notation for this section, we omit the time indexing on the operators.

Given this estimate, the total variation distance is
\begin{align*}
  \frac{1}{2} \sum_{x_1,\ldots,x_T\in \obs^T}\abr{ \widehat{A}_{x_{T}}\ldots \widehat{A}_{x_1} - A_{x_{T}}\ldots A_{x_1}}.
\end{align*}
Let us first discuss two strategies for bounding this expression that
can work in some cases, but do not seem to work in our setting. One
idea is to pass to the $\ell_2$ norm and use a telescoping argument to
obtain several terms of the form
\begin{align*}
  \sum_{x_1,\ldots,x_{T}\in\obs^{T}} \| \widehat{A}_{x_{T}}\ldots\widehat{A}_{x_{t+2}}\|_2 \cdot \|\rbr{\widehat{A}_{x_{t+1}} - A_{x_{t+1}}} A_{x_t} \ldots A_{x_1}\|_2 
\end{align*}
These terms are convenient because the matrix products only disagree
in the $t^{\mathrm{th}}$ operator. However, both the ``incoming'' product $A_{x_t}\ldots A_{x_1}$ that pre-multiplies this difference and
the ``outgoing'' product $\widehat{A}_{x_T}\ldots\widehat{A}_{x_{t+2}}$ whose norm we must bound can be rather poorly behaved. For example, the product
$A_{x_t}\ldots A_{x_1}$ can have $\ell_2$ norm that grows
exponentially with $t$, since the $\ell_2$ norm of the individual
matrices can be much larger than $1$.  An even worse problem is that
we have exponentially many terms in the sum, so that even bounding
each term by $\varepsilon$ (which would be possible if the incoming
and outgoing products were well behaved) is grossly insufficient.

The other approach is the strategy adopted by Hsu, Kakade, and
Zhang~\citep{hsu2012spectral}, which uses the definition of the
observable operators~\citep{jaeger2000observable}, $\mathbb{A}_x
  = \mathbb{T}\diag(\mathbb{O}[x,\cdot])$, explicitly. This allows them to
control the incoming and outgoing products in a decomposition analogous to the one above, but in the $\ell_1$ norm.
Their decomposition involves several terms, but to convey the main idea, observe that we can bound
\begin{align*}
  \sum_{x_1,\ldots,x_{t+1}} \| \rbr{\widehat{\mathbb{A}}_{x_{t+1}} - \mathbb{A}_{x_{t+1}}} \mathbb{A}_{x_t}\ldots,\mathbb{A}_{x_1} \|_1 \lesssim O\varepsilon\cdot\sum_{x_1,\ldots,x_t} \|\mathbb{A}_{x_t}\ldots,\mathbb{A}_{x_1} \|_1 \leq O\varepsilon.
\end{align*}
The idea is that each term in the final sum can be seen as a joint
probability of the history $x_{1},\ldots,x_t$ and the hidden state
$s_{t+1}$, so we can sum over all histories with no error
amplification. Unfortunately, there is no hidden state in the more
general setting (and for the rank deficient case, the observable operators can not be learned accurately as discussed in \Cref{sec:hard-back}), so we cannot appeal to an argument of this form.
Indeed, our main technical contribution is a new perturbation
analysis that relies on no structural assumptions.

At a more technical level, the issue with both of these arguments is
that passing to any norm, seems to be too coarse to adequately control
the error amplification. Instead, our argument carefully tracks the
error in the space of the coefficients.
Precisely, given
estimates $\widehat{A}_{o,t}$ that satisfy
$\|\widehat{A}_{o,t} - A_{o,t}\|_2 \leq \varepsilon$,
we can show, via an inductive argument, that for any $x_1,\ldots,x_t$
\begin{align*}
  (\widehat{A}_{x_t}\ldots\widehat{A}_{x_1} - A_{x_t}\ldots A_{x_1}) = \sum_{h \in H_t} \beta(h) \alpha_h + \sum_{v \in V_t^\perp} v \gamma_v,
\end{align*}
where $V_t^\perp$ is an orthonormal basis for the kernel of
$\Pr[F_t \mid B_t]$ and $\alpha_h,\gamma_v$ are scalars. Moreover, the TV distance between $\Pr[\cdot]$ and $\widehat{\Pr}[\cdot]$
is exactly equal to the sum of these scalars over all sequences
$x_1,\ldots,x_T$. Even though there could be exponentially many terms in this sum, we show that this sum is small via an inductive argument.
This makes up the most technical component of our proof, and we give a more detailed overview in \Cref{sec:main} with the formal proofs in \Cref{app:sec:perturb}.

\subsection{Estimating operators}\label{sec:intro:estimate}
We next discuss estimating the operators
$\{A_{o,t}\}_{o \in \obs, t \leq T}$ using the conditional sampling
oracle. A natural idea is to use samples to estimate both sides of the
system in~\Cref{eq:operator_full} and solve the noisy version via
linear regression.  Unfortunately, this system may have exponentially
small (in $T-t$) singular values, making it highly sensitive to
perturbation. There is also a cosmetic issue when working with
$\Pr[F_{t+1} | B_{t+1}]$, namely this matrix is exponentially
large.

To address these challenges, we introduce a particular preconditioner
that stabilizes the system.  Specifically, we instead estimate and
solve
\begin{align}
  \Pr[F_{t+1} | B_{t+1}]^\top D_{t+1}^{-1} \Pr[F_{t+1} | B_{t+1}] A_{o,t} = \Pr[F_{t+1} | B_{t+1}]^\top D_{t+1}^{-1} \Pr[o F_{t+1} | B_t] \, , 
\end{align}
where $D_{t+1}$ is a diagonal matrix with entries $d_{t+1}(f)
  := \frac{1}{|B_{t+1}|}\sum_{b \in B_{t+1}} \Pr[f | b]$ on the
diagonal.\footnote{This choice of $D_{t+1}$ ensures there is no division-by-zero issue, see \Cref{rem:zero}.}  The benefit of this preconditioner is that the new
matrices are of size $|B_{t+1}|\times|B_{t+1}|$ rather than
exponentially large, and yet they can still be estimated efficiently using the
conditional sampling oracle. To see why the latter holds, observe that
the $(i,j)^{\mathrm{th}}$ entry of the matrix on the LHS is
\begin{align*}
  \sbr{\Pr[F_{t+1} | B_{t+1}]^\top D_{t+1}^{-1}\Pr[F_{t+1} | B_{t+1}]}_{i,j} = \sum_{f \in F_{t+1}} d_{t+1}(f) \sbr{\frac{\Pr[f | b_i]\Pr[f |
        b_j]}{d_{t+1}(f)^2} },
\end{align*}
where $B_{t+1} = \{b_1,b_2,\ldots,\}$. Intuitively, we can estimate
this entry by sampling futures $f$ from $\Pr[\cdot | b]$ to
approximate any term in the sum and sampling futures from $d_{t+1}(\cdot)$
to approximate the sum itself. While this is true, there is one
technical issue to overcome: to estimate the ratio to
additive accuracy, we must estimate the individual probabilities
$\Pr[f \mid b_i]$, $\Pr[f \mid b_j]$ and $d_{t+1}(f)$ to relative
accuracy. We can obtain $(1 \pm \zeta)$ relative error estimates using
conditional samples as long as the one-step probabilities are at least
$\Omega(\zeta/T)$, but this is challenging when even a single one-step
probability is small.  To address this issue, we show that such
futures actually contribute very little to the overall sum, and we
design a test to safely ignore them. See \Cref{app:sub-1} for details.


While the ability to estimate the entries is clearly important, the
hope with preconditioning is that it dramatically amplifies the
singular values of the matrix on the left hand side. In particular, we
want that the matrix $\Pr[F_{t+1} | B_{t+1}]^\top
  D_{t+1}^{-1}\Pr[F_{t+1} | B_{t+1}]$ has large (non-zero) singular values,
as this will allow us to estimate the operators $A_{o,t}$ in the
$\ell_2$ norm. Our choice of preconditioner does achieve this in the
important example of parity with noise: we can show that $\Pr[F_{t+1}
    | B_{t+1}]$ has exponentially small (in $T-t$) singular values for
every choice of $B_{t+1}$, while there exists a basis $B_{t+1}$ for
which the non-zero singular values of the preconditioned matrix are $\Omega(1)$
(see~\Cref{sec:examples}). Unfortunately, in general, a basis which
ensures the preconditioner has large singular values might not exist, and
we address this by introducing the notion of fidelity.

\begin{restatable}[Fidelity]{definition}{deffidelity}\label{def:fidelity}
  We say that distribution $\Pr[\cdot]$ has fidelity $\Delta^*$ if there exists some bases $\{B_t\}_{t \in [T]}$, such that $\max_t |B_t| \leq 1/\Delta^*$ and
  \begin{align*}
    \forall t \in [T]: & ~~ \sigma_+\rbr{ S_t^{\frac{1}{2}}\Pr[F_t | H_t]^\top D_{t}^{-1} \Pr[F_t | H_t] S_t^{\frac{1}{2}}} \geq \Delta^*
  \end{align*}
  where $\sigma_+(M)$ denotes the magnitude of the smallest non-zero
  eigenvalue of $M$, $D_t$ is a diagonal matrix of size
  $|F_t|\times|F_t|$ with entries $d_t(f)
    := \frac{1}{|B_t|}\sum_{b \in B_t}\Pr[f | b]$, and $S_t$ is a
  diagonal matrix of size $|H_t|\times |H_t|$ with entries $s_t(h)
    := \Pr[h]$.
\end{restatable}

Importantly, we only assume the existence of bases with this
property, not that it is given to us or otherwise known in advance.
Note that, although the matrix with large  eigenvalues according to
the fidelity definition is not the same as the preconditioned matrix
we care about for learning operators, nevertheless when the
distribution has high fidelity (i.e., $\Delta^*$ is large), we can find
a basis for which $\Pr[F_{t+1} | B_{t+1}]^\top
  D_{t+1}^{-1}\Pr[F_{t+1} | B_{t+1}]$ has large eigenvalues.
This, combined with our approach for estimating entries of the
preconditioned matrix, allow us to learn operators
$A_{o,t}$ in the $\ell_2$ norm. We provide details in \Cref{app:sub:2}.

\begin{remark}
  Although our approach seems to require large fidelity, the parity with noise
  example suggests that this definition of fidelity, which can lead
  to a favorable preconditioned system, is more appropriate than
  directly assuming $\Pr[F_{t+1} | B_{t+1}]$ has large singular
  values. Indeed, we can also show that fidelity captures all previously
  studied positive results for learning HMMs. We also believe our approach can be extended to learn HMMs with small fidelity as described in \Cref{sec:open}.
\end{remark}



\subsection{Finding the basis} \label{sec:intro:base}
The only remaining challenge is to find the bases $\{B_t\}_{t \in
  [T]}$. Recall that, when considering the conditional sampling oracle,
we want bases for which the preconditioned matrices have large eigenvalues.
It turns out that when the distribution has high
fidelity a random sample of polynomially many histories will form a
basis with this property with high probability. Given that the other aspects of our
analysis seem to require high fidelity, this random sampling approach
thus suffices to prove~\Cref{thm:main}.

On the other hand, for low fidelity distributions, random sampling
will fail to cover the directions with small singular value, and so
basis finding becomes an intriguing aspect of learning with the
conditional sampling oracle. Basis finding is also the final issue to
address for~\Cref{thm:exact}, using the exact oracle. In both cases,
we provide adaptations of Angluin's $L^*$ algorithm that finds
bases for any low rank distribution. We defer discussion of the
conditional sampling version to~\Cref{sec:approx-ellip} and hope that it
serves as a starting point toward resolving~\Cref{open}.

\paragraph{Adapting $L^*$ for basis finding with the exact oracle.}
We close this section by explaining how to find a basis when provided
with the exact probability oracle. As a first observation, note that
we need not construct the entire system in~\Cref{eq:operator_full} to
identify operators $A_{o,t}$. It suffices to find a set of futures
$\Lambda_t \subset F_t$ such that $\Pr[\Lambda_t\mid H_t]$ spans the row space of
$\Pr[F_t\mid H_t]$. In other words, we just need $B_t$ and $\Lambda_t$
for which $\Pr[\Lambda_t\mid B_t]$ has the same rank as $\Pr[F_t \mid
    H_t]$.

The difficulty is that there is no universal choice of $B_t,\Lambda_t$
for general low rank distributions, and finding these sets poses a
challenge search problem in an exponentially
large space. We address this challenge using the
exact probability oracle and an adaptation of Angluin's $L^*$
algorithm for learning DFAs. The basic idea is as follows: given sets
$B_t,\Lambda_t$ whose submatrix is not of the required rank, we can still solve the
underdetermined system
\begin{align*}
  \Pr[\Lambda_t| B_t]A_{o,t} = \Pr[o\Lambda_t|B_t]
\end{align*}
and obtain an estimate $\widehat{\Pr}[\cdot]$ via~\Cref{eq:estimator}.
Then, we can sample sequences $x_1,\ldots,x_t \sim \Pr[\cdot]$ and
check if our estimate makes the correct predictions
on these sequences. In particular, we check
\begin{align*}
  \widehat{\Pr}[x_1,\ldots,x_t,\Lambda_t] \overset{?}{=} \Pr[x_1,\ldots,x_t,\Lambda_t].
\end{align*}
If the predictions are accurate (i.e., these equalities hold) for each $t$ and for polynomially
many random sequences, then we can show that $\widehat{\Pr}[\cdot]$
is close $\Pr[\cdot]$ in total variation distance.

On the other hand, if these equalities do not hold for some sample
$x_1,\ldots,x_t$, then we can use it as a counterexample to
improve our basis. We provide all the details
in \Cref{sec:toy}.

\section{Learning with conditional probabilities (Theorem 1)}
\label{sec:toy}
In this section we prove \Cref{thm:exact}.

\thmone*

We first introduce some notation, which differs from \Cref{sec:tech} slightly. We define $H_t := \obs^{t}$ to be the set of histories of length $t$. Similarly, we define $F_t := \obs^{\leq T-t}$ to be the set of futures of length $\leq T-t$, coinciding with our rank definition. Notice that unlike in \Cref{sec:tech}, we take $F_t$ to be all futures of length \emph{up to} $T-t$, so that one may append elements from the futures $F_t$ to elements from the histories $H_t$ to obtain a valid observation sequence of length at most $T$. To simplify the technical notation, let $\varphi$ be the empty string and define probabilities associated to empty string as: $\Pr[x_1 \ldots x_T | \varphi] = \Pr[x_1 \ldots x_T]$ and $\Pr[\varphi | x_1 \ldots x_T] = 1$ for any $T$-length sequence $x_1,\ldots, x_T$.

We now formally define the notion of bases for distribution $\Pr[\cdot]$.
\begin{restatable}[Basis]{definition}{defbasis}
    \label{def:bases}
    Let $\Pr[\cdot]$ be any distribution over observation sequences of length $T$. 
    A set $\{B_t\}_{t \in [T]}$, where each $B_t \subset H_t$, forms \emph{bases} for $\Pr[\cdot]$, if for each $t \in [T]$ and all $x \in \obs^t$, there exists coefficients $\beta(x)$ such that:
      \[
          \Pr[F_t|x]  = \Pr[F_t|B_t] \beta(x)\, .\]
  We call each $B_t$ a \emph{basis} for $\Pr[\cdot]$ at sequence length $t$.
  \end{restatable}

In other words, a set $B_t \subset H_t$ forms a basis for distribution $\Pr[\cdot]$ if the column vectors $\Pr[F_t | B_t]$ span the column space of $\Pr[F_t | H_t]$.
For now, when choosing $B_t$, we impose no constraint on the size of these coefficients, and we also do not require the columns $\Pr[F_t \mid B_t]$ to be linearly independent. 
The low rank property of $\Pr[\cdot]$ directly implies that for each $t$, there exists a basis $B_t$ with $|B_t| \leq r$.
However, as discussed in \Cref{sec:eff-rep}, there are exponentially many histories in $H_t$, so even if we had such a small basis $B_t$, simply learning the coefficients for each history will not suffice for an efficient algorithm. 
We address this issue with the following structural result:
because of the circulant structure of the conditional probability matrix, we can generate all the coefficients using $O T$ matrices each of size at most $r \times r$.

\begin{restatable}[Existence of efficient representation]{proposition}{propone}
    \label{prop:toy:1}
    Let $B_0 = F_T = \{\varphi\}$ and $B_T = \{h\}$ for any observation sequence $h \in H_{T}$.\footnote{We set $B_T$ to be a singleton set for notational clarity, as otherwise we would have to pre-multiply our probability estimate with the all ones row vector. Note that any singleton set forms a basis because $\Pr[F_T | H_T]$ is the all ones matrix.} For $t \in \{1,\ldots,T-1\}$, let $B_t \subset H_t$ be any basis for distribution $\Pr[\cdot]$ at sequence length $t$. Then, the probability distribution $\Pr[\cdot]$ can be written as\footnote{Here by choice of basis $B_0$ and $B_T$, $A_{x_T, T-1} = \Pr[x_T | B_{T-1}]$ by definition and is therefore a row vector. Similarly, $A_{x_1, 0}$ is a solution of $
    \Pr[F_1 | B_1] A_{x_1, 0} = \Pr[x_1 F_1 | \varphi]
  $ and is therefore a column vector.}:
  \begin{equation*}
    \Pr[x_1 \ldots x_T] = A_{x_T, T-1} A_{x_{T-1}, T-2} \ldots A_{x_1, 0}
  \end{equation*} where matrices $A_{o, t}$ for every $o \in \obs$ and $t \in \{0,\ldots,T-1\}$ satisfy \begin{equation}\label{eq:help:prop-1}
    \Pr[F_{t+1}| B_{t+1}]A_{o, t} = \Pr[oF_{t+1}|B_t].   
  \end{equation} Moreover, this equation always has a solution.
\end{restatable}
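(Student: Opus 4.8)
The plan is to establish the two claims in sequence: first that the linear system \eqref{eq:help:prop-1} is always solvable, and then that for \emph{any} family of solutions $\{A_{o,t}\}$ the telescoping product recovers $\Pr[x_1\ldots x_T]$. The workhorse for the second part will be an induction on $k \in \{1,\ldots,T\}$ proving the intermediate identity
\[
  \Pr[F_k \mid B_k]\,\big(A_{x_k,k-1}\cdots A_{x_1,0}\big) \;=\; \Pr[x_1 \ldots x_k F_k],
\]
where $\Pr[x_1\ldots x_k F_k]$ denotes the column vector with entry $\Pr[x_1\ldots x_k f]$ at each $f \in F_k$. Specializing to $k = T$, where $F_T = \obs^{\le 0} = \{\varphi\}$ and $B_T = \{h\}$ so that $\Pr[F_T\mid B_T]$ is the $1{\times}1$ matrix $[\Pr[\varphi\mid h]] = [1]$, turns the left-hand side into the full product and the right-hand side into $[\Pr[x_1\ldots x_T]]$, which is the desired formula.

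For solvability, I would fix $t \in \{1,\ldots,T-2\}$, $o \in \obs$, and a column index $b \in B_t$, and use the chain rule for conditionals: entrywise over $f \in F_{t+1} = \obs^{\le T-t-1}$ we have $\Pr[of \mid b] = \Pr[o\mid b]\,\Pr[f \mid bo]$, so the $b$-th column of $\Pr[oF_{t+1}\mid B_t]$ equals the scalar $\Pr[o\mid b]$ times $\Pr[F_{t+1}\mid bo]$. Since $bo \in H_{t+1}$, this vector is a column of $\Pr[F_{t+1}\mid H_{t+1}]$, and because $B_{t+1}$ is a basis at length $t+1$ it lies in the column span of $\Pr[F_{t+1}\mid B_{t+1}]$; hence every column of $\Pr[oF_{t+1}\mid B_t]$ does, and a solution $A_{o,t}$ exists. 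The boundary indices are handled by the same computation: for $t = 0$ it shows the right-hand side is $\Pr[o]\cdot\Pr[F_1\mid o]$ with $o \in H_1$, and for $t = T-1$ the coefficient matrix $\Pr[F_T\mid B_T] = [1]$ is invertible so any right-hand side is reachable.

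For the induction, I would write $v_k := A_{x_k,k-1}\cdots A_{x_1,0}$; the base case $k=1$ is exactly the defining equation $\Pr[F_1\mid B_1]A_{x_1,0} = \Pr[x_1 F_1\mid \varphi]$ together with the $\varphi$-convention. For the step, assume $\Pr[F_k\mid B_k]v_k = \Pr[x_1\ldots x_k F_k]$; then applying \eqref{eq:help:prop-1} with observation $x_{k+1}$ gives $\Pr[F_{k+1}\mid B_{k+1}]v_{k+1} = \Pr[x_{k+1}F_{k+1}\mid B_k]v_k$. Now $x_{k+1}F_{k+1}$ is precisely the set of strings in $F_k$ of length $\ge 1$ that begin with $x_{k+1}$, so $\Pr[x_{k+1}F_{k+1}\mid B_k]$ is a row-submatrix of $\Pr[F_k\mid B_k]$; restricting the inductive hypothesis to those rows gives $\Pr[x_{k+1}F_{k+1}\mid B_k]v_k = \Pr[x_1\ldots x_{k+1}F_{k+1}]$, closing the induction.

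I expect the main subtlety --- and the reason for routing the argument through $\Pr[F_k\mid B_k]v_k$ rather than through $v_k$ directly --- to be that \eqref{eq:help:prop-1} can be underdetermined whenever the columns of $\Pr[F_{t+1}\mid B_{t+1}]$ are linearly dependent, so there is no canonical operator $A_{o,t}$ and one cannot simply substitute a preferred solution. Left-multiplying everything by $\Pr[F_k\mid B_k]$ makes the relevant quantity solution-independent, and the only thing left to check carefully is the ``circulant'' bookkeeping of \Cref{fig:circulant}: identifying $x_{k+1}F_{k+1}$ as a block of rows sitting inside $F_k$ (and matching the corresponding entries of $\Pr[x_1\ldots x_k F_k]$ with those of $\Pr[x_1\ldots x_{k+1}F_{k+1}]$) is exactly the index juggling that makes the induction telescope.
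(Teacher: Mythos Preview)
Your proof is correct and follows essentially the same route as the paper: prove solvability of \eqref{eq:help:prop-1} via the chain rule $\Pr[of\mid b]=\Pr[o\mid b]\Pr[f\mid bo]$ together with the basis property of $B_{t+1}$, and then telescope using the identity $\Pr[F_k\mid B_k]\,A_{x_k,k-1}\cdots A_{x_1,0}=\Pr[x_1\ldots x_kF_k]$, finishing at $k=T$ with the boundary conventions. The only cosmetic difference is that the paper writes down the explicit solution $A_{o,t}=[\beta(b_1o)\ \cdots\ \beta(b_no)]\,\diag(\Pr[o\mid b_i])$ (this formula is reused later in \Cref{prop:main1} and in the algorithm of \Cref{sec:main}), whereas you argue existence abstractly; conversely, you are a bit more explicit than the paper that the telescoping step uses only the defining equation and hence holds for \emph{any} choice of solution.
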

\begin{proof}
  We first show there exists a solution $A_{o, t}$ for \Cref{eq:help:prop-1}. For basis $B_t = \{b_1, \ldots, b_n\}$ and $B_{t+1}$, we claim the following $A_{o,t}$ is a solution: \begin{equation}
      A_{o, t} =  \begin{bmatrix}
        \beta(b_1 o) & \beta(b_2 o) & \cdots & \beta(b_n o)
      \end{bmatrix} \begin{bmatrix}
        \Pr[o| b_1] & 0           & \cdots & 0           \\
        0           & \Pr[o| b_2] & \cdots & 0           \\
        \vdots      & \vdots      & \ddots & 0           \\
        0           & \cdots      & 0      & \Pr[o| b_n]
      \end{bmatrix}\label{eq:operator_closed}
    \end{equation} Here $\beta(x) $ and $\beta(xo)$ are the coefficients associated to history $x$ of length $t$ under $B_t$ and history $xo$ of length $t+1$ under $B_{t+1}$ respectively. Recall that, in particular, these coefficients are such that $\Pr[F_{t+1} \mid B_{t+1}]\beta(xo) = \Pr[F_{t+1} \mid xo]$. By definition of $A_{o,t}$, \begin{align}
      & \Pr[F_{t+1} | B_{t+1}] A_{o,t} \notag                                             \\
    = & \Pr[F_{t+1} | B_{t+1}] \begin{bmatrix}
                                        \beta(b_1 o) & \beta(b_2 o) & \cdots & \beta(b_n o)
                                    \end{bmatrix} \begin{bmatrix}
                                                      \Pr[o| b_1] & 0      & \cdots & 0           \\
                                                      \vdots      & \vdots & \ddots & 0           \\
                                                      0           & \cdots & 0      & \Pr[o| b_n]
                                                  \end{bmatrix}\notag  \\
    = & \Pr[F_{t+1} | B_{t} o] \begin{bmatrix}
                                       \Pr[o| b_1] & 0      & \cdots & 0           \\
                                       \vdots      & \vdots & \ddots & 0           \\
                                       0           & \cdots & 0      & \Pr[o| b_n]
                                   \end{bmatrix} \tag{by definition of $\beta(b_i o)$} \\
    = & \Pr[o F_{t+1} | B_{t}]. \tag{by Bayes rule}
\end{align} Since $oF_{t+1}$ is a subset of $F_t$, by repeatedly applying this equation, we get \[
  \Pr[F_T | B_T] A_{x_T, T-1} A_{x_{T-1}, T-2} \ldots A_{x_1, 0} = \Pr[x_T F_T \mid B_{T-1}] A_{x_{T-1},T-2}\ldots A_{x_1,0} = \Pr[x_1 x_2\ldots x_T F_T | B_0]
\]
Noting $\Pr[F_T | B_T] = 1$ and  $\Pr[x_1 x_2\ldots x_T F_T | B_0] = \Pr[x_1 x_2\ldots x_T \varphi | \varphi] = \Pr[x_1 x_2\ldots x_T]$ as $F_T = B_0 = \{\varphi\}$ completes the proof.
\end{proof} 

\begin{savenotes}
\begin{algorithm}[t!]
  \SetArgSty{textrm}
  \caption{\label{alg:exact}Learning low rank distributions using exact conditional probabilities.}
  \SetAlgoLined
  Set $B_0 = \Lambda_T = \{\varphi\}$.\\
  Set $B_t = \{0^t\}$ where $0^t$ is $(0, \ldots, 0)$ with $t$ zeroes for all $t \in \{1, \ldots, T\}$.\\
  Set $\Lambda_{t} = \{0\}$ or $\{1\}$ to ensure $\Pr[\Lambda_t | B_t] \neq 0$ for all $t \in \{0, \ldots, T-1\}$.
  \footnote{either $\Pr[0|B_t]$ or $\Pr[1 \mid B_t]$ must be nonzero.}\\ 
  \For{round $1,2, \ldots$}{
    Choose $\widehat A_{o,t}$ for each $o\in \obs$ and $t \in [T-1]$ to be any matrix that satisfies \begin{equation}\label{app:eq:1}
      \Pr[\Lambda_{t+1} | B_{t+1}] \widehat A_{o, t} = \Pr[o \Lambda_{t+1} | B_t]
    \end{equation} \label{app:line:1}\\
    Let $\overline \Pr$ be a function defined on observation sequence $(x_1 \ldots x_t)$ for any $t\in [T]$ as, \begin{equation}\label{app:eq:1star}
      \overline \Pr[x_1, \ldots, x_t, \Lambda_{t}] = \Pr[\Lambda_t| B_t] \widehat A_{x_t, t-1} \ldots \widehat A_{x_1, 0}
    \end{equation}\\
    Sample $n$ sequences $(x_1,\ldots x_t)$ for each length $t \in [T]$ and check if any one of these $n T$ sequences is a counterexample, i.e., it satisfies  \begin{equation*}
      \overline \Pr[x_1, \ldots, x_t, \Lambda_t] \neq \Pr[x_1, \ldots, x_t, \Lambda_t]
    \end{equation*} \label{app:line:2}\\
    \If{we find such a counterexample $(x_1, \ldots, x_t)$}{
      Use \Cref{app:prop:1} to find a time step $\tau \in [t]$, a new test future $\lambda' \in F_\tau$, and a new representative history $b'\in H_\tau$. Update $\Lambda_\tau := \Lambda_\tau \cup \{\lambda'\}$ and $B_\tau := B_\tau \cup \{b'\}$.
    }
    \Else{
      return $\{\widehat A_{o,t}\}_{o \in \obs, t \in [T-1]}$
    }
  }
\end{algorithm}
\end{savenotes}

\subsection{Algorithm}
We now present our algorithm (\Cref{alg:exact}). The user furnishes $\varepsilon$, the accuracy with which the distribution is to be learned; and $\delta$, a confidence parameter. The parameter $n$ depends on the input and is detailed in the proof of \Cref{thm:exact}.

As discussed in \Cref{sec:tech}, the algorithm iteratively builds a set of histories $B_t \subset H_t$ and a set of futures $\Lambda_t \subset F_t$ (for each $t$), to span the column/row space of $\Pr[F_t \mid H_t]$, respectively. Via \Cref{prop:toy:1}, if we can find such sets, they would provide an efficient representation of the distribution. We refer to $B_t$ and $\Lambda_t$ as \emph{representative} histories and \emph{test} futures, respectively, and as we grow these sets, we maintain the invariant that the matrix $\Pr[\Lambda_t \mid B_t]$ is square and invertible.

We start with $B_t,\Lambda_t$ of size $1$. Then, we repeat the
following: motivated by the evolving equation
in~\Cref{prop:toy:1}, we use~\Cref{app:eq:1} to compute estimates
$\widehat{A}_{o,t}$ using our current representative histories and test
futures. This may be an under-determined linear system,
but \Cref{prop:toy:1} guarantees that it has a solution, and we take
$\widehat{A}_{o,t}$ to be any such solution. We use these operators to
define our estimate for the distribution, given
in~\Cref{app:eq:1star}, via iterated multiplication of the
operators. Then, we sample several sequences from the distribution and
check if any of them certify that our estimate is incorrect, i.e.,
serve as a counterexample. If we do find a counterexample, then the
algorithm finds a time step $\tau$, a new history $b' \in H_\tau$ and
a new future $\lambda' \in F_\tau$ that increases the rank of
$\Pr[F_\tau \mid B_\tau]$ (this step is described in \Cref{app:prop:1}
below). This can only happen $rT$ times if the distribution has rank
$r$. On the other hand, if we do not find a counterexample, then we
simply output our current estimate.

\subsection{Analysis}
We first show how to use a counterexample to improve our set of representative histories and test futures.
  
\begin{restatable}[Finding representative histories and test futures]{proposition}{proptwo}\label{app:prop:1}
    If $x_1\ldots x_t$ is a counterexample, that is, it satisfies the following:\begin{equation} \label{app:prop1:eq1}
      \overline \Pr[x_1, \ldots, x_t, \Lambda_{t}] \neq \Pr[x_1, \ldots, x_t, \Lambda_{t}] 
    \end{equation}
   then we can find a new test future $\lambda' \in F_\tau$ and representative history $b' \in H_\tau$ for $\tau \in [t]$ in at most $\poly(r, T)$ time such that $\rank(\Pr[\Lambda_\tau \cup \{\lambda'\} | B_\tau \cup \{b'\}]) = \rank(\Pr[\Lambda_\tau | B_\tau]) + 1$.
\end{restatable}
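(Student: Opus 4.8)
The plan is to mimic the counterexample-processing step of Angluin's $L^*$ algorithm: I would walk along the prefixes of the counterexample, locate the first length at which the current estimate $\overline\Pr$ stops matching the truth on the current test futures, and read off \emph{both} a new representative history (that prefix) and a new test future (a one-step extension of one of the current test futures) from the disagreement at that breakpoint. The bulk of the argument is then a short linear-algebra computation — essentially a Schur-complement identity — showing that this particular pair strictly increases the rank of $\Pr[\Lambda_\tau | B_\tau]$.

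Concretely, write $z_s := (x_1,\ldots,x_s)$ for $s \in \{0,1,\ldots,t\}$ and set $\widehat A$-products $\widehat q_s := \widehat A_{x_s,s-1}\cdots\widehat A_{x_1,0}$ (with the empty product $\widehat q_0 := 1$), so that $\overline\Pr[z_s,\Lambda_s] = \Pr[\Lambda_s | B_s]\,\widehat q_s$ by \eqref{app:eq:1star}. Consider the predicate $Q_s$: ``$\overline\Pr[z_s,\Lambda_s] = \Pr[z_s,\Lambda_s]$.'' Using the defining relation \eqref{app:eq:1} for $\widehat A_{x_1,0}$ one checks directly that $Q_0$ and $Q_1$ always hold, whereas $\neg Q_t$ is exactly the counterexample hypothesis \eqref{app:prop1:eq1}; so scanning $s = 2,3,\ldots$ I would find the smallest $\sigma \in \{2,\ldots,t\}$ with $\neg Q_\sigma$ and set $\tau := \sigma-1 \in [t]$, so that $Q_\tau$ holds. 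This scan is cheap: evaluating $Q_s$ amounts to multiplying at most $T$ matrices of size at most $r$ to form $\overline\Pr[z_s,\Lambda_s]$ and querying the exact oracle for $\Pr[z_s,\Lambda_s]$, all in $\poly(r,T)$ time.

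Next I would unpack the breakpoint. Since $\Pr[\Lambda_\tau | B_\tau]$ is invertible, $Q_\tau$ gives $\widehat q_\tau = \Pr[z_\tau]\,w$, where $w$ is the unique solution of $\Pr[\Lambda_\tau | B_\tau]\,w = \Pr[\Lambda_\tau | z_\tau]$ (and $\Pr[z_\tau] \neq 0$, since $\Pr[z_\tau] = 0$ would force $\widehat q_\tau = 0$ and make both sides of $Q_\sigma$ vanish, contradicting $\neg Q_\sigma$). On the other side, \eqref{app:eq:1} for $\widehat A_{x_\sigma,\tau}$ yields
\begin{align*}
  \overline\Pr[z_\sigma,\Lambda_\sigma] \;=\; \Pr[\Lambda_\sigma | B_\sigma]\,\widehat A_{x_\sigma,\tau}\,\widehat q_\tau \;=\; \Pr[x_\sigma\Lambda_\sigma | B_\tau]\,\widehat q_\tau ,
\end{align*}
while $\Pr[z_\sigma,\Lambda_\sigma] = \Pr[z_\tau, x_\sigma\Lambda_\sigma]$ after regrouping. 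Hence $\neg Q_\sigma$ says these two $|\Lambda_\sigma|$-vectors disagree in some coordinate $\lambda^\star \in \Lambda_\sigma$; setting $\lambda' := x_\sigma\lambda^\star \in F_\tau$ and $b' := z_\tau \in H_\tau$ and substituting $\widehat q_\tau = \Pr[z_\tau]\,w$, this disagreement becomes exactly $\Pr[\lambda' | B_\tau]\,w \neq \Pr[\lambda' | b']$.

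Finally I would conclude by a rank computation: the Schur complement of $\Pr[\Lambda_\tau\cup\{\lambda'\} | B_\tau\cup\{b'\}]$ relative to its invertible block $\Pr[\Lambda_\tau | B_\tau]$ equals $\Pr[\lambda' | b'] - \Pr[\lambda' | B_\tau]\,w$, which is nonzero by the previous line, so the augmented matrix has rank $\rank(\Pr[\Lambda_\tau | B_\tau]) + 1$. The same identity shows the augmentation is genuine — $b' \notin B_\tau$ (else $w$ is a coordinate vector and $\Pr[\lambda' | B_\tau]w = \Pr[\lambda' | b']$) and $\lambda' \notin \Lambda_\tau$ (else $\Pr[\lambda' | B_\tau]w$ is a coordinate of $\Pr[\Lambda_\tau | b']$, again equal to $\Pr[\lambda' | b']$) — and every linear system involved has dimension at most $r$, giving the claimed $\poly(r,T)$ runtime. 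The step I expect to be most delicate is committing to the right predicate $Q_s$: it has to be checkable with only the exact oracle, its first failure has to localize \emph{simultaneously} a new history and a new future, and the degenerate cases ($\Pr[z_\tau]=0$, $b'\in B_\tau$, $\lambda'\in\Lambda_\tau$) have to be excluded automatically — once $Q_s$ is chosen as above, the remainder is the routine Schur-complement bookkeeping sketched here.
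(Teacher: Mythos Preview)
Your proposal is correct and follows essentially the same approach as the paper: locate the breakpoint $\tau$ where the estimate agrees on $\Lambda_\tau$ but disagrees on $x_{\tau+1}\Lambda_{\tau+1}$, set $b'=x_{1:\tau}$ and $\lambda'=x_{\tau+1}\lambda^\star$, and certify the rank jump. The only cosmetic difference is in the final step---the paper argues by contradiction that the new row $\Pr[\lambda'\mid B_\tau\cup\{b'\}]$ is independent of the old rows, whereas you compute the Schur complement directly; these are equivalent, and your explicit handling of the edge cases ($\Pr[z_\tau]=0$, $b'\in B_\tau$, $\lambda'\in\Lambda_\tau$) is a nice touch the paper leaves implicit.
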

\begin{proof}
  For clarity, in the poof, we abuse notation and do not explicitly mention the sequence length when writing the operator $A_{o,t}$, i.e., we use $A_{x_t}$ instead of $A_{x_t, t-1}$. First, we find a time $\tau \in [t]$ where the following equations hold:\begin{align*}
      \Pr[x_1 \ldots x_\tau \Lambda_\tau] &= \Pr[\Lambda_\tau | B_\tau] \widehat A_{x_\tau} \ldots \widehat A_{x_1}\\
      \Pr[x_1 \ldots x_\tau x_{\tau + 1} \Lambda_{\tau + 1}] &\neq \Pr[\Lambda_{\tau + 1} | B_{\tau + 1}] \widehat  A_{x_{\tau + 1}} \widehat A_{x_\tau} \ldots \widehat  A_{x_1}
  \end{align*}
  Such a $\tau$ must exist because (a) the first equation is true for $\tau = 0$ by definition, and (b) the second equation is true for $\tau = t-1$ because of the counterexample property (\Cref{app:prop1:eq1}).
  Now, we can simplify the equations above by substituting the vector $\mathbf{v} := (\Pr[x_1 \ldots x_\tau])^{-1} \widehat A_{x_\tau} \ldots \widehat A_{x_1}$ which gives \begin{align}
    \label{app:prop:eq:2}
    \Pr[\Lambda_\tau | x_1 \ldots x_\tau] &= \Pr[\Lambda_\tau | B_\tau] \mathbf{v}\\
      \Pr[x_{\tau + 1} \Lambda_{\tau + 1} | x_1 \ldots x_\tau] &\neq \Pr[\Lambda_{\tau + 1} | B_{\tau + 1}] \widehat A_{x_{\tau + 1}} \mathbf{v} = \Pr[x_{\tau + 1}\Lambda_{\tau + 1} | B_{\tau}] \mathbf{v}, \label{app:prop:eq:3}
  \end{align} where the last step holds by definition of $\widehat A_{x_{\tau + 1}}$ (\Cref{app:eq:1}). Let $x_{\tau + 1}\lambda_{\tau + 1}$ index the row of \Cref{app:prop:eq:3} where equality does not hold. Define $\lambda' = x_{\tau + 1} \lambda_{\tau + 1}$ and $b' = x_1 \ldots x_\tau$. We show that the equations above imply that the row vector $\Pr[\lambda' | B'_{\tau}] := \Pr[x_{\tau + 1} \lambda_{\tau + 1} | B'_{\tau}]$ is linearly independent of the rows of $\Pr[\Lambda_{\tau} | B'_{\tau}]$. This is enough to prove our claim that $\rank(\Pr[\Lambda'_\tau | B'_{\tau}]) = \rank(\Pr[\Lambda_\tau | B_{\tau}]) + 1$. 
  
  We establish linear independence by contradiction. Assume that $\Pr[\lambda' \mid B'_{\tau}]$ is in the span of the rows of $\Pr[\Lambda_\tau \mid B'_\tau]$. Then, there exists a vector $\textbf{w}$ such that: \begin{equation}\label{app:prop:eq:4}
    \Pr[x_{\tau + 1} \lambda_{\tau + 1} | B'_{\tau}] = \textbf{w}^\top \Pr[\Lambda_{\tau} | B'_{\tau}]\, .
  \end{equation} Then, we reach a contradiction as \begin{align*}
    \Pr[x_{\tau + 1}\lambda_{\tau + 1} | x_1 \ldots x_{\tau}] &= \textbf{w}^\top \Pr[\Lambda_{\tau} | x_1 \ldots x_{\tau}] \\
      &=  \textbf{w}^\top \Pr[\Lambda_\tau | B_\tau] \textbf{v}\\
      &=  \Pr[x_{\tau + 1} \lambda_{\tau + 1} | B_{\tau}] \textbf{v}\\
      &\neq \Pr[x_{\tau + 1} \lambda_{\tau + 1} | x_1 \ldots x_\tau]
  \end{align*}
 where the first and third equality follows from linear dependence (\Cref{app:prop:eq:4}), the second equality follows from \Cref{app:prop:eq:2}, and the last inequality follows from \Cref{app:prop:eq:3}.
\end{proof}

Finally, we need a technical lemma which allows us to estimate the TV distance using conditional samples. This lemma implies that if our algorithm does not find a violation, then with high probability our estimate is close to the true distribution in TV distance.

\begin{restatable}[Substitute for TV oracle]{proposition}{propsub}
  \label{lemma:condcheck}
  Let $\Pr[\cdot]$ and $\widehat \Pr[\cdot]$ be two probability distributions over observation sequences of length $T$. Suppose that for all $t\in \{0, \ldots, T\}$ and observations $o\in \obs$ \[
      \EE_{x_1, \ldots, x_t \sim \Pr[\cdot]} \sbr{\abr{\widehat \Pr[o | x_1, \ldots, x_t] -  \Pr[o | x_1, \ldots, x_t]}} \leq  \varepsilon\, .
  \] Then \[
      TV(\Pr, \widehat \Pr) = \frac{1}{2}\sum_{x_1, \ldots, x_T} |(\Pr[x_{1:T}] - \widehat \Pr[x_{1:T}])| \leq \frac{(T+1)|O|\varepsilon}{2}
  \]
\end{restatable}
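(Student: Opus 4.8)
The plan is to combine the chain rule with a standard hybrid (telescoping) argument over the $T$ time steps. Write both distributions through their one-step conditionals: for every sequence $x_{1:T}=(x_1,\dots,x_T)$,
\[
  \Pr[x_{1:T}] = \prod_{t=1}^{T}\Pr[x_t \mid x_{1:t-1}]
  \qquad\text{and}\qquad
  \widehat\Pr[x_{1:T}] = \prod_{t=1}^{T}\widehat\Pr[x_t \mid x_{1:t-1}],
\]
where $x_{1:0}=\varphi$ and the conditionals on zero-probability prefixes are pinned down by whatever consistent convention defines the two distributions. Telescoping the difference of products (replacing one factor at a time, from $\widehat\Pr$-conditionals to $\Pr$-conditionals) yields the identity
\[
  \Pr[x_{1:T}] - \widehat\Pr[x_{1:T}] = \sum_{t=1}^{T} \Pr[x_{1:t-1}]\,\rbr{\Pr[x_t \mid x_{1:t-1}] - \widehat\Pr[x_t \mid x_{1:t-1}]}\,\widehat\Pr[x_{t+1:T} \mid x_{1:t}],
\]
using $\prod_{s<t}\Pr[x_s\mid x_{1:s-1}] = \Pr[x_{1:t-1}]$ and $\prod_{s>t}\widehat\Pr[x_s\mid x_{1:s-1}] = \widehat\Pr[x_{t+1:T}\mid x_{1:t}]$.

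Next I would apply the triangle inequality, sum over all $x_{1:T}$, and treat one value of $t$ at a time. For a fixed $t$, all factors except the middle difference are nonnegative, so I can first sum the suffix $x_{t+1},\dots,x_T$: since each $\widehat\Pr[\cdot\mid x_{1:s-1}]$ is a probability distribution over $\obs$, the product $\widehat\Pr[x_{t+1:T}\mid x_{1:t}]$ sums to $1$ over $x_{t+1:T}$ (collapsing innermost to outermost). What remains is $\sum_{x_{1:t}} \Pr[x_{1:t-1}]\,\abr{\Pr[x_t\mid x_{1:t-1}] - \widehat\Pr[x_t\mid x_{1:t-1}]}$; summing $x_t$ over $\obs$ and recognizing the outer sum over $x_{1:t-1}$ as an expectation under $\Pr[\cdot]$ turns the $t$-th term into $\sum_{o\in\obs}\EE_{x_1,\dots,x_{t-1}\sim\Pr[\cdot]}\sbr{\abr{\widehat\Pr[o\mid x_{1:t-1}] - \Pr[o\mid x_{1:t-1}]}}$, which is at most $|O|\varepsilon$ by the hypothesis (applied at sequence length $t-1\in\{0,\dots,T-1\}$). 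Summing over $t=1,\dots,T$ gives $\sum_{x_{1:T}}\abr{\Pr[x_{1:T}] - \widehat\Pr[x_{1:T}]} \le T\,|O|\,\varepsilon \le (T+1)\,|O|\,\varepsilon$, and dividing by $2$ yields $TV(\Pr,\widehat\Pr)\le \tfrac{(T+1)|O|\varepsilon}{2}$.

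I do not expect a substantive obstacle; this is essentially the textbook chain-rule bound on TV distance. The only points requiring a little care are (i) verifying that the chain-rule factorization—and hence the telescoping identity—remains valid even when some prefix $x_{1:t-1}$ has zero probability under $\Pr$ or $\widehat\Pr$: at the first index where a prefix probability vanishes, the corresponding conditional is $0$, so both sides of the factorization vanish there; and (ii) ensuring that the conditioning convention used for $\widehat\Pr$ on zero-probability events matches the one implicit in the hypothesis and in the definition of $\widehat\Pr$. The slack of one in the constant ($T$ versus $T+1$) costs nothing, since $T\le T+1$.
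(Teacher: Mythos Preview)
Your proposal is correct and takes essentially the same approach as the paper: the paper phrases the argument as an induction on the prefix length (splitting $\Pr[x_{1:t}]-\widehat\Pr[x_{1:t}]$ into a ``first term'' bounded by the hypothesis and a ``second term'' bounded by the inductive hypothesis after summing the last $\widehat\Pr$-conditional to $1$), while you unroll that induction into a single telescoping identity. Both rely on exactly the same two ingredients—the hypothesis giving $|O|\varepsilon$ per step and the fact that $\widehat\Pr[\cdot\mid x_{1:t}]$ is a probability distribution—and your bound $T|O|\varepsilon$ is in fact one unit sharper than the stated $(T+1)|O|\varepsilon$.
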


Since, $\overline \Pr[\cdot]$ might not be a probability distribution, we need to apply this proposition to a probability distribution $\widehat \Pr[\cdot]$ that is close to $\overline \Pr[\cdot]$, which can be obtained by a simple construction. These details and the proof of \Cref{lemma:condcheck} are relatively straightforward and deferred to~\Cref{app:sec:exact}. 


\section{Learning with conditional samples (Theorem 2)}
\label{sec:main}
In this section, we prove \Cref{thm:main}

\thmtwo*

Throughout this section, we use the same notation as \Cref{sec:tech}, the set of futures $F_t:= \obs^{T-t}$. 

\begin{algorithm}[t!]
  \SetArgSty{textrm}
  \caption{Learning low rank distributions using conditional samples.}
  \label{alg:cond}
  \SetAlgoLined
  \For{sequence length $t = 0,1,2, \ldots, T$}{
      Build set $B_t = \{b_1, \ldots, b_n\}$ of $n$ observation sequences of length $t$ using \Cref{prop:1}.\\
      Build empirical estimates $\widehat q(b o)$ and $\widehat \Sigma_{B_t}$ (defined in \Cref{eq:q_def} and \Cref{eq:sigma_def}) for each history $b \in B_t$, observations $o \in \obs$ using \Cref{cor:estimatesigma} with $m$ conditional samples.\\
      Compute SVD of $\widehat\Sigma_{B_t}$.\\
      Let $\widehat V_{t}$ be the matrix of eigenvectors corresponding to eigenvalues $> \Delta/2$.\\
      Compute coefficients $\widehat \beta(b'_i o)$ for each observation $o \in \obs$ and sequence $b'_i \in B_{t-1}$ by solving:
  \[
    \widehat \beta(b'_i o) = \argmin_z \norm{\widehat \Sigma_{B_{t}} z - \widehat q(b'_i o)}^2_2 + \lambda \norm{z}_2^2.
  \]\\
  Compute model parameters $\widehat A_{o,t-1}$ for each observation $o \in \obs$:\begin{equation}\label{eq:new:alg}
    \widehat A_{o,t-1} = \widehat V_{t} \widehat V_{t}^\top \begin{bmatrix}
      \widehat \beta(b'_1 o) & \widehat\beta(b'_2 o) & \cdots & \widehat\beta(b'_n o)
    \end{bmatrix}  \begin{bmatrix}
      \widehat\Pr[o| b'_1] & 0           & \cdots & 0           \\
      0           & \widehat\Pr[o| b'_2] & \cdots & 0           \\
      \vdots      & \vdots      & \ddots & 0           \\
      0           & \cdots      & 0      & \widehat\Pr[o| b'_n]
    \end{bmatrix} \widehat V_{t-1} \widehat V_{t-1}^\top .
  \end{equation}
  }
  Return model parameters $\{\widehat A_{o,t}\}$.
\end{algorithm}
\subsection{Algorithm}
Algorithm pseudocode is displayed in~\Cref{alg:cond}. The user furnishes $\varepsilon$, the accuracy with which the distribution is to be learned; $\delta$, a confidence parameter; $\Delta^*$, the fidelity of the distribution and $r$, the rank of the distribution. The parameters $\Delta, \lambda, n$ and $m$ are detailed in the proof of \Cref{thm:main} in \Cref{sec:main-result}. 

As with the previous algorithm,~\Cref{alg:cond} relies on the
efficient representation provided by~\Cref{prop:toy:1}. First, the
algorithm finds basis histories $B_t$ for each $t \in [T]$. As
discussed in \Cref{sec:tech}, under the fidelity assumption, this is
not particularly challenging and can be done by sampling from the
distribution. The remaining steps in the algorithm constitute a
specialized technique for estimating the operators $A_{o,t-1}$
specified in~\Cref{prop:toy:1}.

Our estimate $\widehat{A}_{o,t-1}$ is based on the formula for
$A_{o,t-1}$ given in~\Cref{eq:operator_closed} and involves three
components: (a) projection onto (an estimate of) the row space of $\Pr[F_t \mid B_t]$, (b) estimates of coefficients
$\beta(b o)$ and (c) estimates of probabilities $\Pr[o \mid b]$, where
the latter two are for $b_i \in B_{t-1}$. Item (c) is straightforward
using conditional samples. For item (a), we define the
``preconditioned matrix''
\begin{align}
    \Sigma_{B_t} := \Pr[F_t \mid B_t]^\top D^{-1}_t \Pr[F_t \mid B_t],  \label{eq:sigma_def}
\end{align}
where $D_t$ is a $|F_t|\times |F_t|$ diagonal matrix with $d_t(f)
= \frac{1}{|B_t|}\sum_{b \in B_t}\Pr[f \mid b]$ on the diagonal. We show in \Cref{app:sub-1}, how this
matrix can be estimated using conditional samples. We project onto the
principal subspace of the estimated matrix, i.e., onto the span of the
eigenvectors with eigenvalue larger than $\Delta/2$. These projections
help with error propagation, as it eliminates errors that leave the
principal subspace.

For item (b), we estimate the coefficients $\beta(bo)$ for $b \in
B_{t-1}$ via linear regression. Using our preconditioner and the
definition of the coefficients, we can see that the coefficients
satisfy:
\begin{align}
    q(bo) = \Sigma_{B_t}\beta(bo) \qquad \text{where} \qquad q(bo) := \Pr[F_t \mid B_t]^\top D_t^{-1}\Pr[F_t \mid bo] \label{eq:q_def}
\end{align}
As with $\Sigma_{B_t}$, $q(bo)$ can also be estimated using
conditional samples (via the approach in \Cref{app:sub-1}). Moreover, our
basis $B_t$ will ensure that $\|\beta(bo)\|_2$ is bounded by a
universal constant, so we can use ridge regression to find estimates
$\widehat{\beta}(bo)$. Then we can plug these into~\Cref{eq:new:alg}
to obtain estimates $\widehat{A}_{o,t-1}$.  We return these matrices
as the representation of our estimated distribution.

  \subsection{Analysis} In the previous setting, when we had access to
exact conditional probability oracle, the main challenge was finding the bases. In contrast, now
that we can only obtain samples, even if we know the bases, we can only learn operators $A_{o,t}$ approximately. As
discussed in~\Cref{sec:tech}, controlling estimation errors will
require the notion of \emph{robust bases}, which we define next.

\begin{definition}[Robust bases]\label{def:robust-basis}
Bases $\{B_t\}_{t \in [T]}$ for distribution $\Pr[\cdot]$ are \emph{$\Delta$-robust} if for every $t \in [T]$:
       \[
        \sigma_+ \rbr{\Pr[F_t | B_t]^\top D_{t}^{-1} \Pr[F_t | B_t]} \geq \Delta
      \] where $\sigma_+(M)$ denotes the minimum non-zero eigenvalue of $M$ and $D_{t}$ is a diagonal matrix of size $|F_t| \times |F_t|$ with entries $d_{t}(f) := \frac{1}{|B_t|}\sum_{b \in B_{t}} \Pr[f | b]$ on the diagonal.
\end{definition}

A priori, it is unclear if such bases exists for arbitrary low rank distributions. Moreover, even if robust bases exist, how do we find them? Our first lemma show how to find robust bases for high fidelity distributions (\Cref{def:fidelity}).

  \begin{restatable}[Finding robust bases]{lemma}{lemmabasis}\label{prop:1}
   Assume distribution $\Pr[\cdot]$ has rank $r$ and fidelity $\Delta^*$. Pick $0< \delta< 1$. Let $n = O(\Delta^{* - 8} \log (r/\delta T) )$ and $\Delta = \Omega(\Delta^{* - 11/2}\log (r/\delta T) )$. For each $t \in [T]$, let $S_t$ be a random sample of size $n$ of observation sequences of length $t$ from distribution $\Pr[\cdot]$. Then, with probability $1-\delta$, $\{S_t\}_{t\in [T]}$ form $\Delta$-robust bases for $\Pr[\cdot]$.
\end{restatable}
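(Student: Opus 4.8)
The plan is to recognize the fidelity condition as a spectral-gap statement about a fixed population covariance, and then show that the empirical covariance built from $n$ i.i.d.\ histories concentrates around it. Fix a length $t \in [T]$, let $B_t^*$ be the fidelity basis from \Cref{def:fidelity} (so $|B_t^*| \le 1/\Delta^*$), let $D_t^*$ be its diagonal preconditioner with entries $d_t^*(f) = \frac1{|B_t^*|}\sum_{b \in B_t^*}\Pr[f\mid b]$, and define the feature map $\phi(h) := (D_t^*)^{-1/2}\Pr[F_t\mid h]$. Since $B_t^*$ is a basis, every $\phi(h)$ lies in the subspace $U := \spn\{\phi(b):b\in B_t^*\}$, of dimension $r_t := \rank\Pr[F_t\mid H_t] \le r$. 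The first step is an algebraic observation: with $M := (D_t^*)^{-1/2}\Pr[F_t\mid H_t]\,S_t^{1/2}$ (here $S_t = \diag(\Pr[h])$), the matrix appearing in \Cref{def:fidelity} is exactly $M^\top M$, whereas the population second-moment matrix $\Psi_t := \EE_{h\sim\Pr[\cdot]}[\phi(h)\phi(h)^\top]$ is exactly $MM^\top$. Hence $\Psi_t$ is supported on $U$, has rank $r_t$ (positive-probability histories span the column space), and $\sigma_+(\Psi_t) = \sigma_+(M^\top M) \ge \Delta^*$.

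Next I would establish a uniform bound $\|\phi(h)\|_2^2 \le B$ with $B = \poly(1/\Delta^*)$ over all $h$ with $\Pr[h] > 0$, which is needed to apply matrix concentration. Writing $\Pr[F_t\mid h] = \Pr[F_t\mid B_t^*]\beta^*(h)$, a Cauchy--Schwarz argument gives $\|\phi(h)\|_2^2 \le |B_t^*|\,\|\beta^*(h)\|_2^2 \le \|\beta^*(h)\|_2^2/\Delta^*$, so it suffices to bound the coefficients. Using a full-rank test set $\Lambda_t^* \subset F_t = \obs^{T-t}$ of size $r_t$, we have that $\beta^*(h)$ is the minimum-norm solution $\Pr[\Lambda_t^*\mid B_t^*]^{+}\Pr[\Lambda_t^*\mid h]$ with $\|\Pr[\Lambda_t^*\mid h]\|_2 \le 1$ (disjoint futures), so it remains to lower bound $\sigma_+(\Pr[\Lambda_t^*\mid B_t^*])$, which can be related to $\sigma_+(\Psi_t) \ge \Delta^*$ after choosing $\Lambda_t^*$ well --- essentially a max-volume / linear-independence refinement of the basis. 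I expect reconciling ``a basis that is good for the preconditioned system'' with ``a basis whose coefficients are polynomially bounded'' to be the fiddliest bookkeeping in this step.

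With the reformulation and the feature bound in hand, the core is a subspace matrix-Chernoff bound applied on $U$: the summands $\phi(b)\phi(b)^\top \preceq B\,I$ are bounded with mean $\Psi_t$, so for $n = \widetilde\Omega(\poly(1/\Delta^*))$ the empirical matrix $\widehat\Psi_t := \frac1n\sum_{b\in S_t}\phi(b)\phi(b)^\top$ has, with probability $\ge 1-\delta/(2T)$, rank exactly $r_t$ --- which already shows $\Pr[F_t\mid S_t]$ has rank $r_t$, i.e.\ $S_t$ is a basis in the sense of \Cref{def:bases} --- and smallest nonzero eigenvalue at least $\Delta^*/2$. The remaining subtlety is that the $\Delta$-robustness condition of \Cref{def:robust-basis} uses the preconditioner $\widehat D_t$ built from $S_t$ itself, not $D_t^*$; I would bridge this by observing that $\EE[\widehat d_t(f)] = \sum_h \Pr[h]\Pr[f\mid h] \le \poly(1/\Delta^*)\,d_t^*(f)$ (again using the coefficient bound), so that $\widehat D_t$ is, up to a $\poly(1/\Delta^*)$ factor and up to futures of negligible total mass which can be discarded from the quadratic form (the truncation idea from \Cref{sec:intro:estimate}), dominated by $D_t^*$. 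This yields $\widehat D_t^{-1} \succeq \poly(\Delta^*)\,(D_t^*)^{-1}$ on the relevant coordinates and hence $\sigma_+\bigl(\Pr[F_t\mid S_t]^\top \widehat D_t^{-1}\Pr[F_t\mid S_t]\bigr) \ge \Delta$, after which a union bound over $t\in[T]$ (absorbed into the $\log(r/\delta T)$ factors in $n$ and $\Delta$) finishes the proof.

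The step I expect to be the main obstacle is this last reconciliation: fidelity is asserted for one particular diagonal scaling, while robustness is demanded for a different, sample-dependent scaling, and an i.i.d.\ sample of histories need not reproduce the future-marginals that $B_t^*$ emphasizes. Simultaneously keeping $\widehat D_t$ from being too small on the coordinates carrying the quadratic form, and safely discarding the rare coordinates where it is, is what forces the polynomial loss and dictates the exponents in $n$ and $\Delta$.
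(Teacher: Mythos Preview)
Your high-level strategy---reformulate fidelity as $\sigma_+(\Psi_t) \ge \Delta^*$ for the population second moment $\Psi_t = \EE_h[\phi(h)\phi(h)^\top]$ with $\phi(h) = (D_t^*)^{-1/2}\Pr[F_t\mid h]$, bound $\|\phi(h)\|_2$, apply matrix Bernstein on the $r_t$-dimensional subspace, then translate from $D_t^*$ to the sample preconditioner $\widehat D_t$---is exactly what the paper does. The difference is that the two steps you single out as delicate are handled in the paper by one elementary pointwise inequality, not by a test-set construction or by truncating futures.

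The observation you are missing is that for every $f$ and every $b_i^* \in B_t^*$ one has $\Pr[f \mid b_i^*]/d_t^*(f) \le |B_t^*| =: n^*$, simply because $d_t^*(f)$ is the average of these very numbers. Combined with $\|\beta^*(h)\|_1 \le \sqrt{n^*}\,\|\beta^*(h)\|_2 \le \sqrt{n^*}$ (the paper takes $\|\beta^*(h)\|_2 \le 1$ without loss of generality, by repeating elements of $B_t^*$, rather than proving a bound via a test set $\Lambda_t^*$), one obtains the \emph{deterministic pointwise} bound
\[
\frac{\Pr[f\mid h]}{d_t^*(f)} \;\le\; n^{*3/2} \qquad\text{for all $h$ and all $f$.}
\]
This single inequality does both jobs at once. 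Squaring and taking expectation over $f \sim d_t^*$ gives $\|\phi(h)\|_2^2 \le n^{*3}$ directly---no max-volume argument on a test set is needed. And averaging over $h = s \in S_t$ gives $\widehat d_t(f) = \tfrac1n\sum_{s\in S_t}\Pr[f\mid s] \le n^{*3/2}\, d_t^*(f)$ for every $f$ and \emph{every realization} of $S_t$, hence $\widehat D_t^{-1} \succeq n^{*-3/2}(D_t^*)^{-1}$ unconditionally. So your truncation worry disappears: the preconditioner comparison is deterministic, not in expectation, and this is what drives the final exponents. Apart from this simplification, your plan coincides with the paper's proof.
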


We provide a proof in \Cref{sec:find-robust-proof}. According to this lemma, a random sample from a high fidelity distribution forms a robust basis for each $t$. 
With access to a $\Delta$-robust basis $B_t$, we turn to
the issues of estimation and error analysis. First we study estimation
of the preconditioned quantities $q(bo)$ and $\Sigma_{B_t}$ used by
the algorithm. Note that all entries of these vectors and matrices are
of the following form, where $b^* \in B_t$ and $x$ is a history of length $t$:\[
  s(b^*, x) = \sum_{f \in F_t}  \frac{\Pr[f|b^*]\Pr[f|x]}{d(f)}\, .
\] We show such quantities can be estimated efficiently using conditional samples.

\begin{lemma}[Estimating preconditioned quantities]\label{prop:estimate-sum1-re}
  Let $\{B_t\}_{t \in [T]}$ be bases for distribution $\Pr[\cdot]$ where $\max_{0 \leq t \leq T} |B_t| \leq n$. Pick any $0 < \varepsilon, \delta < 1$. Fix $b^* \in B_t$ and $x \in H_t$. Then we can build estimate $\widehat s(b^*, x)$ in $\poly(n, |\obs|, T, 1/\varepsilon, \log(1/\delta))$ time such that with probability $1-\delta$, \[
      \abr{s(b^*, x) - \widehat s(b^*, x)} \leq \varepsilon\, .
  \]
  
\end{lemma}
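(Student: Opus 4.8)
The plan is to write $s(b^*,x)$ as the expectation of a bounded random variable and estimate it by Monte Carlo, using the conditional sampling oracle both to draw the outer samples and to evaluate the integrand. Since $b^* \in B_t$, the denominator satisfies $d_t(f) = \frac{1}{|B_t|}\sum_{b \in B_t}\Pr[f \mid b] \ge \frac{1}{n}\Pr[f \mid b^*]$, so $w(f) := \Pr[f\mid b^*]/d_t(f)$ lies in $[0,n]$ (set $w(f):=0$ on the $d_t$-null set, where the corresponding summands vanish), and $s(b^*,x) = \sum_{f} \Pr[f\mid x]\, w(f) = \EE_{f \sim \Pr[\cdot\mid x]}[w(f)]$. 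I would therefore draw $N = \poly(n,1/\varepsilon,\log(1/\delta))$ futures $f$ by calling the conditional sampling oracle on history $x$; by Hoeffding's inequality the empirical average of the values $w(f)$ is within $\varepsilon/2$ of $s(b^*,x)$ with probability $\ge 1-\delta/2$. It then remains to estimate $w(f)$ for each sampled future up to additive error $\varepsilon/2$, equivalently (since $w(f)\le n$) up to multiplicative error $\poly(\varepsilon/n)$.

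For a fixed future $f=(f_1,\dots,f_{T-t})$, write $w(f)^{-1} = \frac{1}{n}\sum_{b\in B_t} R_b$ with $R_b := \Pr[f\mid b]/\Pr[f\mid b^*] = \prod_{s=1}^{T-t}\frac{\Pr[f_s \mid b f_{1:s-1}]}{\Pr[f_s\mid b^* f_{1:s-1}]}$. Each one-step conditional that appears is estimated on its own: querying the oracle on the corresponding history (of length $<T$) and recording the empirical frequency of the relevant next symbol gives a $(1\pm\zeta)$-multiplicative estimate from $\poly(1/\zeta,\log(1/\delta))/\theta$ samples \emph{provided that conditional is at least a threshold $\theta$}. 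Composing the $O(T)$ factors, multiplicative errors only add, so $\Pr[f\mid b]$ --- which may itself be exponentially small in $T-t$ --- is recovered to multiplicative accuracy in polynomially many oracle calls, hence so is $\sum_{b} R_b$ and thus $w(f)$; a union bound over the $O(nT)$ one-step queries controls the overall failure probability, and the total number of oracle calls is $\poly(n,|\obs|,T,1/\varepsilon,\log(1/\delta))$.

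The hard part is the one-step conditionals that fall below $\theta$, which cannot be estimated to multiplicative accuracy at polynomial cost. The plan is to run a test that detects these events and to argue they can be discarded: a term $R_b$ with $b\ne b^*$ whose $b$-path contains a below-threshold factor has $\Pr[f\mid b] < \theta$, which is tiny compared with $\sum_{b'}\Pr[f\mid b'] \ge \Pr[f\mid b^*]$, so dropping it from the denominator perturbs $w(f)$ only slightly; and futures whose $b^*$-path contains a below-threshold factor (where $\Pr[f\mid b^*]<\theta$) must be shown to contribute only $O(\varepsilon)$ in aggregate to $\EE_{f\sim\Pr[\cdot\mid x]}[w(f)]$, so that assigning them value $0$ is safe. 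Making this accounting rigorous --- in particular choosing $\theta = \poly(\varepsilon/(nT|\obs|))$ so that the total discarded mass is $O(\varepsilon)$ while $1/\theta$ stays polynomial --- is the most delicate step, and I would carry it out in \Cref{app:sub-1}. Combining the Monte Carlo error ($\varepsilon/2$), the per-sample estimation error ($\varepsilon/2$), and the discarding error ($O(\varepsilon)$), and rescaling $\varepsilon$ by a constant, yields the lemma.
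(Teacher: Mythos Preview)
Your outer Monte Carlo with Hoeffding is fine, and so is the idea of estimating one-step conditionals to multiplicative accuracy when they exceed a threshold $\theta$. The gap is in your treatment of the irregular futures.

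The claim ``dropping $R_b$ from the denominator perturbs $w(f)$ only slightly'' is not justified and is in general false as a pointwise statement. You argue that if some step along the $b$-path is below $\theta$ then $\Pr[f\mid b]<\theta$, which is ``tiny compared with $\sum_{b'}\Pr[f\mid b']\ge\Pr[f\mid b^*]$''. But even when every one-step conditional along the $b^*$-path is at least $\theta$, the product $\Pr[f\mid b^*]$ can be as small as $\theta^{T-t}$, so $\Pr[f\mid b]/(n\,d(f))$ can be an $\Omega(1)$ fraction of $d(f)$; removing it can change $w(f)$ by $\Omega(n)$, not by something small. The same issue bites your $b^*$-irregular case: you need $\sum_{f\in F_{b^*}}\Pr[f\mid x]\,w(f)$ to be small, but $w(f)\le n$ only gives $n\cdot \Pr_{f\sim\Pr[\cdot\mid x]}[f\in F_{b^*}]$, and nothing you have said bounds the $\Pr[\cdot\mid x]$-mass of the set of $b^*$-irregular futures (Proposition~\ref{prop:irregular-conc} bounds it under $\Pr[\cdot\mid b^*]$, not under $\Pr[\cdot\mid x]$).

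The paper closes this gap with one extra ingredient you never invoke: the bounded-coefficient property of the basis, $\|\beta(x)\|_2\le c$, which yields the uniform bound $\Pr[f\mid x]/d(f)\le c\,n^{3/2}$. With this in hand the paper does the discarding \emph{in aggregate before sampling}: it replaces $\Pr[\cdot\mid b]$ by $\widehat\Pr[\cdot\mid b]$ (zeroed on each $b$'s irregular set) and bounds
\[
\Bigl|\EE_{f\sim\Pr[\cdot\mid x]}\Bigl[\tfrac{\Pr[f\mid b^*]}{d(f)}-\tfrac{\widehat\Pr[f\mid b^*]}{\widehat d(f)}\Bigr]\Bigr|
\;\le\; c\,n^{3/2}\Pr[F_{b^*}\mid b^*]\;+\;c\,n^{5/2}\cdot \tfrac{1}{n}\sum_k\Pr[F_{b_k}\mid b_k]\;\le\;O\bigl(c\,n^{5/2}|\obs|T\theta\bigr),
\]
using $\Pr[f\mid x]/d(f)\le c\,n^{3/2}$ to convert each $|\widehat d(f)-d(f)|$ and $|\Pr[f\mid b^*]-\widehat\Pr[f\mid b^*]|$ into a $\Pr[\cdot\mid b_k]$-mass of an irregular set, which Proposition~\ref{prop:irregular-conc} then controls. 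Only after this aggregate replacement does one sample $f\sim\Pr[\cdot\mid x]$ and estimate the (still $[0,n]$-bounded) modified integrand $\widehat\Pr[f\mid b^*]/\widehat d(f)$ pointwise; at that stage every surviving one-step conditional is above threshold and your multiplicative-accuracy argument goes through. In short: your plan is right in spirit, but the per-sample dropping of $R_b$ cannot be made to work; you need the basis property $\Pr[f\mid x]/d(f)\le c\,n^{3/2}$ and an aggregate (not pointwise) accounting of the irregular futures.
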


We provide the estimation algorithm and a proof in \Cref{app:sub-1}. Using this lemma, we can estimate the operators $A_{o,t}$ via~\Cref{eq:new:alg}. The next lemma provides a precise characterization of the estimation error for these operators.
\begin{lemma}[Estimating operators]\label{lemma:cond:1}
  Assume the distribution $\Pr[\cdot]$ has rank $r$ and that $\{B_t\}_{t \in [T]}$ are $\Delta$-robust bases.
Pick $0 < \varepsilon, \delta < 1$. Then, 
we can learn approximations $\widehat A_{o, t}$ for all observations $o \in \obs$ and $t \in [T]$ in $\poly(r$, $|\obs|$, $T$, $1/\varepsilon$, $1/\Delta$, $\log(1/\delta))$ time such that with probability $1-\delta$, for any unit vector $v$ \begin{align*} 
    (\widehat A_{o, t} - A_{o, t}) v = \beta(B_{t+1}) \alpha(o,v) + V_{t+1}^\perp \alpha^\perp(o,v),
\end{align*} 
where $\beta(B_{t+1})$ is a matrix with columns $\beta(b)$ for $b \in B_{t+1}$,
$V_{t+1}^\perp$ is a matrix whose columns form an orthonormal basis for the kernel of 
$\Pr[F_{t+1} | B_{t+1}]^\top D_{t+1}^{-1} \Pr[F_{t+1} | B_{t+1}]$, and the vectors $\alpha(o,v)$ and $\alpha^\perp(o,v)$ are $\ell_1$ bounded, i.e., \[
   \max( \|\alpha(o,v)\|_1, \|\alpha^\perp(o,v)\|_1) \leq \varepsilon\, .
\]
\end{lemma}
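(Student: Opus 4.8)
The goal is to bound the estimation error of $\widehat A_{o,t}$, expressed in the natural coordinate system induced by the basis and the preconditioner. The strategy is to carefully track how errors in the three components of the closed-form expression \eqref{eq:operator_closed} --- the row-space projection, the coefficients $\beta(b_io)$, and the one-step probabilities $\Pr[o\mid b_i]$ --- propagate through \eqref{eq:new:alg}, and then decompose the resulting error into a component living in the column span of $\beta(B_{t+1})$ and a component living in the kernel $V_{t+1}^\perp$ of the preconditioned matrix. The key structural fact is that every column of the true $A_{o,t}$ is a vector of coefficients $\Pr[o\mid b_i]\beta(b_io)$, hence lies in the range of $\beta(B_{t+1})$; the projection $\widehat V_{t+1}\widehat V_{t+1}^\top$ in \eqref{eq:new:alg} is precisely designed so that the learned operator also lies (approximately) in a controlled subspace, so the error naturally splits along $\mathrm{span}(\beta(B_{t+1}))$ versus its orthogonal complement.

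\textbf{Key steps, in order.}
First, I would use \Cref{prop:estimate-sum1-re} to obtain entrywise estimates of $\Sigma_{B_t}$, $q(b_io)$, and (trivially, by conditional sampling) of $\Pr[o\mid b_i]$, each to additive accuracy $\zeta$ for a $\zeta$ to be chosen as a small polynomial of $\varepsilon,\Delta,1/r,1/|\obs|$; since $\max_t|B_t|\le r$ (or $1/\Delta^*$) these are $r\times r$ matrices, so $O(r^2)$ invocations suffice and the running time is polynomial. Second, I would analyze the ridge regression: because the bases are $\Delta$-robust, the nonzero eigenvalues of $\Sigma_{B_t}$ are at least $\Delta$, and because $B_t$ is a basis the true coefficient vector $\beta(b_io)$ (taken as the minimum-norm solution of $q(b_io)=\Sigma_{B_t}\beta$) has $\ell_2$ norm bounded by a universal constant --- this boundedness claim needs a short argument, essentially that $\|\beta(b_io)\|_2 \le \|\Sigma_{B_t}^+\|_{op}\|q(b_io)\|_2$ together with a bound $\|q(b_io)\|_2 \le O(1)$ coming from the fact that its entries are inner products $\sum_f \Pr[f\mid b^*]\Pr[f\mid b_io]/d(f)$ which are bounded by $|B_t|$ via the definition of $d(f)$ (a Cauchy--Schwarz / AM--GM estimate). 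Given this, standard ridge-regression perturbation bounds give $\|\widehat\beta(b_io)-P_{V_{t+1}}\beta(b_io)\|_2 = O(\zeta/\Delta^{c})$ for some small constant power $c$, where $P_{V_{t+1}}$ projects onto the true principal subspace, plus a bias term from the regularization that is controlled by $\lambda$ and the norm bound; I also need the eigenvector-perturbation (Davis--Kahan) step to argue $\widehat V_{t+1}\widehat V_{t+1}^\top \approx V_{t+1}V_{t+1}^\top$ with error $O(\zeta/\Delta)$, using the eigen-gap at $\Delta/2$. Third, I would combine these: substitute the estimates into \eqref{eq:new:alg}, expand $\widehat A_{o,t}v - A_{o,t}v$ as a sum of terms each of which is a product where exactly one factor is an estimation error (projection error, coefficient error, or probability error) and the rest are bounded quantities, and collect terms. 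The terms involving coefficient error and projected true coefficients naturally land in $\mathrm{span}\{\beta(b):b\in B_{t+1}\} = \mathrm{range}(\beta(B_{t+1}))$ --- so they contribute to $\alpha(o,v)$ --- while the terms arising from $\widehat V_{t+1}\widehat V_{t+1}^\top$ deviating from $V_{t+1}V_{t+1}^\top$ can leak into the kernel direction and contribute to $\alpha^\perp(o,v)$. To get $\ell_1$ (rather than $\ell_2$) bounds on $\alpha,\alpha^\perp$, I would expand in the explicit (non-orthonormal) frame $\{\beta(b)\}_{b\in B_{t+1}}$ and the orthonormal frame $V_{t+1}^\perp$ respectively, using that there are at most $r$ columns, so $\|\cdot\|_1 \le \sqrt{r}\,\|\cdot\|_2$ loses only a polynomial factor; then choose $\zeta$ small enough (polynomially) that the final $\ell_1$ error is below $\varepsilon$.

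\textbf{Main obstacle.}
The technical crux is establishing that the minimum-norm true coefficient vectors $\beta(bo)$ are bounded by a \emph{universal} constant independent of $T-t$ --- this is what makes the preconditioning worthwhile, and it relies delicately on the specific choice $d_t(f)=\tfrac1{|B_t|}\sum_{b\in B_t}\Pr[f\mid b]$, which guarantees $\Pr[f\mid b]/d_t(f)\le|B_t|$ and hence that $q(bo)$ has bounded entries even though the unpreconditioned vectors $\Pr[F_t\mid b]$ live in exponentially-high dimension and can be exponentially small entrywise. A secondary obstacle is the interplay between the ridge regularization bias and the subspace-projection error: one must choose $\lambda$ small relative to $\Delta$ (so the bias on the principal subspace is negligible) yet not so small that the estimate is destabilized on the near-kernel directions where $\widehat\Sigma_{B_t}$ has eigenvalues near $\Delta/2$ --- here the projection $\widehat V_{t+1}\widehat V_{t+1}^\top$ applied afterwards is what rescues us, since it discards exactly those unstable directions, but making this rigorous requires carefully sequencing the Davis--Kahan bound before the regression perturbation bound. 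I would expect roughly a page of careful bookkeeping for the decomposition in the third step, with the boundedness lemma for $\|\beta(bo)\|_2$ stated and proved separately as it is reused in the subsequent error-propagation analysis of \Cref{sec:main}.
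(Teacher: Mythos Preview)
Your proposal is essentially correct and follows the same approach as the paper: estimate the three ingredients (projection, coefficients, one-step probabilities), telescope the difference $\widehat A_{o,t}-A_{o,t}$, and split the error along $V_{t+1}$ and $V_{t+1}^\perp$. Two points where the paper is sharper than your sketch:

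\begin{itemize}
\item The paper does \emph{not} derive the bound $\|\beta(bo)\|_2\le O(1)$ from $\Delta$-robustness; it is taken as an explicit additional hypothesis $\|\beta(h)\|_2\le c\le 1$ on the basis (justified separately via the basis-finding lemma, by repeating basis elements if necessary). Your proposed route $\|\beta\|_2\le\|\Sigma_{B_t}^+\|\|q(b_io)\|_2$ would yield a bound of order $|B_t|^{3/2}/\Delta$, not a universal constant --- harmless for the polynomial claim, but not what you wrote.
\item The step you glossed over with ``naturally land in $\mathrm{range}(\beta(B_{t+1}))$'' is precisely the crux: the paper first obtains a plain operator-norm bound $\|\widehat A_{o,t}-A_{o,t}\|_2\le\eta$ from the four-term telescoping, which gives $(\widehat A_{o,t}-A_{o,t})v=V_{t+1}e+V_{t+1}^\perp\alpha^\perp$ with $\|e\|_2,\|\alpha^\perp\|_2\le\eta$. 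The passage to $\beta(B_{t+1})\alpha$ then uses the identity $\beta(B_{t+1})=V_{t+1}V_{t+1}^\top$ (equivalently $V_{t+1}=\beta(B_{t+1})V_{t+1}$), proved from $\Pr[F\mid B_{t+1}]\,I=\Pr[F\mid B_{t+1}]$ and uniqueness of the minimum-norm coefficients, so that $V_{t+1}e=\beta(B_{t+1})V_{t+1}e=:\beta(B_{t+1})\alpha$ with $\|\alpha\|_2\le\|e\|_2$. This identity is the missing bridge in your outline; once stated, the $\ell_1$ bounds follow from $\|\cdot\|_1\le\sqrt{n}\,\|\cdot\|_2$ and absorbing the $\sqrt{n}$ into the accuracy parameter.
\end{itemize}
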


We provide the proof in \Cref{app:sub:2}. 
%
%
As noted in \Cref{sec:tech}, the main technical challenge is in analyzing how the estimation error propagates to errors in induced distributions. 
%
%
Using this structured error, we can show how to bound the TV distance between the induced distributions. 
\begin{lemma}[Perturbation argument]\label{lemma:error-prop-1}
  Assume for each sequence length $t \in [T]$ and observation $o \in \obs$, we have an operator $\widehat A_{o,t}$ which is close to $A_{o,t}$ as defined above in \Cref{lemma:cond:1}. Let $\widehat \Pr[\cdot]$ be a function over observation sequences of length $T$ given by \[
\widehat \Pr[x_1\ldots x_T] = \widehat A_{x_T, T-1} \ldots \widehat A_{x_1, 0}  
\] Then, the functions $\Pr[\cdot]$ and $\widehat \Pr[\cdot]$ are close in TV distance:\[
  \mathrm{TV}(\Pr, \widehat \Pr) \leq 2 |\obs| T \varepsilon
\]
\end{lemma}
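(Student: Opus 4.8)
<br>

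The plan is to prove Lemma~\ref{lemma:error-prop-1} by an inductive argument that tracks the error of the partial products $\widehat A_{x_t,t-1}\cdots\widehat A_{x_1,0}$ in a carefully chosen coordinate system, rather than in any fixed matrix norm. The key structural fact (from the technical overview) is that for every sequence length $t$ and every prefix $x_1,\ldots,x_t$ we can write
\begin{align*}
  \widehat A_{x_t,t-1}\cdots\widehat A_{x_1,0} - A_{x_t,t-1}\cdots A_{x_1,0} \;=\; \beta(B_t)\,\alpha_{x_{1:t}} \;+\; V_t^\perp \gamma_{x_{1:t}},
\end{align*}
where $\beta(B_t)$ has columns $\beta(b)$ for $b\in B_t$, $V_t^\perp$ spans the kernel of $\Pr[F_t\mid B_t]^\top D_t^{-1}\Pr[F_t\mid B_t]$, and $\alpha_{x_{1:t}},\gamma_{x_{1:t}}$ are vectors whose $\ell_1$ norms we will control. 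So first I would establish this decomposition as the inductive hypothesis: the base case $t=0$ is trivial (both products are the column vector $A_{x_1,0}$, so the error is zero), and the inductive step multiplies by $\widehat A_{x_{t+1},t}$ on the left. Writing $\widehat A_{x_{t+1},t} = A_{x_{t+1},t} + (\widehat A_{x_{t+1},t}-A_{x_{t+1},t})$ and expanding, the error at step $t+1$ splits into (i) $A_{x_{t+1},t}$ applied to the step-$t$ error, and (ii) $(\widehat A_{x_{t+1},t}-A_{x_{t+1},t})$ applied to the true partial product $A_{x_t,t-1}\cdots A_{x_1,0}$, plus a cross term. For term (i), I need that $A_{o,t}$ maps $\beta(b)$ to a combination of $\beta(B_{t+1})$ vectors and that it annihilates / controls the kernel part; the former follows from the defining identity $\Pr[F_{t+1}\mid B_{t+1}] A_{o,t} = \Pr[oF_{t+1}\mid B_t]$ together with the coefficient evolution $\beta(ho) = A_{o,t}\beta(h)/\Pr[o\mid h]$ from~\eqref{eq:intro-coeff-2} (so $A_{o,t}\beta(b) = \Pr[o\mid b]\,\beta(bo)$, which is a rescaling of a basis coefficient vector — this is exactly why the error stays in the span of $\beta(B_{t+1})$ up to kernel directions). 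For term (ii), I invoke Lemma~\ref{lemma:cond:1}, which says precisely that $(\widehat A_{o,t}-A_{o,t})v = \beta(B_{t+1})\alpha(o,v) + V_{t+1}^\perp\alpha^\perp(o,v)$ with $\ell_1$-bounded coefficient vectors, applied to the unit vector $v = A_{x_t,t-1}\cdots A_{x_1,0}/\|\cdot\|$.

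The crux is then to control the $\ell_1$ growth of the $\alpha$'s and, separately, the $\ell_1$ norm of the scalars that actually appear in the TV sum. The key observation to prevent exponential blowup is that summing over all extensions $x_{t+1}$ of the observation at step $t+1$ reintroduces a \emph{probability normalization}: because $\beta(b\,o)$ are coefficients of conditional distributions and $A_{o,t}\beta(b)=\Pr[o\mid b]\beta(bo)$, the magnitude $\Pr[o\mid b]$ discounts exactly the branching factor, so $\sum_o \Pr[o\mid b]\|\beta(bo)\|_1$ telescopes against a bound on $\|\beta(b)\|_1$ when the basis is robust (this is where robustness / the $\|\beta\|_2$ bound from the preconditioner enters). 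I would set up a potential-function / one-step recursion of the form: the total $\ell_1$ mass of error coefficients after incorporating step $t+1$ is at most the mass after step $t$ (preserved by the normalized $A$-action) plus a fresh contribution of order $|\obs|\,\varepsilon$ coming from the $\widehat A - A$ perturbation summed over the $|\obs|$ new observations. Unrolling over $T$ steps gives total error mass $O(|\obs|T\varepsilon)$. Finally, I would argue that the TV distance $\frac12\sum_{x_{1:T}}|\widehat\Pr[x_{1:T}] - \Pr[x_{1:T}]|$ equals (or is bounded by) this accumulated $\ell_1$ error mass: at $t=T$, $\Pr[F_T\mid B_T]$ is the all-ones row vector (by the choice $B_T$ singleton, $F_T=\{\varphi\}$), so left-multiplying the decomposition by $\Pr[F_T\mid B_T]$ kills the $V_T^\perp$ part and collapses $\beta(B_T)$ to scalars; the resulting $|\cdot|$ summed over $x_{1:T}$ is controlled by $\|\alpha_{x_{1:T}}\|_1$ summed over sequences, which is exactly the quantity the recursion bounds.

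The step I expect to be the main obstacle is making the one-step $\ell_1$ recursion genuinely closed — i.e., showing that applying $A_{x_{t+1},t}$ to the step-$t$ error does not inflate the $\ell_1$ mass of coefficients after summing over $x_{t+1}$, including correctly handling the kernel component $V_t^\perp\gamma_{x_{1:t}}$. The subtlety is that $A_{o,t}$ need not map $V_t^\perp$ into $V_{t+1}^\perp$, so kernel error at step $t$ can leak into the observable span at step $t+1$; one must show either that this leakage is itself controlled (using that $\Pr[F_{t+1}\mid B_{t+1}]A_{o,t}$ restricted to the kernel direction is small, via the preconditioner and robustness) or that the kernel part never acquires mass in the first place because it starts at zero and the $\widehat A - A$ perturbations only ever inject a controlled amount. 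Tracking both components simultaneously, with the right weighting (this is presumably where the $S_t^{1/2}$ and $D_t^{-1}$ weightings in the fidelity definition resurface), and verifying the telescoping survives the nonlinear $1/\Pr[o\mid h]$ rescaling, is the delicate part; everything else is bookkeeping with triangle inequalities. The final factor $2|\obs|T\varepsilon$ suggests the recursion loses only a constant and a single factor of $|\obs|$ and $T$ each, consistent with a clean per-step additive bound of $2|\obs|\varepsilon$ (or $|\obs|\varepsilon$ with a factor $2$ from the cross term / the TV normalization).
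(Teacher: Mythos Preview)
Your overall architecture matches the paper's: an inductive error decomposition in coefficient space, a three-term splitting of the one-step update, Lemma~\ref{lemma:cond:1} to handle the perturbation term, and a final collapse at $t=T$ using $\beta(h)=1$ for length-$T$ histories. However, there is a genuine gap in the decomposition you propose, and it is exactly the step you flagged as the obstacle.

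You write the error at step $t$ as $\beta(B_t)\,\alpha_{x_{1:t}}+V_t^\perp\gamma_{x_{1:t}}$ with $B_t$ the \emph{fixed} basis, and then hope that $\sum_o \Pr[o\mid b]\,\|\beta(bo)\|_1$ telescopes against $\|\beta(b)\|_1$. It does not: we only know $\|\beta(h)\|_2\le 1$ and $\mathbf{1}^\top\beta(h)=1$, so $\|\beta(h)\|_1$ can be as large as $\sqrt{|B_{t+1}|}$, and re-expressing $\beta(b\,o)$ in the columns of $\beta(B_{t+1})$ costs you that factor at every step. The paper avoids this by \emph{not} re-expressing in the fixed basis. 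Instead it carries the error as $\beta(B_{x_{1:t}})\gamma_{x_{1:t}}+V_t^\perp\gamma^\perp_{x_{1:t}}$ where $B_{x_{1:t}}\subset H_t$ is a \emph{growing, sequence-dependent} set of histories (potentially exponentially many). The point is that $A_{o,t}$ maps each atom $\beta(h)$ to a single new atom $\beta(ho)$ with scalar weight $\Pr[o\mid h]$; the new coefficient on atom $ho$ is $\Pr[o\mid h]\,\gamma_h$, and summing $|\Pr[o\mid h]\,\gamma_h|$ over $o$ gives exactly $|\gamma_h|$. That is the telescoping---it is purely combinatorial and does not need any norm bound on $\beta(\cdot)$. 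With this bookkeeping the paper obtains a coupled recursion for $f(t)=\sum_{x_{1:t}}\|\gamma_{x_{1:t}}\|_1$ and $g(t)=\sum_{x_{1:t}}\|\gamma^\perp_{x_{1:t}}\|_1$ of the shape $f(t)\le |\obs|\varepsilon + (1+|\obs|\varepsilon)f(t-1)+|\obs|\varepsilon\, g(t-1)$ and $g(t)\le |\obs|\varepsilon + |\obs|\varepsilon\, f(t-1)+|\obs|\varepsilon\, g(t-1)$, which is then solved explicitly.

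One smaller correction: you worry that $A_{o,t}$ might leak $V_t^\perp$ into the observable span. In fact $\ker(A_{o,t})\supset\spn(V_t^\perp)$ exactly (Proposition~\ref{prop:main1}), so the true operator annihilates the kernel component outright. The only way kernel mass enters step $t$ is through the perturbation $(\widehat A_{o,t}-A_{o,t})$ acting on both the $\beta$-part and the $V_{t-1}^\perp$-part of the previous error; this is precisely what Lemma~\ref{lemma:cond:1} bounds, and it produces the $|\obs|\varepsilon\,\|\gamma\|_1$ and $|\obs|\varepsilon\,\|\gamma^\perp\|_1$ cross terms in the recursion.
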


This makes up the most technical component of our proof, and we give the formal proofs in \Cref{app:sec:perturb}. Together with previous lemmas, this proves our main theorem, \Cref{thm:main}. We provide the formal proof in \Cref{sec:main-result}.

\section{Discussion}
\label{sec:open}
In this paper we show how interactive access to hidden Markov models
(and more generally low rank distributions) can circumvent
computational barriers to efficient learning. In particular, we show
that all low rank distributions with a certain fidelity property can
be efficiently learned assuming access to a conditional sampling
oracle. In~\Cref{sec:examples}, we show that fidelity captures the
assumptions considered in prior work on (non-interactive) learning of HMMs, specifically:
\begin{itemize}
\item Parity with noise admits bases $B_t$ each of cardinality $2$ with fidelity $(1-2\alpha^2)/2$, where $\alpha$ is the noise parameter.
\item Full rank HMMs, where $\mathbb{T}$ and $\mathbb{O}$ are full column rank, admit bases of size $O$ with fidelity bounded by the minimum singular value of the second moment matrix $\Pr[\textbf{x}_2=\cdot,\textbf{x}_1=\cdot]$. This parameter also appears polynomially in the analysis of~\citet{hsu2012spectral}.
\item The overcomplete setting of~\citet{sharan2017learning}, where sequences of length $\log S$ are used for estimation, admits bases of size $S$ with fidelity $1/\poly(S)$, matching their parameters.
\end{itemize}

Despite this, the reliance on the fidelity parameter is the main
limitation of our results. We believe this dependence is not necessary,
which leads to the main open problem,~\Cref{open}. We close the paper
with some final remarks regarding this open problem.

As we have mentioned previously, although fidelity greatly simplifies
the basis finding aspect of our algorithm, it is not necessary for
this part and refer the reader to~\Cref{sec:approx-ellip} where we give a
general algorithm for basis finding. Indeed the only place where
fidelity is required is in our error propagation analysis, where our
techniques require that operators $\widehat{A}_{o,t}$ are estimated
in $\ell_2$ norm. In the general case, we will only be able to learn
operators in the directions for which the preconditioned matrix has
large eigenvalues, and ideally we should be able to ignore the directions with small
eigenvalues. This strategy would work if we can show that ignoring the
small directions preserves the low rank property, which is the linear-algebraic analog of 
approximating an HMM by one with fewer states. Unfortunately, we do
not know if the latter holds, and we believe this is the key challenge
to resolving~\Cref{open}. We look forward to further progress on this problem.

\section*{Acknowledgement}
SK acknowledges funding from the Office of Naval Research under award N00014-22-1-2377 and the National Science Foundation Grant under award \#CCF-2212841.
This work has been made possible in part by a gift from the Chan Zuckerberg Initiative Foundation to establish the Kempner Institute for the Study of Natural and Artificial Intelligence.
We thank Weihao Kong, Vatsal Sharan, Geelon So and Shachar Lovett for insightful discussions.

\printbibliography

@misc{projections,
  title        = {Bound on difference of eigen projections of positive definite matrices},
  author       = {Peng, Tianyi},
  howpublished = {Mathematics Stack Exchange},
  eprint       = {https://math.stackexchange.com/q/3921839},
  url          = {https://math.stackexchange.com/q/3921839},
  year         = {2020}
}

@article{angluin1987learning,
  title     = {Learning regular sets from queries and counterexamples},
  author    = {Angluin, Dana},
  journal   = {Information and computation},
  year      = {1987},
}

@inproceedings{kearns1994learnability,
  title     = {On the learnability of discrete distributions},
  author    = {Kearns, Michael and Mansour, Yishay and Ron, Dana and Rubinfeld, Ronitt and Schapire, Robert E and Sellie, Linda},
  booktitle = {Symposium on Theory of Computing},
  year      = {1994}
}

@article{hsu2012spectral,
  title     = {A spectral algorithm for learning hidden {M}arkov models},
  author    = {Hsu, Daniel and Kakade, Sham M and Zhang, Tong},
  journal   = {Journal of Computer and System Sciences},
  year      = {2012},
}

@inproceedings{mossel2005learning,
  title     = {Learning nonsingular phylogenies and hidden Markov models},
  author    = {Mossel, Elchanan and Roch, S{\'e}bastien},
  booktitle = {Symposium on Theory of Computing},
  year      = {2005}
}

@article{cryan2001evolutionary,
  title     = {Evolutionary trees can be learned in polynomial time in the two-state general {M}arkov model},
  author    = {Cryan, Mary and Goldberg, Leslie Ann and Goldberg, Paul W},
  journal   = {SIAM Journal on Computing},
  year      = {2001},
}

@inproceedings{blum1994weakly,
  title     = {Weakly learning {DNF} and characterizing statistical query learning using {F}ourier analysis},
  author    = {Blum, Avrim and Furst, Merrick and Jackson, Jeffrey and Kearns, Michael and Mansour, Yishay and Rudich, Steven},
  booktitle = {Symposium on Theory of Computing},
  year      = {1994}
}

@inproceedings{dani2008stochastic,
  title  = {Stochastic linear optimization under bandit feedback},
  author = {Dani, Varsha and Hayes, Thomas P and Kakade, Sham M},
  booktitle = {Conference on Learning Theory},                  
  year   = {2008}
}

@article{anandkumar2014tensor,
  title     = {Tensor decompositions for learning latent variable models},
  author    = {Anandkumar, Animashree and Ge, Rong and Hsu, Daniel and Kakade, Sham M and Telgarsky, Matus},
  journal   = {Journal of machine learning research},
  year      = {2014},
}

@article{jaeger2000observable,
  title     = {Observable operator models for discrete stochastic time series},
  author    = {Jaeger, Herbert},
  journal   = {Neural Computation},
  year      = {2000},
}

@inproceedings{chen2021learning,
  title        = {Learning and testing junta distributions with sub cube conditioning},
  author       = {Chen, Xi and Jayaram, Rajesh and Levi, Amit and Waingarten, Erik},
  booktitle    = {Conference on Learning Theory},
  year         = {2021},
}

@article{bhattacharyya2018property,
  title     = {Property testing of joint distributions using conditional samples},
  author    = {Bhattacharyya, Rishiraj and Chakraborty, Sourav},
  journal   = {ACM Transactions on Computation Theory},
  year      = {2018},
}

@article{canonne2015testing,
  title     = {Testing probability distributions using conditional samples},
  author    = {Canonne, Cl{\'e}ment L and Ron, Dana and Servedio, Rocco A},
  journal   = {SIAM Journal on Computing},
  year      = {2015},
}

@article{sharan2017learning,
  title={Learning overcomplete {HMM}s},
  author={Sharan, Vatsal and Kakade, Sham M and Liang, Percy S and Valiant, Gregory},
  journal={Advances in Neural Information Processing Systems},
  year={2017}
}

@inproceedings{ross2011reduction,
  title={A reduction of imitation learning and structured prediction to no-regret online learning},
  author={Ross, St{\'e}phane and Gordon, Geoffrey and Bagnell, Drew},
  booktitle={International Conference on Artificial Intelligence and Statistics},
  year={2011},
}

@article{zhang2022survey,
  title={A survey of active learning for natural language processing},
  author={Zhang, Zhisong and Strubell, Emma and Hovy, Eduard},
  journal={arXiv:2210.10109},
  year={2022}
}

@inproceedings{alekhnovich2003more,
  title={More on average case vs approximation complexity},
  author={Alekhnovich, Michael},
  booktitle={Symposium on Foundations of Computer Science},
  year={2003},
}

@inproceedings{blum1994cryptographic,
  title={Cryptographic primitives based on hard learning problems},
  author={Blum, Avrim and Furst, Merrick and Kearns, Michael and Lipton, Richard J},
  booktitle={Advances in Cryptology},
  year={1994},
}

@inproceedings{kontorovich2013learning,
  title={On learning parametric-output HMMs},
  author={Kontorovich, Aryeh and Nadler, Boaz and Weiss, Roi},
  booktitle={International Conference on Machine Learning},
  year={2013},
}

@inproceedings{weiss2015learning,
  title={Learning parametric-output hmms with two aliased states},
  author={Weiss, Roi and Nadler, Boaz},
  booktitle={International Conference on Machine Learning},
  year={2015},
}

@article{huang2015minimal,
  title={Minimal realization problems for hidden markov models},
  author={Huang, Qingqing and Ge, Rong and Kakade, Sham and Dahleh, Munther},
  journal={IEEE Transactions on Signal Processing},
  year={2015},
}

@inproceedings{chakraborty2013power,
  title={On the power of conditional samples in distribution testing},
  author={Chakraborty, Sourav and Fischer, Eldar and Goldhirsh, Yonatan and Matsliah, Arie},
  booktitle={Conference on Innovations in Theoretical Computer Science},
  year={2013}
}

@inproceedings{canonne2014testing,
  title={Testing probability distributions underlying aggregated data},
  author={Canonne, Cl{\'e}ment and Rubinfeld, Ronitt},
  booktitle={International Colloquium on Automata, Languages, and Programming},
  pages={283--295},
  year={2014},
  organization={Springer}
}
\newpage
\appendix

\section{Proofs for \texorpdfstring{\Cref{sec:toy}}{Exact Setting}}
\label{app:sec:exact}

\thmone*
\begin{proof}
    First, the total number of rounds are at most $rT$. This is because by \Cref{app:prop:1}, we increase $\rank(\Pr[\Lambda_\tau|B_\tau])$ by $1$ in every round for some $\tau \in [T]$, and $\rank(\Pr[\Lambda_\tau|B_\tau])$ can be at most $r$ by our low rank assumption. So we only need to show that when the algorithm ends, we have found a good estimate. Note that this happens when we can not find a counterexample in \Cref{app:line:2}. 
    
    Let $z_t$ be the initial bit choice of $\Lambda_t$ i.e. $z_t = 0$ if initial choice of $\Lambda_t = \{0\}$ and $z_t = 1$ if initial choice of $\Lambda_t = \{1\}$.
    By Hoeffding's inequality, for $n = O(\log(Tr/\delta)/\varepsilon^{2})$, we get with probability $1-\delta/Tr$ for all $t\in \{0, \ldots, T-1\}$: \begin{equation}\label{eq:f1}
        \Pr_{x_{1:t}}\sbr{\Pr[x_1 \ldots x_t, z_t] \neq \overline \Pr[x_1 \ldots x_t, z_t]} \leq \varepsilon
    \end{equation} as $\Lambda_t$ contains $z_t$ for all such $t$. Moreover, define a probability distribution $\overline \Pr$ over sequences of length up to $T$ using $\widehat \Pr$ as follows: for any $t\in \{0, \ldots, T-1\}$,
\begin{align*}
        \widehat \Pr[z_t | x_1 \ldots x_t] & = \Pi_{[0,1]}\sbr{\frac{\overline \Pr[x_1 \ldots x_t z_t]}{\widehat \Pr[x_1 \ldots x_t]}} \\
        \widehat \Pr[1 - z_t | x_1 \ldots x_t] & = 1 - \widehat \Pr[z_t | x_1 \ldots x_t]
    \end{align*} where $\Pi_{[0,1]}$ projects onto interval $[0,1]$. Note that for a sequence $(x_1\ldots x_T)$, if for all $t\in \{0, \ldots, T-1\}$, $\Pr[x_1\ldots x_t \Lambda_t] = \overline \Pr[x_1\ldots x_t \Lambda_t]$, then $\Pr[x_1\ldots x_T] = \widehat \Pr[x_1\ldots x_T]$. Therefore, together with \Cref{eq:f1}, we get for each $t\in \{0, \ldots, T-1\}$ and $o \in \obs$, \[
        \Pr_{x_{1:t}}\sbr{\Pr[o| x_{1:t}] \neq \widehat \Pr[o | x_{1:t}]} \leq 2 T \varepsilon
    \] which implies \[
        \EE_{x_{1:t}}\sbr{\abr{\Pr[o| x_{1:t}] - \widehat \Pr[o | x_{1:t}]}} \leq 2 T \varepsilon
    \] Using \Cref{lemma:condcheck}, we get for distribution $\widehat \Pr[\cdot]$ \[
        TV(\widehat \Pr, \Pr) \leq 2T(T+1)\varepsilon
    \] Re-substituting the value of $\varepsilon$, we get $TV(\widehat \Pr, \Pr) \leq \varepsilon$ using at most $O(rT^5\log(Tr/\delta)/\varepsilon^2)$ samples from $\Pr[\cdot]$
    and queries to the exact conditional probability oracle.
\end{proof}

\propsub*
\begin{proof}
    We prove this by induction. Assume \[
        \sum_{x_{1: t-1}} \abr{(\Pr[x_{1: t-1}] - \widehat \Pr[x_{1: t-1}])} \leq t|O|\varepsilon
    \] Then, \begin{align*}
         & \sum_{x_{1:t}} \abr{(\Pr[x_{1:t}] - \widehat \Pr[x_{1:t}])}                                                                                                                    \\
         & = \sum_{x_{1:t}} \abr{\Pr[x_{1: t-1}] \Pr[x_t| x_{1: t-1}] - \widehat \Pr[x_{1: t-1}] \widehat \Pr[x_t| x_{1: t-1}]}                                                                   \\
         & \leq \sum_{x_{1:t}} \Pr[x_{1: t-1}] \cdot \abr{\Pr[x_t| x_{1: t-1}] - \widehat \Pr[x_t| x_{1: t-1}]} + \sum_{x_{1:t}} \abr{(\Pr[x_{1: t-1}] - \widehat \Pr[x_{1: t-1}])} \cdot \widehat \Pr[x_t| x_{1: t-1}]
    \end{align*} We handle the two terms separately. The first term \begin{align*}
         & \sum_{x_{1:t}} \Pr[x_{1: t-1}] \cdot  \abr{\Pr[x_t| x_{1: t-1}] - \widehat \Pr[x_t| x_{1: t-1}]}                \\
         & = \sum_{x\in \obs}  \sum_{x_{1: t-1}} \Pr[x_{1: t-1}] \cdot \abr{\Pr[x| x_{1: t-1}] - \widehat \Pr[x| x_{1: t-1}]} \\
         & = \sum_{x\in \obs} \EE_{x_{1: t-1}} \sbr{\abr{\Pr[x| x_{1: t-1}] - \widehat \Pr[x| x_{1: t-1}]}}        \\
         & \leq |O|\varepsilon
    \end{align*} where the last step follows from our assumption. The second term \begin{align*}
         & \sum_{x_{1:t}} \abr{(\Pr[x_{1: t-1}] - \widehat \Pr[x_{1: t-1}])} \cdot \widehat \Pr[x_t| x_{1: t-1}]                 \\
         & = \sum_{x_{1: t-1}} \abr{\Pr[x_{1: t-1}] - \widehat \Pr[x_{1: t-1}]} \cdot \sum_{x \in \obs}  \widehat \Pr[x| x_{1: t-1}] \\
         & \leq \sum_{x_{1: t-1}} \abr{\Pr[x_{1: t-1}] - \widehat \Pr[x_{1: t-1}]}                                      \\
         & \leq t |O|\varepsilon
    \end{align*}
\end{proof}

\section{Proofs for \texorpdfstring{\Cref{sec:main}}{Noisy Setting}}
\label{app:sec:main}

In this section, we will always restrict attention to
basis $B_t$, for which the norm of the coefficient
$\norm{\beta(x)}_2 \leq c$ for some universal constant $c \leq 1$ and
every history $x \in H_t$. Note that by repeating elements in
the basis, we can always make the norm smaller than $1$. So, this norm
bound is without loss of generality. Finding such basis is more
involved, and we show how to find them later on.

\subsection{Finding robust basis}\label{sec:find-robust-proof}
We first show how to find a robust basis. Let us define some notation: the covariance matrix for a basis is defined as \begin{align*}
    \Sigma_{B_t}      & = \Pr[F_t|B_t]^\top D_t^{-1} \Pr[F_t|B_t]                  
\end{align*} where $D_t$ is a diagonal matrix of size $|F_t| \times |F_t|$ with entries $d_t(f) := \EE_{b \in B_t} \Pr[f | b]$ on the diagonal. We further define the inner covariance as \begin{align*}
    \bar \Sigma_{B_t} & = D_t^{-1/2} \Pr[F_t|B_t] \Pr[F_t|B_t]^\top D_t^{-1/2}
\end{align*} Note that the matrices $\Sigma_B$ and $\bar \Sigma_B$ share their non-zero eigenvalues (with extra eigenvalues all $0$).  As we will see, random sampling from a high fidelity distribution gives a robust basis. We show in \Cref{sec:approx-ellip} more efficient ways of building a basis.
\lemmabasis*
\begin{proof}
    Let $S_t$ be a random sample of size $n$ of observation sequences of length $t$ from distribution $\Pr[\cdot]$. Let $B_t^*$ be the unknown basis of length $t$ sequences under which distribution $\Pr[\cdot]$ has high fidelity and $|B_t^*| = n^* \leq 1/\Delta^*$. Define a distribution $d_t^*$ over futures given by $d_t^*(f) = \EE_{b \in B_t^*}[\Pr[f| b]]$. Define $D^*_t$ to be a diagonal matrix with diagonal entries given by $d_t^*(f)$.

    We ignore the $t$ dependence in notation in this proof for clarity. Before, we prove that $S$ is a robust basis at length $t$, we first show some properties of $d^*$ that will come in handy. First, the norm of $\Pr[F|x]$ under $d$ is upper bounded:
    \begin{equation} \label{eq:prop1}
        \norm{D^{*\frac{-1}{2}} \Pr[F|x]}^2_2 = \EE \sbr{\frac{\Pr[f|x]}{d^*(f)}}^2 = \EE \sbr{\frac{\sum_i \alpha_i(x) \Pr[f|b^*_i] }{d^*(f)}}^2 \leq \EE \sbr{\norm{\alpha(x)}_1 n^* }^2 \leq n^{*3}
    \end{equation} since $\Pr[f|b^*_i]/d^*(f) \leq n^*$ by definition and $\norm{\alpha(x)}_1 \leq \sqrt{n^*} \norm{\alpha(x)}_2 \leq \sqrt{n^*}$ as $\alpha(x)$ are coefficients under basis $B^*$.

    Our next step is to show that the eigenvalues under $S$ and $B^*$ are not so different. For clarity, let \[
        H = D^{*\frac{-1}{2}} \rbr{\EE\sbr{ \Pr[F|x] \Pr[F|x]^\top} - \EE_{s \in S}\sbr{\Pr[F|s] \Pr[F|s]^\top}} D^{*\frac{-1}{2}}
    \]
     Then, by Matrix Bernstein inequality (\citep[Theorem 1.6.2]{tropp2015introduction}) and \Cref{eq:prop1}, for $|S| = n = O(n^{*6} (\Delta^*)^{-2} \log (2r/\delta T))$ and $\Delta^*<1$, we get the following bound on $H$ with probability $1-\delta$: \[
        \norm{H}_2 \leq \sqrt{\frac{2n^{*6}\log r}{2n}} + \frac{2n^{*3}\log 2r}{6n} \leq \frac{\Delta^*}{2}
    \] 
    And therefore by Weyl's inequality, the top $r_t$ eigenvalues of \begin{equation}\label{eq:prop2}
        \frac{1}{n} D^{*\frac{-1}{2}} \rbr{\EE_{s \in S}\sbr{\Pr[F|s] \Pr[F|s]^\top}} D^{*\frac{-1}{2}}
    \end{equation} are $> \Delta^*/2$ and all other eigenvalues are $0$.
    We now show that $S$ forms a basis. For clarity, let $\Pr[F|S]$ be a matrix with $\Pr[F|s_i]$ as columns. Then, $\Pr[F|S]^\top D^{*-1} \Pr[F|S]$ has $r_t$ eigenvalues $> n \Delta^*/2$. Let $V_t$ be the eigenvectors corresponding to non-zero eigenvalues of the latter matrix. Next we note that $\spn(\{\Pr[F|s_1], \ldots, \Pr[F|s_n]\})$ has dimension $r_t$ and therefore there exists coefficients $\beta(x) \in \spn(V_t)$ such that \[
        \Pr[F|x] = \sum_i \beta_i(x) \Pr[F|s_i] = \Pr[F|S] \beta(x)
    \] Multiplying both sides by $D^{*\frac{-1}{2}}$, we get \begin{align*}
        n^{*3/2} \geq \norm{D^{*\frac{-1}{2}} \Pr[F|x]}_2 & = \norm{D^{*\frac{-1}{2}}\Pr[F|S] \beta(x)}_2 \geq \sqrt{\frac{n\Delta^*}{2}}  \norm{\beta(x)}_2
    \end{align*}
    where the first inequality follows from \Cref{eq:prop1} and the last inequality follows from \Cref{eq:prop2}. Simplifying this shows $S$ forms an basis with the following upper bound on the coefficients \begin{equation}
        \norm{\beta(x)}_2 \leq \sqrt{\frac{2n^{*3}}{n\Delta^*}} < 1
    \end{equation}
    The only remaining part is to show that the two distributions $d$ and $d^*$ are only a small factor apart. For this, we first note from \Cref{eq:prop1} \[
        \frac{d(f)}{d^*(f)} = \EE_{s \in S}\sbr{\frac{\Pr[F|s_i]}{d^*(f)}} \leq n^{*3/2}
    \] and therefore non-zero eigenvalues of $D^{\frac{-1}{2}} D^{*\frac{1}{2}}$ are $> 1/n^{*3/4}$. Since, \begin{align*}
         &  D^{\frac{-1}{2}} D^{*\frac{1}{2}}  \rbr{D^{*\frac{-1}{2}}\Pr[F|S] \Pr[F|S]^\top D^{*\frac{-1}{2}}} D^{*\frac{1}{2}} D^{\frac{-1}{2}}  \\
         & = D^{\frac{-1}{2}}\Pr[F|S] \Pr[F|S]^\top D^{\frac{-1}{2}} = \Sigma_S\, ,
    \end{align*} it follows from discussion above and \Cref{eq:prop2} that $\Sigma_S$ has $r_t$ eigenvalues greater than \[
        \frac{n \Delta^*}{2 n^{*3/2}}
    \] Substituting $n^* \leq 1/\Delta^*$ proves the claim.
\end{proof}

\subsection{Learning operators}\label{app:sub:2}
With access to a $\Delta$-robust basis $\{B_t\}_{t\in [T]}$, we now show how to learn an approximation of operators $A_o$. For this, we need to learn approximations of projections $P_{V_t}$ and coefficients $\beta(b_i o)$ for the one-step extensions of basis $B_{t} = \{b_1, \ldots, b_n\}$. Towards this end, using the techniques in \Cref{app:sub-1}, we first estimate the following: \begin{equation}\label{eq:app:qsigma}
    q(x) = \Pr[F_t|B_t]^\top  D_t^{-1} \Pr[F|x]\quad \text{and}\quad \Sigma_{B_t} = \Pr[F_t|B_t]^\top D_t^{-1} \Pr[F_t|B_t]   \, .
\end{equation} Note $\Sigma_{B_t}$ is a covariance matrix with $r_t< r$ non-zero eigenvalues. Let $\widehat \Sigma_{B_t}$ be an approximation for covariance matrix $\Sigma_{B_t}$ from \Cref{cor:estimatesigma}. Compute SVD of $\widehat\Sigma_{B_t}$, and let $\widehat V_{t}$ be the matrix of top $r_t$ eigenvectors (or equivalently corresponding to eigenvalues $> \Delta/2$ according to the proof of proposition below). Then, we claim $\widehat V_{t}$ is close to $V_t$:
\begin{proposition}\label{prop:alg2}
     Let $\Pr[\cdot]$ be any rank $r$ distribution over observation sequences of length $T$. Assume knowledge of a $\Delta$-robust basis $\{B_t\}_{t\in [T]}$ for distribution $\Pr[\cdot]$. Let $\Sigma_{B_t}, \widehat \Sigma_{B_t}, V_t$ and $\widehat V_t$ be as defined above. Then, the approximate projection $P_{\widehat V_t}$ satisfies \[
        \norm{P_{V_t} - P_{\widehat V_t}}_F \leq O\rbr{\frac{\sqrt{r} \cdot \norm{\widehat \Sigma_{B_t} - \Sigma_{B_t}}_2}{\Delta}}
    \] for $\norm{\widehat \Sigma_{B_t} - \Sigma_{B_t}}_2 \leq \Delta/ 2$.
\end{proposition}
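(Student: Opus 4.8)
The plan is to reduce the statement to a Davis--Kahan $\sin\Theta$ perturbation bound, using $\Delta$-robustness of the bases to supply the eigengap and Weyl's inequality to certify that the thresholding step in \Cref{alg:cond} recovers an eigenspace of the correct dimension.

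\textbf{Step 1 (spectral structure of $\Sigma_{B_t}$).} Since $\Sigma_{B_t} = \Pr[F_t\mid B_t]^\top D_t^{-1}\Pr[F_t\mid B_t]\succeq 0$ and the bases are $\Delta$-robust (\Cref{def:robust-basis}), every eigenvalue of $\Sigma_{B_t}$ is either $0$ or at least $\Delta$. Writing $r_t := \rank(\Sigma_{B_t}) = \rank(\Pr[F_t\mid B_t]) \leq r$, the subspace $V_t$ is exactly the span of the $r_t$ eigenvectors of $\Sigma_{B_t}$ with eigenvalue $\geq \Delta$, and $P_{V_t}$ is the corresponding spectral projector. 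Without loss of generality I take $\widehat\Sigma_{B_t}$ to be symmetric (otherwise replace it by its symmetric part, which does not increase $\norm{\widehat\Sigma_{B_t} - \Sigma_{B_t}}_2$).

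\textbf{Step 2 (thresholding recovers the right dimension).} Let $\eta := \norm{\widehat\Sigma_{B_t} - \Sigma_{B_t}}_2 \leq \Delta/2$. Weyl's inequality gives $\abr{\lambda_i(\widehat\Sigma_{B_t}) - \lambda_i(\Sigma_{B_t})}\leq \eta$ for every $i$. Hence the $r_t$ largest eigenvalues of $\widehat\Sigma_{B_t}$ are $\geq \Delta - \eta \geq \Delta/2$, while the remaining eigenvalues are $\leq \eta \leq \Delta/2$. Consequently $\widehat V_t$ (the span of the eigenvectors of $\widehat\Sigma_{B_t}$ with eigenvalue $> \Delta/2$) has dimension exactly $r_t$ and coincides with the top-$r_t$ eigenspace of $\widehat\Sigma_{B_t}$; in particular $P_{\widehat V_t}$ is a rank-$r_t$ orthogonal projector.

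\textbf{Step 3 (Davis--Kahan and rank conversion).} Apply the operator-norm Davis--Kahan $\sin\Theta$ theorem to the pair $(\Sigma_{B_t},\widehat\Sigma_{B_t})$: the eigenvalues of $\Sigma_{B_t}$ associated with $V_t$ are all $\geq\Delta$, while the eigenvalues of $\widehat\Sigma_{B_t}$ associated with $\widehat V_t^\perp$ are all $\leq \eta\leq \Delta/2$ by Step 2, so these two groups are separated by at least $\Delta-\Delta/2 = \Delta/2$. The theorem then yields
\[
  \norm{P_{V_t} - P_{\widehat V_t}}_2 \;=\; \norm{\sin\Theta(V_t,\widehat V_t)}_2 \;\leq\; \frac{\eta}{\Delta/2} \;=\; O\!\left(\frac{\eta}{\Delta}\right).
\]
Finally, since $P_{V_t}$ and $P_{\widehat V_t}$ are orthogonal projectors of rank $r_t\leq r$, the matrix $P_{V_t} - P_{\widehat V_t}$ has rank at most $2r_t\leq 2r$, hence $\norm{P_{V_t} - P_{\widehat V_t}}_F \leq \sqrt{2r}\,\norm{P_{V_t} - P_{\widehat V_t}}_2$, giving the claimed bound $O\!\left(\sqrt{r}\,\norm{\widehat\Sigma_{B_t}-\Sigma_{B_t}}_2/\Delta\right)$.

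The only place requiring genuine care --- though it is bookkeeping rather than a real obstacle --- is Step 2: one must invoke Weyl to rule out an eigenvalue straddling the threshold $\Delta/2$, so that the algorithm's thresholded subspace $\widehat V_t$ has the same dimension as $V_t$ and the clean $\sin\Theta$ bound applies; this is exactly where the hypothesis $\norm{\widehat\Sigma_{B_t}-\Sigma_{B_t}}_2 \le \Delta/2$ is used. Everything else is the standard chain Weyl $\to$ Davis--Kahan $\to$ spectral-to-Frobenius conversion via the rank bound.
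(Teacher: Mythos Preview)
Your proof is correct and follows essentially the same route as the paper: use $\Delta$-robustness to locate the nonzero eigenvalues of $\Sigma_{B_t}$, invoke Weyl's inequality to guarantee the thresholding picks out an $r_t$-dimensional subspace with the requisite eigengap, and then apply Davis--Kahan. The only cosmetic difference is that the paper applies a Frobenius-norm corollary of Davis--Kahan directly (\Cref{cor:project}), whereas you first obtain the operator-norm $\sin\Theta$ bound and then pass to Frobenius via the rank-$\leq 2r$ observation; these are equivalent.
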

\begin{proof}
    Let estimate $\Sigma_{B_t}$ be such that \begin{equation}
        \label{eq:prop4-1}
        \norm{\widehat \Sigma_{B_t} - \Sigma_{B_t}}_2 \leq \norm{\widehat \Sigma_{B_t} - \Sigma_{B_t}}_F \leq \alpha 
    \end{equation}
    The claim now follows from Davis-Kahan $\sin(\theta)$ theorem. From our assumptions, eigenvalues of $V_t$ are $> \Delta$ and rest are all $0$. Moreover, from \Cref{eq:prop4-1} and Weyl's inequality, all the eigenvalues associated to $\widehat V^\perp_t$ are $< \alpha$. Then, for $\alpha < \Delta/2$, we get using \Cref{cor:project},  \[
        \norm{P_{V_t} - P_{\widehat V_t}}_F \leq \frac{2\sqrt{2r} \alpha}{\Delta}
    \]
    
\end{proof}

To compute an approximation for operators $A_{o,t}$, we also need to get the coefficients for one-step extensions of elements in the basis.
\begin{proposition}\label{prop:alg3}
    Let $\Pr[\cdot]$ be any rank $r$ distribution over observation sequences of length $T$. Assume knowledge of a $\Delta$-robust basis $\{B_t\}_{t\in [T]}$ for distribution $\Pr[\cdot]$. Also, assume the coefficients under basis $\{B_t\}_{t\in [T]}$ are bounded i.e. $\norm{\beta(h)}_2 \leq c$ for all histories $h \in \obs^{\leq T}$ and some constant $c \leq 1$. Suppose we have estimates $\widehat q(b_i o)$ and $\widehat \Sigma_{B_t}$ such that  \[
        \max\cbr{\norm{\widehat q(b_i o) - q(b_i o)}_2 \, ,\norm{\widehat \Sigma_{B_t} - \Sigma_{B_t}}_2} \leq \alpha\, .  
    \]
     Define $\widehat \beta(b_i o)$, approximation of $\beta(b_i o)$, for $\lambda = 4 \alpha^2/c^2$: \[
        \widehat \beta(b_i o) = \argmin_z \norm{\widehat \Sigma_{B_t} - \widehat q(b_i o)}^2_2 + \lambda \norm{z}_2^2\, .
    \] Then, the following are true: \begin{enumerate}
        \item the norm of the approximation $\widehat \beta(b_i o)$ is bounded:\[
            \norm{\widehat \beta(b_i o)}_2\leq \sqrt{2} c
        \]
        \item the approximation $\widehat \beta(b_i o)$ is accurate in the $\spn(V_{t+1})$:\[
            \norm{P_{V_{t+1}} \beta(b_i o) - P_{V_{t+1}} \widehat \beta(b_i o)}_2 \leq O\rbr{\frac{\alpha}{\Delta}}\, .
        \]
    \end{enumerate} 
\end{proposition}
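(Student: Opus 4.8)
The plan is to use the ridge-regression optimality inequality twice — once for each claim — where the choice $\lambda = 4\alpha^2/c^2$ will be exactly the value that makes both parts go through. Throughout, abbreviate $q := q(b_i o)$, $\widehat q := \widehat q(b_i o)$, $\beta := \beta(b_i o)$, $\widehat\beta := \widehat\beta(b_i o)$; write $\Sigma$ for the preconditioned matrix satisfying $q = \Sigma\beta$ and $\widehat\Sigma$ for its estimate, and let $V$ be the span of the eigenvectors of $\Sigma$ with positive eigenvalue (this is $V_{t+1}$ in the notation of the statement). By $\Delta$-robustness of the bases (\Cref{def:robust-basis}), every nonzero eigenvalue of the positive semidefinite matrix $\Sigma$ is at least $\Delta$, so $V$ is exactly the range of $\Sigma$, $P_V$ is the orthogonal projection onto it, $\Sigma = \Sigma P_V$, and $\norm{\Sigma w}_2 \geq \Delta\norm{w}_2$ for every $w \in V$. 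The only facts about the estimates I will invoke are $q = \Sigma\beta$, $\norm{\beta}_2 \leq c \leq 1$, and $\max\{\norm{\widehat\Sigma - \Sigma}_2, \norm{\widehat q - q}_2\} \leq \alpha$; in particular $\widehat\Sigma$ need not be symmetric or positive semidefinite. For the norm bound I would compare $\widehat\beta$ to the competitor $z = \beta$ in $J(z) := \norm{\widehat\Sigma z - \widehat q}_2^2 + \lambda\norm{z}_2^2$: since $\widehat\Sigma\beta - \widehat q = (\widehat\Sigma - \Sigma)\beta + (q - \widehat q)$ has norm at most $\alpha c + \alpha \leq 2\alpha$, optimality gives $\lambda\norm{\widehat\beta}_2^2 \leq J(\widehat\beta) \leq J(\beta) \leq 4\alpha^2 + \lambda c^2$, and substituting $\lambda = 4\alpha^2/c^2$ collapses this to $\norm{\widehat\beta}_2^2 \leq 2c^2$, proving claim (1).

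For the projection bound (claim (2)) the crucial idea is to never divide by $\lambda$. Going through $\widehat\beta = (\widehat\Sigma^\top\widehat\Sigma + \lambda I)^{-1}\widehat\Sigma^\top\widehat q$ would incur a factor $\norm{(\widehat\Sigma^\top\widehat\Sigma + \lambda I)^{-1}}_2 \leq 1/\lambda = \Theta(\alpha^{-2})$ and give a bound that diverges as $\alpha \to 0$. Instead I would read off from the same optimality inequality that $J(\widehat\beta) \leq J(\beta) \leq 4\alpha^2 + \lambda c^2 = 8\alpha^2$, so the fitted residual obeys $\norm{\widehat\Sigma\widehat\beta - \widehat q}_2 \leq 2\sqrt2\,\alpha$. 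Replacing $\widehat\Sigma$ by $\Sigma$ and $\widehat q$ by $q$ via the triangle inequality and the bound $\norm{\widehat\beta}_2 \leq \sqrt2\,c \leq \sqrt2$ just obtained yields $\norm{\Sigma\widehat\beta - q}_2 = O(\alpha)$, i.e.\ $\norm{\Sigma(\widehat\beta - \beta)}_2 = O(\alpha)$ using $q = \Sigma\beta$. Finally I would project onto $V$: from $\Sigma = \Sigma P_V$ we get $\Sigma(\widehat\beta - \beta) = \Sigma\big(P_V(\widehat\beta - \beta)\big)$ with $P_V(\widehat\beta - \beta) \in V$, and on $V$ the operator $\Sigma$ has smallest singular value at least $\Delta$, whence $\norm{P_V(\widehat\beta - \beta)}_2 \leq \tfrac{1}{\Delta}\norm{\Sigma(\widehat\beta - \beta)}_2 = O(\alpha/\Delta)$.

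The one genuinely non-mechanical point, and the obstacle to be careful about, is the tension over the magnitude of $\lambda$: claim (1) wants $\lambda$ large enough to keep $\norm{\widehat\beta}_2$ controlled, while claim (2) wants $\lambda$ small enough that it contributes only $O(\alpha^2)$ to the optimal objective value and hence only $O(\alpha)$ to the fitted residual, and $\lambda = \Theta(\alpha^2/c^2)$ is precisely the window meeting both. A secondary subtlety is that the inversion of $\Sigma$ must be carried out only after projecting onto $V$ — off $V$ the matrix $\Sigma$ is singular, so nothing can pin down the $V^\perp$-component of $\widehat\beta - \beta$, which is exactly why the proposition only claims accuracy of $P_{V_{t+1}}\widehat\beta$ and why this projected error, rather than $\widehat\beta$ itself, is what feeds into \Cref{lemma:cond:1}.
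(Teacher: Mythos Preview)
Your proof is correct and follows essentially the same route as the paper: compare $\widehat\beta$ to the competitor $\beta$ in the ridge objective to get both $\norm{\widehat\beta}_2 \leq \sqrt{2}c$ and the residual bound $\norm{\widehat\Sigma\widehat\beta - \widehat q}_2 \leq 2\sqrt{2}\alpha$, convert the latter to $\norm{\Sigma(\widehat\beta - \beta)}_2 = O(\alpha)$ by triangle inequalities using the norm bound just obtained, and then invoke $\Delta$-robustness on $V$ to get $\norm{P_V(\widehat\beta - \beta)}_2 \leq O(\alpha/\Delta)$. Your explanatory remarks about the role of $\lambda$ and why only the projected error is controlled are extra commentary not in the paper, but the mathematical argument is the same.
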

\begin{proof}
    Let \begin{align*}
        \norm{\widehat q(b_i o) - q(b_i o)}_2 \, ,\norm{\widehat \Sigma_{B_t} - \Sigma_{B_t}}_2 &\leq \alpha\\
        \lambda &= 4\alpha^2/c^2
    \end{align*}
    We know that $\beta(b_i o)$ satisfies \[
        \Sigma_{B_t} \beta(b_i o) = q(b_i o)
    \] and approximate it by the following program (for $\lambda = 4 \alpha^2$): \[
        \widehat \beta(b_i o) = \argmin_z \norm{\widehat \Sigma_{B_t} z - \widehat q(b_i o)}^2_2 + \lambda \norm{z}^2
    \] First, we can see that the above error is small as \[
        \norm{\widehat \Sigma_{B_t} \widehat \beta(b_i o) - \widehat q(b_i o)}^2_2 + \lambda \norm{\widehat \beta(b_i o)}_2^2 \leq \norm{\widehat \Sigma_{B_t} \beta(b_i o) - \widehat q(b_i o)}^2_2 + \lambda \norm{\beta(b_i o)}_2^2\leq 8\alpha^2
    \] as $\norm{\beta(b_i o)}_2 \leq c$ and \[
        \norm{\widehat \Sigma_{B_t} \beta(b_i o) - \widehat q(b_i o)}_2 \leq \norm{\widehat \Sigma_{B_t} \beta(b_i o) - \Sigma_{B_t} \beta(b_i o)}_2 + \norm{q(b_i o) - \widehat q(b_i o)}_2 \leq 2\alpha
    \] Note that this also means \[
        \norm{\widehat \beta(b_i o)} \leq \sqrt{\frac{8 \alpha^2 c^2}{4\alpha^2}} = \sqrt{2} c
    \]
    Using these, we get \begin{align*}
         & \norm{\Sigma_{B_t} \beta(b_i o) - \Sigma_{B_t} \widehat \beta(b_i o)}                                                                                                          \\
         & \leq \norm{\Sigma_{B_t} \beta(b_i o) - \widehat \Sigma_{B_t} \widehat \beta(b_i o) + \widehat \Sigma_{B_t} \widehat \beta(b_i o) - \Sigma_{B_t} \widehat \beta(b_i o)}                                     \\
         & \leq \norm{\Sigma_{B_t} \beta(b_i o) - \widehat \Sigma_{B_t} \widehat \beta(b_i o)} + \norm{\widehat \Sigma_{B_t} \widehat \beta(b_i o) - \Sigma_{B_t} \widehat \beta(b_i o)}                              \\
         & \leq \norm{\Sigma_{B_t} \beta(b_i o) - q(b_i o) + q(b_i o) - \widehat q(b_i o) + \widehat q(b_i o) - \widehat \Sigma_{B_t} \widehat \beta(b_i o)} + \norm{\widehat \Sigma_{B_t} - \Sigma_{B_t}} \norm{\widehat \beta(b_i o)}               \\
         & \leq \norm{\Sigma_{B_t} \beta(b_i o) - q(b_i o)} + \norm{q(b_i o) - \widehat q(b_i o)} + \norm{\widehat q(b_i o) - \widehat \Sigma_{B_t} \widehat \beta(b_i o)} + \norm{\widehat \Sigma_{B_t} - \Sigma_{B_t}} \norm{\widehat \beta(b_i o)} \\
         & \leq \alpha + 2\sqrt{2} \alpha + \sqrt{2} c \alpha \leq 6 \alpha
    \end{align*}
    for $c<1$. Now, using that our assumption on $\Sigma_{B_t}$, \[
        \norm{P_{V_{t+1}} \beta(b_i o) - P_{V_{t+1}} \widehat \beta(b_i o)}_2 \leq \frac{6 \alpha}{\Delta} 
    \]
\end{proof}

\begin{proposition}\label{prop:alg4}
    Let $\Pr[\cdot]$ be any rank $r$ distribution over observation sequences of length $T$ and $\{B_t\}_{t\in [T]}$ be some basis of distribution $\Pr[\cdot]$. Then, we can build an estimate $\diag(\widehat \Pr[o|B_t])$ for every $o \in \obs$ and $t \in [T]$, such that with probability $1-\delta$  \[
        \|\diag(\Pr[o|B_t]) - \diag(\widehat \Pr[o|B_t])\|_2 \leq \varepsilon
    \] using $\widetilde O(|\obs| T n^3/\varepsilon^2 \log(1/\delta))$ conditional samples.
\end{proposition}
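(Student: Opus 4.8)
The plan is to estimate each one–step conditional probability $\Pr[o\mid b]$ directly from conditional samples and then note that the advertised spectral–norm guarantee on the diagonal matrices reduces to a uniform entrywise guarantee. The key observation is that a single call to the conditional sampling oracle (\Cref{def:conditional_oracle}) on input a history $b\in B_t$ of length $t$ returns a future $f=(x_{t+1},\ldots,x_T)$ whose first coordinate is distributed exactly according to the one–step conditional $\Pr[\cdot\mid b]$ over $\obs$; when $\Pr[b]=0$ the oracle still returns a sample from its committed choice of $\Pr[\cdot\mid b]$, so no special treatment is needed. Thus, drawing $m$ conditional samples $f^{(1)},\ldots,f^{(m)}$ on input $b$ and setting $\widehat\Pr[o\mid b]:=\tfrac1m\sum_{j=1}^m \indi[\,f^{(j)}_1=o\,]$ (where $f^{(j)}_1$ denotes the first observation of $f^{(j)}$) yields, for each $o\in\obs$, an unbiased estimate of $\Pr[o\mid b]$ built from $\{0,1\}$–valued summands, all computed from one common batch of samples.

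First I would apply Hoeffding's inequality: for each fixed triple $(t,b,o)$ we have $\Pr[\,\abr{\widehat\Pr[o\mid b]-\Pr[o\mid b]}>\varepsilon\,]\le 2\exp(-2m\varepsilon^2)$. Union bounding over the at most $|\obs|$ observations, the at most $n$ histories in each $B_t$, and the $T$ sequence lengths, it suffices to take $m=O\!\big(\varepsilon^{-2}\log(|\obs|Tn/\delta)\big)$ so that, with probability at least $1-\delta$, the bound $\abr{\widehat\Pr[o\mid b]-\Pr[o\mid b]}\le\varepsilon$ holds simultaneously for all $t\in[T]$, $b\in B_t$, and $o\in\obs$. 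Condition on this good event for the remainder.

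The last step is to convert this into the stated matrix bound and count samples. For fixed $t$ and $o$, the matrix $\diag(\Pr[o\mid B_t])-\diag(\widehat\Pr[o\mid B_t])$ is diagonal with entries $\Pr[o\mid b_i]-\widehat\Pr[o\mid b_i]$, so its spectral norm equals $\max_i\abr{\Pr[o\mid b_i]-\widehat\Pr[o\mid b_i]}\le\varepsilon$ on the good event. The total number of conditional samples used is $\sum_{t=1}^{T}|B_t|\cdot m\le Tn\cdot m=\widetilde O\!\big(Tn\,\varepsilon^{-2}\log(1/\delta)\big)$, which is comfortably within the claimed budget $\widetilde O\!\big(|\obs|Tn^3\varepsilon^{-2}\log(1/\delta)\big)$ — the extra $|\obs|$ and $n^2$ factors are pure slack, matching the larger sample complexity needed elsewhere in this section for the preconditioned quantities of \Cref{prop:estimate-sum1-re}. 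There is no serious obstacle in this proposition; the only points needing (minor) care are reusing a single batch of samples per history to estimate all $|\obs|$ entries at once, the union bound over the $\widetilde O(|\obs|Tn)$ estimated scalars, and the elementary fact that the operator norm of a diagonal matrix is its largest absolute diagonal entry.
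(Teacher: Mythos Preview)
Your proposal is correct and follows exactly the paper's approach: the paper's entire proof is the single line ``This follows from Hoeffding's inequality,'' and you have simply spelled out the details of that application (indicator estimates of one-step conditionals, union bound over $(t,b,o)$, and the observation that the spectral norm of a diagonal matrix is its largest absolute entry). Your sample count is indeed tighter than the stated $\widetilde O(|\obs|Tn^3/\varepsilon^2\log(1/\delta))$, but that bound is just slack, as you noted.
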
 \begin{proof}
    This follows from Hoeffding's inequality (\Cref{prop:hoeffding}).
\end{proof}

Using the approximations above, we are in position to present the approximation error for our estimate of operator $A_{o, t}$.
\begin{lemma}[Restatement of \Cref{lemma:cond:1}]\label{lemma:error-decom}
    Let $\Pr[\cdot]$ be any rank $r$ distribution over observation sequences of length $T$. Assume knowledge of a $\Delta$-robust basis $\{B_t\}_{t\in [T]}$ for distribution $\Pr[\cdot]$. Also, assume the coefficients under basis $\{B_t\}_{t\in [T]}$ are bounded i.e. $\norm{\beta(h)}_2 \leq c < 1$ for all histories $h \in \obs^{\leq T}$ and some constant $c \leq 1$. Then, we can build an estimate $\widehat A_{o, t}$ using \[ \widetilde O\rbr{\frac{c^8 r^{3} n^{21} |\obs|^3 T^5}{\Delta^6 \varepsilon^6} \log^2\rbr{\frac{1}{\delta}}}
     \]  many conditional samples such that with probability $1-\delta$, for any vector $v$ \begin{align} \label{eq:lem-error-1a}
        (\widehat A_{o, t} - A_{o, t}) v = \beta(B_{t+1}) \alpha(o,v) + V_{t+1}^\perp \alpha^\perp(o,v)
    \end{align} where $\beta(B_{t+1})$ is a matrix with its columns given by coefficients $\beta(b_i)$ for $B_{t+1} = \{b_1, \ldots, b_n\}$ and $V_{t+1}^\perp$ is a matrix with its columns given by the right singular vectors corresponding to kernel of $\Pr[F_{t+1} | B_{t+1}]^\top D_{t+1}^{-1} \Pr[F_{t+1} | B_{t+1}]$. Moreover, the column vectors $\alpha(o,v)$ and $\alpha^\perp(o,v)$ are $\ell_1$ bounded i.e. \begin{equation}\label{eq:lem-error-1b}
        \|\alpha(o,v)\|_1, \|\alpha^\perp(o,v)\|_1 \leq \varepsilon \norm{v}_2\, .
    \end{equation}
\end{lemma}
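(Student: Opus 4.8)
The plan is to build $\widehat A_{o,t}$ exactly as in \eqref{eq:new:alg}, reduce the error $\widehat A_{o,t}-A_{o,t}$ to the errors in its three ingredients — the principal-subspace projections $P_{\widehat V_{t+1}}=\widehat V_{t+1}\widehat V_{t+1}^\top$ and $P_{\widehat V_t}=\widehat V_t\widehat V_t^\top$, the one-step coefficient estimates $\widehat\beta(b_i o)$, and the diagonal one-step probabilities $\widehat\Pr[o\mid b_i]$ for $b_i\in B_t$ — and then convert the resulting spectral-norm bounds into the claimed $\beta(B_{t+1})$/$V_{t+1}^\perp$-structured $\ell_1$ bound. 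For the estimation step I would invoke \Cref{prop:estimate-sum1-re} (through \Cref{cor:estimatesigma}) on every entry of $\Sigma_{B_t},\Sigma_{B_{t+1}}$ and of the vectors $q(b_i o)$, and Hoeffding (\Cref{prop:alg4}) on the scalars $\Pr[o\mid b_i]$, each to entrywise accuracy $\zeta=\Theta(\varepsilon\Delta/\poly(n,r,c))$; a union bound over the $O(n^2 T|\obs|)$ relevant quantities gives overall confidence $1-\delta$, and the polynomial blow-ups identified below determine the sample count claimed in the statement. This produces $\|\widehat\Sigma_{B_t}-\Sigma_{B_t}\|_2\le\alpha$ and $\|\widehat q(b_i o)-q(b_i o)\|_2\le\alpha$ with $\alpha=\Theta(n\zeta)\le\Delta/2$, so by Weyl's inequality the spectrum of $\widehat\Sigma_{B_t}$ separates the true nonzero eigenvalues (all $\ge\Delta$ by $\Delta$-robustness) from the spurious ones, and the eigenvectors with eigenvalue $>\Delta/2$ span an $r_t$-dimensional $\widehat V_t$.

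I would then feed these estimates into the lemmas already set up for this purpose: \Cref{prop:alg2} (Davis--Kahan) gives $\|P_{V_t}-P_{\widehat V_t}\|_F=O(\sqrt r\,\alpha/\Delta)$, and likewise at level $t+1$; \Cref{prop:alg3} (ridge regression with $\lambda=4\alpha^2/c^2$) gives both the norm bound $\|\widehat\beta(b_i o)\|_2\le\sqrt2\,c$ and the in-subspace accuracy $\|P_{V_{t+1}}(\beta(b_i o)-\widehat\beta(b_i o))\|_2=O(\alpha/\Delta)$. The structural observation that makes these $\ell_2$ bounds usable for the claimed $\ell_1$ decomposition is that, for a basis, the matrix $\beta(B_{t+1})$ with columns $\beta(b)$, $b\in B_{t+1}$, is exactly the orthogonal projector $P_{V_{t+1}}$ — each $\beta(b_i)$ is the minimum-norm solution $\Sigma_{B_{t+1}}^{+}(\Sigma_{B_{t+1}})_{\cdot,i}=P_{V_{t+1}}e_i$. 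Hence $\spn(\beta(B_{t+1}))=\spn(V_{t+1})$ is the orthogonal complement of the column span of $V_{t+1}^\perp$, so any $w\in\spn(V_{t+1})$ equals $\beta(B_{t+1})\alpha$ with $\alpha=w$ and $\|\alpha\|_1\le\sqrt n\,\|w\|_2$, and any kernel vector equals $V_{t+1}^\perp\alpha^\perp$ with $\|\alpha^\perp\|_1\le\sqrt n\,\|\alpha^\perp\|_2$. This $\sqrt n$-lossy passage from spectral to $\ell_1$-structured control is the source of the $n$-powers in the sample complexity.

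For the assembly, write $A_{o,t}=P_{V_{t+1}}C_o D_o$ with $C_o$ the matrix of columns $\beta(b_i o)$ (these lie in $\spn(V_{t+1})$, so the leading projector is free) and $D_o=\diag(\Pr[o\mid b_i])$, and $\widehat A_{o,t}=P_{\widehat V_{t+1}}\widehat C_o\widehat D_o P_{\widehat V_t}$; expand $\widehat A_{o,t}-A_{o,t}$ by swapping one factor at a time and project onto $V_{t+1}$ and onto $V_{t+1}^\perp$. Since $P_{V_{t+1}^\perp}A_{o,t}=0$, the $V_{t+1}^\perp$-piece is the single term $P_{V_{t+1}^\perp}P_{\widehat V_{t+1}}\widehat C_o\widehat D_o P_{\widehat V_t}v=P_{V_{t+1}^\perp}(P_{\widehat V_{t+1}}-P_{V_{t+1}})\widehat C_o\widehat D_o P_{\widehat V_t}v$, of spectral norm $O(\|P_{\widehat V_{t+1}}-P_{V_{t+1}}\|_2\cdot\|\widehat C_o\|_F)=O(\sqrt n\,c\,\sqrt r\,\alpha/\Delta)$; the $V_{t+1}$-piece collects the coefficient error $P_{V_{t+1}}(\widehat C_o-C_o)$, the diagonal error $\widehat D_o-D_o$, and the right-projection error $(P_{\widehat V_t}-I)v$, each multiplied by factors of norm $O(\poly(n,c))$, giving total spectral norm $O(\poly(n,c)\,\alpha/\Delta)$. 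Applying the $\ell_1$ translation and choosing $\alpha$ (hence $\zeta$, hence the number of conditional samples) small enough makes both $\|\alpha(o,v)\|_1$ and $\|\alpha^\perp(o,v)\|_1$ at most $\varepsilon\|v\|_2$.

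The main obstacle is the bookkeeping in this last step: one must confine the kernel-leak of the approximate left projection $P_{\widehat V_{t+1}}$ to the $V_{t+1}^\perp$ term and route every remaining error (coefficients, one-step probabilities, right projection) into the $\beta(B_{t+1})$ term without any of them acquiring a factor worse than $\poly(n)/\Delta$. A secondary subtlety, which I would flag in a remark, is that the factor $P_{\widehat V_t}$ is only harmless on $\spn(V_t)$ — applied to a vector in $V_t^\perp$ it would delete a genuine contribution of $A_{o,t}$ — but $\spn(V_t)$ is exactly the subspace inhabited by the coefficient vectors $\beta(h)$ and to which the operators are applied in the telescoping of \Cref{lemma:error-prop-1}, so it is enough to prove the decomposition for unit vectors $v\in\spn(V_t)$.
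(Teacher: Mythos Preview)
Your approach is essentially the paper's: estimate the three ingredients via \Cref{prop:alg2,prop:alg3,prop:alg4}, telescope factor-by-factor to a spectral-norm bound on $\widehat A_{o,t}-A_{o,t}$, and then use the identity $\beta(B_{t+1})=P_{V_{t+1}}$ (which the paper derives exactly as you do, from $\beta(b_i)=P_{V_{t+1}}e_i$) to convert the $V_{t+1}$-component into $\beta(B_{t+1})\alpha(o,v)$ with the $\sqrt n$ loss to $\ell_1$ absorbed in the parameter choice. One unnecessary caveat in your final remark: by \Cref{prop:main1} we have $\spn(V_t^\perp)\subset\ker(A_{o,t})$, so $A_{o,t}P_{V_t}=A_{o,t}$ and the right-projection error is $(P_{\widehat V_t}-P_{V_t})$ for \emph{all} $v$—there is no ``genuine contribution'' on $V_t^\perp$ to delete, and the paper's bound holds without restricting to $v\in\spn(V_t)$.
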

\begin{proof} Using $P_{\widehat V_{t}}$, $\widehat \beta(B_t o)$ and $\diag(\widehat \Pr[o|B_t])$ from \Cref{prop:alg2,prop:alg3,prop:alg4}, define operator $\widehat A_{o, t}$, an approximation of $A_{o, t}$, as \begin{equation*}
        \widehat A_{o, t} = P_{\widehat V_{t+1}} \widehat \beta(oB_t)  \diag( \widehat \Pr[o|B_t]) P_{\widehat V_t}
    \end{equation*} 
    Let \begin{align*}
        \norm{P_{\widehat V_{t+1}} - P_{V_{t+1}}}_2 \, ,\norm{P_{\widehat V_{t}} - P_{V_{t}}}_2                    & \leq \alpha_1 = O\rbr{\frac{\sqrt{r} n \varepsilon}{\Delta}}     \\
        \norm{P_{V_{t+1}}\widehat \beta(B_t o) - P_{V_{t+1}}\beta(B_t o)}_2  & \leq \alpha_2 \leq \sqrt{n} \max_{b \in B_t} \norm{P_{V_{t+1}}\widehat \beta(b o) - P_{V_{t+1}}\beta(b o)}_2 \leq  O\rbr{\frac{n^{3/2} \varepsilon}{\Delta}} \\
       \norm{\diag(\widehat \Pr[o|B_t]) - \diag(\Pr[o|B_t])}_2 & \leq \alpha_3 = \varepsilon
    \end{align*} Also, note that $\norm{\widehat \beta(b_i o)}_2 \leq \sqrt{2} c$, $\norm{\diag( \widehat \Pr[o|B_t])}_2 \leq 2$ and $\norm{ P_{\widehat V_{t+1}}}_2\, , \norm{ P_{\widehat V_{t}}}_2 \leq 1$. To prove our main claim, we will first show that for any unit vector $v$ \begin{equation*}
        (\widehat A_{o, t} - A_{o, t}) v = V_{t+1} e(o,v) + V_{t+1}^\perp \alpha^\perp(o,v)
    \end{equation*} with $\|e(o,v)\|_2 , \|\alpha^\perp\|_2 \leq 4\sqrt{2} c \alpha_1 + \sqrt{2} c \alpha_3 + \alpha_2$. 
    To prove this, we will show that \begin{align*}
         & \nbr{\widehat A_{o, t} - A_{o, t}}                                                                                                                                                              \\
         & =   \nbr{P_{\widehat V_{t+1}} \widehat \beta(oB_t) \diag(\widehat  \Pr[o|B_t]) P_{\widehat V_{t}} - P_{V_{t+1}} \beta(oB_t) \diag( \Pr[o|B_t]) P_{V_{t}}}                                                   \\
         & = \nbr{(P_{\widehat V_{t+1}} - P_{V_{t+1}}) \widehat \beta(oB_t)  \diag( \widehat \Pr[o|B_t]) P_{\widehat V_{t}}} + \nbr{P_{V_{t+1}} \widehat \beta(oB_t)  \diag( \widehat \Pr[o|B_t]) (P_{\widehat V_{t}} -P_{V_{t}})} \\
         & + \nbr{P_{V_{t+1}} \widehat \beta(oB_t) ( \diag( \widehat \Pr[o|B_t]) - \diag( \Pr[o|B_t]))P_{V_{t}}} + \nbr{P_{V_{t+1}}(\widehat \beta(oB_t) - \beta(oB_t)) \diag( \Pr[o|B_t])P_{V_{t}}}
    \end{align*} The first term is bounded by $ 2\sqrt{2} c \alpha_1$, second term by $ 2\sqrt{2} c\alpha_1$, similarly third term  by $\sqrt{2} c \alpha_3$ and the last term by $\alpha_2$.
    Therefore, we get that \begin{align}
        \nbr{\widehat A_{o, t} - A_{o, t}}_2 \leq 4\sqrt{2} c \alpha_1 + \sqrt{2} c \alpha_3 + \alpha_2
    \end{align} which implies for any unit vector $v$ \begin{equation}
        (\widehat A_{o, t} - A_{o, t}) v = V_{t+1} e(o,v) + V_{t+1}^\perp \alpha^\perp(o,v)
    \end{equation} with $\|e(o,v)\|_2, \|\alpha^\perp(o,v)\|_2 \leq \|\widehat A_{o, t} - A_{o, t}\|_2 \leq 4\sqrt{2} c \alpha_1 + \sqrt{2} c \alpha_3 + \alpha_2$. To complete our proof, we will show that  \begin{equation}
        V_{t+1} = \beta(B_{t+1}) V_{t+1}
    \end{equation} This is enough, as this gives, \[
        (\widehat A_{o, t} - A_{o, t}) v = \beta(B_{t+1}) \alpha(o,v) + V_{t+1}^\perp \alpha^\perp(o,v)
    \] where $\alpha(o, v) = V_{t+1} e(o,v)$ as $\norm{\alpha(o, v)}_2 \leq \norm{V_{t+1} e(o,v)} \leq \norm{e(o,v)}$ as $V_{t+1}$ is a unit norm matrix.

    We now prove our claim. First note that $\Pr[F|B_{t+1}] I = \Pr[F|B_{t+1}]$ and therefore by uniqueness of $\beta(B_{t+1})$, \begin{align*}
        \beta(B_{t+1})                  & = V_{t+1} V_{t+1}^\top I                                                         \\
        \implies \beta(B_{t+1}) V_{t+1} & = V_{t+1} \tag{by right multiplying by $V_{t+1}$ and $V_{t+1}^\top V_{t+1} = I$}
    \end{align*}
    This whole process requires $
    O(n^2 |\obs| T m_\varepsilon)$ many conditional samples with $m_\varepsilon$ defined in \Cref{prop:estimate-sum1}. Substituting in $\varepsilon = \Delta \varepsilon'/ (c \sqrt{r} n^{3/2})$ gives the result.
\end{proof}

\subsection{Perturbation analysis: Error in coefficients} \label{app:sec:perturb}
Our approach for learning the distribution $\Pr[\cdot]$ is to learn approximations of operators $\widehat A_{o,t}$ and use them to compute probabilities\begin{equation*}
    \widehat \Pr[x_{1:T}] = \widehat A_{x_T, T-1} \widehat A_{x_{T-1}, T-2} \ldots \widehat A_{x_1, 0}
  \end{equation*} For clarity, let $A_{x_{1:t}}$ and $\widehat A_{x_{1:t}}$ represent the product of matrices $A_{x_t, t-1} \ldots A_{x_1,0}$ and $\widehat A_{x_t, t-1} \ldots \widehat A_{x_1,0}$ respectively. Similarly, we use $x_{1:t}$ to represent the sequence $(x_1,\ldots, x_t)$. In this section, we now present a technical lemma showing that the errors in our estimates of  $\widehat \Pr[x_{1:T}]$ scales linearly with errors in our approximate operators $\widehat A_{o,t}$.
\begin{proposition} \label{prop:perturb} Suppose we have estimates $\widehat A_{o,t}$ for every $t \in [T]$ and observation $o \in \obs$ which satisfy the conditions \Cref{eq:lem-error-1a,eq:lem-error-1b} in \Cref{lemma:error-decom}.
    Then, for every sequence $(x_1, \ldots, x_t)$ with $t \in [T]$, the product $\widehat A_{x_{1:t}}$ satisfies: 
    \[(\widehat A_{x_{1:t}} - A_{x_{1:t}}) \beta(\varphi) = \beta(B_{x_{1:t}}) \gamma_{x_{1:t}} + V^\perp \gamma^\perp_{x_{1:t}}
    \] where $B_{x_{1:t}} \subset H_{t+1}$ i.e. a subset of histories of length $t+1$ (and of size possibly exponential in $t$). Moreover, the $\ell_1$ norm of the associated coefficients $\gamma_{x_{1:t}}$ and $\gamma^\perp_{x_{1:t}}$ grow moderately:\begin{align}\label{eq:to-prove}
        \sum_{x_t \in \obs} \|\gamma_{x_{1:t}}\|_1       & \leq |\obs|\varepsilon \Pr[x_{1: t-1}] + (1+|\obs|\varepsilon) \|\gamma_{x_{1:t-1}}\|_1+  |\obs|\varepsilon \|\gamma^\perp_{x_{1:t-1}}\|_1                    \\
        \sum_{x_t \in \obs} \|\gamma^\perp_{x_{1:t}}\|_1 & \leq |\obs|\varepsilon \Pr[x_{1: t-1}] + |\obs|\varepsilon \|\gamma_{x_{1:t-1}}\|_1 +  |\obs|\varepsilon \|\gamma^\perp_{x_{1:t-1}}\|_1 \label{eq:to-prove-2}
    \end{align}
\end{proposition}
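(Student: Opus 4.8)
The plan is to prove the claim by induction on $t$, carrying the entire decomposition (not merely the $\ell_1$ bounds) as the inductive hypothesis and establishing, for each fixed prefix $x_{1:t}$, both the existence of $B_{x_{1:t}},\gamma_{x_{1:t}},\gamma^\perp_{x_{1:t}}$ and the per-prefix recursions \Cref{eq:to-prove,eq:to-prove-2}. The base case $t=1$ is immediate: $\beta(\varphi)$ is the scalar $1$ (since $B_0=\{\varphi\}$), so $(\widehat A_{x_1,0}-A_{x_1,0})\beta(\varphi)$ is precisely a single-operator error, and \Cref{lemma:error-decom} applied with $v=\beta(\varphi)$ gives the decomposition with $\|\gamma_{x_1}\|_1,\|\gamma^\perp_{x_1}\|_1\le\varepsilon$; \Cref{eq:to-prove,eq:to-prove-2} then hold at $t=1$ because $\Pr[\varphi]=1$ and the degenerate length-$0$ prefix coefficients are zero (this case is also what the general step reduces to if one starts the induction at $t=0$).

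For the inductive step I would write
\begin{align*}
  \widehat A_{x_{1:t}}\beta(\varphi)-A_{x_{1:t}}\beta(\varphi)
  &= \underbrace{(\widehat A_{x_t,t-1}-A_{x_t,t-1})\,\widehat A_{x_{1:t-1}}\beta(\varphi)}_{\text{(I)}} \\
  &\quad + \underbrace{A_{x_t,t-1}\bigl(\widehat A_{x_{1:t-1}}\beta(\varphi)-A_{x_{1:t-1}}\beta(\varphi)\bigr)}_{\text{(II)}}
\end{align*}
and rely on two structural facts. First, iterating the coefficient-evolution identity \Cref{eq:intro-coeff-2} gives $A_{x_{1:t-1}}\beta(\varphi)=\Pr[x_{1:t-1}]\,\beta(x_{1:t-1})$, so the inductive hypothesis expresses $\widehat A_{x_{1:t-1}}\beta(\varphi)$ as a linear combination of the norm-$\le 1$ vectors $\beta(x_{1:t-1})$, the columns of $\beta(B_{x_{1:t-1}})$, and the unit columns of $V^\perp_{t-1}$, with weights $\Pr[x_{1:t-1}]$, the entries of $\gamma_{x_{1:t-1}}$, and the entries of $\gamma^\perp_{x_{1:t-1}}$ respectively. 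Second, the true operator $A_{o,t}$ carries the projection $P_{V_t}$ on the right (in the form used in the proof of \Cref{lemma:error-decom}), hence it annihilates $\spn(V^\perp_t)$ and acts on basis coefficients by $A_{o,t}\beta(h)=\Pr[o\mid h]\,\beta(ho)$ for $h\in H_t$.

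Now I would bound the two terms. For (II), the second fact annihilates the $V^\perp_{t-1}$-part of the level-$(t-1)$ error, while $A_{x_t,t-1}\beta(B_{x_{1:t-1}})=\beta(B_{x_{1:t-1}}x_t)\diag(\Pr[x_t\mid B_{x_{1:t-1}}])$, so (II) equals $\beta(B_{x_{1:t-1}}x_t)\bigl(\diag(\Pr[x_t\mid B_{x_{1:t-1}}])\gamma_{x_{1:t-1}}\bigr)$, whose coefficient has $\ell_1$ norm $\sum_h\Pr[x_t\mid h]\,|\gamma_{x_{1:t-1},h}|$ --- the old coefficient reweighted by one-step conditionals. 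For (I), applying \Cref{lemma:error-decom} separately to each norm-$\le1$ atom listed above (using $\|\alpha(o,v)\|_1,\|\alpha^\perp(o,v)\|_1\le\varepsilon\|v\|_2$) and recombining with the corresponding weights writes (I) as $\beta(B_t)(\cdot)+V^\perp_t(\cdot)$ with each coefficient of $\ell_1$ norm at most $\varepsilon\bigl(\Pr[x_{1:t-1}]+\|\gamma_{x_{1:t-1}}\|_1+\|\gamma^\perp_{x_{1:t-1}}\|_1\bigr)$. Taking $B_{x_{1:t}}:=B_t\cup B_{x_{1:t-1}}x_t$, collecting the $\beta$-parts of (I) and (II) into $\gamma_{x_{1:t}}$ and the $V^\perp_t$-parts into $\gamma^\perp_{x_{1:t}}$ gives the decomposition; summing the resulting per-prefix $\ell_1$ bounds over $x_t\in\obs$ and using $\sum_{x_t}\Pr[x_t\mid h]=1$ on the contribution of (II) yields exactly \Cref{eq:to-prove,eq:to-prove-2}. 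This probability normalization is precisely what turns the $|\obs|$ factor into the $1$ multiplying $\|\gamma_{x_{1:t-1}}\|_1$ in \Cref{eq:to-prove}, whereas the $V^\perp$-recursion, fed only by (I), keeps the full $|\obs|\varepsilon$ prefactors.

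The step I expect to be the main obstacle --- and the crux of the whole argument --- is controlling term (II): a priori $A_{x_t,t-1}$ has enormous operator norm, so it is essential that it annihilates $\spn(V^\perp_{t-1})$ and that on $\spn(\beta(B_{x_{1:t-1}}))$ it acts as the contractive, probability-normalized coefficient update; this is exactly why the perturbation must be tracked in coefficient space rather than in any matrix norm. Everything else --- matching each contribution to the correct basis, the $\ell_1$ bookkeeping, and the minor care needed at zero-probability prefixes (handled by fixing a consistent choice of conditional distributions) --- is routine.
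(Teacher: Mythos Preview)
Your proposal is correct and follows essentially the same approach as the paper. The only cosmetic difference is the telescoping: the paper writes the error as the three-term sum
\[
(\widehat A_{x_t}-A_{x_t})A_{x_{1:t-1}}\beta(\varphi)\;+\;A_{x_t}(\widehat A_{x_{1:t-1}}-A_{x_{1:t-1}})\beta(\varphi)\;+\;(\widehat A_{x_t}-A_{x_t})(\widehat A_{x_{1:t-1}}-A_{x_{1:t-1}})\beta(\varphi),
\]
whereas you use the two-term split $(\widehat A_{x_t}-A_{x_t})\widehat A_{x_{1:t-1}}\beta(\varphi)+A_{x_t}(\widehat A_{x_{1:t-1}}-A_{x_{1:t-1}})\beta(\varphi)$ and then expand $\widehat A_{x_{1:t-1}}\beta(\varphi)$ via the inductive hypothesis; your term (I) is exactly the paper's first and third terms combined, your term (II) is the paper's second term, and the resulting $\ell_1$ bookkeeping is identical.
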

\begin{proof}
First, by our assumption, for any observation sequence $x = (x_1,\ldots, x_{t-1})$  \begin{equation}\label{eq:needful}
        (\widehat A_{o, t-1} - A_{o, t-1}) \beta(x) = \beta(B_{t}) \alpha(o,x) + V_{t}^\perp \alpha^\perp(o,x)
    \end{equation}
with $\|\alpha(o,x)\|_1, \|\alpha^\perp(o,x)\|_1 \leq \varepsilon$ as $\norm{\beta(x)}_2 \leq 1$.
    Next, the following recursive relation holds\begin{align*}
         & (\widehat A_{x_{1:t}} - A_{x_{1:t}}) \beta(\varphi)                                                                                                                                                                    \\
         & =  (\widehat A_{x_{1:t}} - \widehat A_{x_t} A_{x_{1:t-1}} + \widehat A_{x_t} A_{x_{1:t-1}} - A_{x_{1:t}}) \beta(\varphi)                                                                                                       \\
         & = (\widehat A_{x_{t}} - A_{x_{t}}) A_{x_{1:t-1}} \beta(\varphi) +  \widehat A_{x_t} (\widehat A_{x_{1:t-1}} - A_{x_{1:t-1}}) \beta(\varphi)                                                                                \\
         & = (\widehat A_{x_{t}} - A_{x_{t}}) A_{x_{1:t-1}} \beta(\varphi) +  A_{x_t} (\widehat A_{x_{1:t-1}} - A_{x_{1:t-1}}) \beta(\varphi) + (\widehat A_{x_t} - A_{x_t})  (\widehat A_{x_{1:t-1}} - A_{x_{1:t-1}}) \beta(\varphi)
    \end{align*}
    We will bound the three terms separately for each $x_t$. We start by bounding the first term\begin{align}
         & (\widehat A_{x_{t}} - A_{x_{t}}) A_{x_{1:t-1}} \beta(\varphi)    \notag                                                                            \\
         & = (\widehat A_{x_{t}} - A_{x_{t}}) \Pr[x_{1: t-1}] \beta(x_{1:t-1})\tag{by \Cref{prop:main1}}                                                        \\
         & = \Pr[x_{1: t-1}] \beta(B_{t}) \alpha(x_t,x_{1:t-1}) +  \Pr[x_{1: t-1}]V_{t}^\perp\alpha^\perp(x_t,x_{1:t-1})\tag{by \Cref{eq:needful}} \\
         & = \beta(B_{t}) \left(\Pr[x_{1: t-1}]  \alpha(x_t, x_{1:t-1})\right) + V_{t}^\perp\left( \Pr[x_{1: t-1}]\alpha^\perp(x_t,x_{1:t-1})\right)\notag
    \end{align} and here we can see that \begin{align*}\|\Pr[x_{1: t-1}]  \alpha(x_t,x_{1:t-1})\|_1 \leq \Pr[x_{1: t-1}]  \|\alpha(x_t,x_{1:t-1})\|_1 \leq \varepsilon \Pr[x_{1: t-1}] \\
        \|\Pr[x_{1: t-1}]  \alpha^\perp(x_t,x_{1:t-1})\|_1 \leq \Pr[x_{1: t-1}]  \|\alpha^\perp(x_t, x_{1:t-1})\|_1 \leq \varepsilon \Pr[x_{1: t-1}]
    \end{align*}
    where we used $\norm{\alpha^\perp(x_t,x_{1:t-1})}_1, \norm{\alpha(x_t,x_{1:t-1})}_1 \leq \varepsilon \norm{\beta(x_{1:t-1})}_2 \leq \varepsilon$. This gives the first term in \Cref{eq:to-prove,eq:to-prove-2} (with the $|\obs|$ factor for sum over $x_t$).
    We bound the remaining terms by induction. Assume \begin{equation}\label{eq:hypo}
        (\widehat A_{x_{1:t-1}} - A_{x_{1:t-1}}) \beta(\varphi) = \beta(B_{x_{1:t-1}}) \gamma_{x_{1:t-1}} + V_{t-1}^\perp \gamma^\perp_{x_{1:t-1}}
    \end{equation} where $B_{x_{1:t-1}}$ is a set of observation sequences of length $t-1$. Let's first bound the second term. \begin{align}
         & A_{x_t} (\widehat A_{x_{1:t-1}} - A_{x_{1:t-1}}) \beta(\varphi)                                            \notag    \\
         & =A_{x_t} (\beta(B_{x_{1:t-1}}) \gamma_{x_{1:t-1}} + V_{t-1}^\perp \gamma^\perp_{x_{1:t-1}})\tag{by \Cref{eq:hypo}} \\
         & =A_{x_t} (\beta(B_{x_{1:t-1}}) \gamma_{x_{1:t-1}})\tag{by \Cref{prop:main1}}                                    \\
         & =\beta(x_tB_{x_{1:t-1}}) \diag(\Pr[x_t|B_{x_{1:t-1}}]) \gamma_{x_{1:t-1}}\tag{by \Cref{prop:main1}}
    \end{align}
    We can bound the $\ell_1$ norm of the coefficients as\begin{align*}
        \sum_{x_t\in \obs}\sum_{b_{i} \in B_{{x_{1:t-1}}}} | \Pr[x_t | b_{i}] \gamma_{i}| & = \sum_{x_t\in \obs}\sum_{b_{i} \in B_{x_{1:t-1}}} \Pr[x_t | b_{i}] |\gamma_{i}| = \sum_{b_{i} \in B_{x_{1:t-1}}}  |\gamma_{i}| = \|\gamma_{x_{1:t-1}}\|_1
    \end{align*} where the second last step follows from $\sum_{x_t\in \obs} \Pr[x_t | b_{i}] = 1$. For clarity, let $\alpha(x_t, B_{x_{1:t-1}})$ represent a matrix with its column given by $\alpha(x_t, b)$ for $b \in B_{x_{1:t-1}}$. Similarly, define $\alpha^\perp(x_t,  B_{x_{1:t-1}})$, $\alpha(x_t, V_{t-1}^\perp)$ and $\alpha^\perp(x_t,  V_{t-1}^\perp)$. We can then similarly bound the remaining term.
    \begin{align}
         & (\widehat A_{x_t} - A_{x_t})  (\widehat A_{x_{1:t-1}} - A_{x_{1:t-1}}) \beta(\varphi)                                                                                            \notag      \\
         & =(\widehat A_{x_t} - A_{x_t}) (\beta(B_{x_{1:t-1}}) \gamma_{x_{1:t-1}} + V_{t-1}^\perp \gamma^\perp_{x_{1:t-1}})\tag{by \Cref{eq:hypo}}                                                    \\
         & = \Big[\beta(B_{t}) \alpha(x_t, B_{x_{1:t-1}}) \gamma_{x_{1:t-1}} + V_{t}^\perp \alpha^\perp(x_t,  B_{x_{1:t-1}}) \gamma_{x_{1:t-1}} \Big]                                  \tag{by \Cref{eq:needful}}   \\
         & + \Big[\beta(B_{t}) \alpha(x_t, V_t^\perp) \gamma^\perp_{x_{1:t-1}} + V_{t}^\perp \alpha^\perp(x_t,  V_t^\perp) \gamma^\perp_{x_{1:t-1}} \Big] \tag{by \Cref{lemma:error-decom}}
    \end{align} Each of these terms can be bounded similarly. We show how to bound the first term: \begin{align}
         & \|\sum_{b_i \in B_{x_{1:t-1}}} \gamma_{i}  \alpha(x_t, b_i)\|_1\leq \sum_{b_i \in B_{x_{1:t-1}}} |\gamma_{i}| \cdot  \|\alpha(x_t, b_i)\|_1 \leq \varepsilon \|\gamma_{t-1}\|
    \end{align} where we used $\norm{\alpha(x_t, b_i)}_1 \leq \varepsilon \norm{\beta(b_i)}_2 \leq \varepsilon$. This gives the remaining terms in \Cref{eq:to-prove,eq:to-prove-2} (with the $|\obs|$ factor for sum over $x_t$).
\end{proof}

We next give a solution for the recursion from \Cref{prop:perturb}. This is standard, but we give a proof for completeness.
\begin{proposition}
    \label{prop:recurse}
    Consider the following recursions: \begin{align*}
        f(0) = 0; g(0) =0                                \\
        f(t) = d\varepsilon + (1 +  d \varepsilon) f(t-1) + d \varepsilon g(t-1) \\
        g(t) = d \varepsilon + d \varepsilon f(t-1) + d \varepsilon g(t-1)
    \end{align*} Let $T\in \ZZ^+$. Then, the following holds for all $t \leq T$ and $\varepsilon \leq 1/12 d T$:\begin{align*}
        f(t) \leq 3dT\varepsilon
    \end{align*}
\end{proposition}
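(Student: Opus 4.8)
The plan is to decouple the two-term recursion into a single scalar recursion, solve that in closed form, and then finish with elementary exponential estimates.

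First I would record the monotonicity fact that $0 \le g(t) \le f(t)$ for every $t \ge 0$, proved by induction: both sequences are nonnegative because the base values vanish and every coefficient appearing in the updates is nonnegative, and subtracting the two update equations gives $f(t) - g(t) = f(t-1) \ge 0$. Substituting $g(t-1) \le f(t-1)$ into the recursion for $f$ then collapses it to the one-line inequality
\[
  f(t) \;\le\; d\varepsilon + (1+d\varepsilon)\,f(t-1) + d\varepsilon\, f(t-1) \;=\; d\varepsilon + (1+2d\varepsilon)\,f(t-1),
\]
with $f(0) = 0$. This is a standard affine recursion.

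Next I would unroll it: iterating gives $f(t) \le d\varepsilon\sum_{j=0}^{t-1}(1+2d\varepsilon)^j = \tfrac12\big((1+2d\varepsilon)^t - 1\big)$. To control the geometric factor I would use $(1+2d\varepsilon)^t \le e^{2d\varepsilon t}$ together with the convexity bound $e^y - 1 \le (e-1)y \le 2y$, valid for $y \in [0,1]$. The hypotheses $\varepsilon \le 1/(12dT)$ and $t \le T$ give $y := 2d\varepsilon t \le 1/6 \le 1$, so $(1+2d\varepsilon)^t - 1 \le 4d\varepsilon t$ and hence $f(t) \le 2d\varepsilon t \le 2dT\varepsilon \le 3dT\varepsilon$, which is the claim (in fact with room to spare). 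If one prefers to avoid unrolling, the alternative is to prove the slightly stronger statement $f(t), g(t) \le 3dt\varepsilon$ directly by simultaneous induction on $t$; the inductive step closes because the quadratic slack $6d^2(t-1)\varepsilon^2$ produced by the $d\varepsilon$-weighted terms is dominated by $2d\varepsilon$ exactly when $3d(t-1)\varepsilon \le 1$, which follows from $\varepsilon \le 1/(12dT)$ and $t \le T$.

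There is no substantive obstacle here; the argument is entirely elementary. The one point worth flagging is that a naive induction with the \emph{target} bound $3dT\varepsilon$ as the hypothesis does not close, since the additive $d\varepsilon$ term forces $f(t)$ to strictly exceed the bound inherited from $f(t-1)$. One must therefore either solve the recursion exactly (as in the main line above) or strengthen the hypothesis so that it scales with $t$ rather than with the fixed horizon $T$.
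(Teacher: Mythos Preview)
Your proof is correct and takes a genuinely different route from the paper. The paper proceeds by strong induction on the auxiliary claim $g(t)\le 6d\varepsilon$: it unrolls the $f$-recursion explicitly into $f(t)=(1+d\varepsilon)^t-1+d\varepsilon\sum_{i}(1+d\varepsilon)^{i-1}g(t-i)$, inserts the inductive bound on $g$, applies $(1+a)^t\le 1+2at$ for $a<1/(2t)$ to get $f(t)\le 3dT\varepsilon$, and finally feeds this back into the $g$-recursion to close the induction. Your argument instead spots the algebraic identity $f(t)-g(t)=f(t-1)$, which immediately yields $g(t-1)\le f(t-1)$ and collapses the coupled system to the single affine recursion $f(t)\le d\varepsilon+(1+2d\varepsilon)f(t-1)$; one closed-form geometric sum and a convexity estimate then finish. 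Your route is shorter and in fact gives the slightly stronger $f(t)\le 2dT\varepsilon$. The paper's route, on the other hand, delivers the sharper structural byproduct that $g$ stays uniformly $O(d\varepsilon)$ rather than $O(dT\varepsilon)$, though this is not used elsewhere.
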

\begin{proof}
    We first claim: $g(t) \leq 6d\varepsilon$ for all $t \leq T$. We prove this by strong induction. This is true for $t=1$. Let's assume $g(i) \leq 6\varepsilon$ for all $i \leq t-1$. Then, first we unroll the recursion for $f(t)$. \begin{align*}
        f(t) & = d \varepsilon + (1 +  d \varepsilon) f(t-1) + d \varepsilon g(t-1)                                                                       \\
             & = d \varepsilon + (1 +  d \varepsilon) \sbr{d \varepsilon + (1 +  d \varepsilon) f(t-2) + d \varepsilon g(t-2)} + d \varepsilon g(t-1)              \\
             & = d \varepsilon + (1 +  d \varepsilon)d \varepsilon + (1 +  d \varepsilon)^2 f(t-2) + d \varepsilon (1 +  d \varepsilon) g(t-2) + d \varepsilon g(t-1) \\
             & = (1 +  d \varepsilon)^t - 1 + d\varepsilon \rbr{\sum_{i=1}^{t-1} (1 +  d \varepsilon)^{i-1} g(t-i)}
    \end{align*} where the last equation follows from observing the first few terms form a geometric series. Using our induction hypothesis, we get \begin{align*}
        f(t-1) \leq f(t) & \leq (1 +  2 d T \varepsilon) - 1 + (d\varepsilon) (6 d \varepsilon)\rbr{\sum_{i=1}^{t-1} (1 +  d \varepsilon)^{i-1}} \\
                         & \leq 2d T\varepsilon + 6d\varepsilon \rbr{(1 +  d\varepsilon)^{t-1} - 1}                \\
                         & \leq 2d T\varepsilon  + 6d\varepsilon (2dT\varepsilon)                                             \\
                         & \leq 3dT\varepsilon
    \end{align*} where we used that $(1 + a)^t \leq 1 + 2at$ for $a< 1/2t$ and $\varepsilon < 1/12dT$. And therefore, we can bound $g(t)$ as \begin{align*}
        g(t) & = d \varepsilon + d \varepsilon f(t-1) + d \varepsilon g(t-1) \\
             & \leq  d \varepsilon + 3d^2T\varepsilon^2  + 6d^2\varepsilon^2     \\
             & \leq 3d \varepsilon
    \end{align*}
    where we used $g(t-1) \leq 6d\varepsilon$ and $\varepsilon \leq 1/12dT$. This proves that $g(t) \leq 6 d \varepsilon$ for all $t< T$. Moreover, by arguments above, in that case $f(t) \leq 3dT\varepsilon$ for all $t \leq T$.
\end{proof}

\begin{lemma}[Restatement of \Cref{lemma:error-prop-1}]\label{prop:last}
    Let $\widehat A_{o, t}$ be the approximation of $A_{o,t}$ from \Cref{lemma:error-decom}. Furthermore, for sequence $x_{1:T}$ of length $T$, define \[
        \widehat \Pr[x_{1:T}] = \widehat A_{x_{1:T}}\, .
        \] Then, for the values of $\varepsilon < (12|\obs| T)^{-1} $, we get that the functions $\Pr$ and $\widehat \Pr$ are close in TV distance:
    \begin{align*}
        TV(\Pr, \widehat \Pr) \leq 2 |\obs| T \varepsilon\, .
    \end{align*}
\end{lemma}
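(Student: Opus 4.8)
The plan is to lift the per-sequence decomposition of \Cref{prop:perturb} to a bound on the aggregate $\ell_1$ error, and then apply the recursion solved in \Cref{prop:recurse}. Since $\Pr[x_{1:T}] = A_{x_{1:T}}$ by \Cref{prop:toy:1}, $\widehat\Pr[x_{1:T}] = \widehat A_{x_{1:T}}$ by definition, and $\beta(\varphi) = 1$ (as $B_0 = \{\varphi\}$ forces $\Pr[F_0\mid\varphi] = \Pr[F_0\mid B_0]\cdot 1$), we have $\widehat\Pr[x_{1:T}] - \Pr[x_{1:T}] = (\widehat A_{x_{1:T}} - A_{x_{1:T}})\beta(\varphi)$. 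Applying \Cref{prop:perturb} at $t = T$ writes this difference as $\beta(B_{x_{1:T}})\gamma_{x_{1:T}} + V^\perp\gamma^\perp_{x_{1:T}}$. At the terminal length $F_T = \{\varphi\}$, so $\Pr[F_T\mid h]\equiv 1$ for every length-$T$ history $h$; this forces each length-$T$ coefficient vector to sum to $1$ and the kernel direction to be orthogonal to the all-ones vector, so contracting the decomposition to recover the scalar probability difference leaves $\widehat\Pr[x_{1:T}] - \Pr[x_{1:T}] = \mathbf{1}^\top\gamma_{x_{1:T}}$, whence $\abr{\widehat\Pr[x_{1:T}] - \Pr[x_{1:T}]} \le \norm{\gamma_{x_{1:T}}}_1$.

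Next I would aggregate over all sequences. Define $f(t) := \sum_{x_{1:t}\in\obs^t}\norm{\gamma_{x_{1:t}}}_1$ and $g(t) := \sum_{x_{1:t}\in\obs^t}\norm{\gamma^\perp_{x_{1:t}}}_1$. Summing inequalities \Cref{eq:to-prove} and \Cref{eq:to-prove-2} of \Cref{prop:perturb} over all histories $x_{1:t-1}$ and using $\sum_{x_{1:t-1}}\Pr[x_{1:t-1}] = 1$ gives
\begin{align*}
  f(t) &\le |\obs|\varepsilon + (1 + |\obs|\varepsilon)\, f(t-1) + |\obs|\varepsilon\, g(t-1),\\
  g(t) &\le |\obs|\varepsilon + |\obs|\varepsilon\, f(t-1) + |\obs|\varepsilon\, g(t-1),
\end{align*}
with $f(0) = g(0) = 0$ (the empty operator product is the identity, so $\gamma_\varphi = \gamma^\perp_\varphi = 0$). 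This is exactly the recursion of \Cref{prop:recurse} with $d = |\obs|$, and the hypothesis $\varepsilon < (12|\obs|T)^{-1}$ meets its requirement $\varepsilon \le 1/(12dT)$; hence \Cref{prop:recurse} yields $f(T) \le 3|\obs|T\varepsilon$.

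Combining the two steps,
\begin{align*}
  2\,\mathrm{TV}(\Pr,\widehat\Pr) = \sum_{x_{1:T}\in\obs^T}\abr{\Pr[x_{1:T}] - \widehat\Pr[x_{1:T}]} \le \sum_{x_{1:T}\in\obs^T}\norm{\gamma_{x_{1:T}}}_1 = f(T) \le 3|\obs|T\varepsilon,
\end{align*}
so $\mathrm{TV}(\Pr,\widehat\Pr) \le \tfrac{3}{2}|\obs|T\varepsilon \le 2|\obs|T\varepsilon$, as claimed.

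The substantive work — keeping the estimation error from amplifying exponentially over the $T$ operator multiplications by tracking it entirely in coefficient space — is already carried out in \Cref{prop:perturb}, and the resulting recursion is dispatched by \Cref{prop:recurse}. The only mildly delicate points here are the terminal collapse of the decomposition to an $\ell_1$ bound, and checking that summing the per-history recursion over the (exponentially many) histories reproduces, with the stated constants, the scalar recursion of \Cref{prop:recurse}; in particular the cancellation $\sum_{x_{1:t-1}}\Pr[x_{1:t-1}] = 1$ is precisely what keeps the number of histories from entering the final bound. I expect this aggregation-and-normalization bookkeeping to be the one place that needs genuine care, since everything else is a direct quotation of earlier results.
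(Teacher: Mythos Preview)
Your proof is correct and follows essentially the same approach as the paper: apply \Cref{prop:perturb} at $t=T$, use that $\beta(b)=1$ for length-$T$ histories and that $V_T^\perp$ is trivial (the paper notes $\ker(\Pr[F_T\mid B_T])=\{0\}$ directly, since $\Pr[F_T\mid B_T]=[1]$) to reduce the per-sequence error to $\abr{\mathbf{1}^\top\gamma_{x_{1:T}}}\le\|\gamma_{x_{1:T}}\|_1$, then invoke \Cref{prop:recurse}. Your explicit definition of $f(t),g(t)$ and the verification that summing \Cref{eq:to-prove}--\Cref{eq:to-prove-2} over histories reproduces the scalar recursion with $d=|\obs|$ is exactly the step the paper compresses into ``The claim follows from \Cref{prop:perturb} and \Cref{prop:recurse}.''
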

\begin{proof} Recall $F_T = \{\varphi\}$. Then,
    \begin{align}
        2 \cdot TV(\Pr, \widehat \Pr) & = \sum_{x_{1:T}} \abr{\widehat \Pr[x_{1:T}] - \Pr[x_{1:T}]}              \notag                                                      \\
                            & = \sum_{x_{1:T}} \abr{\widehat A_{x_{1:T}} - A_{x_{1:T}}}    \notag              \\
                            & = \sum_{x_{1:T}} \abr{\beta(B_{x_{1:T}}) \gamma_{x_{1:T}} + V_{T}^\perp \gamma^\perp_{x_{1:T}}}         \tag{by \Cref{prop:perturb}} \\
                            & = \sum_{x_{1:T}} \abr{\beta(B_{x_{1:T}}) \gamma_{x_{1:T}}} \tag{as $V_{T}^\perp = \ker(\Pr[F_T|B_{T}]) = [0]$(\Cref{prop:vperp})}         \\
                            & =\sum_{x_{1:T}} \abr{\ones \gamma_{x_{1:T}}} \tag{as $\beta(x) = 1$ for all $x \in H_T$}                                 \\
                            & \leq  \sum_{x_{1:T}} \|\gamma_{x_{1:T}}\|_1
    \end{align} The claim follows from \Cref{prop:perturb} and \Cref{prop:recurse}.
\end{proof}

\subsection{Proof of \Cref{thm:main}}\label{sec:main-result}
\thmtwo*
\begin{proof}
    From \Cref{prop:1}, wp $1/2$, we built a $\Delta$-robust basis of size $n$ where \begin{align*}
        \Delta &= \Omega\rbr{\frac{\log (r/\delta T)}{\Delta^{* 11/2}}}\\
        n &= O\rbr{\frac{\log (r/\delta T)}{\Delta^{*8}}}\\
        c &= O\rbr{\sqrt{\frac{\Delta^{*4} }{\log (r/\delta T) } }}
    \end{align*} using $n T$ samples from $\Pr[\cdot]$. From \Cref{prop:last}, to get TV error $\varepsilon$ w.p. $1/2$, when given access to a $\Delta$-robust basis of size $n$, we need to learn the estimate operators $\widehat A_{o,t}$ using \Cref{lemma:error-decom} to accuracy \[
        \varepsilon' = \frac{\varepsilon}{2 |\obs| T}
    \] Substituting this in \Cref{lemma:error-decom} gives the required sample complexity as \[
        \widetilde O\rbr{\frac{c^8 r^3 n^{21} |\obs|^3 T^5}{\Delta^6 {\varepsilon'}^6} \log^2\rbr{\frac{1}{\delta}}} = \widetilde O\rbr{\frac{c^{8} r^3 n^{21} |\obs|^9 T^{11}}{\Delta^{6} {\varepsilon}^6} \log^2\rbr{\frac{1}{\delta}}} 
    \] Substituting the values of $\Delta, n$ and $c$ above, we need \[
        m = \widetilde O\rbr{\frac{r^3 |\obs|^9 T^{11}}{\varepsilon^6 \Delta^{*119}} \log^2\rbr{\frac{1}{\delta}}} 
      \] many queries to conditional sampling oracle.
\end{proof}

\subsection{Properties of operator $A_o$}
Let $\{B_t\}_{t\in [T]}$ be some basis of distribution $\Pr[\cdot]$ (as defined in \Cref{def:bases}). In \Cref{prop:toy:1}, we defined the operators $A_{o,t}$ under basis $\{B_t\}_{t\in [T]}$ and now prove some properties of this operator and the associated coefficients. 

Consider the eigenvalue decomposition of the covariance matrix associated to $B_t$ where $D_t$ is a diagonal matrix of size $|F_t| \times |F_t|$ with entries $d_t(f) := \EE_{b \in B_t} \Pr[f | b]$ on the diagonal:
\begin{equation*}
    \Pr[F_t|B_t]^\top D_t^{-1} \Pr[F_t|B_t] = \begin{bmatrix}
        V_t & V^{\perp}_t
    \end{bmatrix} \begin{bmatrix}
        M_t & 0 \\
        0   & 0
    \end{bmatrix} \begin{bmatrix}
        V_t^\top \\V^{\perp \top}_t
    \end{bmatrix}
\end{equation*}
Here, $M_t$ is a set of all non-zero eigenvalues, $V_t$ is the eigenspace corresponding to non-zero eigenvalues and $V^\perp_t$ is the eigenspace corresponding to zero eigenvalues. Note that $d_t(\cdot)$ is a distribution. 

\begin{remark}\label{rem:zero}
    We note a mild notational issue here. Since we invert the diagonal matrix $D_{t+1}$, we need to be careful about futures $f$ where $d_{t+1}(f) = 0$. This is not an issue however as by definition of a basis, $d_{t+1}(f) = 0$ implies $\Pr[f|h] = 0$ for all histories $h \in H_{t+1}$. For now, we use $\Pr[F_t|B_t]^\top D_t^{-1} \Pr[F_t|B_t]$ to denote the sum \[
        \sum_{f \in \dom(t)}\sbr{\frac{\Pr(f| B_t)^\top \Pr(f|B_t)}{d_t(f)}} 
    \] where $\dom(t)$ is the set of all futures where $d_t(f) \neq 0$.
  \end{remark}

A basic property of $\spn(V^{\perp}_t)$ follows.
\begin{proposition}\label{prop:vperp}
    $\spn(V^{\perp}_t) = \ker( \Pr[F_t|B_t]^\top D_t^{-1} \Pr[F_t|B_t]) = \ker(\Pr[F_t|B_t])$
\end{proposition}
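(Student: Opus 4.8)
The plan is to prove the two equalities separately. The first, $\spn(V^{\perp}_t) = \ker(\Pr[F_t|B_t]^\top D_t^{-1}\Pr[F_t|B_t])$, is immediate from the displayed eigendecomposition: the matrix $\Pr[F_t|B_t]^\top D_t^{-1}\Pr[F_t|B_t]$ is symmetric, hence orthogonally diagonalizable, and by construction $V^{\perp}_t$ collects exactly its eigenvectors of eigenvalue $0$ while $M_t$ contains only nonzero eigenvalues; so its kernel is precisely the $0$-eigenspace $\spn(V^{\perp}_t)$. No computation is needed here beyond invoking the spectral theorem.

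For the second equality, $\ker(\Pr[F_t|B_t]^\top D_t^{-1}\Pr[F_t|B_t]) = \ker(\Pr[F_t|B_t])$, I would first dispatch the division-by-zero subtlety via \Cref{rem:zero}: whenever $d_t(f) = 0$ we have $\Pr[f\mid b] = 0$ for every $b \in B_t$, so the corresponding row of $\Pr[F_t|B_t]$ vanishes; deleting these rows alters neither $\ker(\Pr[F_t|B_t])$ nor the matrix $\Pr[F_t|B_t]^\top D_t^{-1}\Pr[F_t|B_t]$ (understood as the sum over $\dom(t)$ as in \Cref{rem:zero}). After this reduction $D_t$ is a genuinely positive-definite diagonal matrix, so it has a positive diagonal square root $D_t^{1/2}$ with inverse $D_t^{-1/2}$. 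The inclusion $\ker(\Pr[F_t|B_t]) \subseteq \ker(\Pr[F_t|B_t]^\top D_t^{-1}\Pr[F_t|B_t])$ is then trivial by composition. For the reverse, if $\Pr[F_t|B_t]^\top D_t^{-1}\Pr[F_t|B_t] v = 0$, then
\[
0 = v^\top \Pr[F_t|B_t]^\top D_t^{-1}\Pr[F_t|B_t] v = \norm{D_t^{-1/2}\Pr[F_t|B_t] v}_2^2,
\]
so $D_t^{-1/2}\Pr[F_t|B_t] v = 0$, and left-multiplying by the invertible matrix $D_t^{1/2}$ gives $\Pr[F_t|B_t] v = 0$. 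Combining the two inclusions closes the argument.

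The only point requiring any care — and hence the ``main obstacle,'' though it is minor — is the inversion of $D_t$ in the presence of zero-probability futures; once \Cref{rem:zero} is used to restrict to the support of $d_t$, the rest is routine linear algebra. I would close with a one-line remark that the same computation shows $\Pr[F_t|B_t]$, $D_t^{-1/2}\Pr[F_t|B_t]$, and $\Pr[F_t|B_t]^\top D_t^{-1}\Pr[F_t|B_t]$ all share the same right kernel, which is exactly the fact relied on elsewhere when passing between the preconditioned and unpreconditioned matrices.
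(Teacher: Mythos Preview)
Your proof is correct and follows essentially the same approach as the paper. The only cosmetic difference is that for the inclusion $\ker(\Pr[F_t|B_t]^\top D_t^{-1}\Pr[F_t|B_t]) \subseteq \ker(\Pr[F_t|B_t])$, the paper bounds $\|\Pr[F_t|B_t]v\|_1$ via Jensen's inequality by $\sqrt{v^\top \Pr[F_t|B_t]^\top D_t^{-1}\Pr[F_t|B_t]\,v}$, whereas you go straight to the $\ell_2$ identity $v^\top \Pr[F_t|B_t]^\top D_t^{-1}\Pr[F_t|B_t]\,v = \|D_t^{-1/2}\Pr[F_t|B_t]v\|_2^2$; both yield the same conclusion and handle the zero-probability futures via \Cref{rem:zero} in the same way.
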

\begin{proof}
    Define $d_t^*$ be the restriction of distribution $d_t$ over the set $\dom(t)$. Consider any vector $v \in \ker(\Pr[F_t|B_t]^\top D_t^{-1} \Pr[F_t|B_t])$. Then, \begin{align}
      \|\Pr(F_t|B_t) v\|_1 &= \sum_{f} \abr{\Pr(f|B_t) v} \notag\\
      &= \EE_{f \sim d_t^*(\cdot)}\sbr{\abr{\frac{\Pr(f| B_t) v}{d_t(f)}}} \tag{as $d_t^*(f) = d_t(f)$ for $f \in \dom(t)$}\\
      &\leq \sqrt{\EE_{f \sim d_t^*(\cdot)}\sbr{\abr{\frac{\Pr(f| B_t) v}{d_t(f)}}^2}}\\
      &= \sqrt{\EE_{f \sim d_t^*(\cdot)}\sbr{\frac{v^\top\Pr(f| B_t)^\top \Pr(f|B_t)v}{d_t^2(f)}}}\\
      &= \sqrt{v^\top\sum_{f \in \dom(t)}\sbr{\frac{\Pr(f| B_t)^\top \Pr(f|B_t)}{d_t(f)}}v}\\
      &= \sqrt{v^\top \Pr[F_t|B_t]^\top D_t^{-1} \Pr[F_t|B_t] v} = 0
    \end{align} This shows that $\ker(\Pr[F_t|B_t]^\top D_t^{-1} \Pr[F_t|B_t]) \subset \ker(\Pr(F_t|B_t))$. The other direction follows because the definition of $\Pr[F_t|B_t]^\top D_t^{-1} \Pr[F_t|B_t]$ as sum above. This completes the proof.
  \end{proof}
Recall we denote the coefficients associated to history $x$ of length $t$ under basis $B_t$ by $\beta(x)$ given by: \begin{equation}
    \label{eq:beta1}
    \Pr[F_t|B_t] \beta(x) = \Pr[F_t|x]\, .
    \end{equation}
By \Cref{prop:vperp}, we can assume that $\beta(x) \in \spn(V_t)$ without loss of generality. We now show that the coefficients $\beta(x)$ satisfy some nice properties.
\begin{proposition}\label{prop:beta}
    Let $\beta(x) \in \spn(V_t)$ be coefficients associated to history $x$. Then, the following statements are true:\begin{enumerate}
        \item \label{item:beta2} The coefficients  $\beta(x)$ are uniquely defined in $\spn(V_t)$. Formally, Let $\beta'(x)$ be any other coefficients which satisfy \Cref{eq:beta1}. Then, \[P_{V_t} \beta'(x) = \beta(x)\, ,\] where $P_{V_t}$ is the projection matrix onto subspace $V_t$. 
        \item \label{item:beta4} The coefficients sum to one, even though some entries could be negative \[\ones \beta(x) = 1\] where $\one^\top$ is all ones row vector.
    \end{enumerate}
\end{proposition}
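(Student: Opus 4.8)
The plan is to derive both claims directly from the defining relation $\Pr[F_t \mid B_t]\,\beta(x) = \Pr[F_t \mid x]$, using \Cref{prop:vperp} for the uniqueness part and the fact that conditionals over futures are probability distributions for the sum-to-one part.

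For \Cref{item:beta2}, I would take any $\beta'(x)$ satisfying $\Pr[F_t \mid B_t]\,\beta'(x) = \Pr[F_t \mid x]$ and subtract the two relations to get $\Pr[F_t \mid B_t](\beta'(x) - \beta(x)) = 0$, i.e.\ $\beta'(x) - \beta(x) \in \ker(\Pr[F_t \mid B_t])$. By \Cref{prop:vperp} this kernel is exactly $\spn(V_t^\perp)$, so $P_{V_t}(\beta'(x) - \beta(x)) = 0$; since $\beta(x) \in \spn(V_t)$ we have $P_{V_t}\beta(x) = \beta(x)$, hence $P_{V_t}\beta'(x) = \beta(x)$. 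In particular this shows there is a unique solution inside $\spn(V_t)$.

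For \Cref{item:beta4}, the key observation is that every column of $\Pr[F_t \mid B_t]$ is a conditional distribution over futures and therefore sums to one: $\sum_{f \in F_t}\Pr[f \mid b] = 1$ for each $b \in B_t$, and likewise $\sum_{f \in F_t}\Pr[f \mid x] = 1$. Writing $\ones$ for the all-ones row vector of the appropriate length, this says $\ones\,\Pr[F_t \mid B_t] = \ones$ and $\ones\,\Pr[F_t \mid x] = 1$. Left-multiplying the defining relation by $\ones$ then gives $\ones\,\beta(x) = \ones\,\Pr[F_t \mid B_t]\,\beta(x) = \ones\,\Pr[F_t \mid x] = 1$.

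The only point needing care is the zero-probability technicality of \Cref{rem:zero}: the covariance matrix and hence $V_t, V_t^\perp$ are defined via the sum over $\dom(t) = \{f : d_t(f) \neq 0\}$, so the column-sum argument must use only futures in $\dom(t)$. This is harmless: if $d_t(f) = 0$ then $\Pr[f \mid b] = 0$ for all $b \in B_t$, so the $f$-th row of $\Pr[F_t \mid B_t]$ vanishes, and since $B_t$ is a basis $\Pr[f \mid x] = \Pr[f \mid B_t]\,\beta(x) = 0$ as well; hence the excluded futures contribute nothing to either sum and $\sum_{f \in \dom(t)}\Pr[f \mid b] = \sum_{f \in \dom(t)}\Pr[f \mid x] = 1$, which is all that is used. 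I expect this bookkeeping to be the only mild obstacle; everything else is immediate.
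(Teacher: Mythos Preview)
Your proposal is correct and follows essentially the same approach as the paper: both use \Cref{prop:vperp} to identify $\ker(\Pr[F_t\mid B_t])$ with $\spn(V_t^\perp)$ for uniqueness, and both left-multiply \Cref{eq:beta1} by the all-ones row vector, using that each column of $\Pr[F_t\mid B_t]$ and the vector $\Pr[F_t\mid x]$ are probability distributions over $F_t=\obs^{T-t}$ and hence sum to one. Your treatment is in fact more explicit than the paper's (you spell out the subtraction argument and the $\dom(t)$ bookkeeping), but the content is the same.
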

\begin{proof}
    First, recall by definition, the coefficients satisfy \Cref{eq:beta1}. The first claim follows from $\spn(V_t^\perp) = \ker(\Pr[F_t|B_t])$ (\Cref{prop:vperp}).
    Finally, the last claim follows by multiplying both sides in \Cref{eq:beta1} by all ones row vector $\ones$ \begin{align*}
        1 = \ones \Pr[F_t|x] =  \ones \Pr[F_t|B_t] \beta(x) = \ones \beta(x)
    \end{align*} where the last equation follows by noting that $\ones \Pr[F_t|x]$ for any probability vector $\Pr[F_t|x]$ is $1$ (recall $F_t$ is set of all futures of length exactly $T-t$).
\end{proof}
\noindent Even though, these are exponentially many coefficients, we next show existence of operators which can be used to construct these coefficients.
\begin{proposition}\label{prop:main1}
    Let $A_{o, t}$ be defined using basis $B_t = \{b_1, \ldots, b_n\}$ and $B_{t+1}$ as: \begin{equation*}
      A_{o, t} =  \begin{bmatrix}
        \beta(b_1 o) & \beta(b_2 o) & \cdots & \beta(b_n o)
      \end{bmatrix} \begin{bmatrix}
        \Pr[o| b_1] & 0           & \cdots & 0           \\
        0           & \Pr[o| b_2] & \cdots & 0           \\
        \vdots      & \vdots      & \ddots & 0           \\
        0           & \cdots      & 0      & \Pr[o| b_n]
      \end{bmatrix}.
    \end{equation*} Then, it satisfies the following: \begin{enumerate}
        \item \label{item:ao1} $span(A_{o,t}) \subset span(V_{t+1})$
        \item $\ker(A_{o,t}) \supset span(V^\perp_{t})$
        \item $A_{o,t}\beta(x) = \Pr[o|x]\beta(x o)$
        \item $\ones A_{o,t} \beta(x) = \Pr[o|x]$
    \end{enumerate}
\end{proposition}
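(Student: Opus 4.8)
The plan is to establish the four items in order, exploiting that items (1) and (2) are quick structural facts which make items (3) and (4) essentially immediate. Throughout I would use the convention (justified by \Cref{prop:vperp} and \Cref{prop:beta}) that every coefficient vector $\beta(\cdot)$ is taken inside the span of the corresponding $V_\bullet$, and the basic identity
\[
  \Pr[F_{t+1}\mid B_{t+1}]A_{o,t} = \Pr[oF_{t+1}\mid B_t],
\]
which follows directly from the closed form \eqref{eq:operator_closed} and Bayes' rule exactly as in the proof of \Cref{prop:toy:1}: the $i^{\mathrm{th}}$ column of the left-hand side is $\Pr[o\mid b_i]\Pr[F_{t+1}\mid b_i o] = \Pr[oF_{t+1}\mid b_i]$. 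Note that under the fixed-length convention $F_t = \obs^{T-t}$ used in this section we have $oF_{t+1}\subseteq F_t$, so $\Pr[oF_{t+1}\mid B_t]$ is literally the row-subset of $\Pr[F_t\mid B_t]$ picking out futures that begin with $o$.

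For item (1): by \eqref{eq:operator_closed} the $i^{\mathrm{th}}$ column of $A_{o,t}$ is $\Pr[o\mid b_i]\,\beta(b_i o)$, a scalar multiple of a coefficient vector, hence lies in $\spn(V_{t+1})$ by our convention; thus $\spn(A_{o,t})\subseteq\spn(V_{t+1})$. For item (2): take $v\in\spn(V_t^\perp) = \ker(\Pr[F_t\mid B_t])$ (the equality is \Cref{prop:vperp}). Since $\Pr[oF_{t+1}\mid B_t]$ is a row-subset of $\Pr[F_t\mid B_t]$, we get $\Pr[F_{t+1}\mid B_{t+1}]A_{o,t}v = \Pr[oF_{t+1}\mid B_t]v = 0$, so $A_{o,t}v\in\ker(\Pr[F_{t+1}\mid B_{t+1}]) = \spn(V_{t+1}^\perp)$; combined with item (1) this forces $A_{o,t}v\in\spn(V_{t+1})\cap\spn(V_{t+1}^\perp) = \{0\}$, i.e. $\spn(V_t^\perp)\subseteq\ker(A_{o,t})$.

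For item (3): fix a length-$t$ history $x$ with coefficient vector $\beta(x)$, so $\Pr[F_t\mid x] = \Pr[F_t\mid B_t]\beta(x)$; restricting to the rows $oF_{t+1}\subseteq F_t$ and applying the basic identity gives $\Pr[oF_{t+1}\mid x] = \Pr[oF_{t+1}\mid B_t]\beta(x) = \Pr[F_{t+1}\mid B_{t+1}]A_{o,t}\beta(x)$. On the other hand, Bayes' rule and the definition of $\beta(xo)$ give $\Pr[oF_{t+1}\mid x] = \Pr[o\mid x]\Pr[F_{t+1}\mid xo] = \Pr[o\mid x]\Pr[F_{t+1}\mid B_{t+1}]\beta(xo)$. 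Subtracting, $A_{o,t}\beta(x) - \Pr[o\mid x]\beta(xo)\in\ker(\Pr[F_{t+1}\mid B_{t+1}]) = \spn(V_{t+1}^\perp)$; but $A_{o,t}\beta(x)\in\spn(V_{t+1})$ by item (1) and $\beta(xo)\in\spn(V_{t+1})$ by convention, so the difference lies in $\spn(V_{t+1})\cap\spn(V_{t+1}^\perp) = \{0\}$ and item (3) follows. Item (4) is then immediate: $\one^\top A_{o,t}\beta(x) = \Pr[o\mid x]\,\one^\top\beta(xo) = \Pr[o\mid x]$, using item (3) and the fact that coefficient vectors sum to one (\Cref{prop:beta}).

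There is no deep obstacle here; the one point that genuinely needs care is the repeated upgrade from ``equality modulo $\ker(\Pr[F_{t+1}\mid B_{t+1}])$'' to honest equality, which is exactly why item (1) must be proved first and why we commit to the canonical choice of coefficients from \Cref{prop:beta}. One should also double-check the elementary but load-bearing claim that $\Pr[oF_{t+1}\mid B_t]$ is a row-restriction of $\Pr[F_t\mid B_t]$ under the $F_t=\obs^{T-t}$ convention of this section, since the whole argument for items (2) and (3) rests on it.
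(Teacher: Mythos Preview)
Your proof is correct and follows essentially the same route as the paper: the paper dispatches items (1)--(2) by saying they ``follow from the definition,'' then derives item (3) from the identity $\Pr[F_{t+1}\mid B_{t+1}]A_{o,t}=\Pr[oF_{t+1}\mid B_t]P_{V_t}$ applied to $\beta(x)$ together with Bayes' rule and the uniqueness of coefficients in $\spn(V_{t+1})$, and gets item (4) from item (3) and $\ones\beta(xo)=1$. Your version is simply more explicit on items (1)--(2) (in particular your argument for item (2) via item (1) and the row-restriction observation is a cleaner unpacking of what ``follows from the definition'' actually means), and you use the slightly stronger identity without the trailing $P_{V_t}$, which is fine since $\Pr[oF_{t+1}\mid B_t]V_t^\perp=0$ anyway.
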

\begin{proof}
    The first two properties follow from the definition of $A_{o,t}$. Let $F_t$ be all observation sequences of length $T-t$. Similar to proof of \Cref{prop:toy:1}, $A_{o,t}$ satisfies \begin{align*}
          \Pr[F_{t+1} | B_{t+1}] A_{o,t} = \Pr[o F_{t+1} | B_{t}] P_{V_t}
    \end{align*} Since $oF_{t+1}$ is a subset of $F_t$, we get by multiplying $\beta(x)$ on both sides, we get
    \begin{align}
        \Pr[F_{t+1} | B_{t+1}] A_{o,t} \beta(x) & = \Pr[o F_{t+1} | B_{t}] \beta(x)\tag{as $\beta(x) \in \spn(V_t)$ } \\
                                                           & = \Pr[o F_{t+1} | x] \tag{as $oF_{t+1}$ is a subset of $F_t$}              \\
                                                           & = \Pr[F_{t+1}| x o] \Pr[o | x] \tag{by Bayes rule}\\
                                                           &= \Pr[F_{t+1}| B_{t+1}] \beta(xo) \Pr[o | x]\notag
    \end{align} By uniqueness of $\beta(x o)$ (\Cref{item:beta2}) and that $\spn(A_{o,t}) \subset \spn(V_{t+1})$ (\Cref{item:ao1}), we get that \begin{align*}
        \frac{A_{o,t} \beta(x)}{\Pr[o|x]} = P_{V_{t+1}}\frac{A_{o,t} \beta(x)}{\Pr[o|x]}  = \beta(x o)
    \end{align*} The last one follows from multiplying both sides above by all ones row vector and then using $\ones \beta(x) = 1$ (\Cref{item:beta4}). \begin{equation*}
        \ones \frac{A_{o,t} \beta(x)}{\Pr[o|x]} = \ones \beta(x o) = 1
    \end{equation*}
\end{proof}

\subsection{Estimating covariance matrix in Frobenius norm}\label{app:sub-1}
In this section, we would show how to estimate the following objects: \[
    q(x) = \Pr[F_t|B_t]^\top  D_t^{-1} \Pr[F|x]\quad \text{and}\quad \Sigma_{B_t} = \Pr[F_t|B_t]^\top D_t^{-1} \Pr[F_t|B_t] \, , 
\] which we need for estimating the operator $A_{o,t}$. We ignore $t$ subscript when clear from context.

\begin{lemma}[Restatement of \Cref{prop:estimate-sum1-re}]\label{prop:estimate-sum1}
    Let $\{B_t\}_{t \in [T]}$ be a basis of distribution $\Pr[\cdot]$. Let $c$ be some upper bound on the coefficients under basis $B_t$ and $n$ be the size of basis $B_t$. Define $s(b^*, x)$ as the following sum where $b^* \in B_t$ and $x$ is a history of length $t$:\[
        s(b^*, x) = \sum_{f \in F_t}  \frac{\Pr[f|b^*]\Pr[f|x]}{d(f)}\, .
    \] Then we can learn estimate $\widehat s(b^*, x)$ such that with probability $1-\delta$, \[
        \abr{s(b^*, x) - \widehat s(b^*, x)} \leq \varepsilon
    \] using at most \[
       m_\varepsilon :=  O\rbr{\frac{c^2 n^{10} |\obs|^2 T^4}{\varepsilon^6} \log^2\rbr{\frac{1}{\delta}}}
    \] conditional samples.
    
\end{lemma}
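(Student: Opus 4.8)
The idea is to write $s(b^*,x)$ as the expectation of a bounded random variable and estimate it by Monte Carlo, with the twist that evaluating the random variable on each draw itself requires a nested sub-estimation. Since $b^*\in B_t$ we have $d(f)\ge\tfrac1n\Pr[f\mid b^*]$, so $\Pr[f\mid b_i]/d(f)\le n$ for every $b_i\in B_t=\{b_1,\dots,b_n\}$; writing $\Pr[f\mid x]=\sum_i\beta_i(x)\Pr[f\mid b_i]$ with $\|\beta(x)\|_1\le\sqrt n\cdot\|\beta(x)\|_2\le c\sqrt n$, this gives $0\le\Pr[f\mid x]/d(f)\le cn^{3/2}$. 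Hence
\[
  s(b^*,x)=\sum_{f\in F_t}\Pr[f\mid b^*]\cdot\frac{\Pr[f\mid x]}{d(f)}=\EE_{f\sim\Pr[\cdot\mid b^*]}\!\Big[\tfrac{\Pr[f\mid x]}{d(f)}\Big],
\]
and the integrand lies in $[0,cn^{3/2}]$. The outer samples $f$ are obtained directly by querying the conditional sampling oracle at $b^*$, so by Hoeffding's inequality $\widetilde O(c^2 n^3\varepsilon^{-2}\log(1/\delta))$ outer samples approximate $s(b^*,x)$ to additive error $\varepsilon/2$, \emph{provided} we can evaluate $\Pr[f\mid x]/d(f)$ on each sampled $f$ to additive accuracy $\varepsilon/2$, equivalently (since the quantity is $\le cn^{3/2}$) to relative accuracy $\zeta=\Theta(\varepsilon/(cn^{3/2}))$.

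To evaluate the integrand for a fixed $f=(f_1,\dots,f_{T-t})$, I would estimate $\Pr[f\mid x]$ and $d(f)=\tfrac1n\sum_i\Pr[f\mid b_i]$ separately, each through its chain-rule factorization $\Pr[f\mid h]=\prod_{i=1}^{T-t}\Pr[f_i\mid h f_{<i}]$ (for $h=x$ and $h=b_i$). Each one-step factor $\Pr[f_i\mid h f_{<i}]$ is estimated by drawing conditional samples from $\Pr[\cdot\mid h f_{<i}]$ — a history of length $<T$ that we can form — and recording the empirical frequency of $f_i$; a relative-$(1\pm\zeta/T)$ estimate is obtained from $\mathrm{poly}(T,1/\zeta,\log(1/\delta))$ samples \emph{as long as} that one-step probability is at least $\Omega(\zeta/T)$. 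Multiplying the factors yields relative-$(1\pm\zeta)$ estimates of $\Pr[f\mid x]$ and of each $\Pr[f\mid b_i]$, and combining them (using $d(f)\ge\tfrac1n\Pr[f\mid b^*]$ to lower-bound the denominator) gives the integrand to the required accuracy.

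The main obstacle — and the source of the cumbersome exponents in $m_\varepsilon$ — is the proviso ``as long as the one-step probability is $\Omega(\zeta/T)$'': some factor $\Pr[f_i\mid h f_{<i}]$ may be tiny and then cannot be estimated to relative accuracy cheaply. The fix is a \emph{test}: while building each chain-rule product we monitor the empirical one-step frequencies, and if any of them falls below a threshold $\tau$ — tuned (via a relative Chernoff bound with a constant multiplicative gap) so that with high probability this happens exactly when the true one-step probability is $O(\tau)$ and never when it is $\Omega(\tau)$ — we declare the sampled future $f$ ``bad'' and output $0$ for its integrand. The bias this introduces is $\EE_{f\sim\Pr[\cdot\mid b^*]}[\mathbf{1}[f\text{ bad}]\cdot\Pr[f\mid x]/d(f)]$; the key point is that a future's contribution to $s(b^*,x)$ obeys $\Pr[f\mid b^*]\Pr[f\mid x]/d(f)\le cn^{3/2}\Pr[f\mid b^*]$ (using $\Pr[f\mid x]\le cn^{3/2}d(f)$), so the total mass of ``bad'' futures under $\Pr[\cdot\mid b^*]$ is $\le\sum_{i\le T}\Pr_{f\sim\Pr[\cdot\mid b^*]}[\Pr[f_i\mid b^*f_{<i}]\le 2\tau]\le 2T|\obs|\tau$, whence the bias is $O(cn^{3/2}T|\obs|\tau)\le\varepsilon/4$ for $\tau=\Theta(\varepsilon/(cn^{3/2}T|\obs|))$. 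The genuinely delicate part, where I expect most of the work to go, is pushing this through the denominator $d(f)=\tfrac1n\sum_i\Pr[f\mid b_i]$: one must verify that zeroing out the individual chain products $\Pr[f\mid b_i]$ with tiny one-step factors perturbs $d(f)$ only by a relatively negligible amount on the futures that survive the test, so that the ratio $\Pr[f\mid x]/d(f)$ is still estimated to additive accuracy $\varepsilon/2$.

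Finally I would assemble the parameters — $\zeta=\Theta(\varepsilon/(cn^{3/2}))$, $\tau=\Theta(\varepsilon/(cn^{3/2}T|\obs|))$, $\mathrm{poly}(T,1/\zeta,1/\tau,\log(1/\delta))$ conditional samples per one-step estimate, $O(nT)$ one-step estimates per outer sample, and $\widetilde O(c^2 n^3\varepsilon^{-2})$ outer samples — apply a union bound over all the (polynomially many) one-step estimations so that each succeeds with probability $1-\delta$, and multiply the counts to obtain the stated bound $m_\varepsilon=O(c^2 n^{10}|\obs|^2 T^4\varepsilon^{-6}\log^2(1/\delta))$. The routine part is this bookkeeping together with the concentration inequalities; the conceptual content is entirely in the test-and-discard step and the bias analysis around the denominator $d(f)$.
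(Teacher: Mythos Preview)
Your high-level strategy matches the paper's: write $s(b^*,x)$ as an expectation of a bounded ratio, Monte-Carlo the expectation, estimate the ratio on each draw via chain-rule products, use a regularity test on the one-step factors, and bound the bias from discarding via an irregular-futures concentration (the paper's \Cref{prop:irregular-conc}). The substantive difference is the sampling direction: you sample $f\sim\Pr[\cdot\mid b^*]$ and estimate the integrand $\Pr[f\mid x]/d(f)\in[0,cn^{3/2}]$, whereas the paper samples $f\sim\Pr[\cdot\mid x]$ and estimates $\Pr[f\mid b^*]/d(f)\in[0,n]$.

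The paper's orientation is cleaner, and yours has a gap you have not closed. With the paper's choice, only basis elements appear in the quantities to be estimated, so regularity tests are run only for $b\in B_t$; the paper then zeros out individual summands $\Pr[f\mid b_i]$ in $d(f)$ when the test for $b_i$ fails, and the truncated ratio $\widehat\Pr[f\mid b^*]/\widehat d(f)$ remains $\le n$ automatically, since $\widehat\Pr[f\mid b^*]$ is itself one of the summands in $\widehat d(f)$. The bias from this partial truncation splits into two terms, each controlled by $\max_b\Pr[F_b\mid b]\le O(|\obs|T\tau)$, i.e.\ the irregular mass for $b$ under its \emph{own} conditional. With your orientation you must also estimate $\Pr[f\mid x]$, hence test regularity for $x$; but your bias bound as written only covers the bad-for-$b^*$ case. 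The bad-for-$x$ case can be handled (use $\Pr[f\mid b^*]/d(f)\le n$ and the irregular bound under $\Pr[\cdot\mid x]$), but the bad-for-other-$b_i$ case is genuinely delicate: if you output $0$ whenever any $b_i$ fails, you must bound $\sum_{f\text{ bad-for-}b_i}\Pr[f\mid b^*]\Pr[f\mid x]/d(f)$, and neither $\Pr[\cdot\mid b^*]$ nor $\Pr[\cdot\mid x]$ directly controls the irregular-for-$b_i$ mass; if instead you only zero the $b_i$ summand in $d(f)$, then $\widehat d(f)<d(f)$ and the truncated integrand $\Pr[f\mid x]/\widehat d(f)$ need not stay below $cn^{3/2}$ (since $x\notin B_t$ contributes nothing to $\widehat d$), so your Hoeffding step is not justified. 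These issues can likely be patched (e.g.\ clip the output and redo the bias calculation along the paper's lines), but the paper's sampling direction sidesteps them entirely.
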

\begin{proof}
    We start by writing $s(b^*, x)$ in terms of expectation under $\Pr[\cdot|x]$: \begin{align*}
        s(b^*, x)  & = \sum_{f \in F_t}  \frac{\Pr[f|b^*]\Pr[f|x]}{d(f)} = \EE_{f \sim \Pr[\cdot|x]}  \sbr{\frac{\Pr[f|b^*]}{d(f)}}
    \end{align*}  
    
    With this, we define (un-normalized) probability functions $\widehat \Pr[\cdot|b]$ for $b \in B_t$ which set probability to $0$ if $f$ is part of a set $F_{b}$ (to be defined) which will depend only on history $b$ i.e. \[
    \widehat \Pr[f|b]  =  \begin{cases}
            0 & f \in F_b\\
            \Pr[ f | b] & \text{otherwise}
        \end{cases}
    \] 
        We define  $\widehat d$ as mixture distribution of $\widehat \Pr[\cdot | b]$ for $b\in B_t$ i.e. $\widehat d(f) = \EE_{b\in B_t}[\widehat \Pr[f|b]]$.
        An important aspect of our definitions is that for any future $f$, \begin{equation}\label{eq:a11}
            0 \leq \frac{\widehat \Pr[f|b^*]}{\widehat d(f)} \leq n \quad \text{and}\quad 0 \leq \frac{\Pr[f|x]}{d(f)} \leq cn^{3/2}
        \end{equation}are upper bounded.
            Now, suppose $\max_b \Pr[F_b | b] \leq p$. Then, \begin{align*}
        &\abr{\EE_{f \sim \Pr[\cdot|x]}  \sbr{\frac{\Pr[f|b^*]}{d(f)}}- \EE_{f \sim \Pr[\cdot|x]}  \sbr{\frac{\widehat \Pr[f|b^*]}{\widehat d(f)}}}\\
        &\leq \sum_{f} \abr{\frac{\Pr[f|b^*]\Pr[f|x]}{d(f)} - \frac{\widehat \Pr[f|b^*] \Pr[f|x]}{\widehat d(f)}}\\
        &= \sum_{f} \abr{\frac{\Pr[f|b^*]\Pr[f|x]}{d(f)} \pm \frac{\widehat \Pr[f|b^*]\Pr[f|x]}{d(f)} - \frac{\widehat \Pr[f|b^*] \Pr[f|x]}{\widehat d(f)}}\\
        &\leq \sum_{f} \abr{\rbr{\Pr[f|b^*] - \widehat \Pr[f|b^*]} \frac{\Pr[f|x]}{d(f)}} + \sum_{f} \abr{\frac{\widehat \Pr[f|b^*]\Pr[f|x]}{\widehat d(f) d(f)}\rbr{\widehat d(f) - d(f)}}\\
        &\leq cn^{3/2} \Pr[F_{b^*} | b^*] + cn^{5/2} \frac{\sum_{k = 1}^n \Pr[F_{b_k} | b_k]}{n}\\
        &\leq O(cn^{5/2} p)
     \end{align*} where the last step follows from $\max_b \Pr[F_b | b] \leq p$. Now, we can sample $m= (1/2c^2 n^3 p^2) \log(2/\delta)$ random futures from $\Pr[\cdot|x]$ and call this set $S$. Then, again by \Cref{eq:a11} and Hoeffding's inequality (\Cref{prop:hoeffding}), we get that \[ 
        \Pr\sbr{\abr{\EE_{f \sim \Pr[\cdot|x]}  \sbr{\frac{\widehat \Pr[f|b^*]}{\widehat d(f)}} - \EE_{f \sim S}  \sbr{\frac{\widehat \Pr[f|b^*]}{\widehat d(f)}}}\geq cn^{5/2} p} \leq \delta
     \]  Together, with $(1/2c^2 n^3 p^2) \log(2/\delta)$ conditional samples, we get the following guarantee with probability $1-\delta$, \[
        \abr{\EE_{f \sim \Pr[\cdot|x]}  \sbr{\frac{\Pr[f|b^*]}{d(f)}}- \EE_{f \sim S}  \sbr{\frac{\widehat \Pr[f|b^*]}{\widehat d(f)}}} \leq O(cn^{5/2} p)
     \]

    To estimate $\widehat \Pr[f|b^*]$ and $\widehat d(f)$, we need to identify the case when $f$ is ``irregular''. 
    
    \begin{definition}[$\alpha$-regular future]\label{def:ill}
        We define a future $f$ to be  $\alpha$-regular for history $b$
         if for all $\tau \in [t]$ \[
           \Pr[f_{\tau}| b f_{1:\tau - 1}] > \alpha  \, .
        \] Otherwise, $f$ is $\alpha$-irregular for history $b$.
    \end{definition}
    
    To do this, define empirical estimates $\widetilde \Pr[f_\tau | b f_{1:\tau - 1}]$ for every future $f$ and basis $b \in B_t$ using $\widetilde O(n|S|T/\alpha^2 \log(1/\delta))$ many samples. Then, we perform the following test $A(f,b)$ for each future $f$ and basis history $b$ using these estimates: 
    
    \begin{definition}[Test $A(f,b)$]
        Test $A(f,b)$ passes if the empirical estimate $\widetilde \Pr[f_\tau | b f_{1:\tau - 1}] > 2\alpha$ for all $\tau \in [t]$ and fails otherwise.
    \end{definition}
    
     Note that with probability $1-\delta$, (i) if test $A(f,b)$ passes for future $f$ and history $b$, then $f$ is $\alpha$-regular for $b$, and (ii) if test $A(f,b)$ fails for future $f$ and history $b$, then $f$ is $3\alpha$-irregular for $b$. In the rest of the proof, we condition on the event that this relationship between test $A(f,b)$ and irregular futures holds for all futures  $f \in S$ and $b \in B_t$. We set $F_b$ to be the set of futures $f$ which are $3\alpha$-regular for basis $b$ and removing the ones where test $A(f,b)$ passes. By \Cref{prop:irregular-conc}, \begin{equation}\label{eq:a12}
        p = \Pr[F_b| b] \leq O(|\obs| T \alpha) 
     \end{equation}
    
    Now, we define estimates $\widetilde \Pr[f | b]$ for each future $f \in S$ and basis history $b \in B_t$ by first running test $A(f,b)$ on future $f$ and history $b$. If test $A(f,b)$ fails we set $\widetilde \Pr[f | b] = 0$. Note that otherwise $\widetilde \Pr[f | b]$ to be the estimate from \Cref{prop:estimate-conditionals-1} i.e. with probability $1-\delta$, \[
        \abr{\widetilde \Pr[f|b] - \Pr[f|b]} \leq \gamma \Pr[f|b]
    \] This requires $\widetilde O(n|S|T^2/(\gamma \alpha)^2 \log (1/\delta))$ many samples. Moreover, because $\widehat \Pr[f | b^*] = \Pr[f|b^*]$ for futures where tests passes, we can estimate the probability ratios with additive error:
    \[
        \frac{\widetilde \Pr[f|b^*]}{\widetilde d(f)} \leq \frac{(1+\gamma) \widehat\Pr[f|b^*]}{(1-\gamma) \widehat d(f)} \leq \frac{(1+ 4 \gamma)\widehat\Pr[f|b^*] }{\widehat d(f)} \leq \frac{\widehat\Pr[f|b^*] }{\widehat d(f)} +  4 \gamma n  \]
     where the second inequality holds for $\gamma < 1/2$ and \[
        \frac{\widetilde \Pr[f|b^*]}{\widetilde d(f)} \geq \frac{(1-\gamma) \Pr[f|b^*]}{(1+\gamma) d(f)} \geq \frac{(1- 2 \gamma)\Pr[f|b^*] }{d(f)} \geq \frac{\Pr[f|b^*] }{d(f)} -  2 \gamma n
    \] where the second inequality holds for $\gamma < 1/2$. This means \begin{equation}\label{eq:a13}
        \abr{\EE_{f \sim S}  \sbr{\frac{\widehat \Pr[f|b^*]}{\widehat d(f)}} - \EE_{f \sim S}  \sbr{\frac{\widetilde \Pr[f|b^*]}{\widetilde d(f)}}}    \leq 4 \gamma n
    \end{equation} Combining \Cref{eq:a11,eq:a12,eq:a13}, we can build estimate $[\widehat q(x)]_i$ for $[q(x)]_i$ such that with probability $1 - O(\delta)$ \[
        \abr{[\widehat q(x)]_i - [q(x)]_i} \leq \varepsilon
    \] using \[
        \widetilde O(c^2 n^{10} |\obs|^2 T^4 \frac{1}{\varepsilon^6} \log^2(\frac{1}{\delta}))
    \] many conditional samples.
\end{proof}

\begin{corollary}\label{cor:estimatesigma}
    We can learn approximations $\widehat q(b o)$ and $\widehat \Sigma_{B_t}$ for all $b \in B_t$, observations $o$ and time $t \in [T]$ such that with probability $1-\delta$, \begin{align*}
        \norm{\widehat q(b o) - q(b o)}_F \leq \varepsilon \sqrt{n}\, , \norm{\widehat \Sigma_{B_t} - \Sigma_{B_t}}_F &\leq \varepsilon n
    \end{align*} using at most $\widetilde O(n^2 |\obs| T m_\varepsilon)$ conditional samples.
\end{corollary}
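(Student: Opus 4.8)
The plan is to reduce the corollary to an entrywise application of \Cref{prop:estimate-sum1} followed by a union bound and a conversion from entrywise error to Frobenius-norm error. First I would observe that every scalar we need to estimate has the form $s(b^\ast,x) = \sum_{f\in F_t}\Pr[f|b^\ast]\Pr[f|x]/d(f)$ covered by \Cref{prop:estimate-sum1}: the $(i,j)$ entry of $\Sigma_{B_t}=\Pr[F_t|B_t]^\top D_t^{-1}\Pr[F_t|B_t]$ is $s(b_i,b_j)$ with $b_i,b_j\in B_t$, and the $i$-th coordinate of $q(bo)=\Pr[F_t|B_t]^\top D_t^{-1}\Pr[F_t|bo]$ is $s(b_i,bo)$ with $b_i\in B_t$ and $bo$ a history of length $t$. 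Since $\max_t|B_t|\le n$, the matrix $\widehat\Sigma_{B_t}$ has at most $n^2$ entries, each of the $O(n|\obs|)$ vectors $\widehat q(\cdot)$ we need has at most $n$ entries, and there are $T$ values of $t$, so in total we must estimate $N := O(n^2|\obs|T)$ scalars, every one of which is of the form handled by \Cref{prop:estimate-sum1} (whose hypothesis requires exactly a basis element $b^\ast\in B_t$ and a length-$t$ history $x$).

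Next I would run the estimator of \Cref{prop:estimate-sum1} for each of these $N$ scalars with target additive accuracy $\varepsilon$ and failure probability $\delta' := \delta/N$. Each run costs $m_\varepsilon = O\big(c^2 n^{10}|\obs|^2 T^4 \varepsilon^{-6}\log^2(1/\delta')\big)$ conditional samples, and since $\log(1/\delta')=\log(N/\delta)=\poly\log(n,|\obs|,T)\cdot\log(1/\delta)$, this is $\widetilde O(m_\varepsilon)$ with the tilde absorbing the extra logarithmic factors; summing over the $N$ scalars gives the claimed total of $\widetilde O(N m_\varepsilon)=\widetilde O(n^2|\obs|T m_\varepsilon)$ conditional samples. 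A union bound over the $N$ events then guarantees that, with probability at least $1-\delta$, every estimate is within $\varepsilon$ of its target simultaneously.

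Finally, on this good event I would pass from entrywise control to Frobenius norm: $\widehat\Sigma_{B_t}-\Sigma_{B_t}$ is (at most) $n\times n$ with all entries bounded by $\varepsilon$, so $\|\widehat\Sigma_{B_t}-\Sigma_{B_t}\|_F\le\sqrt{n^2\varepsilon^2}=\varepsilon n$, and $\widehat q(bo)-q(bo)$ is a vector of dimension at most $n$ with all coordinates bounded by $\varepsilon$, so $\|\widehat q(bo)-q(bo)\|_F=\|\widehat q(bo)-q(bo)\|_2\le\varepsilon\sqrt n$, which is exactly the stated bound. The step I expect to require the most care is not any deep estimate — \Cref{prop:estimate-sum1} does all the real work — but the bookkeeping in the first step: one must verify that \emph{every} entry of the objects used by \Cref{alg:cond} (including the $q(b'o)$ with $b'\in B_{t-1}$ appearing in the ridge regression) really is an instance $s(b^\ast,x)$ with the length constraints needed so that \Cref{prop:estimate-sum1} applies verbatim, and that the union bound inflates $m_\varepsilon$ only by factors that the $\widetilde O$ hides. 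I would also remark that a factor of $n$ could be saved by reusing the futures drawn from $\Pr[\cdot\,|\,x]$ across all basis elements $b^\ast$ for a fixed $x$, but this refinement is not needed for the stated bound.
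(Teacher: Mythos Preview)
Your proposal is correct and follows essentially the same approach as the paper: recognize that every entry of $\Sigma_{B_t}$ and $q(bo)$ is an instance of $s(b^\ast,x)$ from \Cref{prop:estimate-sum1}, estimate each entry to accuracy $\varepsilon$, count the entries ($O(n^2)$ per matrix over $T$ matrices, $O(n)$ per vector over $n|\obs|T$ vectors), and convert entrywise error to Frobenius-norm error. The paper's proof is terser and leaves the union bound and failure-probability adjustment implicit in the $\widetilde O$, whereas you spell this bookkeeping out explicitly, but the arguments are the same.
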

\begin{proof}
     Each entry of $\Sigma_{B_t}$ and $q(b o)$ is given by sums of the form $s(b*, x)$ where $b^*$ is a basis history and $x$ is arbitrary history of length $t$. Therefore, we can estimate each of them using estimates given by \Cref{prop:estimate-sum1}: \begin{align*}
        &\norm{\widehat \Sigma_{B_t} - \Sigma_{B_t}}^2_F \leq \sum_{i, j \in [n]} \abr{s(b_i, b_j) - \widehat s(b_i, b_j)}^2 \leq n^2 \varepsilon^2
    \end{align*} The result similarly holds for $q(b o)$. There are $O(n^2)$ entries in each matrix with $T$ many $\Sigma_{B_t}$ matrices and $O(n)$ entries in each vector with $n |\obs| T$ many $q(bo)$ vectors. 
\end{proof}

\begin{proposition}
    \label{prop:estimate-conditionals-1} Consider a future $f_{1:t}$ of length $t$ and history $x$. Fix $\gamma > 0$. Suppose $\Pr[f_{\tau}| x f_{1:\tau-1}] > \alpha$ for each $\tau \in [t]$. Then, we can build estimate $\widehat \Pr[f_{1:t}|x]$ such that with probability $1-\delta$ \[
        \abr{\widehat \Pr[f_{1:t}|x] - \Pr[f_{1:t}|x]} \leq \gamma \Pr[f_{1:t}|x]
    \] using at most $O(t^2/(\gamma \alpha)^2 \log (t/\delta))$ conditional samples.
\end{proposition}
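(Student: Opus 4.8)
The plan is to reduce the estimation of $\Pr[f_{1:t}\mid x]$ to estimating the $t$ one-step conditional probabilities appearing in its chain-rule factorization,
\[
  \Pr[f_{1:t}\mid x] \;=\; \prod_{\tau=1}^{t} p_\tau,\qquad p_\tau \;:=\; \Pr[f_\tau \mid x\, f_{1:\tau-1}],
\]
and then to multiply the estimates. Each $p_\tau$ is directly accessible through the conditional sampling oracle: a query on the history $x\, f_{1:\tau-1}$ returns a future whose first symbol is distributed according to $\Pr[\,\cdot \mid x\, f_{1:\tau-1}]$, so over $m_\tau$ independent queries the empirical frequency $\widehat p_\tau$ with which that first symbol equals $f_\tau$ is a $\mathrm{Bernoulli}(p_\tau)$ sample mean (the rest of each returned future is discarded). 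Since $f_{1:t}$ is a valid future of $x$, all these queries are well defined, and by hypothesis $p_\tau > \alpha$ for every $\tau$.

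First I would fix the per-factor accuracy so that the multiplicative errors compose into an overall multiplicative error of at most $\gamma$. Concretely, I would aim for $\widehat p_\tau \in (1 \pm \gamma/(2t))\, p_\tau$ with failure probability at most $\delta/t$; by a standard multiplicative Chernoff bound, since $p_\tau > \alpha$, this requires only $m_\tau = O\big(\tfrac{t^2}{\gamma^2\alpha}\log(t/\delta)\big)$ conditional samples. A union bound over $\tau \in [t]$ shows that, with probability $1-\delta$, all $t$ estimates are accurate simultaneously. On that event the product estimate $\widehat{\Pr}[f_{1:t}\mid x] := \prod_{\tau=1}^{t}\widehat p_\tau$ obeys
\[
  \Big(1-\tfrac{\gamma}{2t}\Big)^{t}\Pr[f_{1:t}\mid x]\;\le\;\widehat{\Pr}[f_{1:t}\mid x]\;\le\;\Big(1+\tfrac{\gamma}{2t}\Big)^{t}\Pr[f_{1:t}\mid x],
\]
and the elementary inequalities $(1+u/t)^{t}\le e^{u}\le 1+2u$ and $(1-u/t)^{t}\ge 1-u$ (valid for $u=\gamma/2\le 1$) turn this into $\abr{\widehat{\Pr}[f_{1:t}\mid x]-\Pr[f_{1:t}\mid x]}\le\gamma\Pr[f_{1:t}\mid x]$. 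Summing $m_\tau$ over $\tau$ gives a total of $\widetilde O\big(t^3/(\gamma^2\alpha)\big)$ conditional samples, which is $O\big(t^2/(\gamma\alpha)^2\cdot\log(t/\delta)\big)$ in the regime $\alpha = O(1/t)$ relevant to every use of this proposition; alternatively one may simply allocate a total budget of $O\big(t^2/(\gamma\alpha)^2\log(t/\delta)\big)$ samples, split evenly, and check that the same guarantee holds.

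I do not expect a real obstacle here. The only two points needing care are (i) the propagation of relative error through a product of $t$ estimates, handled by the $(1\pm u/t)^{t}$ bounds above, and (ii) the fact that we know only the lower bound $\alpha$ on each factor rather than the factor itself, which is why the per-factor sample budget is phrased in terms of $\alpha$ so that a single setting of $m_\tau$ works uniformly over $\tau$. Independence of the oracle queries is immediate from \Cref{def:conditional_oracle}, so both the Chernoff bound and the union bound apply verbatim.
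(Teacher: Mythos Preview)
Your approach is the same as the paper's: chain-rule factorization, estimate each one-step conditional to relative accuracy $\gamma/(2t)$, union bound over the $t$ steps, and the $(1\pm\gamma/(2t))^t$ product inequality; the only cosmetic difference is that the paper reaches the per-step guarantee via Hoeffding (additive error $\gamma\alpha/(2t)$ combined with $p_\tau>\alpha$) rather than the multiplicative Chernoff bound you use. Your ``alternatively'' remark about splitting the stated total budget evenly does not actually check out (it yields per-step relative error $\Theta(\gamma/\sqrt{t})$, not $\Theta(\gamma/t)$), but this is moot: the paper's own proof has the same factor-of-$t$ looseness when passing from the per-step count $m$ to the total, so this is a shared bookkeeping slack rather than a defect specific to your argument.
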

\begin{proof}
    By Hoeffding's inequality, using $m=16 t^2 \log(t/\delta)/(\gamma \alpha)^2$ samples, we have, with probability greater than $1-\delta$, that for all $\tau\in[t]$,
          \[
        \abr{\Pr[f_\tau| x f_{1:\tau-1}] - \widehat \Pr[f_\tau| x f_{1:\tau-1})} 
        \leq \frac{\gamma \alpha}{2t}
        \leq \frac{\gamma}{2t} \Pr[f_\tau| x f_{1:\tau-1}),
        \]
        where the last step uses our assumption above.
        For an upper bound, we have,
        \[
        \widehat\Pr[f_{1:t}| x]=  \Pi_{\tau=1}^t
        \widehat\Pr[f_\tau|x f_{1:\tau-1}] \leq (1+\frac{\gamma}{2t})^t 
        \Pi_{\tau=1}^t \Pr[f_\tau|x f_{1:\tau-1}]
        =(\frac{\gamma}{2t})^t\Pr[f_{1:t} | x]
        \leq \Pr[f_{1:t} | x) + \gamma \Pr[f_{1:t}| x]
        \]
        where the last step follows with $(1+ a)^t \leq 1 + 2at$ for $a < 1/2t$.
        Similarly, for a lower bound we have:
        \[
        \widehat\Pr[f_{1:t} | x]\geq 
        (1-\frac{\gamma}{2t})^t\Pr[f_{1:t}| x) \geq \widehat\Pr[f_{1:t}| x] - \gamma \Pr[f_{1:t}| x],
        \]
        where the last step follows with $(1+ a)^t \geq 1 + at$ for $a \geq -1$.
\end{proof}

\begin{proposition}\label{prop:irregular-conc} Define a future $f$ to be  $\alpha$-irregular for history $b \in B_t$
    if there exists some $\tau \in [t]$ ($\tau$ can depend on $b$) such that \[
      \Pr[f_{\tau}| b f_{1:\tau - 1}] < \alpha  \, .
   \] Let $F_b$ be the set of futures $f$ where  $f$ is $\alpha$-irregular for history $b$. Then, \[
    \Pr[F_b| b] \leq |\obs| T \alpha
   \]
\end{proposition}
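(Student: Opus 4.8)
The plan is a union bound over positions $\tau$, combined with the elementary fact that at any fixed position there can be at most $|\obs|$ observation symbols whose one‑step conditional probability falls below $\alpha$, so their total mass is at most $|\obs|\alpha$.

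Concretely, I would write $F_b = \bigcup_{\tau} F_b^{(\tau)}$, where $F_b^{(\tau)} := \{\, f : \Pr[f_\tau \mid b\, f_{1:\tau-1}] < \alpha \,\}$ and the union is over all positions $\tau$ of the future (of which there are at most $T$). By the union bound, $\Pr[F_b \mid b] \le \sum_\tau \Pr[F_b^{(\tau)} \mid b]$, so it suffices to show $\Pr[F_b^{(\tau)} \mid b] \le |\obs|\alpha$ for each fixed $\tau$. To see this, decompose the event according to the length‑$\tau$ prefix $a_{1:\tau}$ of the future and use the chain rule $\Pr[f \mid b] = \Pr[f_{1:\tau}\mid b]\,\Pr[f_{\tau+1:}\mid b\,f_{1:\tau}]$; summing the suffix mass (which equals $1$ for each fixed prefix) gives
\[
  \Pr[F_b^{(\tau)} \mid b] = \sum_{a_{1:\tau}} \Pr[a_{1:\tau}\mid b]\,\indi\{\Pr[a_\tau \mid b\,a_{1:\tau-1}] < \alpha\}.
\]
Applying the chain rule once more, $\Pr[a_{1:\tau}\mid b] = \Pr[a_{1:\tau-1}\mid b]\,\Pr[a_\tau \mid b\,a_{1:\tau-1}]$, so
\[
  \Pr[F_b^{(\tau)} \mid b] = \sum_{a_{1:\tau-1}} \Pr[a_{1:\tau-1}\mid b] \sum_{a_\tau:\ \Pr[a_\tau \mid b\,a_{1:\tau-1}] < \alpha} \Pr[a_\tau \mid b\,a_{1:\tau-1}].
\]
The inner sum ranges over at most $|\obs|$ symbols, each contributing strictly less than $\alpha$, hence is at most $|\obs|\alpha$; since $\Pr[\cdot \mid b]$ is a probability distribution over length‑$(\tau-1)$ prefixes, summing over $a_{1:\tau-1}$ leaves $\Pr[F_b^{(\tau)} \mid b] \le |\obs|\alpha$. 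Combining over the at most $T$ positions $\tau$ yields $\Pr[F_b \mid b] \le |\obs| T \alpha$.

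There is no genuine obstacle here: the statement is a routine union bound plus a one‑line counting estimate. The only points requiring mild care are the chain‑rule bookkeeping (making sure the suffix masses telescope to $1$ for each fixed prefix) and noting that the number of positions in a future is bounded by $T$, so the union bound contributes only the factor $T$.
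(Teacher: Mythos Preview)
Your proof is correct and follows essentially the same argument as the paper: decompose the irregular futures by the position $\tau$ at which the one-step conditional drops below $\alpha$, apply the chain rule, and bound the inner sum over the offending symbol by $|\obs|\alpha$. The only cosmetic difference is that the paper partitions $F_b$ by the \emph{first} irregular position (so the sets are disjoint), whereas you take the plain union and apply a union bound; both give the same estimate.
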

\begin{proof}
    Let future $f$ be of length $T$. We first partition the set $F_b$ into $T$ sets: $F_{b,1}, \ldots, F_{b,T}$ based on the first time irregular is observed i.e. $f \in F_{b,t} \iff t = \min_{\tau} \Pr[f_{\tau}| b f_{1:\tau - 1}] < \alpha$. Now, \begin{align*}
        &\sum_{f_{1:T} \in F_{b,t}} \Pr[f | b] \\
        &= \sum_{f_{1:t-1} f_t f_{t + 1: T} \in F_{b,t}} \Pr[f_{t +1 : T} | b f_{1:t-1} f_t] \Pr[f_t | b f_{1:t-1} ] \Pr[f_{1:t-1}|b]\\
        &\leq \sum_{f_{1:t-1} f_t \in F_{b,t}} \Pr[f_t | b f_{1:t-1} ] \Pr[f_{1:t-1}|b]\sum_{\text{futures $g$ of length $T-t-1$}} \Pr[g | b f_{1:t-1} f_t] \\
        &\leq |\obs| \alpha  \sum_{f_{1:t-1}  \in F_{b,t}} \Pr[f_{1:t-1}|b] \tag{as $\Pr[f_t | b f_{1:t-1} ] \leq \alpha$}\\
        &\leq |\obs| \alpha
    \end{align*}
    Summing over all $T$ of these sets gives the claim.
\end{proof}

\section{Examples}
\label{sec:examples}
In this section, we show that our parity with noise and all previously known positive results: full rank HMMs from \citep{mossel2005learning,hsu2012spectral} and overcomplete HMMs from \citep{sharan2017learning} can be learned by our algorithm using conditional sampling oracle. Note that we will use the alternate form of fidelity which is more amenable to analysis given by \begin{align*}
    \sigma_+\rbr{ S_t^{\frac{1}{2}}\Pr[F_t | H_t]^\top D_{t}^{-1} \Pr[F_t | H_t] S_t^{\frac{1}{2}}}
    = \sigma_+ \rbr{D_t^{-1/2} \EE_{x_{1:t} \sim \Pr[\cdot]}\sbr{ \Pr[F_t|x_{1:t}] \Pr[F_t|x_{1:t}]^\top} D_t^{-1/2}} 
\end{align*} 

\subsection{HMMs induce low rank distributions}\label{sec:hmmlowr}
Before we show that Hidden Markov Models induce low rank distributions, we clarify a subtle issue in our definition of rank. 
\begin{remark}\label{rem:rank}
    If $\Pr[h] = 0$ for some history $h$, then we can not use Bayes rule $\Pr[f|h] \Pr[h] = \Pr[h f]$ to define the conditional distribution $\Pr[\cdot | h]$. Moreover, in this case, any consistent setting of $\Pr[\cdot | h]$ i.e. which satisfies $\Pr[of|h] = \Pr[f|ho]\Pr[o|h]$ will induce the same joint distribution $\Pr[\cdot]$, and might have possibly different $\rank(\Pr[\obs^{\leq T-t}|\obs^{t}])$. 
    For Hidden Markov Models, if $\Pr[h] = 0$ for some history $h \in \obs^t$, this can be resolved by assuming $\Pr[\textbf{s}_{t+1} = s | h] = 1/S$ for $s \in \mathcal{S}$. We will show below that for this choice, rank of HMM is at most the number of hidden states.
\end{remark}

\begin{proposition}
    The distribution induced by HMM with $S$ hidden states has rank $\leq S$.
\end{proposition}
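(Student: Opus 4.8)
The plan is to exploit the Markov structure of the HMM to factor each conditional probability matrix through the $S$-dimensional space of belief states. Fix $t\in[T]$; I will exhibit a matrix $G_t$ with $S$ columns and a matrix $W_t$ with $S$ rows such that $\Pr[\obs^{\le T-t}\mid\obs^{t}] = G_t\,W_t$. Since $\rank(G_tW_t)\le S$ this bounds the rank of every conditional probability matrix, and \Cref{def:lowrank} then gives the claim upon taking the maximum over $t$. So the whole argument reduces to producing this factorization.

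First I would introduce the \emph{forward emission map}. For a hidden state $s\in\mathcal{S}$ and a future $f=(x'_1,\dots,x'_k)\in\obs^{k}$ with $k\le T-t$, set
\[
  g(f\mid s)\;:=\;\Pr[\mathbf{x}_{t+1}{=}x'_1,\dots,\mathbf{x}_{t+k}{=}x'_k \mid \mathbf{s}_{t+1}{=}s],
\]
the probability that the next $k$ emissions equal $f$ given that the chain occupies state $s$ immediately after emitting $x_t$. By time‑homogeneity this is a function of $s$ and $f$ alone, independent of $t$; reading off \Cref{eq:hmm} it has the closed form $g(f\mid s)=\one^\top\mathbb{A}_{x'_k}\cdots\mathbb{A}_{x'_1}e_s$ in the observable‑operator notation of \Cref{sec:hard-back} (with $e_s$ the $s$-th standard basis vector), and $g(\varphi\mid s)=1$. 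Let $G_t$ be the matrix indexed by futures $f\in\obs^{\le T-t}$ (rows) and states $s\in\mathcal{S}$ (columns) with $G_t[f,s]=g(f\mid s)$. Next I would establish the \emph{conditional‑independence identity}: to every history $h\in\obs^{t}$ one can attach a belief vector $b_h\in\Delta(\mathcal{S})$ with
\[
  \Pr[f\mid h]\;=\;\sum_{s\in\mathcal{S}} b_h(s)\, g(f\mid s)\qquad\text{for all }f\in\obs^{\le T-t},
\]
and $b_h(s)=\Pr[\mathbf{s}_{t+1}{=}s\mid h]$ whenever $\Pr[h]>0$. For such $h$ this is exactly the statement that, in the HMM, the future observations are conditionally independent of the past given $\mathbf{s}_{t+1}$: it follows by marginalizing the hidden states in \Cref{eq:hmm} and splitting the product into a ``past'' factor (which, after dividing by $\Pr[h]$, yields the posterior $b_h$) and a ``future'' factor (which, after summing out the unused tail $\mathbf{x}_{t+k+1},\dots,\mathbf{x}_T$, is exactly $g(f\mid s)$). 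Setting $W_t[s,h]=b_h(s)$ then gives $\Pr[\obs^{\le T-t}\mid\obs^{t}]=G_tW_t$ and hence rank $\le S$, column by column.

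The one genuinely delicate point — and the step I expect to be the main obstacle — is histories $h$ with $\Pr[h]=0$, where Bayes' rule does not determine $\Pr[\cdot\mid h]$ and an arbitrary \emph{consistent} choice need not be a mixture of the $g(\cdot\mid s)$ at all, so a priori the rank could exceed $S$; this is precisely the ambiguity flagged in \Cref{rem:rank}. Following that remark I would pin down the conditionals by running the HMM's Bayes filter on $h$ and \emph{resetting $b$ to the uniform vector $(1/S,\dots,1/S)$ whenever the filter would divide by zero} (i.e.\ whenever the observation just read has zero probability under the current belief). This yields a well‑defined $b_h\in\Delta(\mathcal{S})$ for every $h$, and I would check that the induced conditionals are consistent, $\Pr[of\mid h]=\Pr[f\mid ho]\,\Pr[o\mid h]$: at a non‑reset step this is the ordinary Bayes‑filter update, while at a reset step both sides vanish, since $\Pr[o\mid h]=\sum_s b_h(s)\mathbb{O}[o,s]=0$ forces $b_h(s)\mathbb{O}[o,s]=0$ for every $s$ and hence $b_h(s)\,g(of\mid s)=0$ for every $s$. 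With this convention $W_t$ is an honest $S\times|\obs^{t}|$ matrix and the factorization above is valid for all columns.

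Beyond this bookkeeping around degenerate conditionals I do not anticipate any difficulty: the remaining ingredients are the elementary fact that a product of matrices with common inner dimension $S$ has rank at most $S$, and the routine verification that the emission columns of $\mathbb{O}$ summing to one makes the marginalization of the unused tail contribute a factor of exactly $1$. Assembling the three matrices $G_t$, $W_t$ for each $t\in[T]$ and invoking \Cref{def:lowrank} completes the proof that the HMM‑induced distribution has rank $\le S$.
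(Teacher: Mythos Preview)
Your proof is correct and follows essentially the same route as the paper: factor $\Pr[\obs^{\le T-t}\mid\obs^{t}]$ through the $S$-dimensional hidden-state space via the future-emission matrix and the belief-state matrix, then invoke the rank bound for matrix products. The paper's write-up is terser---it just names the two factors $\Pr[\obs^{\le T-t}\mid\mathcal{S}^+]$ and $\Pr[\mathcal{S}^+\mid H_t]$ and appeals to \Cref{rem:rank}'s blanket convention $\Pr[\mathbf{s}_{t+1}=s\mid h]=1/S$ when $\Pr[h]=0$---whereas you give a more careful recursive Bayes-filter construction with resets and explicitly verify the consistency condition $\Pr[of\mid h]=\Pr[f\mid ho]\Pr[o\mid h]$ at reset steps; this extra care is a genuine (if small) improvement, since the paper's one-line convention, read literally, need not be consistent across different prefix lengths.
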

\begin{proof}
    We will show that $\rank(\Pr[\obs^{\leq T-t}|\obs^{t}]) \leq S$ under the choice that if $\Pr[h] = 0$ for some history $h \in \obs^t$, then $\Pr[\textbf{s}_{t+1} = s | h] = 1/S$ for $s \in \mathcal{S}$. Define  $\mathcal{S}^+ \subset \mathcal{S}$ be hidden states $s$ for which $\Pr[\textbf{s}_{t+1} = s] > 0$. Define $\Pr[\obs^{\leq T-t} | \mathcal{S}^+]$ to be a matrix whose $((x_{t + \tau}, \ldots, x_{t+1}),s)$ entry is given by probability $\Pr[x_{t + \tau}, \ldots, x_{t+1} | \textbf{s}_{t+1} = s]$. Similarly define $\Pr[\mathcal{S}^+ | H_t]$ to be a matrix whose $(s, (x_{t}, \ldots, x_{1}))$ entry is given by probability $\Pr[\textbf{s}_{t+1} = s | x_{t}, \ldots, x_{1}]$. Then, $\Pr[\obs^{\leq T-t} | H_t]$ can be decomposed into $\Pr[\obs^{\leq T-t} | H_t] = \Pr[\obs^{\leq T-t} | \mathcal{S}^+] \Pr[\mathcal{S}^+ | H_t]$, and therefore $\rank(\Pr[\obs^{\leq T-t} | H_t]) \leq S$.
\end{proof}
\subsection{Parity with noise}
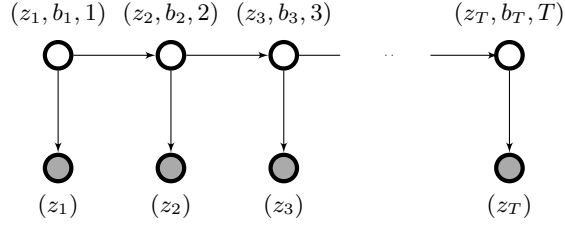
\begin{figure}[t!]
    \centering
        \begin{tikzpicture}
        \node[state] (1) {};
        \node[state, right of=1] (2) {};
        \node[state, right of=2] (3) {};
        \node[state, right of=3, xshift=1.5cm] (n) {};
        \node[state, fill={rgb:black,1;white,2}, below of=1] (1b) {};
        \node[state, fill={rgb:black,1;white,2}, below of=2] (2b) {};
        \node[state, fill={rgb:black,1;white,2}, below of=3] (3b) {};
        \node[state, fill={rgb:black,1;white,2}, below of=n] (nb) {};
        \node[below of=1b, yshift=+1cm] {\footnotesize$(z_1)$};
        \node[below of=2b, yshift=+1cm] {\footnotesize$(z_2)$};
        \node[below of=3b, yshift=+1cm] {\footnotesize$(z_3)$};
        \node[below of=nb, yshift=+1cm] {\footnotesize$(z_T)$};
        \draw
        (1) node[above= 0.25] {\footnotesize $(z_1, b_1, 1)$}
        (2) node[above= 0.25] {\footnotesize $(z_2, b_2, 2)$}
        (3) node[above= 0.25] {\footnotesize $(z_3, b_3, 3)$}
        (n) node[above= 0.25] {\footnotesize $(z_T, b_{T}, T)$};
        \draw[edge] (1) to (2);
        \draw[edge] (2) to (3);
        \draw[] (3) edge ($(3)!0.25!(n)$);
        \draw[dotted] ($(3)!0.45!(n)$)--($(3)!0.50!(n)$);
        \draw[edge] ($(3)!0.65!(n)$) to ($(3)!0.95!(n)$);
        \draw[edge] (1) to (1b);
        \draw[edge] (2) to (2b);
        \draw[edge] (3) to (3b);
        \draw[edge] (n) to (nb);
    \end{tikzpicture}
  \caption{\label{fig:noisy-parity} Hidden Markov model for noisy parity. Each hidden state is of the form $(z_t, b_t, t)$ where $z_t$ represents the current bit to be output, $b_t$ is the parity of a secret subset of previous bits and $t$ is the bit position. $b_1$ is always set to  $0$ and $z_T$ is set to $b_T$ with probability $\alpha$ and $1 - b_T$ otherwise for some $\alpha \in (0,1/2)$. For other positions $t\in [T-1]$, transition from hidden state $(z_t, b_t, t)$ goes uniformly randomly to hidden states $(1,b_{t+1}, t+1)$ and $(0,b_{t+1}, t+1)$, where $b_{t+1} = b_{t} \oplus z_t$ if $t \in I$ and $b_{t+1} = b_t$ otherwise.}
  \end{figure}
We first formally define the distribution induced by parity with noise which
has been extensively studied in the computational learning theory \citep{blum1994weakly}.

\begin{definition}[Parity with noise]
    \label{def:noisy-parity}
    Let $(x_1,\ldots,x_{T-1})$ be a vector in $\{0,1\}^{T-1}$, $S$ a subset of $[T-1]$ and $0 < \alpha < 1/2$. The parity of $(x_1,\ldots,x_{T-1})$ on $S$ is the boolean function $\phi(x_1,\ldots,x_{T-1})$ which outputs $0$ if the number of ones in the sub-vector $(x_i)_{i \in S}$ is even and $0$ otherwise. Then the distribution induced by HMM for parity with noise is such that the first $T-1$ bits are uniform over $\{0,1\}^{T-1}$ and the last bit is $\phi(x_1,\ldots,x_{T-1})$ with probability $1-\alpha$ and $1 - \phi(x_1,\ldots,x_{T-1})$ otherwise.
\end{definition}

We now show that Parity with noise HMM satisfies the conditions of \Cref{thm:main}. Note that for $\alpha = 1/2$, each bit becomes a random bit. And then as can be seen in the proof below, the fidelity is $1$ (since the second eigenvalue goes to $0$).
\begin{proposition}
    The distribution induced by Parity with noise HMM has rank $\leq 2T$ and fidelity $(1-2\alpha)^2/ 2$ under a basis of size $\leq 2$ for every sequence length $t\in T$.
\end{proposition}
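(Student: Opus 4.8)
The plan is to exploit a single-bit sufficient statistic for the future given the past — the running parity of the observed bits on the secret set $I$ — which simultaneously bounds the rank, supplies size-$\le 2$ bases, and makes the preconditioned second-moment matrix from \Cref{def:fidelity} explicitly diagonalizable.

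First, the rank bound. Fix $t \in [T-1]$ and a history $x_{1:t}$, and set $p := \bigoplus_{i \in I,\, i \le t} x_i \in \{0,1\}$. A future of length $k < T-t$ is a block of fresh uniform bits, so its conditional probability is $2^{-k}$ regardless of the history; a full-length future $y=(y_1,\dots,y_{T-t})$ has $\Pr[y \mid x_{1:t}] = 2^{-(T-t-1)}\bigl[(1-\alpha)\mathbbm{1}\{\psi(y)=p\}+\alpha\,\mathbbm{1}\{\psi(y)\ne p\}\bigr]$ where $\psi(y):=y_{T-t}\oplus\bigoplus_{i\in I,\,t<i\le T-1}y_{i-t}$ depends on $y$ only (it pairs the noisy-label coordinate with the parity contribution of the future coordinates in $I$). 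So every column of $\Pr[\obs^{\le T-t}|\obs^{t}]$ is determined by $p\in\{0,1\}$, the matrix has at most two distinct columns and hence rank $\le 2$; for $t=T$ it is the all-ones matrix of rank $1$. Thus the distribution has rank $\le 2\le 2T$. For bases I take, for each $t\in[T-1]$ with $I\cap[t]\ne\emptyset$, $B_t=\{0^t,b_1\}$ with $b_1$ any length-$t$ string of running parity $1$ (flip one coordinate of $I\cap[t]$); for the remaining $t$ — those with $I\cap[t]=\emptyset$, and $t=T$ — all histories share the same conditional-future vector, so a singleton $B_t$ spans. By the column analysis each $B_t$ spans, hence is a basis in the sense of \Cref{def:bases}, and $|B_t|\le 2\le 2/(1-2\alpha)^2=1/\Delta^*$, so the cardinality requirement of \Cref{def:fidelity} is met.

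Next, the fidelity computation, using the alternate form $\sigma_+\bigl(D_t^{-1/2}\EE_{x_{1:t}}[\Pr[F_t|x_{1:t}]\Pr[F_t|x_{1:t}]^\top]D_t^{-1/2}\bigr)$ from the Examples section. Take $t\in[T-1]$ with $I\cap[t]\ne\emptyset$, put $m=T-t$, and let $v_p:=\Pr[F_t|b_p]$; by the rank step these two vectors are the only possible values of $\Pr[F_t|x_{1:t}]$. Since exactly one of $\psi(f)=0$, $\psi(f)=1$ holds for each $f$, we get $v_0(f)+v_1(f)=2^{-(m-1)}$ for all $f$, hence $d_t(f)=\tfrac12(v_0(f)+v_1(f))=2^{-m}$ and $D_t=2^{-m}\,\mathrm{Id}$ — this collapse of $D_t$ to a scalar is the payoff of balancing the two parities in $B_t$. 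Under $\Pr[\cdot]$ the bits $x_{1:t}$ are i.i.d.\ uniform and $I\cap[t]\ne\emptyset$, so the running parity is uniform and $\EE_{x_{1:t}}[\Pr[F_t|x_{1:t}]\Pr[F_t|x_{1:t}]^\top]=\tfrac12(v_0v_0^\top+v_1v_1^\top)$; the matrix of interest is therefore $M=2^{m-1}(v_0v_0^\top+v_1v_1^\top)$, whose nonzero eigenvalues are $2^{m-1}$ times those of the $2\times 2$ Gram matrix of $\{v_0,v_1\}$. A count over $f$ (splitting on $\psi(f)\in\{0,1\}$, each class of size $2^{m-1}$) gives $\|v_0\|_2^2=\|v_1\|_2^2=2^{-(m-1)}\bigl((1-\alpha)^2+\alpha^2\bigr)$ and $\langle v_0,v_1\rangle=2^{-(m-1)}\cdot 2\alpha(1-\alpha)$, so that Gram matrix is $2^{-(m-1)}\left(\begin{smallmatrix}a&b\\ b&a\end{smallmatrix}\right)$ with $a=(1-\alpha)^2+\alpha^2$, $b=2\alpha(1-\alpha)$, with eigenvalues $2^{-(m-1)}(a\pm b)$; the key algebraic fact is the perfect-square factorization $a+b=((1-\alpha)+\alpha)^2=1$ and $a-b=((1-\alpha)-\alpha)^2=(1-2\alpha)^2$. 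Hence $M$ has nonzero eigenvalues $1/2$ and $(1-2\alpha)^2/2$, so $\sigma_+(M)=(1-2\alpha)^2/2$ as $\alpha\in(0,1/2)$. For the degenerate lengths (one realizable parity, or $t=T$) the matrix is rank one with nonzero eigenvalue $\|D_t^{-1/2}v_0\|_2^2=\sum_f v_0(f)=1\ge (1-2\alpha)^2/2$ (there $d_t=v_0$, and at $t=T$ everything degenerates to the scalar $1$). Taking the minimum over $t\in[T]$ yields fidelity $(1-2\alpha)^2/2$.

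The main obstacle is essentially bookkeeping rather than a deep point: one must verify that $D_t$ is a multiple of the identity (so the chosen basis is forced to balance the parities), and spot the factorizations $a\pm b=((1-\alpha)\pm\alpha)^2$ that convert the Gram-matrix spectrum into the clean value $(1-2\alpha)^2/2$. The only thing to be careful about besides that is the handful of lengths $t$ where just one running parity is realizable (so $|B_t|=1$) and the endpoint $t=T$, but in all of these $\sigma_+$ equals $1$ and hence does not bind.
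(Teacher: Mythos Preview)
Your proof is correct and follows essentially the same approach as the paper: identify the running parity on $I$ as the sufficient statistic for the future (yielding the rank bound and the size-$\le 2$ basis), observe that under this balanced basis $d_t$ is the uniform distribution $2^{-(T-t)}$, and reduce the fidelity computation to the eigenvalues of the $2\times 2$ matrix with diagonal $(1-\alpha)^2+\alpha^2$ and off-diagonal $2\alpha(1-\alpha)$, whose spectrum factors via $((1-\alpha)\pm\alpha)^2$. You are a bit more careful than the paper about the degenerate lengths (those with $I\cap[t]=\emptyset$ and $t=T$), which the paper glosses over.
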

\begin{proof}
The claim about rank follows from noting that $\Pr[\cdot|x] = \Pr[\cdot|y]$ is same if both $x$ and $y$ have the same length $t$; and the subvectors $(x_i)_{i\in S \cap [t]}$ and $(y_i)_{i\in S \cap [t]}$ have the same number of ones modulo $2$. 

We only need to show that the distribution has large fidelity. The proof is same for all $t\in [T]$, so we prove this for a particular $t$. For a fixed $t$, $\Pr[F_t|x]$ only depends on the parity of the secret subset $(x_i)_{S\cap [t]}$, so there are only two options for $\Pr[F_t|x]$. Let those be $v_1$ and $v_2$. We choose the basis $B_t$ to be any histories with probability vector $v_1$ and $v_2$. Note that if the last bit in the future $f$ matches the parity, then the corresponding probability entry is $(1-\alpha)/2^{T-t-1}$, otherwise it is $\alpha/2^{T-t-1}$. Also, we have $v_1(f) = (1-\alpha)/2^{T-t-1}$ if and only if $v_2(f) = \alpha/2^{T-t-1}$. Therefore, $d_t(f) = 1/2^{T-t}$ for every future $f \in F_t$.

Let $V$ be the matrix with columns given by $v_1$ and $v_2$. Our goal is to show that following matrix has large non-zero eigenvalues \[
    D_t^{-1/2} \EE\sbr{ \Pr[F_t|x] \Pr[F_t|x]^\top} D_t^{-1/2} = \frac{1}{2} D_t^{-1/2} V V^\top D_t^{-1/2} .
\] Since $D_t^{-1/2} V V^\top D_t^{-1/2} $ has same eigenvalues as $V^\top D_t^{-1} V$, we compute its eigenvalues. The diagonal entries of $V^\top D_t^{-1} V$ are $\alpha^2 + (1-\alpha)^2$ and the off-diagonal entries are $2 \alpha(1-\alpha)$. Therefore, the eigenvalues of $V^\top D_t^{-1} V$ are $1$ and $(1- 2\alpha)^2$. This gives us a lower bound on $\Delta \geq (1- 2\alpha)^2/2$.
\end{proof}

\subsection{Full rank HMMs}
We first define full rank HMMs studied in \citep{mossel2005learning,hsu2012spectral}. Recall the definition of HMMs (\Cref{def:hmm}).

\begin{definition}[Full rank HMMs]
    We say an HMM is full rank, if its emission matrix $\mathbb{O}$ and state transition matrix $\mathbb{T}$ have full column rank.
\end{definition}

We next show the distribution induced by full rank HMM can be learned by our algorithm in \Cref{thm:main}. Let $P_{2,1}$ be an $O \times O$ matrix with $(i,j)$th entry $\Pr[o_2= i, o_1 = j]$. Note that the previous result \citep{hsu2012spectral} in this setting depend on smallest eigenvalues of $P_{2,1}$.

\begin{proposition}\label{prop:fullrank}
    The distribution induced by full rank HMM has rank $S$ and fidelity $\sigma_{\min}(P_{2,1})^2$ under a basis of size $\leq O$ for every sequence length $t\in T$.
\end{proposition}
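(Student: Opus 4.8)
The plan is to reduce everything to the belief‑state representation of the HMM. Without loss of generality assume $\mu$ has full support (otherwise restrict to the reachable states as in \Cref{rem:rank}, which changes neither the induced distribution nor the full‑column‑rank hypotheses). For $x\in\obs^t$ write $b_x:=\Pr[\mathbf{s}_{t+1}{=}\cdot\mid x]\in\Delta(\mathcal{S})$ for its belief state and $L_t:=\Pr[\obs^{T-t}\mid\mathcal{S}]$ for the $|F_t|\times S$ matrix of future‑given‑state probabilities, so $\Pr[F_t\mid x]=L_tb_x$ and the column space of $\Pr[F_t\mid H_t]$ is $L_t\cdot\mathrm{span}\{b_x:x\in\obs^t\}$. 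Two structural facts drive the proof: first, $L_t$ has full column rank $S$, because the block of rows ``first future observation $=o$, later observations summed out'' is exactly $\mathbb{O}$; second, $\{b_x:x\in\obs^t\}$ spans $\mathbb{R}^S$ for every $t$, proved by induction from $b_\varphi=\mu>0$ using that $\mathbb{T}$ is invertible and $\bigcup_o\diag(\mathbb{O}[o,\cdot])\mathbb{R}^S=\mathbb{R}^S$ (no column of $\mathbb{O}$ vanishes). Combining these gives $\mathrm{rank}(\Pr[F_t\mid H_t])=S$, hence $\Pr[\obs^{\le T-t}\mid\obs^t]$ has rank $\ge S$, and with the general bound (an $S$‑state HMM has rank $\le S$) rank is exactly $S$; it also shows that any $\le O$ length‑$t$ histories whose belief states span $\mathbb{R}^S$ form a basis in the sense of \Cref{def:bases}. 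For the fidelity step I would pin the basis down: fix a length‑$(t{-}1)$ prefix $h_0$ with full‑support belief state and set $B_t=\{h_0o:o\in\obs\}$, so $b_{h_0o}\propto\mathbb{A}_ob_{h_0}$ (with $\mathbb{A}_o=\mathbb{T}\diag(\mathbb{O}[o,\cdot])$) and $|B_t|\le O$.

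It then remains to lower bound, for every $t$, the quantity $\sigma_+\!\big(D_t^{-1/2}M_tD_t^{-1/2}\big)$, where $M_t=\Pr[F_t\mid H_t]\,S_t\,\Pr[F_t\mid H_t]^\top=L_t\Sigma_tL_t^\top$ with $\Sigma_t:=\EE_{x\sim\Pr}[b_xb_x^\top]\succ0$, and $D_t=\diag(d_t)$, $d_t=L_t\bar b_t$, $\bar b_t:=|B_t|^{-1}\sum_{b\in B_t}b_b$. Since its nonzero spectrum equals that of $\Sigma_t^{1/2}L_t^\top D_t^{-1}L_t\Sigma_t^{1/2}$,
\[
\sigma_+\!\big(D_t^{-1/2}M_tD_t^{-1/2}\big)\ \ge\ \lambda_{\min}(\Sigma_t)\cdot\lambda_{\min}\!\big(L_t^\top D_t^{-1}L_t\big).
\]
A data‑processing step (Cauchy--Schwarz, grouping futures by their first observation) gives $L_t^\top D_t^{-1}L_t\succeq\mathbb{O}^\top\diag(\mathbb{O}\bar b_t)^{-1}\mathbb{O}\succeq\sigma_{\min}(\mathbb{O})^2I$, since $\mathbb{O}\bar b_t$ is a probability vector. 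Peeling the last observation off $\Sigma_t$ via $\Pr[x'o]\,b_{x'o}=\mathbb{A}_o\Pr[x']\,b_{x'}$ together with $\|\mathbb{O}[o,\cdot]\odot b_{x'}\|_1\le1$ gives $\Sigma_t\succeq\mathbb{T}\big((\mathbb{O}^\top\mathbb{O})\odot\Sigma_{t-1}\big)\mathbb{T}^\top$, and the Schur product theorem yields $\lambda_{\min}(\Sigma_t)\ge\sigma_{\min}(\mathbb{T})^2\sigma_{\min}(\mathbb{O})^2\min_s(\Sigma_{t-1})_{ss}$; stopping the recursion at $\Sigma_1\succeq\mathbb{T}\diag(\mu)(\mathbb{O}^\top\mathbb{O})\diag(\mu)\mathbb{T}^\top$ produces a bound of the form $\sigma_{\min}(\mathbb{O})^4\sigma_{\min}(\mathbb{T})^2(\min_s\Pr[\mathbf{s}_t{=}s])^2$, which at $t=1$ is governed by $\sigma_{\min}(P_{2,1})^2$ through $P_{2,1}=\mathbb{O}\,\mathbb{T}\diag(\mu)\,\mathbb{O}^\top$.

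The main obstacle is the bookkeeping needed to land on precisely $\sigma_{\min}(P_{2,1})^2$ \emph{uniformly in $t$}: the clean chain above produces the time‑$t$ state marginal $\min_s\Pr[\mathbf{s}_t{=}s]$ rather than the time‑$1$ quantity inside $P_{2,1}$, and the Hadamard/operator‑norm relaxations cost a few factors. Closing this gap requires either (a) carrying $L_t$ and $\Sigma_t$ through without relaxing — exploiting that $L_t$'s first‑observation block is $\mathbb{O}$ and that $\Sigma_1$ already carries the $\mathbb{O}^\top\mathbb{O}$ and $\diag(\mu)$ structure — so that $\Sigma_t^{1/2}L_t^\top D_t^{-1}L_t\Sigma_t^{1/2}$ is compared directly with a congruence of $P_{2,1}P_{2,1}^\top$, or (b) observing that for full‑rank HMMs the relevant time‑$t$ two‑observation matrix is a bounded perturbation of $P_{2,1}$ and absorbing this into the parameter, matching the way $\sigma_{\min}(P_{2,1})$ enters \citet{hsu2012spectral}. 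A secondary point is justifying the full‑support‑belief prefix $h_0$ — and, if none exists, falling back to an arbitrary spanning basis of size $\le O$ at the cost of a messier but still polynomial bound — together with the reachable‑states reduction whenever $\mu$ or an intermediate state marginal has zeros.
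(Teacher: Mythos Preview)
Your rank argument via belief states is correct. The gap is entirely in the fidelity bound: as you yourself flag, the recursion on $\Sigma_t=\EE_x[b_xb_x^\top]$ accumulates $t$-dependent factors (through $\min_s\Pr[\mathbf{s}_t{=}s]$ and through the Hadamard/operator-norm relaxations), and you do not reach $\sigma_{\min}(P_{2,1})^2$ uniformly in $t$. Neither of the fixes you sketch actually closes this; option~(a) would amount to rediscovering the reduction below, and option~(b) is not how the bound is obtained.

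The device you are missing is to replace the full futures $F_t$ by one-step futures $\obs$. Because $\mathbb{O}$ and $\mathbb{T}$ have full column rank, the $O\times O$ matrix $\Pr[\obs\mid\obs]$ already has rank $S$; hence $\Pr[\obs\mid x]=\Pr[\obs\mid\obs]\,\beta(x)$ determines the same coefficient vector as $\Pr[F_t\mid x]=\Pr[F_t\mid\obs]\,\beta(x)$ (since $\obs\subset F_t$ and both systems have rank $S$). One may therefore compute the fidelity with $\Pr[\obs\mid\cdot]$ in place of $\Pr[F_t\mid\cdot]$ and with the same size-$O$ basis at every $t$. The quantity to lower-bound becomes
\[
\sigma_+\Bigl(D^{-1/2}\,\EE_{o\sim\Pr}\bigl[\Pr[\obs\mid o]\Pr[\obs\mid o]^\top\bigr]\,D^{-1/2}\Bigr)
=\sigma_+\Bigl(D^{-1/2}\,\Pr[\obs\mid\obs]\,U\,\Pr[\obs\mid\obs]^\top\,D^{-1/2}\Bigr),
\]
where $U=\diag(\Pr[x_1{=}\cdot])$ and $d(o')=|\obs|^{-1}\sum_{o}\Pr[o'\mid o]\le1$. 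Since $D^{-1}\succeq I$, $U\preceq I$, and $P_{2,1}=\Pr[\obs\mid\obs]\,U$, this is at least $\sigma_{\min}(P_{2,1})^2$, independently of $t$.

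Your route keeps the full $L_t$ and the full belief-covariance $\Sigma_t$, both of which genuinely evolve with $t$; once you relax them separately you cannot collapse back to two-step statistics. The repair is not finer bookkeeping on $\Sigma_t$ but to avoid introducing $t$-dependence in the first place by passing to one-step futures.
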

\begin{proof}
We note that in the full rank case, we can simplify our algorithm considerably. By our assumptions, $\rank(\Pr[\obs | \obs]) = S$. And therefore instead of picking all distributions, we can replace $\Pr[F_t|x]$ by one-step probabilities $\Pr[\obs|x]$. We can do this because $\Pr[\obs|x] = \Pr[\obs|\obs] \beta(x)$ implies $\Pr[F_t|x] = \Pr[F_t|\obs] \beta(x)$ as $\rank(\Pr[F_t|H_t]) = \rank(\Pr[F_t|\obs]) = \rank(\Pr[\obs|\obs]) = S$ and $\obs\subset F_t$ where $H_t$ and $F_t$ is the set of all observation sequences of length $t\geq 1$ and $\leq T-t$ respectively. This also means we do not need a different basis for each $t\in [T]$.
     And only need to show that the following matrix has large eigenvalues: 
\[
    D^{-1/2} \EE_{o \sim \Pr[\cdot]}\sbr{ \Pr[\obs|o] \Pr[\obs|o]^\top} D^{-1/2}   
\] where $D$ is a diagonal matrix with $d(o') = \EE_{o \in \obs}[\Pr[o'| o]]$. Since, eigenvalues of $D^{-1/2}$ are $\geq 1$, we are interested in eigenvalues of \[
    \EE_{o \sim \Pr[\cdot]}\sbr{ \Pr[\obs|o] \Pr[\obs|o]^\top} = \Pr[\obs|\obs] U \Pr[\obs|\obs]^\top
\] where $U$ is a diagonal matrix of size $|\obs| \times |\obs|$ with its diagonal entries given by $\Pr[o]$. Then, by definition, $P_{2,1} = \Pr[\obs|\obs] U$. Using this we can  lower bound $\Delta$ by $\sigma_{\min}(P_{2,1})^2$.


\end{proof}

\subsection{Overcomplete HMMs}
We first define the class of overcomplete HMMs which can be learned using techniques in \citep{sharan2017learning}. In \citep{sharan2017learning}, the authors were concerned with the stationary distribution induced by HMMs. To define their assumptions, let $S$ be the number of hidden states and $\tau = O(\lceil \log_{|\obs|} S \rceil)$. Moreover, let $H_\tau$ be the set of histories of length $\tau$ of the form $x_{-\tau},\ldots, x_{-1}$, $F_\tau$ be the set of histories of length $\tau$ of the form $x_{0}, \ldots, x_{\tau}$, and $\mathcal{S} = \{1,\ldots, S\}$ be the set of hidden states. Note that by our setting of $\tau$, $H_\tau$, $F_\tau$ and $\mathcal S$ are all size $O(S)$. Define $\Pr[F_\tau | \mathcal{S}]$ to be a matrix of size $|F_\tau| \times S$ whose $((x_{0}, \ldots, x_{\tau}),s)$ entry is given by probability $\Pr[x_{0}, \ldots, x_{\tau} | s_0 = s]$. Similarly define $\Pr[H_\tau | \mathcal{S}]$ as the equivalent matrix for time-reversed Markov chain whose $((x_{-\tau}, \ldots, x_{-1}),s)$ entry is given by probability $\Pr[x_{-\tau}, \ldots, x_{-1} | s_0 = s]$.

\citet{sharan2017learning} showed efficient algorithms for HMMs under assumptions which imply (a) $\Pr[F_\tau|\mathcal{S}]$ and $\Pr[H_\tau|\mathcal{S}]$ matrices are rank $S$, (b) the condition number of $\Pr[F_\tau|\mathcal{S}]$ and $\Pr[H_\tau|\mathcal{S}]$ is $\poly(S)$ and (c) every hidden state has stationary probability at least $1/\poly(S)$.

We next show the distribution induced by these HMMs can be learned by our algorithm in \Cref{thm:main}. We will not require the uniqueness of columns of $\Pr[\obs, \mathcal{S}]$.

\begin{proposition}
    The distribution induced by HMMs defined above has rank $S$ and fidelity $(\poly(S))^{-1}$ under a basis of size $O(S)$ for every sequence length $t\in T$.
\end{proposition}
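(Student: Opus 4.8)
The plan is to verify the two assertions---rank $\le S$ and fidelity $1/\poly(S)$---separately, using the belief-state factorization of HMM conditionals throughout. Write $\beta_x := \Pr[\mathcal S\mid x]\in\Delta(\mathcal S)$ for the filtering (belief) distribution of $s_{t+1}$ given a history $x\in\obs^t$, and recall that for any set $G$ of equal-length futures one has $\Pr[G\mid x]=\Pr[G\mid\mathcal S]\,\beta_x$, where $\Pr[G\mid\mathcal S]$ is the $|G|\times S$ matrix with entries $\Pr[g\mid s_{t+1}=s]$. The rank claim is then immediate from the general proposition of \Cref{sec:hmmlowr}: $\Pr[\obs^{\le T-t}\mid\obs^t]=\Pr[\obs^{\le T-t}\mid\mathcal S]\,\Pr[\mathcal S\mid\obs^t]$ factors through an $S$-dimensional space, so its rank is $\le S$ for every $t$, which is all \Cref{def:lowrank} requires.

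For the fidelity claim I use the alternate form: I must exhibit bases $\{B_t\}$ with $|B_t|=O(S)$ such that, with $d_t(f)=\EE_{b\in B_t}\Pr[f\mid b]$,
\[
  \sigma_+\!\Big(D_t^{-1/2}\,\EE_{x\sim\Pr}\!\big[\Pr[F_t\mid x]\Pr[F_t\mid x]^\top\big]\,D_t^{-1/2}\Big)\ \ge\ \tfrac{1}{\poly(S)}.
\]
Substituting the factorization, $\EE_x[\Pr[F_t\mid x]\Pr[F_t\mid x]^\top]=\Pr[F_t\mid\mathcal S]\,\Psi_t\,\Pr[F_t\mid\mathcal S]^\top$ with $\Psi_t:=\EE_{x\sim\Pr}[\beta_x\beta_x^\top]$. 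Writing $A_t:=D_t^{-1/2}\Pr[F_t\mid\mathcal S]$, the target is $\sigma_+(A_tA_t^\top)$-type quantity $\sigma_+(A_t\Psi_tA_t^\top)$, whose nonzero spectrum coincides with that of $\Psi_t^{1/2}A_t^\top A_t\Psi_t^{1/2}$; so it suffices to show $\lambda_{\min}(A_t^\top A_t)\ge 1/\poly(S)$ (in particular $A_t$ has full column rank $S$) and $\lambda_{\min}(\Psi_t)\ge 1/\poly(S)$, then multiply. I first fix $B_t$: for each $s$ take a history $b_s\in\obs^t$ with $\beta_{b_s}(s)\ge\pi(s)$---one exists since $\EE_{x\sim\Pr}\beta_x(s)=\pi(s)$---and add at most $S$ further histories so that $\{\beta_b\}_{b\in B_t}$ spans $\spn\{\beta_x:x\in\obs^t\}$; then $B_t$ is a basis in the sense of \Cref{def:bases}, $|B_t|\le 2S$, and its averaged belief $\bar\beta_t:=\EE_{b\in B_t}\beta_b$ obeys $\bar\beta_t(s)\ge\pi(s)/(2S)\ge 1/\poly(S)$, so that $\Pr[f\mid s]/d_t(f)\le 1/\bar\beta_t(s)\le\poly(S)$ and every preconditioned quantity below is well defined with uniform bounds.

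Both estimates come from localizing to the length-$\tau$ windows of \citet{sharan2017learning} ($\tau=\Theta(\log_{|\obs|}S)$). For $A_t^\top A_t=\Pr[F_t\mid\mathcal S]^\top D_t^{-1}\Pr[F_t\mid\mathcal S]$, write $f=gh$ with $g$ of length $\tau$, use $\Pr[gh\mid s_{t+1}=s]=\sum_{s'}\Pr[h\mid s_{t+\tau+1}=s']\,\Gamma_g[s',s]$ for the window-transfer matrix $\Gamma_g$, and apply Cauchy--Schwarz in Engel form ($\sum_i a_i^2/b_i\ge(\sum_i a_i)^2/\sum_i b_i$) to the inner sum over $h$, together with $\sum_h\Pr[h\mid s']=1$ and $\one^\top\Gamma_g=(\Pr[g\mid s])_{s\in\mathcal S}$; this yields the PSD domination $\Pr[F_t\mid\mathcal S]^\top D_t^{-1}\Pr[F_t\mid\mathcal S]\succeq\Pr[F_\tau\mid\mathcal S]^\top(D_t^G)^{-1}\Pr[F_\tau\mid\mathcal S]$ with $d_t^G(g)=\EE_{b\in B_t}\Pr[g\mid b]\le1$, hence $\succeq\Pr[F_\tau\mid\mathcal S]^\top\Pr[F_\tau\mid\mathcal S]\succeq\sigma_{\min}(\Pr[F_\tau\mid\mathcal S])^2 I$, which is $\ge1/\poly(S)$ since $\Pr[F_\tau\mid\mathcal S]$ has full column rank and condition number $\poly(S)$ while $\|\Pr[F_\tau\mid\mathcal S]\|_2\ge\max_s\|\Pr[F_\tau\mid s]\|_2\ge|F_\tau|^{-1/2}$. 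For $\Psi_t$ (with $t\ge\tau$), the tower property gives $\EE[\beta_x\mid x_{t-\tau+1:t}]=\Pr[\mathcal S\mid x_{t-\tau+1:t}]=:\beta'_x$, so $\Psi_t\succeq\EE[\beta'_x\beta_x'^\top]$; by shift-invariance of the stationary distribution the joint law of $(x_{t-\tau+1:t},s_{t+1})$ equals that of $(x_{1:\tau},s_{\tau+1})$, so $\EE[\beta'_x\beta_x'^\top]=\Psi_\tau$; and finally $\Psi_\tau=\diag(\pi)\,\Pr[H_\tau\mid\mathcal S]^\top\diag(1/\Pr[w])\,\Pr[H_\tau\mid\mathcal S]\,\diag(\pi)\succeq\sigma_{\min}(\Pr[H_\tau\mid\mathcal S])^2\,\diag(\pi)^2\succeq\tfrac{1}{\poly(S)}I$, using the full column rank and $\poly(S)$ condition number of the time-reversed window matrix $\Pr[H_\tau\mid\mathcal S]$ and the assumption $\min_s\pi(s)\ge1/\poly(S)$. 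Multiplying the two bounds gives fidelity $\ge1/\poly(S)$ with a basis of size $O(S)$.

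The main obstacle is this localization step: the identities $\EE[\beta_x\mid x_{t-\tau+1:t}]=\beta'_x$ and the shift-invariance of $(x_{t-\tau+1:t},s_{t+1})$ both require $t\ge\tau$ (so that $s_{t-\tau+1}$ has the stationary marginal), and the future reduction requires $T-t\ge\tau$. The remaining $O(\log_{|\obs|}S)$ boundary time steps must be treated by direct analogues using windows of length $<\tau$, where the rank $r_t$ may drop below $S$ and one instead argues on $\mathrm{range}(\Psi_t)$ (noting that restricting the nonzero spectrum of $\Psi_t^{1/2}A_t^\top A_t\Psi_t^{1/2}$ to this range is legitimate because $A_t^\top A_t$ is positive definite there). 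Since every such window has size $\le|\obs|^\tau=O(S)$ and the relevant conditional matrices are submatrices of the well-conditioned $\tau$-window matrices, the $1/\poly(S)$ bound still goes through; keeping track of the precise polynomial in this bookkeeping is the fiddliest part of the proof.
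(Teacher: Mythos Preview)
Your proof is correct, but it takes a substantially more elaborate route than the paper's. The paper's key simplification is to set the basis equal to \emph{all} length-$\tau$ histories, $B=H_\tau$ (of size $|\obs|^\tau=O(S)$), and to replace $\Pr[F_t\mid\cdot]$ by $\Pr[F_\tau\mid\cdot]$ throughout (legitimate since $F_\tau\subset F_t$ and both have rank $S$). With this choice the expectation $\EE_{x\sim\Pr}[\Pr[F_\tau\mid x]\Pr[F_\tau\mid x]^\top]$ is literally $\Pr[F_\tau\mid H_\tau]\,\diag(\Pr[b])\,\Pr[F_\tau\mid H_\tau]^\top$, and after using $D^{-1}\succeq I$ and $\diag(\Pr[b])^{-1}\succeq I$ one reads off the bound $\sigma_+\ge\sigma_{\min}(\Pr[F_\tau H_\tau])^2$ in one line. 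Your belief-state decomposition into $A_t^\top A_t$ and $\Psi_t$, the Engel-form localization of $A_t^\top A_t$ to a $\tau$-window, and the conditional-Jensen/shift-invariance reduction of $\Psi_t$ to $\Psi_\tau$ all recover the same bound (via the factorization $\Pr[F_\tau H_\tau]=\Pr[F_\tau\mid\mathcal S]\diag(\pi)\Pr[H_\tau\mid\mathcal S]^\top$), but with more moving parts.

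What your approach buys is that it is explicit about how the fidelity at a general $t$ reduces to the $\tau$-window quantities; the paper is informal here, asserting that ``we do not need a different basis for each $t$'' without saying what $B_t\subset\obs^t$ actually is for $t\ne\tau$. Your Engel/Jensen arguments make this reduction precise for $\tau\le t\le T-\tau$. On the other hand, your bespoke basis construction (histories $b_s$ with $\beta_{b_s}(s)\ge\pi(s)$, then completing to a span) is unnecessary once you notice---as the paper does---that taking $B=H_\tau$ makes the expectation over $x$ and the sum over $B$ coincide, collapsing your $\Psi_t$ and $A_t^\top A_t$ into a single joint-probability matrix. Both arguments are sketchy about the $O(\log_{|\obs|}S)$ boundary time steps; you at least flag the issue.
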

\begin{proof}
Just like the full rank case (\Cref{prop:fullrank}), we can simplify our algorithm considerably. We choose $B$ to be the set $H_\tau$. Let $F$ be the set of all observation sequences of length $T-\tau$ for some $T > 2\tau$. Now, we can replace $\Pr[F|x]$ by probabilities $\Pr[F_\tau|x]$ in our algorithm. We can do this because $\Pr[F_\tau|x] = \Pr[F_\tau|B] \beta(x)$ implies $\Pr[F|x] = \Pr[F|B] \beta(x)$ as $\rank(\Pr[F|B]) = \rank(\Pr[F_\tau|B]) = S$ and $F_\tau \subset F$. Second, we do not need a different basis for each $t\in [T]$ as $\Pr[F_\tau|x]$ lives in the span of $\Pr[F_\tau|B]$ for every history $x$ as $\rank(\Pr[F_\tau|B]) = \rank(\Pr[F_\tau|H]) = S$ and $B\subset H$ where $H$ is the set of all histories of length $\leq T$.
     This means we only need to show that the following matrix has large eigenvalues: 
\[
    D^{-1/2} \EE_{x_{1:\tau} \sim p}\sbr{ \Pr[F_\tau|x_{1:\tau}] \Pr[F_{\tau}|x_{1:\tau}]^\top} D^{-1/2}   
\] where $D$ is a diagonal matrix with entries $d(f) := \EE_{b \in B} \Pr[f | b]$ on the diagonal. Since, eigenvalues of $ D^{-1/2}$ are $\geq 1$, we are interested in eigenvalues of \[
    \EE_{x_{1:\tau} \sim p}\sbr{ \Pr[F_\tau|x_{1:\tau}] \Pr[F_{\tau}|x_{1:\tau}]^\top} = \Pr[F_\tau|B] K \Pr[F_\tau|B]^\top
\] where $K$ is a diagonal matrix of size $|B| \times |B|$ with diagonal entries given by $k(b) = \Pr[b]$. Define $\Pr[F_\tau H_\tau]$ to be a matrix of size $|F_\tau| \times |H_\tau|$ whose $((x_{0}, \ldots, x_{\tau}),(x_{-\tau}, \ldots, x_{-1}))$ entry is given by probability $\Pr[x_{-\tau}, \ldots, x_{-1},x_{0}, \ldots, x_{\tau}]$.
Then, by definition, $\Pr[F_\tau|B] K = \Pr[F_\tau H_\tau]$. Using this, each entry of $K < 1$ and every hidden state has stationary probability at least $1/\poly(S)$, we can lower bound $\Delta$ by $\sigma_{\min}(\Pr[F_\tau H_\tau])^2 = 1/\poly(S)$.

\end{proof}

\section{General algorithm for finding approximate basis}
\label{sec:approx-ellip}
In this section, we learn an approximate version of basis for probability vectors. Throughout this section, we ignore $t$ subscript in $F_t$ and $B_t$ when clear from context. We define an approximate basis, which allows us to ignore histories which have very low probability under the distribution $\Pr[\cdot]$:
\begin{definition}[Approximate Basis]
    Fix $0 < \varepsilon< 1$. For a distribution $\Pr[\cdot]$ over observation sequences of length $T$, we say a subset of observations sequences $B$ forms an $\varepsilon$-basis for $\Pr[\cdot]$ at length $t\in [T]$, if for every observation sequence $x = (x_1, \ldots, x_t)$, there exists coefficients $\beta(x)$ with $\ell_2$ norm $\norm{\beta(x)}_2 \leq 1$ such that:
    \[
        \EE_{x \sim \Pr[\cdot]}\sbr{\norm{\Pr[F|x]  - \Pr[F|B] \beta(x)}_1}\leq \varepsilon \, .\]
\end{definition}

We first define regular distributions.
\begin{definition}[Regular distribution]
  We say a distribution $\Pr[\cdot]$ is $\alpha$-regular if $\min \Pr[o| x] \geq \alpha$ where the minimum is over all histories $x$ and observations $o$ where $\Pr[o| x] \neq 0$.
\end{definition}
We now present the main result in this section: how to build an approximate basis for a regular low rank distribution.
\begin{theorem}
    \label{lemma:approx-1}
Let $\Pr[\cdot]$ be an $\alpha$-regular distribution over observation sequences of length $T$ with rank $r$. Fix $0< \varepsilon <  \alpha/T r$ and $0 < \delta < 1$. Then, in $\poly(r, T, 1/\varepsilon, 1/\alpha, \log(1/\delta))$ time, with probability $1-\delta$, we can find an $\varepsilon$-basis of size at most $O(r^2 T^3 \log(1/\alpha \varepsilon))$ using conditional sampling oracle.
\end{theorem}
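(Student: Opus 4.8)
The plan is to build the approximate basis greedily, one history at a time, using an ellipsoid-style potential argument analogous to the one used in online learning / experimental design. First I would fix a sequence length $t$ and work with the futures $F = \obs^{T-t}$. The key quantity is the "information matrix" associated to a candidate basis $B$: something like $\Pr[F|B]^\top D^{-1}\Pr[F|B]$ (with $D$ the diagonal reweighting by $d(f) = \EE_{b\in B}\Pr[f|b]$ used throughout the paper), or more conveniently its inverse restricted to the relevant subspace. I would maintain $B$ and, at each step, draw a fresh sample of histories $x\sim\Pr[\cdot]$ and check whether the regression residual $\EE_{x\sim\Pr}\norm{\Pr[F|x] - \Pr[F|B]\beta(x)}_1$ (estimated via conditional samples, using \Cref{prop:estimate-sum1} to estimate the preconditioned inner products and hence the least-squares solution $\beta(x)$) already falls below $\varepsilon$. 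If it does, we are done; if not, the sampled $x$ witnessing a large residual is added to $B$. The $\alpha$-regularity assumption is what makes the conditional-sample estimation quantitatively controllable: it guarantees every one-step probability encountered is at least $\alpha$, so the "irregular future" pathology from \Cref{app:sub-1} never arises and all relative-accuracy estimates go through with $\poly(1/\alpha)$ sample cost.

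The main steps in order: (1) Set up the preconditioned least-squares problem for a fixed $B$ and show, via \Cref{prop:estimate-sum1} and \Cref{cor:estimatesigma}, that we can estimate both $\Sigma_B$ and $q(x) = \Pr[F|B]^\top D^{-1}\Pr[F|x]$ to accuracy $\poly(\varepsilon,\alpha,1/r,1/T)$ with $\poly(r,T,1/\varepsilon,1/\alpha,\log(1/\delta))$ conditional samples, and hence compute $\widehat\beta(x)$ with $\norm{\widehat\beta(x)}_2\le 1$ and $\ell_1$-residual within $\varepsilon$ of the true optimum. (2) Define a potential $\Phi(B)$ — the natural candidate is $\log\det$ of (a regularized version of) the information matrix, or equivalently the volume of the ellipsoid $\{z : z^\top \Sigma_B^{+} z \le 1\}$ — and show that whenever a newly added history $x$ has residual exceeding $\varepsilon$, it must have a component of magnitude $\gtrsim \varepsilon$ (after the $D^{-1/2}$ reweighting) outside the current span's "well-covered" directions, so $\Phi$ increases by a fixed multiplicative/additive amount. (3) Bound the total number of additions: since the true rank is $r$ and every $\Pr[F|x]$, reweighted, has bounded $\ell_2$ norm (the regularity gives $\Pr[f|x]/d(f) \le 1/\alpha$ type bounds, cf.\ \Cref{eq:prop1}), the potential lies in a bounded range, forcing termination after $O(r^2 T^3\log(1/\alpha\varepsilon))$ steps — I would reverse-engineer the exact polynomial from the per-step potential gain and the $\log(1/\alpha\varepsilon)$ dynamic range. (4) Union-bound the estimation failure probability over all $\le O(r^2 T^3\log(1/\alpha\varepsilon))$ rounds and over $t\in[T]$, and convert "small estimated residual on a fresh sample of $\poly$ histories" into "small true expected residual" via a Hoeffding/uniform-convergence argument, using that the residual is bounded.

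The hard part will be step (2)–(3): pinning down the right potential function and the right relationship between "expected $\ell_1$-residual exceeds $\varepsilon$" and "a single sampled history increases the potential by a guaranteed amount." The subtlety is that the residual is an \emph{average} over $x\sim\Pr[\cdot]$ while the greedy update adds \emph{one} $x$, so I need a Markov-type argument (if the average $\ell_1$-error is $>\varepsilon$ then a non-negligible-probability set of $x$'s has error $>\varepsilon/2$, and sampling hits one) combined with a guarantee that any such bad $x$, after projection away from the current subspace and $D^{-1/2}$-rescaling, still has norm bounded \emph{below} by something like $\varepsilon$ — this last point uses that $\ell_1$ error and the reweighted $\ell_2$ geometry are comparable up to the regularity parameter $\alpha$ and $|F|$-independent factors (this is exactly where the Cauchy–Schwarz / regularity bookkeeping of \Cref{prop:vperp} and \Cref{eq:prop1} gets reused). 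Getting the polynomial dependence to come out as $O(r^2 T^3\log(1/\alpha\varepsilon))$ rather than something worse will require being careful about how the per-step potential gain degrades as the basis grows, and about the $T^3$ factor, which I expect to enter as one factor of $T$ from the length of futures in the telescoping/Bayes bookkeeping and two more from the range of $\log\det$ once the $1/\alpha$ and $1/\varepsilon$ scales are accounted for across all $t$.
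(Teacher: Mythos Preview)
Your proposal is essentially correct and follows the same strategy as the paper: greedy counterexample-driven basis building, an elliptical-potential / $\log\det$ argument to bound the number of additions, and conditional-sample estimation of residuals with a final Hoeffding/Markov step. The paper's execution differs from yours in two places worth noting.

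First, the paper does not attack the $\ell_1$ residual directly. It introduces the $\ell_2$ proxy $L_{B,x}(\beta) = \EE_{f\sim d}\big[(\Pr[f|x]/d(f) - \sum_j\beta_j\Pr[f|b_j]/d(f))^2\big]$ (with $d$ the uniform mixture over $B\cup\{x\}$), shows $\ell_1 \le \sqrt{L}$ by Cauchy--Schwarz, and then uses a clean ridge-regression identity: $\min_\beta L^{(h)}_\lambda(\beta) \le \lambda\, P_{h+1}^\top \Sigma_h^{-1} P_{h+1}$ where $\Sigma_h = p_{\min}\lambda I + P_{1:h}P_{1:h}^\top$ and $p_{\min} = \alpha^T$. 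This is what makes the link from ``residual $>\varepsilon$'' to ``potential increases'' precise; the elliptical potential lemma then gives termination after $H = O(rT^2\log(1/\varepsilon\alpha))$ rounds. The regularizer $p_{\min} = \alpha^T$ is where the $T$-dependence you were looking for enters the $\log$ dynamic range. The norm constraint that falls out is $\|\beta\|_2 \le C = \sqrt{2Tr\log(1/\alpha\varepsilon)}$, not $1$; the paper then repeats each basis element $C^2$ times at the end, which is the source of the final $O(r^2T^3\log(1/\alpha\varepsilon))$ size.

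Second, your plan to reuse \Cref{prop:estimate-sum1}/\Cref{cor:estimatesigma} for estimation is slightly off: those results assume a basis with a priori bounded coefficients, which is precisely what you do not yet have while building $B$. The paper instead uses a more direct estimate (its \Cref{prop:approx-1}): sample futures $f$ from the mixture $d$, estimate each ratio $\Pr[f|b]/d(f)$ to relative accuracy (regularity makes this easy), and exploit the crude bound $\Pr[f|b_j]/d(f) \le 2h$ to get uniform convergence of $\widehat L$ over $\|\beta\|_2\le C$. This sidesteps any coefficient-norm assumption on the current $B$.
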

We believe the regularity assumption on the distribution can be removed using the ideas from \Cref{app:sub-1} but leave it as future work.

\subsection{Learning coefficients}
Towards this goal, we first show how to check given an observation sequence $x$ and set of observation sequences $B$ if there exists $\beta(x)$ such that \[
    \norm{\Pr[F|x]  - \Pr[F|B] \beta(x)}_1 \leq \varepsilon
\]

Instead of directly working with the $\ell_1$ loss, we first define an $\ell_2$ loss which we can use as its proxy. 

\begin{definition}
    For set of observation sequences $B = \{b_1, \ldots, b_h\}$, observation sequence $x$, column vector $\beta \in \R^{|B|}$, we define the $\ell_2$ approximation error as:
\[
    L_{B, x}(\beta) := \EE_{f\sim d}
    \bigg[\bigg( \frac{\Pr[f|x]}{d(f)} - \sum_{j = 1}^h \beta_j \frac{\Pr[f|b_j]}{d(f)} \bigg)^2\bigg], ,
    \] where $d$ is the mixture distribution for $B$:
    \[
      d(f) = \frac{1}{2}  \Pr[f|x] +
      \frac{1}{2h} \sum_{i=1}^{h} \Pr[f|b_i],
    \] When clear from context, we drop the $B,x$ superscript.
\end{definition}
We will use our ability to simulate relative probabilities for regular distributions (\Cref{prop:estimate-conditionals-1}) to build our guess for approximate basis.
\begin{proposition}
    \label{prop:approx-1}
Let $\Pr[\cdot]$ be an $\alpha$-regular distribution, $x$ be an observation sequence of length $t \in [T]$ and $B$ be any set of observation sequences of length $t$. Suppose $f_1, \ldots f_m$ are i.i.d. samples from $d$. Then, using $\poly(T, 1/\varepsilon, 1/\alpha, \log(1/\delta))$ many conditional samples, we can have estimates $\widehat \Pr[f_i|b]$ for
all $i\in [m]$ and $b \in B \cup \{x\}$ such that with probability $1-\delta$, \[
    \sup_{\|\beta\|_2 \leq C, \widehat L_{B,x}(\beta) \leq \widehat L_{B,x}(0)}  \big| L_{B,x}(\beta) - \widehat L_{B,x}(\beta)\big|
    \leq \varepsilon
    \] where the estimated $\ell_2$ error function $\widehat L_{B,x}$ is defined as \begin{align*}
        \widehat L_{B,x}(\beta) &:=\frac{1}{m} \sum_{i\in[m]}\bigg( \frac{\widehat \Pr[f_i|x]}{\widehat d(f_i)} 
  - \sum_{j=1}^h \beta_j \frac{\widehat \Pr[f_i|b_j]}{\widehat d(f_i)} \bigg)^2.
    \end{align*} and $\widehat d(f_i)$ is the mixture distribution
    defined with the estimated probabilities.
\end{proposition}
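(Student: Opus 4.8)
The idea is to split $|L_{B,x}(\beta) - \widehat L_{B,x}(\beta)|$ into a \emph{plug-in error}, from replacing the true conditionals $\Pr[f\mid\cdot]$ by their estimates, and a \emph{sampling error}, from replacing the expectation over $d$ by the empirical average over $f_1,\dots,f_m$. Write $r_f := \Pr[f\mid x]/d(f)$, $s_{fj} := \Pr[f\mid b_j]/d(f)$, $g_\beta(f) := (r_f - \sum_j \beta_j s_{fj})^2$, and let $\widehat r_i,\widehat s_{ij},\widehat g_\beta$ denote the analogues built from the estimated probabilities and the estimated mixture $\widehat d$. Introduce the intermediate quantity $\widetilde L(\beta) := \frac1m\sum_{i\in[m]} g_\beta(f_i)$ (empirical average with \emph{exact} probabilities), so that $|L_{B,x} - \widehat L_{B,x}| \le |L_{B,x} - \widetilde L| + |\widetilde L - \widehat L_{B,x}|$. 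Throughout I will use the deterministic bounds $r_f \le 2$ and $s_{fj}\le 2h$, immediate from $d(f)\ge\frac12\Pr[f\mid x]$ and $d(f)\ge\frac1{2h}\Pr[f\mid b_j]$; in particular $\|s_f\|_2\le 2h^{3/2}$, and on the ball $\|\beta\|_2\le C$ the predictor $\sum_j\beta_j s_{fj}$ lies in $[-2Ch^{3/2},2Ch^{3/2}]$ and $g_\beta(f)\in[0,O(C^2h^3)]$.

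For the plug-in term I would first produce the probability estimates. Because $\Pr[\cdot]$ is $\alpha$-regular, every one-step conditional is either $0$ or $\ge\alpha$; estimating each one-step conditional along each $f_i$ to additive accuracy $\alpha/4$ (a Hoeffding bound, union over the $O(mhT)$ such conditionals) identifies exactly which $\Pr[f_i\mid b]$ vanish --- in which case I set $\widehat\Pr[f_i\mid b]=0$ --- and for the rest I invoke \Cref{prop:estimate-conditionals-1} (with the regularity parameter) to get $\widehat\Pr[f_i\mid b] = (1\pm\gamma)\Pr[f_i\mid b]$ with $\poly(T,1/\gamma,1/\alpha,\log(\cdot))$ conditional samples each. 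As $\widehat d(f_i)$ is a nonnegative combination of these estimates it is likewise $(1\pm\gamma)d(f_i)$, whence $|\widehat r_i-r_i|\le O(\gamma)$ and $\|\widehat s_i-s_i\|_2\le O(\gamma h^{3/2})$. Writing the per-sample difference of squares as a product, $\widehat g_\beta(f_i)-g_\beta(f_i) = \mathrm{diff}_i\cdot\mathrm{sum}_i$ with $\mathrm{diff}_i := (\widehat r_i-r_i)-\sum_j\beta_j(\widehat s_{ij}-s_{ij})$ and $\mathrm{sum}_i := (\widehat r_i-\sum_j\beta_j\widehat s_{ij})+(r_i-\sum_j\beta_j s_{ij})$, Cauchy--Schwarz gives $|\widetilde L(\beta)-\widehat L_{B,x}(\beta)| \le \big(\frac1m\sum_i\mathrm{diff}_i^2\big)^{1/2}\big(\frac1m\sum_i\mathrm{sum}_i^2\big)^{1/2}$. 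On $\|\beta\|_2\le C$ the first factor is $O(\gamma Ch^{3/2})$. The second factor is where the constraint $\widehat L_{B,x}(\beta)\le\widehat L_{B,x}(0)$ is needed: from $\mathrm{sum}_i = 2(\widehat r_i-\sum_j\beta_j\widehat s_{ij})-\mathrm{diff}_i$ one gets $\frac1m\sum_i\mathrm{sum}_i^2 \le 8\widehat L_{B,x}(\beta)+O(\gamma^2C^2h^3) \le 8\widehat L_{B,x}(0)+O(1) = O(1)$, using $\widehat L_{B,x}(0)=\frac1m\sum_i\widehat r_i^2\le 9$ and $\gamma Ch^{3/2}\le 1$. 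Hence $|\widetilde L(\beta)-\widehat L_{B,x}(\beta)| \le O(\gamma Ch^{3/2})$ uniformly over the constrained set, and choosing $\gamma=\Theta(\varepsilon/(Ch^{3/2}))$ bounds this by $\varepsilon/2$.

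For the sampling term I would apply a standard uniform-convergence bound over the whole ball $\{\beta:\|\beta\|_2\le C\}$, which contains the constrained set. The linear class $f\mapsto r_f-\sum_j\beta_j s_{fj}$ has empirical Rademacher complexity $O(C\max_f\|s_f\|_2/\sqrt m)=O(Ch^{3/2}/\sqrt m)$; composing with $u\mapsto u^2$, which is $O(Ch^{3/2})$-Lipschitz on the range in play, and using the contraction lemma yields Rademacher complexity $O(C^2h^3/\sqrt m)$ for $\{g_\beta\}$, and since each $g_\beta$ is bounded by $O(C^2h^3)$ the usual symmetrization/McDiarmid bound gives $\sup_{\|\beta\|_2\le C}|\widetilde L(\beta)-L_{B,x}(\beta)| \le O\big(C^2h^3\sqrt{\log(1/\delta)/m}\big)$ with probability $1-\delta$; taking $m=\Theta(C^4h^6\log(1/\delta)/\varepsilon^2)$ makes this $\le\varepsilon/2$. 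Adding the two estimates and a final union bound over all the randomness proves the statement, with the number of conditional samples polynomial in $|B|,T,1/\varepsilon,1/\alpha$ and $\log(1/\delta)$ (hence of the claimed form once $|B|=\poly(r,T)$). The step I expect to demand the most care is this plug-in argument: one cannot control the residuals $\mathrm{sum}_i$ pointwise, only in aggregate, which is precisely why the feasible set in the supremum must be cut down by the constraint $\widehat L_{B,x}(\beta)\le\widehat L_{B,x}(0)$, and a little bookkeeping is also needed to handle zero one-step conditionals under $\alpha$-regularity.
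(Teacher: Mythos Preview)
Your proposal is correct and follows essentially the same route as the paper: the same intermediate $\widetilde L=\overline L$, the same split into a sampling term (handled by a standard uniform-convergence/Rademacher argument over $\{\|\beta\|_2\le C\}$) and a plug-in term (handled by the difference-of-squares identity together with Cauchy--Schwarz, using the constraint $\widehat L_{B,x}(\beta)\le \widehat L_{B,x}(0)$ to bound the ``sum'' factor via $\widehat L_{B,x}(0)\le O(1)$). The only notable refinement in the paper is the sharper feature bound $\sum_{j}\big(\Pr[f\mid b_j]/d(f)\big)^2\le 4h^2$ (from $\sum_j \Pr[f\mid b_j]\le 2h\,d(f)$), which gives $\|s_f\|_2\le 2h$ rather than your coordinate-wise $2h^{3/2}$ and hence slightly tighter polynomial dependence; this does not affect the correctness of your argument.
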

\begin{proof}
    For notational convenience, we will drop the $B,x$ superscript and simply write $L(\cdot)$ in the proof.
    Using $\poly(1/\gamma, 1/\alpha, T, \log(1/\delta))$ conditional samples (\Cref{prop:estimate-conditionals-1}), with probability $1-\delta$, we can have estimates $\widehat \Pr[f_i|b]$ such that, for
    all $i\in [m]$ and $b \in B \cup \{x\}$,
  \[
  |\Pr[f_i|b]-\widehat \Pr[f_i|b]| \leq \gamma \Pr[f_i|b].
  \]
    Define:
    \begin{align*}
  \overline L(\beta) &:= \frac{1}{m}\sum_{i\in[m]}\bigg( \frac{ \Pr[f_i|x]}{ d(f_i)} 
  - \sum_{j=1}^h \beta_j \frac{ \Pr[f_i|b_j]}{ d(f_i)} \bigg)^2.
    \end{align*}
    We have that:
    \[
    \big| L(\beta) - \widehat L(\beta)\big|
     \leq \big| L(\beta) - \overline L(\beta)\big|
     +\big| \overline L(\beta) - \widehat L(\beta)\big|.
     \]
     We will handle the two terms above separately.
    
    To bound $| L(\beta) - \overline L(\beta)|$, let
    us first show that for any observation sequence $f$:
    \[
    \sum_{j=1}^h \bigg(\frac{ \Pr[f|b_j]}{ d(f)}\bigg)^2 \leq 4h^2
    \]
    To see this, observe that:
    \[
    \max_{i\in[m],j\in[h]}\left| \frac{ \Pr[f_i|b_j]}{ d(f_i)}\right|
      \leq 2h,
    \]
    which implies:
    \[
    \sum_{j=1}^h \bigg(\frac{ \Pr[f|b_j]}{ d(f)}\bigg)^2\leq 
    2h \sum_{j=1}^h \frac{ \Pr[f|b_j]}{ d(f)} =4h^2.
    \]
    Now using that the square loss is a $2$-smooth function and a standard uniform convergence argument, we have that
    \[
    \sup_{\|\beta\|_2 \leq C}  \big| L(\beta) - \widehat L(\beta)\big|
    \leq 16 (C h+1) \sqrt{\frac{\log(1/\delta)}{m}},
    \]
    holds with probability greater than $1-\delta$.

    For the second term, define
    \[
    \Delta(f_i) =\frac{ \Pr[f_i|x]}{ d(f_i)} 
    - \sum_{j=1}^h \beta_j \frac{ \Pr[f_i|b_j]}{ d(f_i)}
    -\bigg(
      \frac{\widehat \Pr[f_i|x]}{\widehat d(f_i)} 
    - \sum_{j=1}^h \beta_j \frac{\widehat \Pr[f_i|b_j]}{\widehat d(f_i)}\bigg).
    \]
    Let us first show that, for all $i\in[m]$,
    \[
     | \Delta(f_i)| \leq 4\gamma h (1+C\sqrt{h}).
    \]
    We have that:
    \begin{align*}
      |\Delta(f_i)|
    & \leq \left| \frac{ \Pr[f_i|x]}{ d(f_i)} 
    -\frac{\widehat \Pr[f_i|x]}{\widehat d(f_i)}\right|
    + \sum_{j\in[h]} \beta_j
    \left| \frac{ \Pr[f_i|b_j]}{ d(f_i)} 
    -\frac{\widehat \Pr[f_i|b_j]}{\widehat d(f_i)}\right|\\
    &\leq (1+\|\beta\|_1) \max_{b \in B \cup \{x\}}
    \left| \frac{ \Pr[f_i|b]}{ d(f_i)} 
    -\frac{\widehat \Pr[f_i|b]}{\widehat d(f_i)}\right|\\
    & \leq
    (1+C\sqrt{h}) \max_{b \in B \cup \{x\}}
    \left| \frac{ \Pr[f_i|b]}{ d(f_i)} 
    -\frac{\widehat \Pr[f_i|b]}{\widehat d(f_i)}\right|.  
    \end{align*}
    The claim would follow provided we have that, for all $i\in[m]$ and all $b \in B \cup \{x\}$,
    \begin{equation}\label{eq:second_term}
    \left| \frac{ \Pr[f_i|b]}{ d(f_i)} 
    -\frac{\widehat \Pr[f_i|b]}{\widehat d(f_i)}\right| \leq
    4\gamma h, 
    \end{equation}
    To see this, using that
    $\gamma\leq 1/2$, we have the following upper bound
    that:
    \begin{align*}
    \frac{\widehat \Pr[f_i|b]}{\widehat d(f_i)}
    \leq \frac{(1+\gamma) \Pr[f_i|b]}{(1-\gamma) d(f_i)}
    \leq \frac{(1+4\gamma) \Pr[f_i|b]}{ d(f_i)}
    \leq \frac{\Pr[f_i|b]}{ d(f_i)} + 4\gamma h
    \end{align*}
    and the lower bound:
    \begin{align*}
    \frac{\widehat \Pr[f_i|b]}{\widehat d(f_i)}
    \geq \frac{(1-\gamma) \Pr[f_i|b]}{(1+\gamma) d(f_i)}
    \geq \frac{(1-2\gamma) \Pr[f_i|b]}{ d(f_i)}
    \geq \frac{\Pr[f_i|b]}{ d(f_i)} - 2\gamma h.
    \end{align*}
    This completes the proof of \Cref{eq:second_term}.
    
    Now consider any $\beta$ such that $\widehat L(\beta) \leq \widehat
    L(0)$. Also, it is straightforward that $\widehat L(0)\leq
    4$. This implies that:
    \begin{align*}
     \overline L(\beta) - \widehat L(\beta)
    & = 
    \frac{2}{m}\sum_{i\in[m]} \bigg(
      \frac{\widehat \Pr[f_i|x]}{\widehat d(f_i)} 
    - \sum_{j=1}^h \beta_j \frac{\widehat \Pr[f_i|b_j]}{\widehat d(f_i)}\bigg) \Delta(f_i)
    +\frac{1}{m}\sum_{i\in[m]} \Delta(f_i)^2
    \\
      & \leq
        2\sqrt{\widehat L(\beta) \cdot \frac{1}{m}\sum_{i\in[m]} \Delta(f_i)^2}
    +\frac{1}{m}\sum_{i\in[m]} \Delta(f_i)^2\\
    & \leq
        4\sqrt{ \frac{1}{m}\sum_{i\in[m]} \Delta(f_i)^2}
    +\frac{1}{m}\sum_{i\in[m]} \Delta(f_i)^2\\
      & \leq
        4\cdot 4\gamma h (1+C\sqrt{h}) 
    +16\gamma^2 h^2 (1+C\sqrt{h})^2\\
      & \leq
    16\gamma h (1+\gamma h) (1+C\sqrt{h})^2.
    \end{align*} where the first step follows from $a^2 - b^2 = 2b(a-b) + (a-b)^2$ and the second step from Cauchy–Schwarz inequality.
    Combining the bounds for the first and second terms completes the proof.
    \end{proof}

\subsection{Algorithm}
We now ready to present our algorithm. The user furnishes $\varepsilon$, the accuracy with which approximate basis is to be learned; and $\delta$, a confidence parameter. The parameter $n$ and $H$ depend on the input. 

\begin{algorithm}[h!]
    \SetArgSty{textrm}
    \caption{Learning approximate basis using conditional samples.}
    \label{alg:approx-1}
    \SetAlgoLined
    \For{round $i = 1,2, \ldots, H$}{
        Sample $n$ samples $x = (x_1,\ldots, x_t)$ of length $t$ from distribution $\Pr[\cdot]$.\\
        Check using \Cref{prop:approx-1} if any of the samples $x$ above is a ``counterexample'' i.e. satisfies \[
          \min_{\beta \in \R^h, \norm{\beta}_2 \leq C} \widehat L_{B, x}(\beta) > \frac{\varepsilon^2}{8}
        \]\\
        \If{we find such a counterexample $x = (x_1, \ldots, x_t)$}{
            Add $x$ to $B$
        }
        \Else{
            return $B$
        }
    }
\end{algorithm}

By Hoeffding's inequality, it is clear that if this algorithm ends then we have found an approximate basis. We know show that with high probability, this algorithm ends in small number of rounds. 
\begin{proposition}\label{prop:approx-3}
    Let $\Pr[\cdot]$ be an $\alpha$-regular distribution over observation sequences of length $T$ with rank $r$. Fix $0 \leq \varepsilon \leq \alpha^2/T^2 r^2$. Let $C = \sqrt{2Tr \log(1/\alpha \varepsilon)}$ and let $H$ be any natural number provided $H\geq 8 rT^2 \log(1/\varepsilon \alpha)$. Consider any sequence of observation sequences $b_1, b_2, \ldots, b_H$. Let $B_h = \{b_1,\ldots, b_h\}$. Then, there exists $h \leq H$ such that:\[
        \min_{\beta \in \R^h, \norm{\beta}_2 \leq C} L_{B_h, b_{h+1}}(\beta) \leq \varepsilon
        \]
\end{proposition}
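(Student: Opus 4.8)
The plan is to reduce the statement to a purely geometric fact about the probability vectors $w_i := \Pr[F\mid b_i]$ and then dispatch that fact with an elliptical–potential (volumetric) argument. First I would note that, since $\Pr[\cdot]$ has rank $r$, all vectors $\Pr[F\mid x]$ with $x\in\obs^t$ lie in a common subspace $W$ of dimension $r'\le r$ (here $F=\obs^{T-t}$ and $t$ is the common length of the $b_i$), so each $w_i\in W$ with $\norm{w_i}_2\le\norm{w_i}_1=1$. Next, $\alpha$-regularity gives that every nonzero entry of any $\Pr[F\mid x]$ is at least $\alpha^{T-t}$, hence the mixture $d_h$ appearing in $L_{B_h,b_{h+1}}$ satisfies $d_h(f)\ge\alpha^{T-t}/(2H)$ on its support (and the summand of $L$ vanishes off the support), so $L_{B_h,b_{h+1}}(\beta)\le\tfrac{2H}{\alpha^{T-t}}\norm{w_{h+1}-W_h\beta}_2^2$ where $W_h=[\,w_1\mid\cdots\mid w_h\,]$. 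Setting $\eta^2:=\varepsilon\alpha^{T-t}/(2H)$ and $P_h:=\cbr{W_h\beta:\norm{\beta}_2\le C}$, it therefore suffices to produce one $h\le H$ with $\mathrm{dist}_2(w_{h+1},P_h)\le\eta$, since that already yields $\min_{\norm{\beta}_2\le C}L_{B_h,b_{h+1}}(\beta)\le\varepsilon$.

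Assume, for contradiction, that $\mathrm{dist}_2(w_{h+1},P_h)>\eta$ for every $h\le H$. I would track the potential $\Phi_h:=\log\det(M_h|_W)$, where $M_h:=\lambda I+\sum_{i\le h}w_iw_i^\top$ and $\lambda:=\eta^2/2$. The crux is the estimate $w_{h+1}^\top M_h^{-1}w_{h+1}\ge 2$, which follows from the variational identity $w^\top M_h^{-1}w=\min_{\beta}\cbr{\norm{\beta}_2^2+\lambda^{-1}\norm{w-W_h\beta}_2^2}$ (a consequence of the push-through identity for $(\lambda I+W_hW_h^\top)^{-1}$): if the minimizing $\beta^\dagger$ has $\norm{\beta^\dagger}_2>C$ then the quantity exceeds $C^2\ge 2$, and otherwise $W_h\beta^\dagger\in P_h$, so $\norm{w_{h+1}-W_h\beta^\dagger}_2>\eta$ and the quantity exceeds $\eta^2/\lambda=2$. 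By the matrix determinant lemma each such round multiplies $\det(M_h|_W)$ by at least $3$, so $\Phi_H-\Phi_0\ge H\log 3$; on the other hand $\tr(M_H|_W)\le r'\lambda+H$ forces, via AM–GM, $\Phi_H-\Phi_0\le r'\log\bigl(1+H/(r'\lambda)\bigr)\le r\log\bigl(1+4H^2/(\varepsilon\alpha^{T-t})\bigr)$. Under the hypotheses $\varepsilon\le\alpha^2/(T^2r^2)$ and $H\ge 8rT^2\log(1/\varepsilon\alpha)$ the right-hand side is at most $O\bigl(r(\log H+T\log(1/\varepsilon\alpha))\bigr)$, which is strictly below $H\log 3$ — a contradiction. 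Hence some $h\le H$ satisfies $\mathrm{dist}_2(w_{h+1},P_h)\le\eta$, which is exactly what is needed.

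The main obstacle — and the reason this is not a one-line dimension count — is the implication $\mathrm{dist}_2(w_{h+1},P_h)>\eta\Rightarrow w_{h+1}^\top M_h^{-1}w_{h+1}\ge 2$. A naive approach through the minimum-norm \emph{exact} representation of $w_{h+1}$ in $\spn(w_1,\dots,w_h)$ breaks down when the Gram matrix of $w_1,\dots,w_h$ is badly conditioned: the required coefficients can blow up even though $w_{h+1}$ is close to $P_h$. The variational identity circumvents this by comparing directly against the best norm-$C$ \emph{approximation}, making the bound insensitive to conditioning and reducing everything to a standard elliptical-potential computation inside the $r'$-dimensional space $W$ (which is essential — running the same potential in the ambient $\obs^{T-t}$-dimensional space would give an exponential bound). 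The only other points needing care are Step 1, where $\alpha$-regularity keeps $1/d_h$ polynomially bounded so that the effective gap $\eta$ does not degrade uncontrollably, and the trivial check that $C$ is large enough that $C^2\ge 2$, which the stated $C=\sqrt{2Tr\log(1/\alpha\varepsilon)}$ amply satisfies.
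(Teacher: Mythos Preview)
Your argument is correct and rests on the same mechanism as the paper's proof: an elliptical-potential bound applied inside the $r$-dimensional subspace spanned by the vectors $\Pr[F\mid b_i]$, with $\alpha$-regularity supplying the lower bound $p_{\min}\approx\alpha^{T}$ that only enters logarithmically. The variational identity you invoke, $w^\top(\lambda I+W_hW_h^\top)^{-1}w=\min_\beta\{\|\beta\|^2+\lambda^{-1}\|w-W_h\beta\|^2\}$, is precisely the ridge-regression computation the paper carries out in closed form, and your contradiction argument (``each step multiplies the determinant by $\ge 3$'') is the contrapositive of the paper's averaging bound $\tfrac{1}{H}\sum_h\log(1+P_{h+1}^\top\Sigma_h^{-1}P_{h+1})\le\tfrac{r}{H}\log(1+H/(p_{\min}\lambda))$.

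The one substantive packaging difference is that you first upper-bound the weighted loss $L_{B_h,b_{h+1}}(\beta)$ by $\tfrac{2H}{\alpha^{T-t}}\|w_{h+1}-W_h\beta\|_2^2$ and then run the potential argument on the \emph{unweighted} $\ell_2$ distance, whereas the paper keeps the weighting throughout: it works with $\bar P_j=D^{-1/2}P_j$ and shows directly that $\min_\beta L^{(h)}_\lambda(\beta)=\lambda\bar P_{h+1}^\top\bar\Sigma_h^{-1}\bar P_{h+1}\le\lambda P_{h+1}^\top\Sigma_h^{-1}P_{h+1}$ via the inequality $\lambda D\succeq\lambda p_{\min}I$. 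Your route pays the $2H/\alpha^{T-t}$ factor up front (shrinking $\eta$), the paper pays the $p_{\min}$ factor inside the regularizer; after taking logs both land on the same $O(rT\log(1/\alpha\varepsilon))$ bound, so the difference is cosmetic. Your final ``$H\log 3>r\log(1+4H^2/(\varepsilon\alpha^{T-t}))$'' step is stated with a big-$O$; the precise constant $8$ in the hypothesis is calibrated to the paper's slightly tighter accounting, so if you want exact constants you would need to track them more carefully, but the argument is sound.
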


\begin{proof}
 For $\beta \in \R^h$, define:
\[
L^{(h)}_\lambda(\beta) :=L_{B_{h}, b_{h+1}}(\beta)
+\lambda \sum_{j=1}^h \beta_j^2
\] and $d^{(h)}$ to be the mixture distribution corresponding to $B_h$.
Define:
\[
\overline \Pr[f|b_j] := \frac{\Pr[f|b_j]}{\sqrt{d^{(h)}(f)}} .
\]
It will be helpful to overload notation and view $P_{j} = \Pr[F|b_j]$ as
a vector of length $|F|$ and $P_{1:h} = \Pr[F|B_h]$ as a matrix of size
$|F|\times h$, whose columns are
$ P_1, \ldots P_h$. We overload notation analogously for the vector $\overline
P_j$ and the matrix $\overline P_{1:h}$.
Also, let $D^{(h)}$ be a diagonal
matrix of size $|F|\times |F|$, whose diagonal entries are
$d^{(h)}$, where we drop the $h$ superscript when clear
from context. With our notation, we have that
$\overline P _{h+1}=D^{-1/2} P_{h+1}$ and
$\overline P_{1:h}=D^{-1/2} P_{1:h}$. We will write $P$ and $\overline P$
in lieu of $P_{1:h}$ and $\overline P_{1:h}$, when clear from
context. 

We have:
\begin{align*}
  L^{(h)}(\beta) &= \EE_{x\sim d^{(h)} }
\bigg[\bigg( \frac{ \Pr[f|b_{h+1}]}{ d^{(h)}(f)} 
- \sum_{j=1}^h \beta_j \frac{ \Pr[f|b_j]}{ d^{(h)}(f)} \bigg)^2\bigg]\\
&=(D^{-1} P_{h+1}- D^{-1}P_{1:h} \beta) ^\top D (D^{-1} P_{h+1}-  D^{-1}P_{1:h} \beta)\\
&= \|\overline P_{h+1}- \overline P \beta\|^2,
\end{align*}
where we have used our matrix notation. Let us consider the following ridge regression estimator:
\begin{align*}
\beta^{(h)}_\lambda &= 
\argmin_{\beta \in \R^d} \bigg(L^{(h)}(\beta) + \lambda \|\beta\|^2\bigg)\\
 &=\argmin_{\beta \in \R^d} \bigg(  \|\overline P_{h+1}- \overline P \beta\|^2
+ \lambda \|\beta\|^2\bigg)\\
 &=(\overline P^\top \overline P
+\lambda I)^{-1}\overline P^\top  \overline P_{h+1}.
\end{align*}
For $p_{\textrm{min}} = \alpha^T$, define $\Sigma_h $ to be the $|F|\times |F|$ sized matrix,
as follows
\[
\Sigma_h := p_{\textrm{min}} \lambda  I + P_{1:h} P_{1:h}^\top
= p_{\textrm{min}} \lambda I + \sum_{j=1}^h P_j P _j^\top.
\]
We will now show that:
\begin{equation}\label{eq:loss_bound}
\min_{\beta \in \R^h} L^{(h)}_\lambda(\beta)
\leq \lambda P_{h+1}^\top \Sigma_h^{-1}  P_{h+1}.
\end{equation}
Define:
\[
\overline \Sigma_h := \lambda I + \overline  P_{1:h} \overline P_{1:h}^\top.
\]
One can verify that:
\begin{align*}
\overline P \beta_\lambda^{(h)}
= \overline P (\overline P^\top \overline P
+\lambda I)^{-1}\overline P^\top  \overline P_{h+1}
=\overline P \overline P^\top (\overline P \overline P^\top 
+\lambda I)^{-1} \overline P_{h+1}
=\overline P \overline P^\top \overline \Sigma_h^{-1} \overline P_{h+1},
\end{align*}
and that:
\[
\|\beta^{(h)}_\lambda\|^2 = \overline P_{h+1}^\top \overline \Sigma_h^{-1}\overline P \overline P^\top
\overline \Sigma_h^{-1}\overline P_{h+1}
= \overline P_{h+1}^\top \overline \Sigma_h^{-1} (\overline\Sigma_h - \lambda I)
\overline \Sigma_h^{-1}\overline P_{h+1}
= \overline P_{h+1}^\top \overline \Sigma_h^{-1}\overline P_{h+1}
-\lambda \overline P_{h+1}^\top \overline \Sigma_h^{-2} \overline P_{h+1}.
\]
Using this, we have:
\begin{align*}
\min_{\beta \in \R^h} L^{(h)}_\lambda(\beta)
&= \|\overline P_{h+1}- \overline P \beta^{(h)}_\lambda\|^2+\lambda \|\beta^{(h)}_\lambda\|^2\\
&= \bigg\| \bigg(I-\overline P \overline P^\top \overline \Sigma_h^{-1}\bigg)
  \overline P_{h+1}\bigg\|^2
+\lambda \|\beta^{(h)}_\lambda\|^2\\
&= \bigg\| \bigg(\overline P \overline P^\top 
+\lambda I-\overline P \overline P^\top \bigg)
\overline \Sigma_h^{-1}  \overline P_{h+1}\bigg\|^2
+\lambda \|\beta^{(h)}_\lambda\|^2\\
&=\lambda^2 \overline P_{h+1}^\top\overline \Sigma_h^{-2}  \overline P_{h+1}
+\lambda \|\beta^{(h)}_\lambda\|^2\\
&= \lambda \overline P_{h+1}^\top \overline \Sigma_h^{-1}  \overline P_{h+1}.
\end{align*}
By our assumption on $p_{\textrm{min}}$, we have that
$d^{(h)}(f) \geq p_{\textrm{min}}$, which implies $D \succeq \frac{1}{p_{\textrm{min}}}  I$.
Using this, we have
\begin{align*}
\lambda \overline P_{h+1}^\top \overline \Sigma_h^{-1}  \overline
  P_{h+1}  
&= \lambda (D^{-1/2} P_{h+1})^\top (\lambda I +  D^{-1/2} P P^\top  D^{-1/2})^{-1}  D^{-1/2} P_{h+1}\\
&= \lambda P_{h+1} (\lambda D +  P P^\top )^{-1}  P_{h+1}\\
& \leq \lambda P_{h+1} (p_{\textrm{min}} \lambda I +  P P^\top)^{-1}  P_{h+1}\\
& = \lambda P_{h+1}^\top \Sigma_h^{-1}  P_{h+1},
\end{align*}
where we have used the definition of $\Sigma_h$ in the last step. This
proves the claim in~\Cref{eq:loss_bound}.

This implies:
\[
\min_{h\leq H} \min_{\beta \in \R^h} L^{(h)}_\lambda(\beta)\leq
  \frac{1}{H} \sum_{h=1}^H \min_{\beta \in \R^h} L^{(h)}_\lambda(\beta) \leq 
\frac{\lambda}{H} \sum_{h=1}^H P_{h+1}^\top \Sigma_h^{-1}  P_{h+1}.
\]
Using that $\| P_{h+1}\|^2\leq 1 $ (since $P_{h+1}$ is a probability distribution), 
Lemma~\ref{lemma:elliptical} implies:
\[
  \frac{1}{H} \sum_{h=1}^H
  \log(1+ P_{h+1}^\top \Sigma_h^{-1}  P_{h+1})
\leq \frac{r}{H} \log\big(1+H /(p_{\textrm{min}}\lambda)\big) ,
\]
which implies there exists an $h \leq H$ such that:
\[
\log(1+ P_{h+1}^\top \Sigma_h^{-1}  P_{h+1})
\leq \frac{r}{H} \log\big(1+H /(p_{\textrm{min}}\lambda) \big) .
\]
For $H\geq 4 r \log(1+ H /(p_{\textrm{min}}\lambda))$, exponentiating leads to:
\[
P_{h+1}^\top \Sigma_h^{-1}  P_{h+1}
\leq \exp\bigg(\frac{r}{H} \log\big(1+H /(p_{\textrm{min}}\lambda) \big) \bigg)-1
\leq 2 \frac{r}{H} \log\big(1+H /(p_{\textrm{min}}\lambda)\big)
\]
where the last step follows due to our choice of $H$. This shows that there exists an $h\leq H$ such that:
\[
\min_{\beta \in \R^h} L^{(h)}_\lambda(\beta) \leq
\frac{\lambda r}{H } 
\log\big(1+ H /(p_{\textrm{min}}\lambda) \big) .
\] Choosing $\lambda = \varepsilon^2$, setting $H = 8 rT^2 \log(1/\varepsilon \alpha)$ suffices to satisfy our assumptions. This implies we get the minimum loss is achieved at $\beta$ whose norm is bounded by\[
    \norm{\beta}_2 \leq C := \sqrt{2Tr \log\rbr{\frac{1}{\alpha \varepsilon}}}
\]
and therefore for $\varepsilon < \alpha^2/T^2r^2$, we get 
\[
    \min_{\beta \in \R^h, \norm{\beta} \leq C} L^{(h)}(\beta) \leq \varepsilon
\]
\end{proof}
The following is a variant of the Elliptical Potential Lemma, from
the analysis of linear
 bandits~\citep{dani2008stochastic}.
 
 \begin{lemma}[Elliptical potential]\label{lemma:elliptical}
   Consider a sequence of vectors $\{x_1,\dots, x_{T}\}$ where, 
   for all $i\in [T]$,
  each $x_i \in \Vcal$, where $\Vcal$ is a $d$-dimensional
  subspace of a Hilbert space, and  $\|x_i\|\leq B$.  Let $\lambda \in \mathbb{R}^+$.
   Denote 
   $\Sigma_t = \Sigma_0 +
   \sum_{i=1}^{t} x_i x_i^{\top}$.
   We have that:
 \begin{align*}
 \min_{i\in[T]} \ln\left( 1 + x_i^\top \Sigma_{i}^{-1} x_i  \right) \leq
 \frac{1}{T}\sum_{i=1}^{T} \ln\left( 1 + x_i^\top \Sigma_{i}^{-1} x_i  \right)
   = \frac{1}{T} \ln\frac{\det\left( \Sigma_T \right)  }{ \det(\lambda I)}
\leq \frac{d}{T} \log \bigg(1+\frac{TB^2}{d\lambda}\bigg)
 \end{align*}
 \end{lemma}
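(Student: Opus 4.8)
The plan is to verify the three parts of the displayed chain in turn. The leftmost inequality is immediate, since a minimum over $i\in[T]$ is at most the average over $i\in[T]$; so the content is in the middle identity and the rightmost inequality, and one first sets $\Sigma_0 := \lambda I$ as the statement implicitly does.

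For the middle identity I would use the matrix determinant lemma. Writing $\Sigma_i=\Sigma_{i-1}+x_ix_i^{\top}$, one has $\det(\Sigma_i)=\det(\Sigma_{i-1})\,(1+x_i^{\top}\Sigma_{i-1}^{-1}x_i)$, hence $\ln\det(\Sigma_i)-\ln\det(\Sigma_{i-1})=\ln(1+x_i^{\top}\Sigma_{i-1}^{-1}x_i)$, and summing over $i=1,\dots,T$ telescopes to $\ln\!\big(\det(\Sigma_T)/\det(\lambda I)\big)=\sum_{i=1}^T\ln(1+x_i^{\top}\Sigma_{i-1}^{-1}x_i)$. The one subtlety is that we work in a Hilbert space, so I would interpret every determinant as the determinant of the restriction of the operator to the $d$-dimensional subspace $\mathcal V$: since each $x_i\in\mathcal V$, the operator $\Sigma_T-\lambda I=\sum_i x_ix_i^{\top}$ is supported on $\mathcal V$, so $\Sigma_T$ preserves $\mathcal V$ and acts as $\lambda\cdot\mathrm{id}$ on $\mathcal V^{\perp}$; restricting to $\mathcal V$ makes all of these genuine $d\times d$ determinants, and this is precisely why the final bound involves $d$ rather than the ambient dimension. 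If one wants the displayed form with $\Sigma_i^{-1}$ literally rather than $\Sigma_{i-1}^{-1}$, note $\Sigma_i\succeq\Sigma_{i-1}$ gives $x_i^{\top}\Sigma_i^{-1}x_i\le x_i^{\top}\Sigma_{i-1}^{-1}x_i$, so the middle ``$=$'' becomes a ``$\le$'', which is all the chain needs; alternatively, under the indexing convention $\Sigma_i=\Sigma_0+\sum_{j<i}x_jx_j^{\top}$ that is actually used when the lemma is invoked in \Cref{prop:approx-3}, the equality holds as written.

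For the rightmost inequality I would pass to eigenvalues: letting $\mu_1,\dots,\mu_d$ denote the eigenvalues of $\Sigma_T$ restricted to $\mathcal V$, we have $\det(\Sigma_T)/\det(\lambda I)=\prod_{j=1}^d \mu_j/\lambda$, and AM--GM gives $\prod_{j=1}^d\mu_j\le\big(\tfrac1d\sum_{j=1}^d\mu_j\big)^d=\big(\tfrac1d\,\mathrm{tr}(\Sigma_T|_{\mathcal V})\big)^d$. Since $\mathrm{tr}(\Sigma_T|_{\mathcal V})=d\lambda+\sum_{i=1}^T\|x_i\|^2\le d\lambda+TB^2$, this yields $\det(\Sigma_T)/\det(\lambda I)\le\big(1+TB^2/(d\lambda)\big)^d$; applying $\tfrac1T\ln(\cdot)$ gives the claimed $\tfrac dT\log\big(1+TB^2/(d\lambda)\big)$.

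The main obstacle here is not the mathematics but the bookkeeping around the Hilbert-space statement: one has to make the ``restrict everything to $\mathcal V$'' step precise so that the determinants are finite and nonzero, and keep track of the $\Sigma_{i-1}$-versus-$\Sigma_i$ indexing. Once those conventions are fixed, each step is a one-line computation, so I would not expect any real difficulty beyond writing it carefully.
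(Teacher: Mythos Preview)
The paper does not prove this lemma; it states it as a known result from the linear bandits literature and cites \citep{dani2008stochastic}. Your argument is correct and is exactly the standard proof: the matrix determinant lemma gives the telescoping identity, and AM--GM on the eigenvalues of $\Sigma_T|_{\mathcal V}$ gives the final bound. You also correctly flag the two bookkeeping points (the $\Sigma_{i-1}^{-1}$ versus $\Sigma_i^{-1}$ indexing, and the restriction to the $d$-dimensional subspace $\mathcal V$) that the paper's statement leaves implicit; in particular, the way the lemma is actually invoked in \Cref{prop:approx-3} uses the $\Sigma_{i-1}^{-1}$ convention, matching your derivation.
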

We now finish the proof of \Cref{lemma:approx-1} by adding the missing details.
We first show that $\ell_1$ loss is upper bounded by our $\ell_2$ proxy loss.
\begin{proposition}
    \label{prop:approx-2}
    Let  $x$ be an observation sequence of length $t \in [T]$ and $B = \{b_1, \ldots, b_h\}$ be any set of observation sequences of length $t$. We have that:
    \[
    \norm{\Pr[F|x] - \sum_{j=1}^h \beta_j \Pr[F|b_j]}_1 \leq \sqrt{L_{B, x}(\beta)}.
    \]  
    \end{proposition}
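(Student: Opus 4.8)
The plan is a one-line Cauchy--Schwarz (equivalently Jensen) argument, rewriting the $\ell_1$ norm as an expectation against the mixture $d$ and then converting to the $\ell_2$ quantity that defines $L_{B,x}$. Write $g(f) := \Pr[f|x] - \sum_{j=1}^h \beta_j \Pr[f|b_j]$, so that the left-hand side is $\sum_{f \in F} |g(f)|$.

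First I would dispense with the support issue. Whenever $d(f) = 0$ we have, by definition of $d$, that $\Pr[f|x] = 0$ and $\Pr[f|b_i] = 0$ for every $i \in [h]$, hence $g(f) = 0$; so those futures contribute nothing to either side and we may restrict all sums to $\mathrm{dom} := \{f \in F : d(f) > 0\}$. Then I would write
\[
  \norm{\Pr[F|x] - \sum_{j=1}^h \beta_j \Pr[F|b_j]}_1 = \sum_{f \in \mathrm{dom}} |g(f)| = \sum_{f \in \mathrm{dom}} d(f)\cdot \frac{|g(f)|}{d(f)} = \EE_{f \sim d}\sbr{\frac{|g(f)|}{d(f)}},
\]
using that $d$ is a probability distribution on $\mathrm{dom}$.

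Next I would apply Cauchy--Schwarz (or Jensen's inequality applied to the convex function $u \mapsto u^2$) to this expectation:
\[
  \EE_{f \sim d}\sbr{\frac{|g(f)|}{d(f)}} \leq \sqrt{\EE_{f \sim d}\sbr{\rbr{\frac{g(f)}{d(f)}}^2}} = \sqrt{L_{B,x}(\beta)},
\]
where the final equality is exactly the definition of $L_{B,x}(\beta)$ with $g(f)/d(f) = \Pr[f|x]/d(f) - \sum_{j} \beta_j \Pr[f|b_j]/d(f)$. Chaining the two displays yields the claim.

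There is essentially no obstacle here; the only point requiring a word of care is the handling of futures $f$ with $d(f) = 0$ (so that the ratio $g(f)/d(f)$ is well-defined on the set over which we take the expectation), and this is immediate from the definition of the mixture $d$. Everything else is a routine application of Cauchy--Schwarz.
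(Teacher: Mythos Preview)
Your proposal is correct and follows essentially the same argument as the paper: rewrite the $\ell_1$ norm as $\EE_{f\sim d}[|g(f)|/d(f)]$ and apply Jensen's inequality to bound it by $\sqrt{L_{B,x}(\beta)}$. Your explicit handling of the $d(f)=0$ case is a nice touch that the paper leaves implicit.
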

    \begin{proof}
      We have that:
    \begin{align*}
        \norm{\Pr[F|x] - \sum_{j=1}^h \beta_j \Pr[F|b_j]}_1 &= \EE_{f\sim d }
    \left[\ \left| \frac{ \Pr[f|x]}{ d(f)} 
    - \sum_{j=1}^h \beta_j \frac{ \Pr[f|b_j]}{ d(f)} \right|\ \right]\\
    &\leq 
    \sqrt{\EE_{f\sim d }
    \bigg[\bigg( \frac{ \Pr[f|x]}{ d(f)} 
    - \sum_{j=1}^h \beta_j \frac{ \Pr[f|b_j]}{ d(f)} \bigg)^2\bigg]},
    \end{align*}
    where the last step uses Jensen's inequality.
    \end{proof}
\begin{proof}[Proof of \Cref{lemma:approx-1}]
  We choose $C = \sqrt{2Tr \log(16/\alpha \varepsilon^2)}$, $H = 8 rT^2 \log(16/\varepsilon^2 \alpha)$ and $n = O(1/\varepsilon^2 \log(H/\delta))$. Let $\beta_h(x)$ be the coefficients such that \[
    \beta_h(x) = \argmin_{\beta \in \R^h, \norm{\beta}_2 \leq C} \widehat L_{B_h, x}(\beta)
  \]
  From $\Cref{prop:approx-1}$, using $\poly(r, T, 1/\varepsilon, 1/\alpha, \log(1/\delta))$ many conditionally samples, we can get with probability $1-\delta/2$, for all $h \in [H]$ and observation sequence $x$ in our random sample \begin{equation}\label{eq:helpapprox1}
       \big| L_{B_h,x}(\beta(x)) - \widehat L_{B_h,x}(\beta(x))\big|
      \leq \frac{\varepsilon^2}{32}
    \end{equation}
  In our algorithm, when we find a counterexample, it means for some observation sequence $x$ in the random sample: \[
        \widehat L_{B_h, x}(\beta(x)) > \frac{\varepsilon^2}{8}
      \]
      This means, by \Cref{eq:helpapprox1}, for that observation sequence $x$,
      \[
          L_{B_h, x}(\beta(x)) > \frac{\varepsilon^2}{16}.
        \]
        However, by our choice of $C$ and $H$, by \Cref{prop:approx-3} this can not happen $H$ times and therefore, our algorithm should end in at most $H$ rounds.\\

        \noindent We will now show that the overall error of our basis is small. When our algorithm ends, then we should have for all observation sequence $x$ in our random sample: \[
          \min_{\beta \in \R^h, \norm{\beta}_2 \leq C} \widehat L_{B_h, x}(\beta) \leq \frac{\varepsilon^2}{8}
        \] This means by \Cref{eq:helpapprox1}, we should have for all observation sequence $x$ in our random sample:\[
          \min_{\beta \in \R^h, \norm{\beta}_2 \leq C} L_{B_h, x}(\beta) \leq \frac{\varepsilon^2}{4}.
        \] By \Cref{prop:approx-2}, this implies for all observation sequence $x$ in our random sample:
        \[
          \min_{\beta \in \R^h, \norm{\beta}_2 \leq C} \norm{\Pr[F|x]  - \Pr[F|B] \beta}_1 \leq \frac{\varepsilon}{2}.
        \]
        Therefore, for our choice of $n$, by Hoeffding inequality, we get with probability $1-\delta/2$, \[
          \Pr_{x \sim p}\sbr{\min_{\beta \in \R^h, \norm{\beta}_2 \leq C} \norm{\Pr[F|x]  - \Pr[F|B] \beta}_1 >  \varepsilon/2} \leq \varepsilon/2
        \] Since, for all histories $x$ \[
          \min_{\beta \in \R^h, \norm{\beta}_2 \leq C} \norm{\Pr[F|x]  - \Pr[F|B] \beta }_1 \leq 1 , 
        \] we get that \[
          \EE_{x \sim p}\sbr{\min_{\beta \in \R^h, \norm{\beta}_2 \leq C} \norm{\Pr[F|x]  - \Pr[F|B] \beta(x)}_1} \leq (1-\varepsilon/2) \varepsilon/2 + \varepsilon/2 \leq \varepsilon
        \] Let $B'$ be the set where we repeat $C$ times every $b \in B$, then we get that \[
          \EE_{x \sim p}\sbr{\min_{\beta \in \R^h, \norm{\beta}_2 \leq 1} \norm{\Pr[F|x]  - \Pr[F|B] \beta(x)}_1} \leq (1-\varepsilon/2) \varepsilon/2 + \varepsilon/2 \leq \varepsilon
        \] This completes the proof.
\end{proof}

\section{Helper propositions}
\begin{proposition}[Hoeffding's inequality]\label{prop:hoeffding}
    Let $X_1, X_2, \ldots, X_n$ be independent random variables such that $a\leq X_{i}\leq b$ almost surely. Consider the sum of these random variables,\[S_n = X_1 + \cdots + X_n.\]
    Then for all $t > 0$,
    \[ \Pr\sbr{\abr{S_{n}-\EE \sbr{S_{n}}}\geq nt} \leq 2\exp \rbr{-{\frac {2n t^{2}}{(b-a)^{2}}}}
    \]
    Here $E[S_n]$ is the expected value of $S_n$.
\end{proposition}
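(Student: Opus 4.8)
The plan is to prove this via the standard Chernoff--Cram\'er exponential moment method, combined with Hoeffding's lemma bounding the moment generating function of a bounded centered random variable. First I would center the variables: write $Y_i := X_i - \EE[X_i]$, so that $Y_i \in [a - \EE X_i, b - \EE X_i]$, an interval of length $b - a$, and $\EE[Y_i] = 0$. Set $S := S_n - \EE[S_n] = \sum_{i=1}^n Y_i$. For any $\lambda > 0$, Markov's inequality applied to the nonnegative random variable $e^{\lambda S}$ gives
\[
  \Pr[S \geq nt] = \Pr[e^{\lambda S} \geq e^{\lambda n t}] \leq e^{-\lambda n t}\, \EE[e^{\lambda S}],
\]
and by independence $\EE[e^{\lambda S}] = \prod_{i=1}^n \EE[e^{\lambda Y_i}]$.

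The key lemma (Hoeffding's lemma) is that for a mean-zero random variable $Y$ supported on an interval of length $\ell := b - a$, one has $\EE[e^{\lambda Y}] \leq e^{\lambda^2 \ell^2 / 8}$. I would prove this by defining $\psi(\lambda) := \log \EE[e^{\lambda Y}]$ and showing $\psi(0) = 0$, $\psi'(0) = \EE[Y] = 0$, and $\psi''(\lambda) \leq \ell^2/4$ uniformly; the last bound follows because $\psi''(\lambda)$ equals the variance of $Y$ under the exponentially tilted measure $d\mu_\lambda \propto e^{\lambda Y} d\mu$, and the variance of any random variable supported in an interval of length $\ell$ is at most $(\ell/2)^2$ (Popoviciu's inequality, itself immediate from $\Var(Z) \leq \EE[(Z - c)^2]$ with $c$ the interval midpoint). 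A second-order Taylor expansion of $\psi$ with the integral remainder then yields $\psi(\lambda) \leq \lambda^2 \ell^2/8$.

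Combining, $\EE[e^{\lambda S}] \leq e^{n \lambda^2 \ell^2/8}$, so $\Pr[S \geq nt] \leq e^{-\lambda n t + n\lambda^2 \ell^2/8}$. Optimizing the exponent over $\lambda > 0$ — the minimizer is $\lambda = 4t/\ell^2$ — gives $\Pr[S_n - \EE[S_n] \geq nt] \leq \exp(-2nt^2/\ell^2) = \exp(-2nt^2/(b-a)^2)$. Applying the identical argument to $-Y_i$ (which lies in an interval of the same length $b - a$) bounds the lower tail $\Pr[S_n - \EE[S_n] \leq -nt]$ by the same quantity, and a union bound over the two tails produces the factor of $2$, completing the proof.

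I expect the main obstacle to be the clean justification of Hoeffding's lemma, specifically the bound $\psi''(\lambda) \leq (b-a)^2/4$: one must correctly identify $\psi''(\lambda)$ as a variance under the tilted measure and then invoke the fact that a random variable confined to an interval of length $\ell$ has variance at most $\ell^2/4$. Everything else — the Markov step, the factorization over independent terms, the optimization over $\lambda$, and the two-sided union bound — is routine.
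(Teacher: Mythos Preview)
Your proof is correct and follows the standard textbook route (Chernoff bound plus Hoeffding's lemma via the variance bound on the tilted cumulant generating function). The paper, however, does not prove this statement at all: it is listed under ``Helper propositions'' as a known result and is simply quoted without proof, so there is no paper proof to compare against.
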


In our work, we care about how far are the projection operators onto top eigenspace for two symmetric matrices which are close to each other. This follows as a corollary of Davis-Kahan theorem.

\begin{proposition}[Davis-Kahan theorem]\label{thm:davis-kahan}
    Let $\Sigma$ and $\widehat \Sigma$ be real symmetric matrices with the eigenvalue decomposition: $V \Lambda_0 V^\top + V^\perp \Lambda_1 V^{\perp \top}$ and $\widehat V \widehat \Lambda_0 \widehat V^\top + \widehat V^\perp \widehat \Lambda_1 \widehat V^{\perp \top}$. If the eigenvalues of $\Lambda_0$ are contained in an interval $[a,b]$, and the eigenvalues of $\widehat \Lambda_1$ are excluded from the interval $[a-\gamma, b + \gamma]$ for some $\gamma > 0$, then \[
        \norm{\widehat V^{\perp \top} V}_F \leq \frac{\norm{\widehat V^{\perp \top} (\widehat \Sigma - \Sigma) V}_F}{\gamma}
    \]
\end{proposition}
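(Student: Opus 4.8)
The plan is to recognize this as the classical Davis--Kahan $\sin\theta$ estimate and prove it by reducing the inequality to a bound on a Sylvester-type operator, which the eigenvalue separation hypothesis makes diagonally dominant. Concretely, the first step is to extract the two eigenrelations hidden in the decompositions: right-multiplying $\Sigma = V\Lambda_0 V^\top + V^\perp\Lambda_1 V^{\perp\top}$ by $V$ and using orthonormality of the frame $[\,V\ \ V^\perp\,]$ gives $\Sigma V = V\Lambda_0$; left-multiplying $\widehat\Sigma = \widehat V\widehat\Lambda_0\widehat V^\top + \widehat V^\perp\widehat\Lambda_1\widehat V^{\perp\top}$ by $\widehat V^{\perp\top}$ and using orthonormality of $[\,\widehat V\ \ \widehat V^\perp\,]$ gives $\widehat V^{\perp\top}\widehat\Sigma = \widehat\Lambda_1\widehat V^{\perp\top}$.

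Next I would set $X := \widehat V^{\perp\top}V$ and compute the cross term $\widehat V^{\perp\top}(\widehat\Sigma-\Sigma)V = \widehat V^{\perp\top}\widehat\Sigma V - \widehat V^{\perp\top}\Sigma V = \widehat\Lambda_1 X - X\Lambda_0$, using the two relations above. Thus the right-hand side of the claimed inequality equals $\norm{\widehat\Lambda_1 X - X\Lambda_0}_F/\gamma$, and it suffices to show $\norm{\widehat\Lambda_1 X - X\Lambda_0}_F \geq \gamma\,\norm{X}_F$. Assuming without loss of generality that the eigenvalue blocks $\Lambda_0 = \mathrm{diag}(\lambda_j)$ and $\widehat\Lambda_1 = \mathrm{diag}(\hat\mu_i)$ are diagonal (if not, diagonalize each by an orthogonal rotation acting within the corresponding eigenspace, which changes neither $\norm{X}_F$ nor $\norm{\widehat\Lambda_1 X - X\Lambda_0}_F$), the $(i,j)$ entry of $\widehat\Lambda_1 X - X\Lambda_0$ is $(\hat\mu_i - \lambda_j)X_{ij}$. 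The hypothesis that every $\lambda_j \in [a,b]$ while every $\hat\mu_i \notin [a-\gamma, b+\gamma]$ forces $|\hat\mu_i - \lambda_j| \geq \gamma$ for all $i,j$ (either $\hat\mu_i > b+\gamma \geq \lambda_j + \gamma$ or $\hat\mu_i < a-\gamma \leq \lambda_j - \gamma$), so $\norm{\widehat\Lambda_1 X - X\Lambda_0}_F^2 = \sum_{i,j}(\hat\mu_i-\lambda_j)^2 |X_{ij}|^2 \geq \gamma^2\sum_{i,j}|X_{ij}|^2 = \gamma^2\norm{X}_F^2$. Taking square roots and combining with the cross-term identity yields $\norm{\widehat V^{\perp\top}V}_F \leq \gamma^{-1}\norm{\widehat V^{\perp\top}(\widehat\Sigma-\Sigma)V}_F$.

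There is no real obstacle here, as this is the textbook Davis--Kahan argument; the only places needing a little care are verifying the two eigenrelations cleanly from the orthonormality of the full frames (in particular that $\widehat V^{\perp\top}\widehat V = 0$ and $\widehat V^{\perp\top}\widehat V^\perp = I$, and symmetrically for $V$), and noting that the reduction to diagonal $\Lambda_0,\widehat\Lambda_1$ is cost-free because orthogonal conjugations within eigenspaces preserve the relevant Frobenius norms. One should also observe the estimate is only nonvacuous when $[a-\gamma,b+\gamma]$ indeed avoids the spectrum of $\widehat\Lambda_1$, which is precisely the stated separation assumption; in the application (Proposition~\ref{prop:alg2}) this is ensured by Weyl's inequality together with $\norm{\widehat\Sigma_{B_t} - \Sigma_{B_t}}_2 \leq \Delta/2$.
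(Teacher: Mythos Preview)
Your proof is correct and is exactly the standard Davis--Kahan argument: extract the eigenrelations $\Sigma V = V\Lambda_0$ and $\widehat V^{\perp\top}\widehat\Sigma = \widehat\Lambda_1\widehat V^{\perp\top}$, rewrite the cross term as a Sylvester expression $\widehat\Lambda_1 X - X\Lambda_0$ with $X = \widehat V^{\perp\top}V$, and use the eigenvalue gap to bound this entrywise from below by $\gamma\norm{X}_F$. There is nothing to compare against here, since the paper does not supply its own proof of this proposition; it is listed in the helper-propositions appendix as a known classical result and used (together with Weyl's inequality) in the proof of Proposition~\ref{prop:alg2}, exactly as you note in your final paragraph.
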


\begin{corollary}[\citep{projections}]
    \label{cor:project}
    Let $\Sigma$, $\widehat \Sigma$, $V$, $\widehat V$ and $\gamma$ be as defined above. Assume $V, \widehat V \in \R^{n \times r}$. Then, \[
\norm{VV^\top - \widehat V \widehat V^\top}_F \leq \frac{\sqrt{2r} \norm{\Sigma - \widehat \Sigma}_2}{\gamma}
    \]
\end{corollary}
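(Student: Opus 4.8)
The plan is to reduce the Frobenius-norm bound on the projection difference $VV^\top-\widehat V\widehat V^\top$ to the Davis--Kahan $\sin\theta$ bound on $\widehat V^{\perp\top}V$ (\Cref{thm:davis-kahan}), and then control the numerator appearing there by elementary norm inequalities. First I would record the standard identity relating the distance between two rank-$r$ orthogonal projections to a single principal-angle quantity: writing $P=VV^\top$ and $\widehat P=\widehat V\widehat V^\top$, where $V,\widehat V\in\R^{n\times r}$ have orthonormal columns, one has
\[
  \norm{P-\widehat P}_F^2 = \tr\big((P-\widehat P)^2\big) = 2r-2\tr(P\widehat P) = 2\,\norm{\widehat V^{\perp\top}V}_F^2 .
\]
The first two equalities follow by expanding the square and using idempotency of $P,\widehat P$ together with $\tr(P)=\tr(\widehat P)=r$; the last uses $\tr(P\widehat P)=\tr(VV^\top\widehat V\widehat V^\top)=\norm{\widehat V^\top V}_F^2$ and $\norm{\widehat V^\top V}_F^2+\norm{\widehat V^{\perp\top}V}_F^2=\norm{V}_F^2=r$ (since $[\widehat V\ \widehat V^\perp]$ is orthogonal). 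Hence it suffices to prove $\norm{\widehat V^{\perp\top}V}_F\le \sqrt r\,\norm{\Sigma-\widehat\Sigma}_2/\gamma$.

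Next I would invoke \Cref{thm:davis-kahan} with the assumed eigenvalue separation — the eigenvalues of $\Lambda_0$ lie in $[a,b]$ while those of $\widehat\Lambda_1$ avoid $[a-\gamma,b+\gamma]$ — to obtain $\norm{\widehat V^{\perp\top}V}_F \le \norm{\widehat V^{\perp\top}(\widehat\Sigma-\Sigma)V}_F/\gamma$. Then I would bound the numerator by submultiplicativity of the Frobenius norm against the operator norm:
\[
  \norm{\widehat V^{\perp\top}(\widehat\Sigma-\Sigma)V}_F \;\le\; \norm{\widehat V^{\perp\top}}_2\,\norm{\widehat\Sigma-\Sigma}_2\,\norm{V}_F \;=\; \norm{\Sigma-\widehat\Sigma}_2\,\sqrt r,
\]
using $\norm{\widehat V^{\perp\top}}_2=1$ (orthonormal columns) and $\norm{V}_F=\sqrt r$ ($r$ orthonormal columns in $\R^n$). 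Combining the three displays yields $\norm{VV^\top-\widehat V\widehat V^\top}_F=\sqrt2\,\norm{\widehat V^{\perp\top}V}_F\le \sqrt{2r}\,\norm{\Sigma-\widehat\Sigma}_2/\gamma$, as claimed.

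I do not expect a genuine obstacle here; the statement is a direct corollary of Davis--Kahan. The only points that need care are (i) getting the constant right in the projection identity, for which it is essential that both projections have the same rank $r$ — exactly the stated hypothesis $V,\widehat V\in\R^{n\times r}$ — and (ii) ensuring the norm pulled out of $\Sigma-\widehat\Sigma$ is the operator norm, which is what keeps the extra factor down to $\norm{V}_F=\sqrt r$ rather than something larger. One could alternatively skip the explicit $\sin\theta$ detour and cite the $\sin\theta$ form of the perturbation bound directly, but the above route keeps the argument self-contained given the results already stated.
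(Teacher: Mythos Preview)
Your argument is correct: the projection identity $\norm{VV^\top-\widehat V\widehat V^\top}_F^2=2\,\norm{\widehat V^{\perp\top}V}_F^2$ (valid because both projections have the same rank $r$), the Davis--Kahan bound of \Cref{thm:davis-kahan}, and the submultiplicativity step $\norm{\widehat V^{\perp\top}(\widehat\Sigma-\Sigma)V}_F\le\norm{\Sigma-\widehat\Sigma}_2\sqrt r$ combine exactly as you wrote. The paper itself does not supply a proof but simply cites an external reference, so there is nothing further to compare against; your self-contained derivation is the standard one.
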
 
\end{document}